\documentclass{article}

\PassOptionsToPackage{numbers,sort&compress}{natbib}

\usepackage[preprint]{neurips_2026}

\usepackage[utf8]{inputenc} % allow utf-8 input
\usepackage[T1]{fontenc}    % use 8-bit T1 fonts
\usepackage{hyperref}       % hyperlinks
\hypersetup{colorlinks=true,linkcolor=blue,citecolor=blue,urlcolor=blue}
\usepackage{url}            % simple URL typesetting
\usepackage{booktabs}       % professional-quality tables
\usepackage{amsfonts}       % blackboard math symbols
\usepackage{amssymb}        % \blacktriangleright etc. (case-study macros)
\usepackage{nicefrac}       % compact symbols for 1/2, etc.
\usepackage{microtype}      % microtypography
\usepackage{xcolor}         % colors
\usepackage{amsmath}
\usepackage{amsthm}
\usepackage{algpseudocode}

\usepackage{booktabs}
\usepackage{multirow}
\usepackage{colortbl}
\usepackage{graphicx}
\usepackage{float}
\usepackage{enumitem}        % tighter list spacing (collaborator's whitespace-saving trick)

\usepackage{mathtools}
\usepackage{siunitx}
\usepackage{tikz}
\usetikzlibrary{positioning,arrows.meta,fit,backgrounds,decorations.pathreplacing,calligraphy,calc,matrix}

\usepackage{wrapfig}
\graphicspath{{figures/}{../latex/figures/}}

\usepackage{tcolorbox}
\tcbuselibrary{breakable,skins}
\usepackage{pifont}  % \ding{51} = check, \ding{55} = cross

\definecolor{caseWrongBd}{HTML}{DC2626}
\definecolor{caseRightBd}{HTML}{16A34A}
\definecolor{caseWrongBg}{HTML}{FEF2F2}
\definecolor{caseRightBg}{HTML}{F0FDF4}
\definecolor{caseBannerBg}{HTML}{FEF3C7}
\definecolor{caseBannerFg}{HTML}{92400E}
\definecolor{caseQBg}{HTML}{F8FAFC}
\definecolor{casePillBg}{HTML}{F1F5F9}
\definecolor{casePillBd}{HTML}{E2E8F0}
\definecolor{casePillFg}{HTML}{334155}
\definecolor{casePillMarkBg}{HTML}{FEF3C7}
\definecolor{casePillMarkBd}{HTML}{F59E0B}
\definecolor{casePillMarkFg}{HTML}{92400E}
\definecolor{caseDim}{HTML}{9CA3AF}

\newcommand{\codepill}[1]{%
  \tikz[baseline=(c.base)]\node[draw=casePillBd, fill=casePillBg,
    text=casePillFg, rounded corners=1pt, inner xsep=2pt, inner ysep=0.5pt,
    line width=0.3pt, font=\ttfamily\fontsize{6}{7}\selectfont,
    minimum width=8pt, minimum height=8pt, anchor=base] (c) {#1};%
  \hspace{1pt plus 1pt}\allowbreak}
\newcommand{\codepillmark}[1]{%
  \tikz[baseline=(c.base)]\node[draw=casePillMarkBd, fill=casePillMarkBg,
    text=casePillMarkFg, rounded corners=1pt, inner xsep=2pt, inner ysep=0.5pt,
    line width=0.5pt, font=\ttfamily\bfseries\fontsize{6}{7}\selectfont,
    minimum width=8pt, minimum height=8pt, anchor=base] (c) {#1};%
  \hspace{1pt plus 1pt}\allowbreak}

\newenvironment{codestrip}{%
  \par\nopagebreak\vspace{2pt}%
  \noindent\sloppy\hangindent=42pt\hangafter=1\relax
  \leavevmode\ignorespaces}{\par\vspace{1pt}}

\usepackage{pgffor}
\newcommand{\pillrow}[1]{\foreach \pn in {#1}{\codepill{\pn}}}
\newcommand{\pillrowmark}[2]{%
  \foreach \pn in {#2}{\ifnum\pn=#1\codepillmark{\pn}\else\codepill{\pn}\fi}}

\newcommand{\casediv}[6]{%
  \par\noindent\textsf{\footnotesize diverges @
    \textbf{#1\%} of NL \;\textbullet\; \textbf{#2\%} of code seq.\;%
    \textcolor{caseDim}{(NL: #3/#4 chars \textbullet\ code: #5/#6 chunks)}}\par}

\newcommand{\caseheader}[4]{%
  \begin{casebanner}\textsc{sample #1}\quad
    \textnormal{\texttt{#2}}\hfill
    \textnormal{\textcolor{caseWrongBd}{\textbf{#3}}}
    $\rightarrow$
    \textnormal{\textcolor{caseRightBd}{\textbf{#4}}}\end{casebanner}%
  \nopagebreak\par\nopagebreak}

\newcommand{\panehead}[4]{%
  {\sffamily\fontsize{7}{8}\selectfont\bfseries\color{#1}#2 \hfill
    \colorbox{#1}{\textcolor{white}{\,\texttt{#3}\,}}\ #4}\par\medskip}

\newcommand{\twostripmark}[3]{%
  \begin{codestrip}%
    {\sffamily\fontsize{6}{7}\selectfont\color{caseDim}\textsc{prefill}}\hspace{4pt}%
    \pillrowmark{#1}{#2}%
    \hspace{6pt}{\sffamily\fontsize{6}{7}\selectfont\color{caseDim}%
      $\blacktriangleright$\ \textsc{decode}}\hspace{4pt}%
    \pillrowmark{#1}{#3}%
  \end{codestrip}}

\newcommand{\twostripplain}[2]{%
  \begin{codestrip}%
    {\sffamily\fontsize{6}{7}\selectfont\color{caseDim}\textsc{prefill}}\hspace{4pt}%
    \pillrow{#1}%
    \hspace{6pt}{\sffamily\fontsize{6}{7}\selectfont\color{caseDim}%
      $\blacktriangleright$\ \textsc{decode}}\hspace{4pt}%
    \pillrow{#2}%
  \end{codestrip}}

\newtcolorbox{caseboxwrong}[1][]{enhanced, breakable,
  colback=caseWrongBg, colframe=caseWrongBd, boxrule=0.5pt, arc=2pt,
  left=4pt, right=4pt, top=3pt, bottom=3pt,
  fontupper=\ttfamily\footnotesize, #1}
\newtcolorbox{caseboxright}[1][]{enhanced, breakable,
  colback=caseRightBg, colframe=caseRightBd, boxrule=0.5pt, arc=2pt,
  left=4pt, right=4pt, top=3pt, bottom=3pt,
  fontupper=\ttfamily\footnotesize, #1}

\newtcolorbox{casebanner}{enhanced, sharp corners=south,
  colback=caseBannerBg, colframe=caseBannerBg, boxrule=0pt,
  left=6pt, right=6pt, top=2pt, bottom=2pt,
  fontupper=\sffamily\bfseries\footnotesize\color{caseBannerFg}}

\newtcolorbox{caseqbox}{enhanced, breakable,
  colback=caseQBg, colframe=caseQBg, boxrule=0pt,
  borderline west={2pt}{0pt}{caseDim},
  left=8pt, right=4pt, top=3pt, bottom=3pt,
  fontupper=\small}

\newcommand{\casepre}[1]{{\color{caseDim}#1}}
\newcommand{\casemark}{{\color{orange!80!black}\,$\blacktriangleright$\,}}

\definecolor{qwenLight}{HTML}{E8F1FA}
\definecolor{qwenMid}{HTML}{D2E3F2}
\definecolor{qwenDark}{HTML}{B9D3E8}
\definecolor{llamaLight}{HTML}{FDE3C8}
\definecolor{llamaDark}{HTML}{F5C8A3}

\definecolor{dlrgold}{HTML}{FFF6D8}
\definecolor{famband}{HTML}{F7F7F4}

\definecolor{red1}{HTML}{FDF0EC}
\definecolor{red2}{HTML}{FAD8CF}
\definecolor{red3}{HTML}{F4B9AA}
\definecolor{red4}{HTML}{EC9785}
\definecolor{red5}{HTML}{DE6B58}
\definecolor{redink}{HTML}{8A2615}

\newcommand{\dcell}[2]{%
  \ifnum#1=5\relax
    \cellcolor{red5}\textcolor{white}{\textbf{#2}}%
  \else
    \cellcolor{red\the\numexpr#1\relax}\textcolor{redink}{\textbf{#2}}%
  \fi
}

\newcommand{\cihalf}[1]{{\color{gray!65}\scriptsize$_{\pm#1}$}}
\newcommand{\val}[2]{$#1$\cihalf{#2}}
\newcommand{\bval}[2]{$\mathbf{#1}$\cihalf{#2}}

\providecommand{\sft}{\textsc{SFT}}

\newtheorem{theorem}{Theorem}
\newtheorem*{theorem*}{Theorem}
\newenvironment{restatedthm}[2]%
  {\medskip\par\noindent\textbf{Theorem~\ref{#1}} (#2)\textbf{.}\itshape\quad}%
  {\par\upshape\medskip}
\newtheorem{lemma}{Lemma}

\newtheorem{definition}{Definition}

\newtheorem{assumption}{Assumption}

\usepackage[ruled,vlined]{algorithm2e}

\title{Dynamic Latent Routing}

\author{%
  Fangyuan Yu\thanks{Corresponding author: \texttt{fangyuan.yu18@gmail.com}.}
  \hspace{3em} Xin Su \hspace{3em} Amir Abdullah \\[4pt]
  Thoughtworks AI Labs (TAILS)
}

\newcommand{\dlt}[1]{%
  \ifdim#1pt<0pt {\tiny\,\textcolor{red}{$(#1)$}}%
  \else\ifdim#1pt>0pt {\tiny\,\textcolor{green!55!black}{$(+#1)$}}%
  \else {\tiny\,$(\pm0.0)$}%
  \fi\fi}

\begin{document}

% Allow pages to end short rather than vertically stretching content;
% prevents the large mid-page gaps seen in the appendix when tables/floats
% don't fully fill a page.
\raggedbottom

\maketitle
% FY: the "how policy are composed ..." is not accurate, as 
% compositional RL does policy composition, our contribution is 
% temporal concatenation", which justifies temporal abstraction in HRL. 

\begin{abstract}
We investigate the temporal concatenation of sub-policies in Markov Decision Processes (MDP) with time-varying reward functions. We introduce General Dijkstra Search (GDS), and prove that globally optimal goal-reaching policies can be recovered through temporal composition of intermediate optimal sub-policies.

Motivated by the ``search, select, update'' principle underlying GDS, we propose Dynamic Latent Routing (DLR), a language-model post-training method that jointly learns discrete latent codes, routing policies, and model parameters through dynamic search in a single training stage. In low-data fine-tuning settings, DLR matches or outperforms supervised fine-tuning across four datasets and six models, achieving a mean gain of $+6.6$ percentage points, while prior discrete-latent baselines consistently underperform SFT. Mechanistic analyses and targeted code ablations show that DLR learns structured routing behaviors with distinct causal roles. 
%In six-digit arithmetic, a canonical mechanistic interpretability benchmark, DLR externalizes into discrete routing tokens algorithmic subtasks previously studied through activation and circuit analysis.
\end{abstract}
\section{Introduction}

The sensory world we live in is continuous; the language we describe it with is discrete~\citep{harnad1990symbol}. Neural networks, increasingly capable across a wide range of tasks~\citep{radford2019gpt2,openai2025gpt55,anthropic2025claude47,deepseek2025v4,qwen2025qwen35,moonshot2025kimi26}, operate almost entirely in continuous representations even when their inputs are discrete. This raises a question: are discrete codes beneficial? Consider what a discrete code makes possible. The sentence ``stand up and walk away'' composes two policies along the temporal dimension: ``stand up'' and ``walk away'', each encoding a chain of low-level motor actions. Whilst hierarchical reinforcement learning~\citep{sutton1999options,dayan1993feudal,vezhnevets2017feudal,dietterich2000maxq,machado2023temporal} extends the agent's action space to include such sub-policies, a theoretical understanding of \emph{why} this helps is still lacking.

We fill this gap by making the reward in the MDP explicitly time-varying. In this setting, no static policy is optimal in general (Thm.~\ref{thm:static-gap}). We propose the General Dijkstra Search (GDS) algorithm, which searches over concatenations of sub-policies, and prove that it finds an optimal goal-reaching policy (Thm.~\ref{thm:gds}). GDS departs from the Bellman iteration paradigm that underpins most of RL: rather than refining a policy through per-state updates, it shows that searching over \emph{concatenations} of sub-policies has a guarantee of reaching the optimal goal-reaching policy.

Motivated by GDS, we study how an LLM can create and concatenate its own discrete latent codes, each encoding a sub-policy. Given a natural language sequence $x$, an LLM maximizes its predictive likelihood $p_{\theta}(x)$. Injecting latent codes $a$ shifts the objective to maximizing the conditional likelihood $p_{\theta}(x \mid a)$. Unlike $x$, however, we have no supervision target for $a$; the codes become an action space the model must explore.

Prior work injecting discrete codes during post-training~\citep{goyal2023pause,pfau2024filler,su2025tokenassorted,zelikman2024metatokens,ramji2026abstractcot} shares two drawbacks. First, they inject codes as extra tokens into the natural language sequence, disrupting the structure seen during pre-training. This necessitates large training budgets to match baseline SFT and consistently lags behind in low-data regimes. We find that injecting codes directly as steering vectors into the residual stream avoids this disruption, matching or outperforming SFT under low-data constraints.

Second, these methods require multiple training stages: codes are fixed in advance~\citep{goyal2023pause,pfau2024filler,zelikman2024metatokens}, pre-labeled by a separate model~\citep{su2025tokenassorted}, or warmed up in a separate phase before search can begin~\citep{ramji2026abstractcot}. What GDS prescribes - jointly learning which codes to use, what they do, and how to compose them - is not achieved in a single stage by any existing method.

We address both issues with Dynamic Latent Routing (DLR). DLR can be viewed as a neural relaxation of GDS that preserves its core structure: \textbf{search} over candidate code sequences via guided rollouts from a policy head, \textbf{select} the sequence that maximizes the conditional likelihood $p_{\theta}(x \mid a)$, and \textbf{update} the policy head, codebook, and base model jointly from a single objective. The explicit priority queue is replaced by a learned policy head, but the ``search, select, update'' loop remains intact. By steering the residual stream rather than injecting tokens, DLR avoids disrupting the pre-trained sequence structure - matching or outperforming SFT in the low-data regime - while unifying code search and learning into a single stage. The learned codes are diverse, input-dependent, and carry causal effects on downstream behavior. Because routing decisions are explicit discrete symbols rather than implicit activation patterns, DLR also exposes a directly observable interface to intermediate computation, enabling causal interventions and mechanistic analysis without post-hoc activation probing.
In summary, our main contributions are:
\begin{itemize}
    \item We introduce General Dijkstra Search (GDS), and prove that under MDPs with time-varying rewards, optimal goal-reaching policies can be recovered through temporal composition of sub-policies.
    
    \item We propose Dynamic Latent Routing (DLR), a single-stage post-training method where a learned policy head dynamically searches for discrete latent codes that steer the model through the residual stream.
    
    \item DLR outperforms all baselines across 24 model-dataset settings spanning four QA benchmarks and six model sizes, with the largest gains on reasoning-heavy tasks, reaching $+10.2$ on GSM8K and $+18.8$ on ScienceQA.
    
    \item The codes learned by DLR specialize to distinct subtasks and are causally load-bearing, emerging without any subtask labels provided during training. On ScienceQA, n-grams become increasingly topic-specialized as length increases; in six-digit arithmetic, individual codes specialize to distinct algorithmic subtasks. In both settings, ablating specific codes produces subtask-specific effects on model behavior.
    
\end{itemize}

% DLR outperforms all baselines in test accuracy across all 20 model-dataset settings, covering four QA benchmarks and five model sizes. The largest absolute gains are observed on reasoning-heavy tasks, reaching 10.2 points on GSM8K and 12.9 points on ScienceQA. We also show that the learned codes expose an interpretable routing structure: they are diverse, input-dependent, and causally involved in downstream predictions.

\section{Related Work}

\paragraph{Discrete latents in LLMs.} Recent work introduces discrete codes into LLMs as extra tokens. Filler Tokens~\citep{pfau2024filler} show that adding a single, repeated token can improve performance when training transformers on synthetic tasks. Pause Tokens~\citep{goyal2023pause} and Meta-Tokens~\citep{zelikman2024metatokens} extend this to post-training, but require extensive warm-up to match or exceed SFT. TokenAssorted~\citep{su2025tokenassorted} pre-labels fine-tuning sequences with diverse codes learned via a separate vector-quantized variational encoder before post-training the LLM on the interleaved sequence. Abstract-CoT~\citep{ramji2026abstractcot} uses a warm-up phase that iterates between two sub-phases to teach the model to use new tokens, then applies group relative preference optimization to refine code selection. Compared to DLR, none of these methods unifies code search, policy update, and LM update into a single training stage, and they all require substantial compute in a warm-up phase, limiting their application in the low-data fine-tuning regime; TokenAssorted and Abstract-CoT are additionally designed to compress chain-of-thought sequences, a direction not aligned with ours.

\paragraph{Representation engineering and steering.} Representation engineering~\citep{zou2023representation} manipulates hidden states to control model behavior, with applications in safety~\citep{yousefpour2025repbend,siu2025steeringsafety}, reasoning~\citep{tang2025glore,seal2025}, and truthfulness~\citep{whyrepeworks2025}. Existing methods either apply a single fixed steering vector; DLR learns dynamic steering codes jointly with the model through search.

\paragraph{Hierarchical and compositional reinforcement learning.} Bellman iteration~\citep{sutton2018reinforcement} has been the cornerstone of optimal policy search in reinforcement learning, refining a policy through per-state updates. Hierarchical RL introduces high-level actions through frameworks such as options~\citep{sutton1999options,bacon2017option}, feudal networks~\citep{dayan1993feudal,vezhnevets2017feudal}, and MAXQ~\citep{dietterich2000maxq}, but lacks a theoretical justification for why composing sub-policies along the temporal axis helps. Compositional and goal-conditioned RL~\citep{schaul2015universal,andrychowicz2017hindsight,barreto2017successor,pmlr-v97-van-niekerk19a,cao2020zero,todorov2009compositionality,hunt2019composing} parameterizes goals into value functions or defines algebraic operations over sub-policies, enabling multi-task learning and zero-shot transfer; however, their composition is not along the temporal dimension and assumes stationary rewards. In contrast, we explicitly define temporal policy concatenation in a dynamic MDP without introducing high-level actions, and prove that GDS discovers optimal goal-reaching policies by concatenating sub-policies optimal for intermediate goals. This provides the missing theoretical justification for composing sub-policies along the temporal axis.

\paragraph{Mechanistic interpretability and probing in LLMs.} Most existing LLM interpretability work recovers abstract structure from a model's internals after training. The mechanistic line identifies specific computational circuits \citep{olsson_2022_induction, wang_2023_ioi, nanda_2023_grokking, quirke_2024_addition, quirke_2024_addsub_preprint, zhang_2024_arithmetic}; probing reads semantic content out of hidden states \citep{li_2023_worldrepresentations, nanda_2023_linearworldmodels, belinkov_2022_probes, sun_2025_arithmeticerrors}; sparse autoencoder approaches train an extra decoder on top of the frozen model to extract a large set of sparse features and then run a large automated pipeline to label them \citep{huben_2024_sae, paulo_2025_autointerp}. All of these methods assume the abstraction lies hidden inside the continuous representation, waiting to be recovered post hoc. DLR inverts this: the abstraction is baked in during training as $C$ abstraction codes produced by hard-argmax, each a directly observable, input-dependent routing decision; the same probing and auto-interpretation pipelines then apply to these codes at orders-of-magnitude lower cost.

% \section{Theory}
% \section{Theory}
\section{Theoretical Foundation}
\label{sec:theoretical-foundation}

A Markov Decision Process is defined by the tuple $(\mathcal{S}, \mathcal{A}, P, r, \gamma)$ with state space $\mathcal{S}$, action space $\mathcal{A}$, transition kernel $P(s; | s, a)$ and reward $r(s, a)$. Since the reward is not explicitly time-dependent, optimal policies are also time-invariant. To define policy concatenation along the temporal axis, we make the reward function explicitly time-dependent, yielding a dynamic MDP. We formalize policy concatenation, show how the value function decomposes under concatenation (Thm.~\ref{thm:concat-value}), and prove that optimal policies in this setting are in general time-dependent (Thm.~\ref{thm:static-gap}). Moreover, we show they can be obtained by dynamic Bellman iteration under mild assumptions (Thm.~\ref{thm:policy-iteration}), setting the stage for the General Dijkstra Search algorithm. Full theory and proof are provided in Appendix~\ref{app:dmdp-theory}.

\begin{definition}[Dynamic Markov Decision Process (DMDP)]
A \emph{dynamic MDP} is a finite-horizon MDP $(\mathcal{S}, \mathcal{A}, P, \{r_t\}_{t=0}^{T-1}, \gamma)$ with state space $\mathcal{S}$, action space $\mathcal{A}$, time-homogeneous transition kernel $P(s'\mid s,a)$, time-varying reward $r_t:\mathcal{S}\times\mathcal{A}\to\mathbb{R}_{\le 0}$, discount $\gamma\in[0,1)$, and horizon $T\in\mathbb{N}$. WLOG we assume $r_t\le 0$: any $r_{\max}$-bounded reward shifts to $r_t-r_{\max}\le 0$ without changing the argmax, so reward can be read as \emph{negated cost}.
\end{definition}

\begin{definition}[Time-varying policy]
A \emph{time-varying} (or \emph{dynamic}) policy is a sequence 
$\pi = \{\pi_t\}_{t=0}^{T-1}$, where each
\[
\pi_t : \mathcal{S} \to \mathcal{A}
\]
is a (possibly deterministic) decision rule at time $t$.
Equivalently, we can write $\pi : \mathcal{S} \times \mathbb{N} \to \mathcal{A}$ and interpret
$\pi_t(\cdot) := \pi(\cdot, t)$.
We denote by
\[
\Pi := \bigl\{ \pi \,\big|\, \pi : \mathcal{S} \times \mathbb{N} \to \mathcal{A} \bigr\}
\]
the collection of all time-varying policies.
\end{definition}

\begin{definition}[Concatenated policy]\label{def:concat-policy}
Let $\pi^{1} \in \Pi_{T_{1}}$ and $\pi^{2} \in \Pi_{T_{2}}$ be time-varying
policies of horizons $T_{1}$ and $T_{2}$, respectively. Their concatenation
$\pi = \pi^{1}_{0:T_{1}} \circ \pi^{2}_{0:T_{2}} \in \Pi_{T_{1}+T_{2}}$ is the
policy defined by
\[
\pi_{t}(s)
=
\begin{cases}
\pi^{1}_{t}(s), & t = 0,\dots,T_{1}-1,\\
\pi^{2}_{t-T_{1}}(s), & t = T_{1},\dots,T_{1}+T_{2}-1.
\end{cases}
\]
\end{definition}

\begin{restatedthm}{thm:concat-value}{Value of a concatenated policy}
Under Assumptions~\hyperref[asm:A1]{A1}--\hyperref[asm:A3]{A3},
let $\pi = \pi^{1}_{0:T_{1}} \circ \pi^{2}_{0:T_{2}}$ be as in
Definition~\ref{def:concat-policy} and let $T := T_{1} + T_{2}$. Then for all
$t = 0,\dots,T-1$ and $s \in \mathcal{S}$,
\[
V_{t}^{\pi}(s)
=
V^{\pi^{1}}_{t}(s)
+ \gamma^{\max(T_{1}-t,0)}
  \,\mathbb{E}\!\Big[
      V^{\pi^{2}}_{\max(0,\,t-T_{1})}\bigl(s_{\max(t,T_{1})}\bigr)
      \,\Big|\, s_{t} = s
    \Big].
\]
The detailed proof (Appendix~\ref{thm:concat-value}) proceeds by splitting the discounted return at time $T_{1}$. 
\end{restatedthm}

\subsection{Policy Composition and Goal-Oriented Optimality}
\label{sec:gds}

We formally define optimal goal reaching policies to be the policy with the largest value that reach a set of goal states. We then propose General Dijkstra Search, along with Thm.~\ref{thm:gds-reach} proving it discovers optimal goal reaching policies. Full theory and proof are provided in Appendices~\ref{app:policy-composition}--\ref{app:gds-coverage}.

\begin{definition}[Goal states]
For a policy $\pi \in \Pi_{t}$ starting from $s \in \mathcal{S}$, the goal
states $\mathcal{G}^{\pi}(s)$ are defined by
\[
\mathcal{G}^{\pi}(s) := \Big\{ g \in \mathcal{S} \,\big|\, p_{\pi}(s_{t}=g \mid s_{0}=s) > 0 \Big\}.
\]
\end{definition}

\begin{definition}[Reaching policies]
Given a start state $s \in \mathcal{S}$ and a set of goal states
$\mathcal{G} \subset \mathcal{S}$, the set of \emph{reaching policies}
$\Pi^{\mathcal{G}\supset}_{1:T}(s)$ is
\[
\Pi^{\mathcal{G}\supset}_{1:T}(s)
:= \Bigl\{ \pi \in \bigcup_{t=1}^{T} \Pi_{t} \,\Big|\, \mathcal{G}^{\pi}(s) \subset \mathcal{G} \Bigr\}.
\]
\end{definition}

\begin{definition}[Optimal goal-reaching policy]
Given a start state $s \in \mathcal{S}$ and a goal set
$\mathcal{G} \subset \mathcal{S}$, an \emph{optimal goal-reaching policy} is a
policy $\pi^{*} \in \Pi^{\mathcal{G}\supset}_{1:T}(s)$ such that, for all
$\pi \in \Pi^{\mathcal{G}\supset}_{1:T}(s)$,
\[
V_{0}^{\pi^{*}}(s) \geq V_{0}^{\pi}(s).
\]
\end{definition}

\begin{algorithm}[H]
\caption{General Dijkstra Search (Optimal Reach)}
\KwIn{Error tolerance $\epsilon_{t}:= \frac{r_{\max}}{1-\gamma} \cdot \sum_{i=t}^{\infty} \gamma^{i}$, start state $s \in \mathcal{S}$, goal states $\mathcal{G}^{*} \subset \mathcal{S}$}
Initialize $\mathcal{Q} = \{(\emptyset, 0, \{s\}, 0)\}$, $v = \emptyset$, $\mathcal{R} = \emptyset$\;
\While{$\mathcal{Q} \neq \emptyset$}{
Pop $(\pi_{1:t}, v_{t}, \mathcal{G}^{\pi_{1:t}}(s), t)$ from the priority queue with maximal value; add $\mathcal{G}^{\pi_{1:t}}(s)$ into $\mathcal{R}$ if it is not already inside\;
\If{$\mathcal{G}^{\pi_{1:t}}(s) \subset \mathcal{G}^{*}$}{
break\;
}
\If{$(\exists (s, \mathcal{G}) \in v \text{ s.t. } \mathcal{G} \subset \mathcal{G}^{\pi_{1:t}}(s) \text{ and } v_{t} \leq v(s,\mathcal{G}) - \epsilon_{t})$ or $t = T$}{
continue\;
}
\For{$\pi \in \Pi_{1}$}{
Concatenate policy $\pi_{1:t+1} = \pi_{1:t} \circ \pi$ and compute $v_{t+1} = v_{t} + \gamma^{t} \cdot \mathbb{E}_{s_{0:t}}\big[ V_{0}^{\pi}(s_{t}) \,\big|\, s_{0} = s\big]$\;
Push $(\pi_{1:t+1}, v_{t+1}, \mathcal{G}^{\pi_{1:t+1}}(s), t+1)$ into $\mathcal{Q}$\;
\ForEach{$(s, \mathcal{G}) \in v$ such that $\mathcal{G}^{\pi_{1:t+1}}(s) \subset \mathcal{G}$ and $\mathcal{G} \notin \mathcal{R}$}{
$v(s, \mathcal{G}) \gets \max(v(s, \mathcal{G}), v_{t+1})$\;
}
\If{$(s, \mathcal{G}^{\pi_{1:t}}) \notin v$}{
$v(s, \mathcal{G}^{\pi_{1:t}}) \gets v_{t+1}$\;
}
}
}
\end{algorithm}

\begin{restatedthm}{thm:gds-reach}{GDS optimality for optimal reach}
Under Assumptions~\hyperref[asm:A1]{A1}--\hyperref[asm:A3]{A3}, for any reachable goal set, General Dijkstra Search for Optimal Reach finds an optimal goal-reaching policy. Specifically, for every $\mathcal{G} \in \mathcal{G}_{T}^{\supset}(s)$, there exists $\pi^{*} \in \Pi_{1:T}^{\mathcal{G}\supset}(s)$ such that $V_{0}^{\pi^{*}}(s) \ge V_{0}^{\pi}(s)$ for every $\pi \in \Pi_{1:T}^{\mathcal{G}\supset}(s)$.
\end{restatedthm}

\noindent The proof (Appendix~\ref{thm:gds-reach}) follows from the queue invariant, a pruning guarantee, and the optimality of the first popped feasible policy.

\section{Methodology}
\label{sec:method}

\begin{figure}[t]
  \centering
  \includegraphics[width=\linewidth]{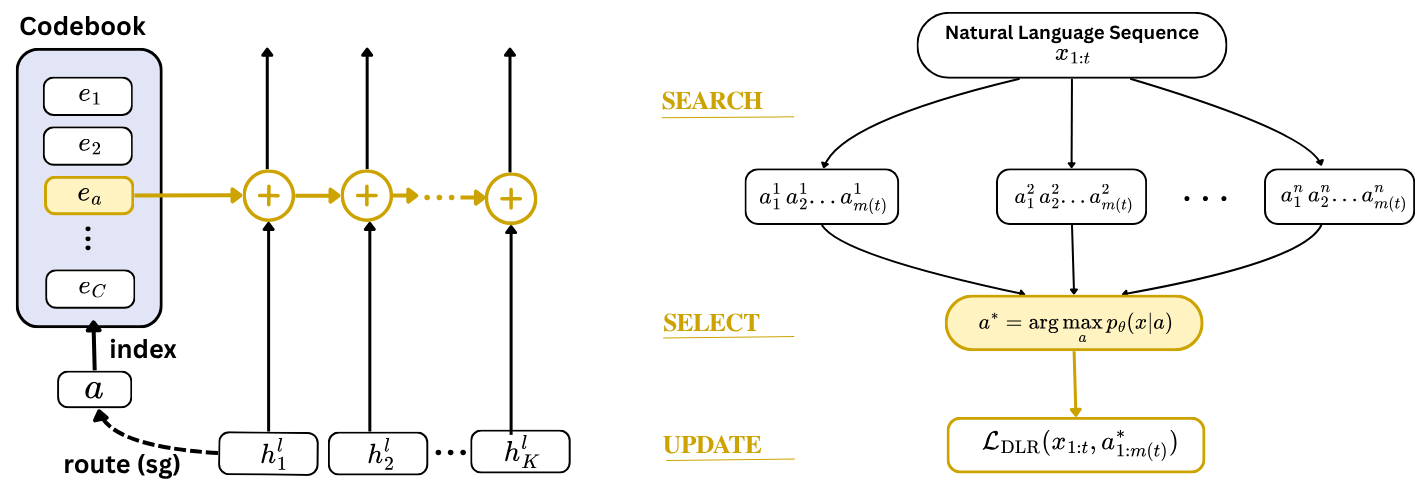}
  \caption{DLR method overview. \emph{Left:} chunk-level steering: each chunk $m$ is routed to a discrete code $a_m$, which indexes a steering vector $\alpha\, e_{a_m}$ from the codebook and is added in place to every hidden state $h_1,\dots,h_K$ in the chunk. \emph{Right:} per-step search: $N$ code sequences are sampled from the routing head (\textsc{search}), the one maximizing $p_\theta(x\mid a)$ is selected (\textsc{select}), and $\mathcal{L}_{\mathrm{DLR}}$ is used to jointly update the codes, routing head, and LM (\textsc{update}).}
  \label{fig:dlr-method-overview}
\end{figure}

GDS proves that optimal policies emerge from concatenating sub-policies rather than per-state refinement, using a priority queue in a search--select--update loop. We apply this principle to language models by treating discrete latent codes as sub-policies that are concatenated to steer the residual stream. DLR replaces the priority queue with a learned policy head, unifying routing, codebook, and model optimization in a single search--select--update phase. This serves as a neural relaxation of GDS that preserves its core structure. We illustrate DLR in Figure~\ref{fig:dlr-method-overview} and describe the latent codes and routing policy in \S\ref{sec:method:codes}, the objective in \S\ref{sec:method:loss}, and the search--select--update procedure in \S\ref{sec:method:search}.

\subsection{Discrete codes and chunk-level steering}
\label{sec:method:codes}

Let $\pi_{\theta}$ be a causal language model with hidden states
$h_t^{(l)}\in\mathbb{R}^{H}$ at block $l$ and token position $t$, and
let $l^{*}$ denote a chosen \emph{injection layer}. Given a
natural-language token sequence $x_{1:T}$, DLR maintains an
abstract-code sequence $a_{0:M-1}$ on a coarser temporal dimension:
each code steers a chunk of $K$ consecutive NL tokens at layer
$l^{*}$, where $K\in\mathbb{N}$ is the \emph{abstraction ratio}. The
chunk index of token~$t$ is
\begin{equation}\label{eq:chunkmap}
m(t) \;=\; \bigl\lfloor (t-1)/K \bigr\rfloor,
\qquad t\in\{1,\dots,T\},
\end{equation}
giving $M=\lceil T/K\rceil$ codes in total. Each chunk is first
\emph{routed} to a code and then \emph{steered} by the continuous
vector indexed by it (Fig.~\ref{fig:dlr-method-overview}, left). 
A linear head $W_{\mathrm{rt}}\in\mathbb{R}^{C\times H}$, with $C$
the codebook size, is applied at the \emph{first} token of each chunk
to produce routing logits:
\begin{equation}\label{eq:code}
z_m \;=\; W_{\mathrm{rt}}\,\mathrm{sg}\!\bigl(\,h_{mK+1}^{(l^{*})}\,\bigr)
\;\in\;\mathbb{R}^{C}.
\end{equation}
The role of the stop-gradient $\mathrm{sg}(\cdot)$ is discussed
alongside the loss in \S\ref{sec:method:loss}.
Each codebook entry $k\in\{0,\dots,C{-}1\}$ owns a learned vector
$e_k\in\mathbb{R}^{H}$, initialised to zero. At the same injection
layer, every token whose chunk index is $m$ has $\alpha\,e_{a_m}$
added in place:
\begin{equation}\label{eq:steer}
h_t^{(l^{*})}\;\leftarrow\;h_t^{(l^{*})} + \alpha\,e_{a_{m(t)}},
\qquad t\in\{1,\dots,T\},
\end{equation}
with $\alpha>0$ a fixed steering scale.

\subsection{Objective}
\label{sec:method:loss}

Write $x\equiv x_{1:T}$ for the NL sequence and $a\equiv a_{0:M-1}$
for the chunk-level code sequence. We use
$p_{\theta}(x)$ for the unsteered likelihood,
$p_{\theta}(x\mid a)$ for the conditional likelihood given codes $a$, 
substituted via Eq.~\eqref{eq:steer}, and
$p_{\theta}(a\mid x)$ for the routing head's distribution over
chunk codes read off from Eq.~\eqref{eq:code};
$\mathrm{sg}(\cdot)$ denotes stop-gradient.
The per-step objective is
\begin{equation}\label{eq:total-loss}
\begin{aligned}
\mathcal{L}_{\mathrm{DLR}}(x,a)
&\;=\;
\underbrace{-\log p_{\theta}(x)}_{\text{Generalist}}
\;-\;
\underbrace{\log\frac{p_{\theta}(x\mid a)}{\mathrm{sg}\bigl(p_{\theta}(x)\bigr)}}_{\text{Information Gain}}
\;-\;\alpha_{\mathrm{policy}}\,
\underbrace{\log p_{\theta}(a\mid x)}_{\text{Policy Optimization}} \\
&\;+\;\alpha_{\mathrm{reg}}\,
\underbrace{D_{\mathrm{KL}}\!\bigl(p_{\theta}(a)\,\big\|\,p_{\text{bi-zipf}}(a)\bigr)}_{\text{Marginal Entropy Regularization}}.
\end{aligned}
\end{equation}
\textbf{Generalist.} $-\log p_{\theta}(x)$ is the standard LM loss, preserving the unsteered model's capacity. 

\textbf{Information Gain.} The log-ratio
$\log\bigl[p_{\theta}(x\mid a)/\mathrm{sg}(p_{\theta}(x))\bigr]$
is the loss-side information-gain term ~\citep{hatamizadeh2025rlp}: it rewards code sequences that improve
the steered likelihood over the unsteered baseline, analogous to the
skill-conditioned intrinsic reward ~\citep{sharma2020dads}. 

\textbf{Policy and Marginal Entropy.} The last two terms are the mutual-information loss
$-\log p_{\theta}(a\mid x)+D_{\mathrm{KL}}(p_{\theta}(a)\|p(a))$
for unsupervised skill discovery~\citep{eysenbach2019diayn}, encouraging global diversity whilst reducing local uncertainty.
$p(a)$ is instantiated as bigram-Zipfian 
$p_{\text{bi-zipf}}$ (Appendix~\ref{app:zipf}). The stop-gradient in
Eq.~\eqref{eq:code} keeps routing-side and LM-side gradients separated,
as in standard skill-discovery objectives~\citep{eysenbach2019diayn,sharma2020dads}.

\subsection{Search-Select-Update Procedure}
\label{sec:method:search}

The reward of a code sequence $a$ given an NL sequence $x$ is 
$R_{\theta}(x,a)=\log p_{\theta}(x\mid a)$, this changes as model $p_{\theta}$ are updated, coupling model $p_{\theta}(x | a)$ and policy $p_{\theta}(a | s)$. A separately optimized policy quickly degrades as the model changes, and an optimized model may not perform well under an updated policy due to degrading $\mathbb{E}_{a \sim p_{\theta}(a | s)}[\log p_{\theta}(s | a)]$. We therefore propose Dynamic Latent Routing (DLR) to jointly update policy $p_{\theta}(a | s)$ and model $p_{\theta}(s | a)$ via a search, select update process, serving as a neural relaxation of General Dijkstra Search \S\ref{sec:gds}, where we replace priority query with a policy module to "record" and "pick" the best codes. 

% addresses both with a per-step search (Algorithm~\ref{alg:sorl-step}),
% a single-step relaxation of the General Dijkstra Search of
% \S\ref{sec:gds} that returns the best-scoring rollout $a^{*}$.

\begin{algorithm}[t]
\caption{Dynamic Latent Routing (DLR).}
\label{alg:sorl-step}
Initialize $\theta$ (base LM, codebook $\{e_k\}$, routing head $W_{\mathrm{rt}}$)\;
Given a temperature schedule $t_{1:N}$ and total training steps $S$\;
\For{$s = 1$ \KwTo $S$}{
  Sample $a^{(i)} \sim p_{\theta}^{\,t_i}(a\mid x)$ for $i\in\{1,\dots,N\}$\;
  Select $a^{*} = \arg\max_{a^{(i)}}\, p_{\theta}(x\mid a^{(i)})$\;
  Optimize $\mathcal{L}_{\mathrm{DLR}}(x,a^{*})$\;
}
\end{algorithm}

\section{Experiments}
\label{sec:experiments}

\paragraph{Setup.} We evaluate DLR against SFT, Pause Tokens~\citep{goyal2023pause},
and TokenAssorted~\citep{su2025tokenassorted} on five base models---Qwen3-\{0.6B,
1.7B, 4B\}~\citep{qwen3technical} and Llama3.2-\{1B, 3B\}~\citep{grattafiori2024llama3}---across
four QA benchmarks: GSM8K~\citep{cobbe2021gsm8k}, ScienceQA~\citep{lu2022learn},
StrategyQA~\citep{geva2021strategyqa}, and CSQA~\citep{talmor2019commonsenseqa}.
We also include a $C{=}1$ DLR variant that learns a single steering vector jointly
with the model, providing a fair comparison to representation
engineering~\citep{zou2023representation} which freezes the model and updates only
the steering vector. \textbf{All methods are matched in data and optimizer steps}:
one epoch on each dataset's standard training split, lr $10^{-5}$, effective batch
size 8, identical optimizer steps. DLR uses codebook size $C{=}32$, abstraction
ratio $K{=}4$, number of rollouts $N{=}8$. Experiments are run on $2{\times}$H100.

\paragraph{A low-data regime.} Inserting new tokens disrupts the natural-language
structure that pre-trained models rely on, requiring substantial compute to
overcome. Pause Tokens~\citep{goyal2023pause} use a dedicated pre-finetuning
stage on a large corpus to make the model tolerate this disruption, and
TokenAssorted~\citep{su2025tokenassorted} requires multi-epoch training on a
merged multi-domain corpus to be effective. Our setting is the opposite: a
single epoch on each dataset's native split, often an order of magnitude
smaller. Methods that alter the sequence structure need this adaptation budget
before the new tokens acquire meaning; DLR avoids this by adding steering
vectors to the hidden states, leaving the input sequence intact and well-defined
from the first step.

\paragraph{Results.} Table~\ref{tab:main} reports test accuracy with 95\%
bootstrap confidence intervals. DLR wins every (model, dataset) cell, with a
$+6.6$\,pp mean gain over SFT across all 24 cells ($+7.8$\,pp on the three
reasoning benchmarks). Pause Tokens roughly tracks SFT on GSM8K/SciQA
and trails on StrategyQA; TokenAssorted collapses on math/reasoning at small
scale ($15.7$ vs.\ SFT $46.0$ on GSM8K-Qwen3-0.6B; $17.3$ vs.\ SFT $56.4$ on
SciQA-Qwen3-1.7B), consistent with the regime note above. DLR's largest gains
land on the same reasoning cells where TokenAssorted collapses
(SciQA: $+18.8$\,pp on Qwen3-8B, $+12.9$\,pp on Qwen3-4B;
GSM8K: $+10.2$\,pp on Llama-1B, $+6.7$\,pp on Qwen3-4B)---evidence that
\emph{data-dependent routing} unlocks new capability without disturbing the
language model.

% --- Legacy commentary kept for reference; superseded by paragraphs above. ---
\iffalse
The two competing latent-augmentation baselines do not uniformly improve
on SFT in this regime: pause tokens roughly match SFT on GSM8K/SciQA
and add a small margin on CSQA, while TokenAssorted catastrophically
degrades math/reasoning at smaller scales (e.g.\ $15.7$ vs.\ SFT
$46.0$ on GSM8K-Qwen3-0.6B; $17.3$ vs.\ SFT $56.4$ on
SciQA-Qwen3-1.7B). The TokenAssorted failure is informative rather
than incidental: replacing whole NL-token chunks with newly
introduced abstract tokens breaks the input sequence structure, and
\citet{su2025tokenassorted} recovers from this perturbation only by
training for many more epochs on far larger corpora; under
single-epoch low-data finetuning the abstract tokens never acquire
useful meaning. By contrast, DLR adds steering \emph{into} an
unaltered NL sequence and is therefore well-defined from step one.
DLR's gains are largest on the same math/reasoning benchmarks where
TokenAssorted collapses (GSM8K: $+5.5$\,pp on Qwen3-1.7B,
$+10.2$\,pp on Llama-1B, $+6.7$\,pp on Qwen3-4B), indicating that
data-dependent routing unlocks genuinely new capability without
disturbing the underlying language model.
\fi

\paragraph{Ablations on objective and hyperparameters.} We first validate the necessity of
each component of the DLR objective (Eq.~\eqref{eq:total-loss}) with an
extensive ablation grid over 5 models $\times$ 4 datasets (excluding
Qwen3-8B): removing any term degrades accuracy or triggers codebook
collapse; detailed results and analysis are deferred to
App.~\ref{app:ablations}. We further sweep the codebook size $C$, the
abstraction ratio $K$, the number of rollouts $N$, the rollout temperature
$\tau$, and the loss weights $\alpha_{\mathrm{reg}},\alpha_{\mathrm{policy}}$
(App.~\ref{app:hp-sweeps}). Accuracy degrades when the codebook is
non-diverse ($C{=}1$), when rollouts are non-diverse ($\tau{=}0$), when
search is removed ($N{=}1$), or when the policy head is untrained
($\alpha_{\mathrm{policy}}{=}0$), validating the search-select-update
design of DLR.

\begin{table}[t]
  \centering
  \footnotesize
  \setlength{\tabcolsep}{5pt}
  \caption{Main results: test accuracy (\%) on four QA benchmarks.
    \textbf{DLR} is compared against SFT and two
    latent-augmentation baselines (pause tokens, TokenAssorted)
    matched in data and optimiser. \textbf{Bold}: best
    per (model, dataset). Subscripts are 95\%\,CI half-widths
    ($\pm$pp) from bootstrap resampling on the test sets.
    Split sizes are shown beneath each column header.}
  \label{tab:main}
  \begin{tabular}{l l cccc}
    \toprule
    Model & Method & GSM8K & SciQA & StrategyQA & CSQA \\
          &        & \scriptsize{(/1319)} & \scriptsize{(/2224)}
          & \scriptsize{(/687)} & \scriptsize{(/1221)} \\
    \midrule
    % ---- Qwen3-0.6B ----
    \multirow{5}{*}{Qwen3-0.6B}
      & SFT           & \val{46.0}{2.7} & \val{48.0}{2.1} & \val{47.0}{3.7} & \val{64.1}{2.7} \\
      & PauseToken    & \val{46.2}{2.7} & \val{47.9}{2.1} & \val{24.6}{3.2} & \val{64.7}{2.7} \\
      & TokenAssorted & \val{15.7}{2.0} & \val{13.1}{1.4} & \val{29.4}{3.4} & \val{64.2}{2.7} \\
      & DLR ($C{=}1$) & \val{45.8}{2.7} & \val{48.9}{2.1} & \val{50.4}{3.7} & \val{64.0}{2.7} \\
      \rowcolor{dlrgold}
      & \textbf{DLR}  & \bval{49.4}{2.7} & \bval{55.3}{2.1} & \bval{54.6}{3.7} & \bval{66.4}{2.6} \\
    \midrule
    % ---- Qwen3-1.7B ----
    \multirow{5}{*}{Qwen3-1.7B}
      & SFT           & \val{60.2}{2.6} & \val{56.4}{2.1} & \val{51.7}{3.7} & \val{74.3}{2.5} \\
      & PauseToken    & \val{60.9}{2.6} & \val{56.2}{2.1} & \val{32.0}{3.5} & \val{76.4}{2.4} \\
      & TokenAssorted & \val{28.5}{2.4} & \val{17.3}{1.6} & \val{33.0}{3.5} & \val{76.3}{2.4} \\
      & DLR ($C{=}1$) & \val{63.5}{2.6} & \val{60.0}{2.0} & \val{55.3}{3.7} & \val{77.4}{2.3} \\
      \rowcolor{dlrgold}
      & \textbf{DLR}  & \bval{65.7}{2.6} & \bval{64.1}{2.0} & \bval{60.3}{3.7} & \bval{78.4}{2.3} \\
    \midrule
    % ---- Qwen3-4B ----
    \multirow{5}{*}{Qwen3-4B}
      & SFT           & \val{75.4}{2.3} & \val{59.4}{2.0} & \val{65.8}{3.5} & \val{82.5}{2.1} \\
      & PauseToken    & \val{78.6}{2.2} & \val{63.8}{2.0} & \val{37.6}{3.6} & \val{80.2}{2.2} \\
      & TokenAssorted & \val{60.9}{2.6} & \val{68.0}{1.9} & \val{39.3}{3.7} & \val{80.6}{2.2} \\
      & DLR ($C{=}1$) & \val{78.6}{2.2} & \val{64.1}{2.0} & \val{66.2}{3.5} & \val{81.1}{2.2} \\
      \rowcolor{dlrgold}
      & \textbf{DLR}  & \bval{82.1}{2.1} & \bval{72.3}{1.9} & \bval{68.7}{3.5} & \bval{84.0}{2.1} \\
    \midrule
    % ---- Qwen3-8B ----
    \multirow{5}{*}{Qwen3-8B}
      & SFT           & \val{78.9}{2.1} & \val{61.8}{2.0} & \val{68.2}{3.5} & \val{85.6}{2.1} \\
      & PauseToken    & \val{79.4}{2.1} & \val{65.2}{2.0} & \val{38.8}{3.6} & \val{82.9}{2.2} \\
      & TokenAssorted & \val{63.0}{2.5} & \val{69.7}{1.9} & \val{41.2}{3.7} & \val{83.1}{2.2} \\
      & DLR ($C{=}1$) & \val{78.9}{2.2} & \val{66.4}{2.0} & \val{70.3}{3.5} & \val{83.8}{2.1} \\
      \rowcolor{dlrgold}
      & \textbf{DLR}  & \bval{84.3}{2.0} & \bval{80.6}{1.7} & \bval{72.9}{3.4} & \bval{87.2}{1.9} \\
    \midrule
    % ---- Llama3.2-1B ----
    \multirow{5}{*}{Llama3.2-1B}
      & SFT           & \val{30.9}{2.5} & \val{38.0}{2.0} & \val{48.0}{3.7} & \val{65.2}{2.7} \\
      & PauseToken    & \val{31.3}{2.5} & \val{34.0}{2.0} & \val{25.5}{3.3} & \val{64.9}{2.7} \\
      & TokenAssorted & \val{14.2}{1.9} & \val{33.6}{2.0} & \val{29.3}{3.4} & \val{65.6}{2.7} \\
      & DLR ($C{=}1$) & \val{38.4}{2.6} & \val{43.7}{2.1} & \val{48.6}{3.7} & \val{68.3}{2.6} \\
      \rowcolor{dlrgold}
      & \textbf{DLR}  & \bval{41.1}{2.7} & \bval{49.1}{2.1} & \bval{51.4}{3.7} & \bval{71.4}{2.5} \\
    \midrule
    % ---- Llama3.2-3B ----
    \multirow{5}{*}{Llama3.2-3B}
      & SFT           & \val{41.3}{2.7} & \val{56.7}{2.1} & \val{52.5}{3.7} & \val{79.8}{2.3} \\
      & PauseToken    & \val{41.5}{2.7} & \val{57.6}{2.1} & \val{29.0}{3.4} & \val{77.1}{2.4} \\
      & TokenAssorted & \val{21.8}{2.2} & \val{34.6}{2.0} & \val{34.7}{3.6} & \val{77.3}{2.4} \\
      & DLR ($C{=}1$) & \val{46.3}{2.7} & \val{59.2}{2.0} & \val{56.1}{3.7} & \val{79.6}{2.3} \\
      \rowcolor{dlrgold}
      & \textbf{DLR}  & \bval{49.1}{2.7} & \bval{63.3}{2.0} & \bval{62.3}{3.6} & \bval{81.0}{2.2} \\
    \bottomrule
  \end{tabular}
\end{table}

% =================================================================
\section{Analysis}
\label{sec:analysis}

Having established downstream gains, we now examine the routing structure DLR
learns, focusing on ScienceQA dataset. We present four analyses supporting two claims: routing is dynamic and
context-dependent, and individual
codes carry distinct, topic-specific causal effects. We remark that topic label is not accessible during 
training, the emerged structure comes solely from the DLR objective.

% -------------------------------------------------------------
% \subsection{Codebook diversity}

\begin{table}[t]
  \centering
  \footnotesize
  \caption{Codebook diversity on SciQA: mean cosine similarity (cos) between distinct vectors and fraction of codes used at least once (util.).}
  \label{tab:codebook-diversity}
  \begin{tabular}{lcccccc}
    \toprule
    Model & Qwen3-0.6B & Qwen3-1.7B & Qwen3-4B & Qwen3-8B & Llama3.2-1B & Llama3.2-3B \\
    \midrule
    cos   & $0.24$ & $0.28$ & $0.16$ & $0.07$ & $0.01$ & $0.00$  \\
    util. & $41\%$ & $31\%$ & $56\%$ & $76\%$ & $78\%$ & $100\%$ \\
    \bottomrule
  \end{tabular}
\end{table}

\textbf{Diverse routing policy.} We first measure mean cosine similarity between distinct
vectors in the codebook, as well as the fraction of codes selected at least
once by the router. Table~\ref{tab:codebook-diversity} reports the results:
steering vectors are distinct across all models (cosine ${\leq}0.28$), and code
utilization ranges from $31\%$ to $100\%$, indicating that DLR learns a
diverse, non-collapsed codebook.

% -------------------------------------------------------------
% \subsection{N-gram topic specialization}
\textbf{Structured, context-dependent routing.} We next measure topic purity for code $n$-grams (appearing more than $30$ times) using the topic labels provided
in ScienceQA (e.g., physics, biology, chemistry). We define topic purity of a
code $n$-gram as the proportion of its occurrences assigned to its top topic,
capturing whether the $n$-gram specializes to a particular context. A random
router would achieve a purity of $0.17$ due to unbalanced topic distribution.
Figure~\ref{fig:ngram-purity} shows that code $n$-grams are consistently more
topic-pure than this baseline, and purity increases with $n$-gram length,
indicating that longer code compositions are increasingly topic-specialized.
This supports the claim that DLR learns structured, input-dependent routing
patterns rather than assigning codes randomly.

\begin{figure}[!t]
  \centering
  \includegraphics[width=0.85\linewidth]{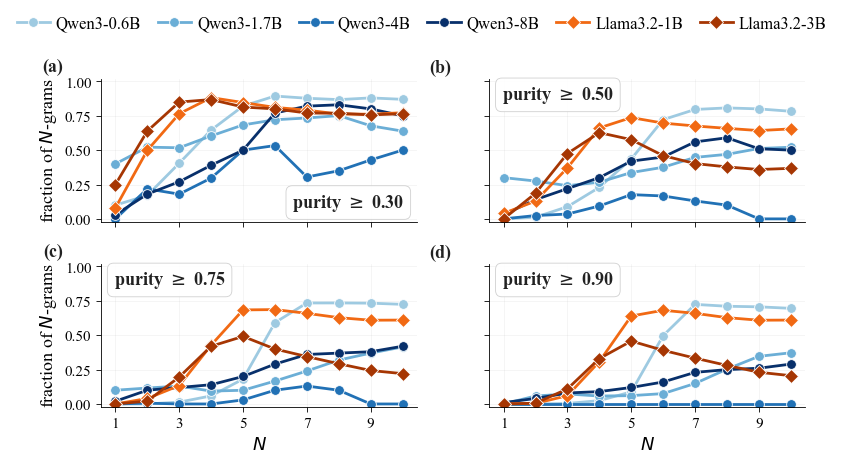}
  \caption{Fraction of $N$-grams with above threshold topic purity vs.\ $N$.}
  \label{fig:ngram-purity}
\end{figure}

% -------------------------------------------------------------
% \subsection{Necessity: global perturbation}
\textbf{Codes are causally load-bearing.}
\label{sec:necessity}
Two interventions confirm that the learned routes are necessary for
DLR's gains. Globally zeroing the steering scale or replacing routed codes with
random alternatives reduces SciQA accuracy by $6.2$--$17.4$\,pp and
$4.8$--$11.7$\,pp respectively across all Qwen models
(Table~\ref{tab:global-ablation}). Locally ablating a single code at a time
also degrades accuracy on aggregate, with mean drops of
$0.71$--$2.74$\,pp over four tested codes per model
(Table~\ref{tab:percode-acc-summary}).

\begin{table}[!t]
  \centering
  \small
  \begin{minipage}[t]{0.52\linewidth}
    \centering
    \setlength{\tabcolsep}{5pt}
    \begin{tabular}{lrrr}
      \toprule
      Model      & steered & scale$\to$0       & rand.\ replace    \\
      \midrule
      Qwen3-0.6B & $55.3\%$ & \dcell{2}{$-6.2$}  & \dcell{1}{$-4.8$}  \\
      Qwen3-1.7B & $64.1\%$ & \dcell{2}{$-6.4$}  & \dcell{2}{$-5.7$}  \\
      Qwen3-4B   & $72.3\%$ & \dcell{4}{$-12.2$} & \dcell{3}{$-8.1$}  \\
      Qwen3-8B   & $80.6\%$ & \dcell{5}{$-17.4$} & \dcell{4}{$-11.7$} \\
      \bottomrule
    \end{tabular}
    \caption{Global perturbation: SciQA $\Delta$acc (pp); cell intensity scales with drop magnitude.}
    \label{tab:global-ablation}
  \end{minipage}\hfill
  \begin{minipage}[t]{0.45\linewidth}
    \centering
    \setlength{\tabcolsep}{6pt}
    \begin{tabular}{lrrr}
      \toprule
      Model      & mean $\Delta$       & min                & max                \\
      \midrule
      Qwen3-0.6B & \dcell{2}{$-2.74$}  & \dcell{2}{$-3.90$} & \dcell{1}{$-1.43$} \\
      Qwen3-1.7B & \dcell{1}{$-0.71$}  & \dcell{1}{$-1.19$} & \dcell{1}{$-0.19$} \\
      Qwen3-4B   & \dcell{1}{$-1.06$}  & \dcell{1}{$-1.48$} & \dcell{1}{$-0.57$} \\
      Qwen3-8B   & \dcell{1}{$-1.31$}  & \dcell{2}{$-1.72$} & \dcell{1}{$-0.84$} \\
      \bottomrule
    \end{tabular}
    \caption{Single-code ablation: SciQA $\Delta$acc (pp) over four codes per model.}
    \label{tab:percode-acc-summary}
  \end{minipage}
\end{table}

% Per-topic, the aggregate masks sign-flipping effects (Fig.~\ref{fig:ablation-topic-delta}): ablating code~0 lifts \emph{biology} $+3.6$\,pp on Qwen3-0.6B (while dropping \emph{physics} $-4.0$\,pp), \emph{writing-strategies} $+4.8$\,pp on Qwen3-1.7B, and \emph{chemistry} $+9.8$\,pp on Qwen3-4B. Each code is a \emph{topic-conditional} handle.
\textbf{Codes are topic-specific.}
The aggregate single-code drop hides sign-flipping per-topic effects
(Fig.~\ref{fig:ablation-topic-delta}): ablating code~$0$ lifts \emph{biology}
by $+3.6$\,pp on Qwen3-0.6B while costing \emph{physics} $-4.0$\,pp, and
similarly improves \emph{writing-strategies} ($+4.8$\,pp, Qwen3-1.7B) and
\emph{chemistry} ($+9.8$\,pp, Qwen3-4B). Individual codes thus act as
topic-conditional handles rather than uniform performance knobs.

\begin{figure}[!t]
  \centering
  \includegraphics[width=0.88\linewidth]{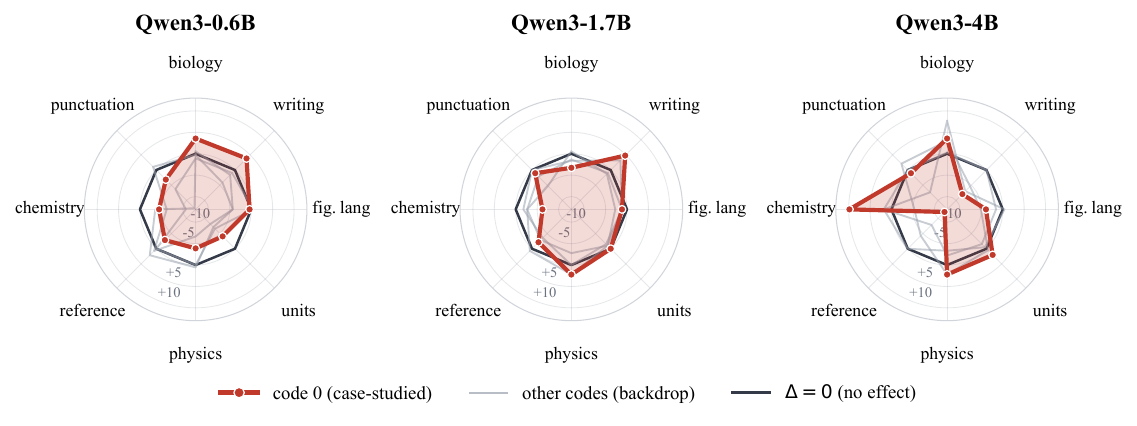}
  \caption{Per-topic $\Delta$acc (pp) under single-code ablation.}
  \label{fig:ablation-topic-delta}
\end{figure}

\subsection{Case study: six-digit arithmetic}
\label{sec:arithmetic}
While ScienceQA reveals structured and causally meaningful routing behavior, the underlying computation remains latent. We therefore turn to six-digit arithmetic, where the internal algorithmic structure is explicitly known: \citet{quirke_2024_addsub_preprint} identify several mutually exclusive subtask types at each answer-digit position, including carry generation, carry use, sum-9 boundary detection, and borrow cascades, that transformers must implement to solve the task.
(See Table~\ref{tab:quirke-subtasks} for details).
This makes arithmetic an ideal testbed for one application of DLR: that abstraction codes may \emph{externalize} reasoning steps, making them directly observable and intervenable without any activation-level tooling. We train a 2 layer, 1 Head, 128 hidden dimensional (\texttt{2L/1H/128d}) transformer on a synthetically generated dataset.

DLR inserts one abstraction code per answer-digit position ($K{=}1$,
codebook size $|\mathcal{A}|{=}30$), so each routing decision is a named, discrete symbol emitted at generation time.
Figure~\ref{fig:arithmetic-example} shows this concretely:
for $959{,}271 + 040{,}756 = 1{,}000{,}027$ (a four-deep carry cascade),
DLR assigns \texttt{t2} at every cascade position (UC/US),
\texttt{t6} at the sum-9 boundary (SS), and distinct codes at the trivial positions (SC/SA) - the carry structure is readable off the code sequence with no probing or patching required.
Full training and architecture details are in Appendix~\ref{app:training}.

\textbf{Abstraction codes recover known circuits without supervision.}
Analysis of \texttt{2L/1H/128d} shows that DLR's codebook
spontaneously partitions into subtask-specialist codes:
each of the 23 active codes concentrates on a narrow slice of the Quirke taxonomy - dominant subtask accounts for ${\geq}70\%$ of occurrences for the majority of codes - and is locked to one or two answer positions.
This structure emerges from the info-gain loss alone, with no access to ground-truth subtask labels or carry state.
Codes are also causally necessary: knocking out all codes collapses accuracy from 95.5\% to 0.1\%, confirming they carry the computation rather than merely annotating it.

\textbf{Named codes enable stronger fine-tuning and causal interventions.}
Because the routing codes are discrete and named, surgical model edits are possible that have no analog in standard transformers:
swapping a single code at one answer position fixes wrong predictions
at a 27-31\% rate on carry-heavy examples (cross-operation transplant:
93.5\% vs.\ 75.5\% random baseline).
Interpretability here is not merely post-hoc - it translates directly into the ability to correct the model.

Correspondingly, finetuned DLR outperforms \sft{} strongly on 12 of 13 tested
(architecture, data-size) configurations, and on \emph{all 13} on
the hardest 6-deep carry cascades, with gains as large as $+50$\,pp
(Table~\ref{tab:undersized-wins}).
The margin grows with cascade depth, consistent with explicit carry/borrow routing being the mechanism behind the gain. See Appendix \ref{app:arithmetic} for further details on DLR interpretability, including a demonstration of auto-interp ~\citep{bills2023language_models_explain_neurons}, code specializations and polysemantic codes.

\section{Conclusion}

We proved that under dynamic MDPs, optimal goal-reaching policies can be obtained by concatenating sub-policies via General Dijkstra Search. Applying the principle of GDS, we proposed DLR to train an LLM to create, search, and concatenate its own latent codes within a single training phase. Compared to other discrete-latent baselines, DLR uniquely outperforms SFT by $+6.6$\,pp under the low-data fine-tuning regime. Analyses show that codes learned by DLR specialize to distinct subtasks and are causally load-bearing without any subtask labels provided during training. These results suggest that DLR can serve as a practical mechanism for learning compositional internal control in language models. We discuss limitations in Appendix~\ref{sec:limitations}.

\bibliographystyle{plainnat}
\bibliography{reference}
% \end{bibliography}

% \section*{References}

% References follow the acknowledgments in the camera-ready paper. Use unnumbered first-level heading for
% the references. Any choice of citation style is acceptable as long as you are
% consistent. It is permissible to reduce the font size to \verb+small+ (9 point)
% when listing the references.
% Note that the Reference section does not count towards the page limit.
% \medskip

% {
% \small

% [1] Alexander, J.A.\ \& Mozer, M.C.\ (1995) Template-based algorithms for
% connectionist rule extraction. In G.\ Tesauro, D.S.\ Touretzky and T.K.\ Leen
% (eds.), {\it Advances in Neural Information Processing Systems 7},
% pp.\ 609--616. Cambridge, MA: MIT Press.

% [2] Bower, J.M.\ \& Beeman, D.\ (1995) {\it The Book of GENESIS: Exploring
%   Realistic Neural Models with the GEneral NEural SImulation System.}  New York:
% TELOS/Springer--Verlag.

% [3] Hasselmo, M.E., Schnell, E.\ \& Barkai, E.\ (1995) Dynamics of learning and
% recall at excitatory recurrent synapses and cholinergic modulation in rat
% hippocampal region CA3. {\it Journal of Neuroscience} {\bf 15}(7):5249-5262.
% }

%%%%%%%%%%%%%%%%%%%%%%%%%%%%%%%%%%%%%%%%%%%%%%%%%%%%%%%%%%%%

\appendix

% ------------------------------------------------------------------
% Hand-typed Appendix TOC (avoids etoc / minitoc package fragility).
% Page numbers resolve via \pageref to each \label below; requires
% one extra LaTeX pass after first compile.
% ------------------------------------------------------------------
\newcommand{\apptocentry}[3]{%
  \noindent\makebox[1.6em][l]{\textbf{#1.}}%
  \textbf{\hyperref[#3]{#2}}%
  \nobreak\leaders\hbox to 0.6em{\hss.\hss}\hfill%
  \nobreak\hbox to 1.6em{\hfil\pageref{#3}}\par
  \vspace{2pt}}
\newcommand{\apptocsub}[1]{%
  \noindent\hspace*{2.6em}{\small\itshape #1}\par
  \vspace{1pt}}

\begin{center}
  {\Large\bfseries Appendix Contents}
\end{center}
\vspace{0.5em}

\noindent\hrulefill\par
\vspace{0.6em}

{\noindent\small Part~I: Theory.}\par
\vspace{0.4em}

\apptocentry{A}{Theory for Dynamic MDP and Existence of Optimal Policy}{app:dmdp-theory}
\apptocsub{Time-indexed value/Q-functions; Bellman optimality for DMDPs}
\apptocsub{Existence and concatenation theorems with full proofs}

\apptocentry{B}{Policy Composition}{app:policy-composition}
\apptocsub{Policy dominance, dominating/dominated goal sets}
\apptocsub{Lemmas~1--8 and Optimality Theorem for GDS (full proofs)}

\apptocentry{C}{Symmetric goal-covering result}{app:gds-coverage}
\apptocsub{Dual algorithm and optimality for goal-covering policies}

\vspace{0.6em}
{\noindent\small Part~II: Method.}\par
\vspace{0.4em}

\apptocentry{D}{Bigram-Zipfian prior}{app:zipf}
\apptocsub{KL-divergence regulariser on consecutive code pairs}

\vspace{0.6em}
{\noindent\small Part~III: Ablations and sweeps.}\par
\vspace{0.4em}

\apptocentry{E}{Loss-term ablations}{app:ablations}
\apptocsub{$\alpha_{\mathrm{policy}}{=}0$, $\alpha_{\mathrm{base}}{=}0$, $\alpha_{\mathrm{reg}}{=}0$ across 5 models $\times$ 4 datasets}

\apptocentry{F}{Hyperparameter sweeps}{app:hp-sweeps}
\apptocsub{Codebook size $C$, abstraction ratio $K$, marginal-entropy weight $\alpha_{\mathrm{reg}}$, policy weight $\alpha_{\mathrm{policy}}$}
\apptocsub{Search-budget $N$ ($N{=}1$ vs.\ $N{=}8$) and sampling temperature $\tau$ ($\tau{=}0,2$)}

\vspace{0.6em}
{\noindent\small Part~IV: Empirical results.}\par
\vspace{0.4em}

\apptocentry{G}{Global ablation: full results}{app:global-ablation-full}
\apptocsub{All five models (excluding Qwen3-8B) under \emph{steered}, \emph{scale}\,$\to$\,$0$, and \emph{random-replace}}
\apptocsub{Llama checkpoints extending the Qwen-only main-text Table~3}

\apptocentry{H}{Single-code ablation case studies (full traces)}{app:case-studies}
\apptocsub{Qwen3-0.6B \textbullet\ biology \textbullet\ ablating code~0}
\apptocsub{Qwen3-1.7B \textbullet\ writing-strategies \textbullet\ ablating code~0}
\apptocsub{Qwen3-4B \textbullet\ chemistry \textbullet\ ablating code~0}

\apptocentry{I}{Arithmetic case study: interpretability analysis}{app:arithmetic}
\apptocsub{Performance of DLR vs.\ SFT on undersized architectures and causal ablations}
\apptocsub{Code--subtask heatmap, polysemantic codes, and automated code interpretation}

\vspace{0.6em}
\noindent\hrulefill\par
\vspace{0.4em}

\noindent\small Page numbers shown at right are clickable hyperlinks in the
PDF version. Cross-references in the main text use the labels listed above
(e.g., App.~\ref{app:ablations} for loss-term ablations).
\normalsize

\clearpage

\section{Theory for Dynamic MDP and Existence of Optimal Policy}
\label{app:dmdp-theory}
\begin{definition}[Dynamic value function and Q-function]
Given a dynamic policy $\pi \in \Pi$, its (time-indexed) value function
$V^{\pi} : \mathcal{S} \times \{0,\dots,T\} \to \mathbb{R}$ is defined by
\[
V^{\pi}_{t}(s)
:= \mathbb{E}_{\pi}\Bigl[ \sum_{i=t}^{T} \gamma^{\,i-t} \,
    r_{i}(s_{i}, a_{i}) \,\Big|\, s_{t} = s \Bigr],
\]
where $a_i = \pi_i(s_i)$ for $i \ge t$, and the expectation is taken over
trajectories $(s_t,a_t,\dots,s_T,a_T)$ induced by $\pi$ and $P$.

Its (time-indexed) Q-function $Q^{\pi} : \mathcal{S} \times \mathcal{A}
\times \{0,\dots,T\} \to \mathbb{R}$ is defined by
\[
Q_{t}^{\pi}(s,a)
:= \mathbb{E}_{\pi}\Bigl[ \sum_{i=t}^{T} \gamma^{\,i-t} \,
    r_{i}(s_{i}, a_{i}) \,\Big|\, s_{t} = s,\, a_{t} = a \Bigr],
\]
where $a_t$ is fixed to be $a$ and $a_i = \pi_i(s_i)$ for $i > t$.
\end{definition}

\begin{definition}[Finite-horizon truncation]
Let $\pi = (\pi_{0},\pi_{1},\dots,\pi_{T-1})$ be a time-varying policy with
horizon $T$. For any $n \le T$, define the truncated policy by
\[
\pi_{0:n} := (\pi_{0},\pi_{1},\dots,\pi_{n-1}).
\]
\end{definition}

\begin{definition}[Finite-horizon value function and Q-function]
Let $\pi = (\pi_{0},\pi_{1},\dots,\pi_{T-1})$ be a time-varying policy. Its
finite-horizon value function and Q-function are defined, for $t=0,\dots,T-1$,
by
\[
V_{t}^{\pi}(s)
:= \mathbb{E}\Bigg[
    \sum_{i=t}^{T-1} \gamma^{\,i-t}\, r_{i}(s_{i}, a_{i})
    \,\Big|\, s_{t}=s,\ a_{i}=\pi_{i}(s_{i})
\Bigg],
\]
\[
Q_{t}^{\pi}(s,a)
:= r_{t}(s,a)
 + \mathbb{E}\Bigg[
    \sum_{i=t+1}^{T-1} \gamma^{\,i-t}\, r_{i}(s_{i}, a_{i})
    \,\Big|\, s_{t}=s,\ a_{t}=a,\ a_{i}=\pi_{i}(s_{i})
\Bigg].
\]
For a truncated policy $\pi_{0:n}$, the same formulas apply with terminal time
$n-1$ in place of $T-1$.
\end{definition}

\begin{theorem}[Bellman equations for finite-horizon time-varying policies]
For any finite-horizon time-varying policy $\pi$,
\[
V^{\pi}_{t}(s)
= r_{t}\bigl(s,\pi_{t}(s)\bigr)
  + \gamma \sum_{s'} p\bigl(s' \mid s,\pi_{t}(s)\bigr) V^{\pi}_{t+1}(s'),
\quad t=0,\dots,T-1,
\]
and
\[
Q_{t}^{\pi}(s,a)
= r_{t}(s,a)
  + \gamma \sum_{s'} p(s' \mid s,a) V^{\pi}_{t+1}(s'),
\quad t=0,\dots,T-1.
\]
The same identities hold for truncated policies on their corresponding finite
horizon.
\end{theorem}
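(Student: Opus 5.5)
The plan is to prove both identities directly from the definition of the finite-horizon value function, by splitting off the first term of the discounted sum and then applying the Markov property together with the tower property of conditional expectation. I will handle the $Q$-identity first, since the $V$-identity then follows by specialization.

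\textbf{Step 1: split off the time-$t$ term.} Fix $t\le T-1$, $s$ and $a$. In the definition of $Q^{\pi}_{t}(s,a)$, the first term $r_{t}(s,a)$ is already isolated, and I write the remaining sum as
\[
\sum_{i=t+1}^{T-1}\gamma^{i-t}r_i(s_i,a_i)=\gamma\sum_{i=t+1}^{T-1}\gamma^{i-(t+1)}r_i(s_i,a_i).
\]

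\textbf{Step 2: condition on the next state.} I condition on $s_{t+1}$ and use the tower property. Given $s_t=s$ and $a_t=a$, the next state $s_{t+1}$ has law $p(\cdot\mid s,a)$, because the kernel is time-homogeneous. The future trajectory $(s_{t+1},a_{t+1},\dots)$ is generated by the deterministic rules $\pi_{i}$ for $i\ge t+1$ and by $P$, so by the Markov property its conditional law given $(s_t,a_t,s_{t+1}=s')$ equals its law given $s_{t+1}=s'$ alone. Hence the conditional expectation of the shifted sum is exactly $V^{\pi}_{t+1}(s')$ by definition. This gives
\[
Q^{\pi}_t(s,a)=r_t(s,a)+\gamma\sum_{s'}p(s'\mid s,a)V^{\pi}_{t+1}(s').
\]
The $V$-identity then follows by setting $a=\pi_t(s)$, because $V^{\pi}_t(s)=Q^{\pi}_t(s,\pi_t(s))$ directly from the definitions.

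\textbf{Step 3: the boundary step.} Here the convention needs care. For $t=T-1$ the sum in Step 1 is empty, so I must adopt $V^{\pi}_{T}\equiv 0$. The paper's first (infinite-style) definition sums up to $i=T$, while the finite-horizon definition stops at $T-1$. I will state explicitly that the finite-horizon definition is used, and define $V^{\pi}_T:=0$ so the recursion closes.

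\textbf{Truncated policies and the main obstacle.} For a truncated policy $\pi_{0:n}$ the same argument applies verbatim, with $n-1$ in place of $T-1$ and $V_n:=0$. This step is mainly bookkeeping; the only real subtlety is justifying the Markov-property step, since the policy is time-varying. That causes no difficulty, because the policy is a deterministic function of the current state and time, so the process $(s_i)$ is a time-inhomogeneous Markov chain. If $\mathcal{S}$ is not finite, the sums over $s'$ should be read as integrals against $P(\cdot\mid s,a)$, with boundedness of $r_t$ ensuring integrability.
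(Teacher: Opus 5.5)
Your proposal is correct and follows essentially the same route as the paper's argument, which appears as the proof of the Dynamic Bellman equations theorem immediately after this statement (the statement itself has no separate proof): split off $r_t(s,a)$, factor out $\gamma$, condition on $s_{t+1}$ via the tower property, and recognize $V^{\pi}_{t+1}(s')$. The differences are minor. The paper obtains the $V$-identity by averaging $Q^{\pi}_t(s,\cdot)$ over $\pi_t(\cdot\mid s)$ rather than substituting $a=\pi_t(s)$, and it writes the sums to $\infty$, so your explicit convention $V^{\pi}_T\equiv 0$ supplies a boundary detail the paper leaves implicit.
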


\begin{theorem}[Dynamic Bellman equations]

Given any (possibly time-dependent) policy $\pi = \{\pi_t\}_{t \ge 0}$ within a dynamic MDP environment $(\mathcal{S}, \mathcal{A}, P, \{r_t\}_{t \ge 0}, \gamma)$, its value function $V^{\pi}$ and Q-function $Q^{\pi}$ satisfy the (dynamic)
Bellman equations:
\begin{align}
Q_{t}^{\pi}(s,a)
&= r_{t}(s,a) + \gamma \sum_{s'} P(s' \mid s,a)\, V_{t+1}^{\pi}(s'), \\
V^{\pi}_{t}(s)
&= \sum_{a} \pi_{t}(a \mid s)\, Q^{\pi}_{t}(s,a), \\
V^{\pi}_{t}(s)
&= \sum_{a} \pi_{t}(a \mid s)\Bigl[
    r_{t}(s,a)
    + \gamma \sum_{s'} P(s' \mid s,a)\, V^{\pi}_{t+1}(s')
\Bigr].
\end{align}
\end{theorem}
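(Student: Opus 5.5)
The dynamic Bellman equations follow from the definition of $V^{\pi}_t$ and $Q^{\pi}_t$ by peeling off the time-$t$ term of the discounted sum and applying the Markov property together with the tower rule. I will treat $\pi_t(\cdot\mid s)$ as a possibly stochastic decision rule; the deterministic case is recovered by taking $\pi_t(a\mid s)=\mathbb{1}[a=\pi_t(s)]$. The finite-horizon convention $V^{\pi}_{T}\equiv 0$ is used at the last step, and for an infinite index set the sums are absolutely convergent because rewards are bounded and $\gamma<1$.

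\textbf{Step 1: the $Q$-equation.} Fix $t$, $s$, $a$. I will split the return as
\[
\sum_{i\ge t}\gamma^{i-t}r_i(s_i,a_i)=r_t(s,a)+\gamma\sum_{i\ge t+1}\gamma^{i-(t+1)}r_i(s_i,a_i).
\]
Taking the conditional expectation given $s_t=s$, $a_t=a$, I will condition further on $s_{t+1}$. The law of $s_{t+1}$ is $P(\cdot\mid s,a)$. By the Markov property of the process induced by $P$ and $\pi$, the conditional law of $(s_{t+1},a_{t+1},\dots)$ given $s_{t+1}=s'$ and the past is exactly the law defining $V^{\pi}_{t+1}(s')$. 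This uses time-homogeneity of $P$ and the fact that $\pi_i$ depends only on $(s_i,i)$. Hence the inner expectation equals $V^{\pi}_{t+1}(s')$, which gives the first identity.

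\textbf{Step 2: the $V$-equation.} Conditioning $V^{\pi}_t(s)$ on the first action $a_t\sim\pi_t(\cdot\mid s)$ gives $V^{\pi}_t(s)=\sum_a\pi_t(a\mid s)Q^{\pi}_t(s,a)$. Here I will again use that, given $(s_t,a_t)$, the remaining trajectory law coincides with the one in the definition of $Q^{\pi}_t$.

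\textbf{Step 3.} Substituting Step 1 into Step 2 yields the third identity.

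\textbf{Main obstacle.} The only delicate point is the rigorous justification of the Markov/tower step in Step 1: showing that the shifted trajectory after time $t+1$ is distributed as a fresh rollout of $\pi$ started at $s'$ at time $t+1$. I will argue this by writing the path measure explicitly as a product of the kernels $\pi_i(a_i\mid s_i)\,P(s_{i+1}\mid s_i,a_i)$ and factoring at index $t+1$. The interchange of sum and expectation is justified by finite horizon, or by dominated convergence using $|r_i|\le r_{\max}$.
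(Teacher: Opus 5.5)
Your proposal is correct and matches the paper's proof step for step. Both peel off $r_t(s,a)$, factor out $\gamma$, and condition on $s_{t+1}$ via the tower property to identify $V^{\pi}_{t+1}$. Both then condition on the first action $a_t\sim\pi_t(\cdot\mid s)$ to get the $V$-equation and substitute. Your explicit path-measure factorization at index $t+1$ justifies the Markov step, which the paper leaves implicit when it writes the inner conditional expectation as $V^{\pi}_{t+1}(s_{t+1})$.
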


\begin{proof}
We start from the definition of $Q_t^{\pi}(s,a)$:
\begin{align*}
Q_t^{\pi}(s,a)
&= \mathbb{E}_{\pi}\Bigl[\sum_{i=t}^{\infty} \gamma^{i-t} \, r_i(s_i, a_i)
\,\Big|\, s_t = s, a_t = a \Bigr] \\
&= \mathbb{E}_{\pi}\Bigl[
r_t(s_t,a_t)
+ \sum_{i=t+1}^{\infty} \gamma^{i-t} \, r_i(s_i, a_i)
\,\Big|\, s_t = s, a_t = a \Bigr] \\
&= r_t(s,a)
+ \underbrace{\mathbb{E}_{\pi}\Bigl[
\sum_{i=t+1}^{\infty} \gamma^{i-t} \, r_i(s_i, a_i)
\,\Big|\, s_t = s, a_t = a \Bigr]}_{(I)}.
\end{align*}

We now simplify term $(I)$. First factor out $\gamma$ and re-index the powers:
\begin{align*}
(I)
&= \mathbb{E}_{\pi}\Bigl[
\sum_{i=t+1}^{\infty} \gamma^{i-t} \, r_i(s_i, a_i)
\,\Big|\, s_t = s, a_t = a \Bigr] \\
&= \mathbb{E}_{\pi}\Bigl[
\sum_{i=t+1}^{\infty} \gamma \, \gamma^{i-(t+1)} \, r_i(s_i, a_i)
\,\Big|\, s_t = s, a_t = a \Bigr] \\
&= \gamma \, \mathbb{E}_{\pi}\Bigl[
\sum_{i=t+1}^{\infty} \gamma^{i-(t+1)} \, r_i(s_i, a_i)
\,\Big|\, s_t = s, a_t = a \Bigr].
\end{align*}

Apply the tower property of conditional expectation, conditioning on $s_{t+1}$:
\begin{align*}
(I)
&= \gamma \, \mathbb{E}_{\pi}\Bigl[
\mathbb{E}_{\pi}\Bigl[
\sum_{i=t+1}^{\infty} \gamma^{i-(t+1)} \, r_i(s_i, a_i)
\,\Big|\, s_{t+1} \Bigr]
\,\Big|\, s_t = s, a_t = a \Bigr].
\end{align*}

By the definition of the time-varying value function $V_{t+1}^{\pi}$, we have
\[
V_{t+1}^{\pi}(s')
= \mathbb{E}_{\pi}\Bigl[
\sum_{i=t+1}^{\infty} \gamma^{i-(t+1)} \, r_i(s_i, a_i)
\,\Big|\, s_{t+1} = s' \Bigr].
\]
Thus,
\begin{align*}
(I)
&= \gamma \, \mathbb{E}_{\pi}\Bigl[
V_{t+1}^{\pi}(s_{t+1})
\,\Big|\, s_t = s, a_t = a \Bigr] \\
&= \gamma \sum_{s'} P(s' \mid s,a) \, V_{t+1}^{\pi}(s').
\end{align*}

Substituting this back into the expression for $Q_t^{\pi}(s,a)$ yields
\[
Q_t^{\pi}(s,a)
= r_t(s,a) + \gamma \sum_{s'} P(s' \mid s,a) \, V_{t+1}^{\pi}(s'),
\]
which proves the first Bellman equation.

Next, consider the value function $V_t^{\pi}(s)$:
\begin{align*}
V_t^{\pi}(s)
&= \mathbb{E}_{\pi}\Bigl[
\sum_{i=t}^{\infty} \gamma^{i-t} \, r_i(s_i, a_i)
\,\Big|\, s_t = s \Bigr].
\end{align*}
Condition on the first action $a_t$ chosen according to $\pi_t(\cdot \mid s)$ and
use the tower property:
\begin{align*}
V_t^{\pi}(s)
&= \mathbb{E}_{\pi}\Bigl[
\mathbb{E}_{\pi}\Bigl[
\sum_{i=t}^{\infty} \gamma^{i-t} \, r_i(s_i, a_i)
\,\Big|\, s_t = s, a_t \Bigr]
\,\Big|\, s_t = s \Bigr] \\
&= \sum_{a} \pi_t(a \mid s) \,
\mathbb{E}_{\pi}\Bigl[
\sum_{i=t}^{\infty} \gamma^{i-t} \, r_i(s_i, a_i)
\,\Big|\, s_t = s, a_t = a \Bigr] \\
&= \sum_{a} \pi_t(a \mid s) \, Q_t^{\pi}(s,a),
\end{align*}
which proves the second equation.

Finally, substituting the first equation into the second, we obtain
\begin{align*}
V_t^{\pi}(s)
&= \sum_{a} \pi_t(a \mid s) \, Q_t^{\pi}(s,a) \\
&= \sum_{a} \pi_t(a \mid s)
\Bigl[ r_t(s,a) + \gamma \sum_{s'} P(s' \mid s,a) \, V_{t+1}^{\pi}(s') \Bigr],
\end{align*}
which is the third equation. This completes the proof.
\end{proof}

\begin{definition}[Time-varying Bellman policy operator]
The \emph{time-varying Bellman policy operator}
\[
\mathit{TP} : \Pi \to \Pi
\]
is defined for any policy $\pi \in \Pi$, time index $t \in \mathbb{N}$, and state
$s \in \mathcal{S}$ by
\begin{align*}
(\mathit{TP} \cdot \pi)_{t}(s)
&:= \arg\max_{a \in \mathcal{A}} Q_{t}^{\pi}(s,a) \\
&= \arg\max_{a \in \mathcal{A}}
\Bigl[
    r_{t}(s,a)
    + \gamma \sum_{s'} p(s' \mid s,a)\, V_{t+1}^{\pi}(s')
\Bigr].
\end{align*}
\end{definition}
Intuitively, Bellman policy operator asks a policy to do what it believed to be its best choice at any point in time. 

\begin{lemma}
Let $\pi = \{\pi_{t}\}_{t=0}^{T}$ be a (time-varying) policy.
For any integers $n \in \mathbb{N}$ and $m \ge 1$, and any state $s \in \mathcal{S}$,
the value $V_{n}^{\pi}(s)$ can be written as a linear combination of 
$\{V_{n+m}^{\pi}(s')\}_{s' \in \mathcal{S}}$:
\[
V_{n}^{\pi}(s)
= \sum_{s' \in \mathcal{S}} C\bigl(\pi_{n:n+m-1}, s'\bigr)\, V_{n+m}^{\pi}(s')
  + C\bigl(\pi_{n:n+m-1}\bigr),
\]
where the coefficients $C(\pi_{n:n+m-1}, s') > 0$ and $C(\pi_{n:n+m-1})$ depend
only on the policy slice $\{\pi^{t}\}_{t=n}^{n+m-1}$ and the dynamics, but not on 
$V_{n+m}^{\pi}$.
\end{lemma}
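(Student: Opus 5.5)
The plan is to unroll the finite-horizon Bellman equation for time-varying policies stated above $m$ times and to argue by induction on $m$. Throughout, $s$ is fixed, and the coefficients are allowed to depend on $s$ as well as on the policy slice and the dynamics. For $m=1$ the Bellman equation gives directly
\[
V^{\pi}_{n}(s) = r_{n}\bigl(s,\pi_{n}(s)\bigr) + \gamma \sum_{s'} P\bigl(s'\mid s,\pi_{n}(s)\bigr)\, V^{\pi}_{n+1}(s').
\]
So the base case holds with $C(\pi_{n:n}, s') := \gamma P(s'\mid s,\pi_n(s))$ and $C(\pi_{n:n}) := r_n(s,\pi_n(s))$. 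Both quantities involve only $\pi_n$, $P$ and $r_n$, and not $V^{\pi}_{n+1}$.

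For the inductive step, assume the representation holds for $m$:
\[
V^{\pi}_{n}(s)=\sum_{s'} C(\pi_{n:n+m-1},s')\,V^{\pi}_{n+m}(s') + C(\pi_{n:n+m-1}).
\]
Substitute into it the one-step Bellman equation at time $n+m$, namely
\[
V^{\pi}_{n+m}(s')=r_{n+m}\bigl(s',\pi_{n+m}(s')\bigr)+\gamma\sum_{s''}P\bigl(s''\mid s',\pi_{n+m}(s')\bigr)V^{\pi}_{n+m+1}(s'').
\]
Then exchange the finite sums. This yields new coefficients
\[
C(\pi_{n:n+m},s'')=\gamma\sum_{s'}C(\pi_{n:n+m-1},s')\,P\bigl(s''\mid s',\pi_{n+m}(s')\bigr)
\]
and a new constant
\[
C(\pi_{n:n+m})=C(\pi_{n:n+m-1})+\sum_{s'}C(\pi_{n:n+m-1},s')\,r_{n+m}\bigl(s',\pi_{n+m}(s')\bigr).
\]
By construction these depend only on the slice $\pi_{n},\dots,\pi_{n+m}$, on $P$ and on the rewards, which closes the induction. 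As a byproduct the closed form is
\[
C(\pi_{n:n+m-1},s')=\gamma^{m}\,\Pr_{\pi}\bigl(s_{n+m}=s'\mid s_n=s\bigr),
\qquad
C(\pi_{n:n+m-1})=\mathbb{E}_{\pi}\Bigl[\sum_{i=n}^{n+m-1}\gamma^{i-n} r_i(s_i,a_i)\,\Big|\, s_n=s\Bigr].
\]
This can also be obtained in one stroke from the tower property, by splitting the return at time $n+m$ in the same way as in the proof of the dynamic Bellman equations.

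I expect the main obstacle to be the sign condition $C(\pi_{n:n+m-1},s')>0$ rather than the algebra. The closed form shows that the coefficient is a discounted $m$-step transition probability. It is therefore nonnegative, but it vanishes for states not reachable from $s$ in $m$ steps, and it also vanishes when $\gamma=0$. I would therefore prove the lemma with $C\ge 0$, with strict positivity exactly on the support of the $m$-step distribution. The sum can then be restricted to those $s'$ for which strict positivity holds, and this is all that monotonicity arguments downstream require: increasing $V_{n+m}$ pointwise does not decrease $V_n$.

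A second point to handle is the horizon. When $n+m>T$ (or $n+m$ exceeds the horizon of a truncated policy), I would adopt the convention $V_{t}^{\pi}\equiv 0$ for $t$ beyond the horizon, so that the identity holds trivially. Finally, for stochastic policies the same argument goes through with $P(s'\mid s,\pi_t(s))$ replaced by $\sum_a \pi_t(a\mid s)P(s'\mid s,a)$, using the dynamic Bellman equations instead of the deterministic version.
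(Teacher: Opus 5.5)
Your proposal is correct and follows the paper's route: the paper also gets the representation by repeatedly unrolling the one-step dynamic Bellman equation from time $n$ to $n+m-1$, and your explicit induction, coefficient recursion, and closed form $C(\pi_{n:n+m-1},s')=\gamma^{m}\Pr_{\pi}(s_{n+m}=s'\mid s_n=s)$ simply make that unrolling precise. Like you, the paper's proof actually establishes only $C(\pi_{n:n+m-1},s')\ge 0$, with strict positivity on states reachable under the policy slice (which, as you note, also needs $\gamma>0$), and nonnegativity is all that the later policy-improvement argument uses.
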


\begin{proof}
Denote by $\pi^{n}$ the decision rule at time $n$, i.e.\ $a = \pi^{n}(s)$ for $s \in \mathcal{S}$.
By the (dynamic) Bellman equation, we have
\begin{align*}
V_{n}^{\pi}(s)
&= Q_{n}^{\pi}\bigl(s, \pi^{n}(s)\bigr) \\
&= r_{n}\bigl(s, \pi^{n}(s)\bigr)
   + \gamma(n+1) \sum_{s'} p\bigl(s' \mid s, \pi^{n}(s)\bigr)\,
     V_{n+1}^{\pi}(s').
\end{align*}
This shows that $V_{n}^{\pi}(s)$ is a linear combination of the values
$\{V_{n+1}^{\pi}(s')\}_{s' \in \mathcal{S}}$, where the coefficients (including the
constant term) depend only on $\pi^{n}$ and the transition kernel $p$.

By repeatedly unrolling this relation from time $n$ up to time $n+m-1$, we obtain
an expression of the form
\[
V_{n}^{\pi}(s)
= \sum_{s' \in \mathcal{S}} C\bigl(\pi_{n:n+m-1}, s'\bigr)\, V_{n+m}^{\pi}(s')
  + C\bigl(\pi_{n:n+m-1}\bigr),
\]
where the coefficients $C(\pi_{n:n+m-1}, s')$ and $C(\pi_{n:n+m-1})$ are determined
by the sequence of policies $\{\pi^{t}\}_{t=n}^{n+m-1}$, the transition probabilities
and the rewards. Since the transition probabilities and discount factors are nonnegative,
each $C(\pi_{n:n+m-1}, s')$ is nonnegative; in particular, for states that are reachable
under the policy slice $\pi_{n:n+m-1}$, these coefficients are strictly positive.

This completes the proof.
\end{proof}

\begin{theorem}[Policy Improvement Theorem]
Given a deterministic policy $\pi$ in a dynamic MDP, the Bellman policy
operator $\mathit{TP}$ improves the value function in the sense that
\[
V^{\mathit{TP} \cdot \pi}_{t}(s) \;\ge\; V^{\pi}_{t}(s)
\quad \forall\, s \in \mathcal{S},\ \forall\, t \in \mathbb{N}.
\]
\end{theorem}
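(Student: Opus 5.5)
The plan is to argue by backward induction on $t$, using the finite horizon: the decision rules act at $t=0,\dots,T-1$, with the terminal convention $V^{\pi}_{T}\equiv 0$ for every policy. Write $\pi' := \mathit{TP}\cdot\pi$. The key structural fact is that $\pi'_{t}$ is greedy with respect to $Q^{\pi}_{t}$, which involves $\pi$'s own continuation values $V^{\pi}_{t+1}$, not those of $\pi'$. So the inductive hypothesis must compare $V^{\pi'}_{t+1}$ with $V^{\pi}_{t+1}$. For $t\ge T$ the claim is trivial, since both sides vanish (or, for indices beyond the horizon, carry no rewards).

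\textbf{Inductive claim.} For each $t$, from $T$ down to $0$, we show $V^{\pi'}_{t}(s)\ge V^{\pi}_{t}(s)$ for all $s$.

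\textbf{Base case.} At $t=T$ both values are $0$, so the claim holds.

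\textbf{Inductive step.} Assume the claim at $t+1$ and fix a state $s$. By the finite-horizon Bellman equation (the theorem for finite-horizon time-varying policies),
\[
V^{\pi'}_{t}(s)=r_{t}(s,\pi'_{t}(s))+\gamma\sum_{s'}p(s'\mid s,\pi'_{t}(s))\,V^{\pi'}_{t+1}(s').
\]
Transition probabilities and $\gamma$ are nonnegative, so the inductive hypothesis gives
\[
V^{\pi'}_{t}(s)\ge r_{t}(s,\pi'_{t}(s))+\gamma\sum_{s'}p(s'\mid s,\pi'_{t}(s))\,V^{\pi}_{t+1}(s')=Q^{\pi}_{t}(s,\pi'_{t}(s)).
\]
By the definition of $\mathit{TP}$,
\[
Q^{\pi}_{t}(s,\pi'_{t}(s))=\max_{a}Q^{\pi}_{t}(s,a)\ge Q^{\pi}_{t}(s,\pi_{t}(s)).
\]
Since $\pi$ is deterministic, the dynamic Bellman equation gives $Q^{\pi}_{t}(s,\pi_{t}(s))=V^{\pi}_{t}(s)$. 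Chaining the three relations closes the induction.

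\textbf{Main obstacle.} The inequalities themselves are routine; the delicate part is bookkeeping. First, the appendix's infinite-sum definitions must be reconciled with the finite-horizon ones, so that the Bellman recursion has a well-defined terminal condition and the statement for all $t\in\mathbb{N}$ reduces to $t\le T$. Second, the argmax must be attained and ties broken by a fixed rule; this is automatic if $\mathcal{A}$ is finite, which I would assume (or simply take $\max$ as attained). If the horizon were infinite, I would instead bound the tail by $\gamma^{n}r_{\max}/(1-\gamma)$ and let $n\to\infty$. I would phrase the argument in the finite-horizon setting of the DMDP definition to avoid this.
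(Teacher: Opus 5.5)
Your proof is correct, but it is organized differently from the paper's. You run a single backward induction on $t$ that compares $\mathit{TP}\cdot\pi$ directly with $\pi$. The inductive hypothesis $V^{\mathit{TP}\cdot\pi}_{t+1}\ge V^{\pi}_{t+1}$ lets you replace the improved policy's continuation by $\pi$'s, and greediness then closes the step.

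The paper instead switches one decision rule at a time, through hybrid policies $\pi^{(n)}$. Each $\pi^{(n)}$ follows the greedy rules up to time $n$ and reverts to $\pi$ afterwards, so $\pi^{(n)}$ and $\pi^{(n-1)}$ differ only at time $n$ and share $\pi$'s tail. The argument then runs in three parts:
\begin{itemize}
\item At time $n$ the comparison is exactly $Q^{\pi}_n(s,(\mathit{TP}\cdot\pi)_n(s))\ge Q^{\pi}_n(s,\pi_n(s))$.
\item For $t<n$ the gain is pushed back through Lemma~2, which writes $V_t$ as a nonnegative combination of the $V_n$ values. The coefficients are identical for the two hybrids because they agree on times $t,\dots,n-1$.
\item For $t>n$ the values coincide. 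Chaining over $n$ and passing to the limit gives the theorem.
\end{itemize}

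What each route buys:
\begin{itemize}
\item \emph{Your route} is shorter. It needs only the one-step Bellman recursion and monotonicity of the expectation, with no Lemma~2 and no limiting step. It does, however, rest essentially on the terminal condition $V_T\equiv 0$ as its base case.
\item \emph{The paper's route} gives the stronger intermediate fact that every partially improved hybrid already dominates $\pi$, a monotone chain $V^{\pi^{(0)}}\le V^{\pi^{(1)}}\le\cdots$. Its structure also does not need a horizon to start from.
\end{itemize}

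On the paper's final appeal to ``continuity of the value operator'': in the finite-horizon setting it reduces to the fact that, once $n$ reaches the last decision time, the hybrid coincides with $\mathit{TP}\cdot\pi$. For an unbounded horizon that limit would have to be justified by exactly the discounted tail bound $\gamma^{n}r_{\max}/(1-\gamma)$ you mention as your fallback.
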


\begin{proof}

Let $\pi^{\infty} := \mathit{TP} \cdot \pi$ denote the greedy policy obtained
by the time-varying Bellman policy operator, and write its $t$-th decision rule
as $\pi_t^{\infty}$.

For each $n \in \mathbb{N}$, define a \emph{hybrid} deterministic policy
$\pi^{(n)}$ which follows $\pi^{\infty}$ up to time $n$, then reverts to $\pi$:
\[
\pi^{(n)} := \bigl[\,\pi_1^{\infty}, \dots, \pi_n^{\infty},\,
\pi_{n+1}, \pi_{n+2}, \dots \bigr].
\]
We also set $\pi^{(0)} := \pi$. Note that $\pi^{(n)}$ and $\pi^{(n-1)}$
differ only at time $n$.

\textit{Step 1: Improvement at time $n$}

Consider the value functions at time $n$. Since $\pi^{(n-1)}_{n:T} = \pi_{n:T}$, we know 
$$
V_{n}^{\pi^{(n-1)}}(s) = Q_{n}^{\pi}\bigl(s, \pi_n(s)\bigr)
$$
as unrolling $\pi^{(n-1)}$ from time $n$ onward is the same as unrolling $\pi$ from time $n$ onward. For the hybrid policy $\pi^{(n)}$, applying dynamic Bellman equation produces
\begin{align*}
V_{n}^{\pi^{(n)}}(s)
&= Q_{n}^{\pi^{(n)}}\bigl(s, \pi^{(n)}_n(s)\bigr) \\
&= r_{n}\bigl(s, \pi_n^{\infty}(s)\bigr)
   \;+\; \gamma(n+1) \sum_{s'} p\bigl(s' \mid s, \pi_n^{\infty}(s)\bigr)
        V_{n+1}^{\pi}(s') \\
&= Q_{n}^{\pi}\bigl(s, \pi_n^{\infty}(s)\bigr)
\end{align*}
By definition of the Bellman policy operator,
$\pi_n^{\infty}(s)$ greedily maximizes $Q_{n}^{\pi}(s,\cdot)$:
\[
Q_{n}^{\pi}(s,a) \;\le\; Q_{n}^{\pi}\bigl(s, \pi_n^{\infty}(s)\bigr)
\quad \forall\, s \in \mathcal{S},\ \forall\, a \in \mathcal{A}.
\]
In particular, for $a = \pi_n(s)$,
\[
Q_{n}^{\pi}\bigl(s, \pi_n(s)\bigr)
\;\le\; Q_{n}^{\pi}\bigl(s, \pi_n^{\infty}(s)\bigr).
\]
Combining with the identities above, we get
\[
V_{n}^{\pi^{(n-1)}}(s)
= Q_{n}^{\pi}\bigl(s, \pi_n(s)\bigr)
\;\le\; Q_{n}^{\pi}\bigl(s, \pi_n^{\infty}(s)\bigr)
= V_{n}^{\pi^{(n)}}(s)
\quad \forall\, s \in \mathcal{S}.
\]
Hence
\[
V_{n}^{\pi^{(n)}}(s) \;\ge\; V_{n}^{\pi^{(n-1)}}(s)
\quad \forall\, s \in \mathcal{S}.
\]

\textit{Step 2: Propagating improvement backward in time.}

For any $t < n$, the two policies $\pi^{(n)}$ and $\pi^{(n-1)}$ coincide
from time $t$ up to time $n-1$, and differ only at time $n$. By Lemma~2,
for each fixed $t$ and $s$, we can write
\begin{align*}
V_{t}^{\pi^{(n)}}(s)
&= \sum_{s'} C\bigl(\pi^{(n)}_{t:n-1}, s'\bigr)
    V_{n}^{\pi^{(n)}}(s') + C\bigl(\pi^{(n)}_{t:n-1}\bigr), \\
V_{t}^{\pi^{(n-1)}}(s)
&= \sum_{s'} C\bigl(\pi^{(n-1)}_{t:n-1}, s'\bigr)
    V_{n}^{\pi^{(n-1)}}(s') + C\bigl(\pi^{(n-1)}_{t:n-1}\bigr),
\end{align*}
where the coefficients depend only on the policy slice between $t$ and $n-1$.
Since $\pi^{(n)}_{t:n-1} = \pi^{(n-1)}_{t:n-1}$, these coefficients are identical:
\[
C\bigl(\pi^{(n)}_{t:n-1}, s'\bigr)
= C\bigl(\pi^{(n-1)}_{t:n-1}, s'\bigr),
\quad
C\bigl(\pi^{(n)}_{t:n-1}\bigr)
= C\bigl(\pi^{(n-1)}_{t:n-1}\bigr).
\]
Moreover, Lemma~2 guarantees that $C(\cdot,s') \ge 0$ for all $s'$.

Using $V_{n}^{\pi^{(n)}}(s') \ge V_{n}^{\pi^{(n-1)}}(s')$ for all $s'$ and the
non-negativity of the coefficients, we obtain
\begin{align*}
V_{t}^{\pi^{(n)}}(s)
&= \sum_{s'} C\bigl(\pi^{(n)}_{t:n-1}, s'\bigr)
    V_{n}^{\pi^{(n)}}(s') + C\bigl(\pi^{(n)}_{t:n-1}\bigr) \\
&\ge \sum_{s'} C\bigl(\pi^{(n-1)}_{t:n-1}, s'\bigr)
     V_{n}^{\pi^{(n-1)}}(s') + C\bigl(\pi^{(n-1)}_{t:n-1}\bigr) \\
&= V_{t}^{\pi^{(n-1)}}(s),
\end{align*}
for all $t < n$ and all $s \in \mathcal{S}$.

\textit{Step 3: Values for $t > n$}

By construction, for $t > n$, the policies $\pi^{(n)}$ and $\pi^{(n-1)}$
coincide from time $t$ onward (they only differ at time $n$), hence
\[
V_{t}^{\pi^{(n)}}(s) = V_{t}^{\pi^{(n-1)}}(s)
\quad \forall\, t > n,\ \forall\, s \in \mathcal{S}.
\]

\textit{Step 4: Monotonic improvement in $n$}

Combining the three cases $t < n$, $t = n$, and $t > n$, we conclude that
\[
V_{t}^{\pi^{(n)}}(s) \;\ge\; V_{t}^{\pi^{(n-1)}}(s)
\quad \forall\, t \in \mathbb{N},\ \forall\, s \in \mathcal{S},\ \forall\, n \ge 1.
\]
Since $\pi^{(0)} = \pi$, an induction on $n$ yields
\[
V_{t}^{\pi}(s) = V_{t}^{\pi^{(0)}}(s)
\;\le\; V_{t}^{\pi^{(n)}}(s)
\quad \forall\, t \in \mathbb{N},\ \forall\, s \in \mathcal{S},\ \forall\, n \ge 1.
\]
Finally, observe that $\pi^{(n)}$ converges pointwise to $\pi^{\infty}$ as
$n \to \infty$ (for any fixed $t$, all $\pi^{(n)}_t = \pi^{\infty}_t$ once $n \ge t$).
By continuity of the value operator in this finite-horizon setting,
\[
\lim_{n \to \infty} V_{t}^{\pi^{(n)}}(s)
= V_{t}^{\pi^{\infty}}(s)
= V_{t}^{\mathit{TP} \cdot \pi}(s).
\]
Taking limits in the inequality above gives
\[
V_{t}^{\pi}(s) \;\le\; V_{t}^{\mathit{TP} \cdot \pi}(s)
\quad \forall\, t \in \mathbb{N},\ \forall\, s \in \mathcal{S},
\]
which completes the proof.
\end{proof}

\begin{definition}[Optimal policy]
An \emph{optimal policy} $\pi^{*}$ is a (time-varying) policy
$\pi^{*} : \mathbb{N} \times \mathcal{S} \to \mathcal{A}$ such that for any
policy $\pi : \mathbb{N} \times \mathcal{S} \to \mathcal{A}$,
\[
V_{t}^{\pi^{*}}(s) \;\ge\; V_{t}^{\pi}(s)
\quad \forall\, s \in \mathcal{S},\ \forall\, t \in \mathbb{N}.
\]
\end{definition}

\begin{assumption}[(A1)]\label{asm:A1}
$\mathcal{S} \subset \mathbb{R}^{D}$ is compact.
\end{assumption}

\begin{assumption}[(A2)]\label{asm:A2}
Action space is finite, i.e.
\[
|\mathcal{A}| < \infty.
\]
\end{assumption}

\begin{assumption}[(A3)]\label{asm:A3}
Rewards are bounded, i.e.
\[
|r(s,a)| \le r_{\max}
\quad \forall (s,a) \in \mathcal{S} \times \mathcal{A}.
\]
\end{assumption}

\begin{lemma}
Under Assumption~(A3), all value functions are bounded. Specifically, for every
$\pi \in \bigcup_{t=1}^{T} \Pi_{t}$, every $t$, and every $s \in \mathcal{S}$,
\[
0 \ge V_{t}^{\pi}(s) \ge -\frac{r_{\max}}{1-\gamma}.
\]
\end{lemma}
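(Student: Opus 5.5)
The plan is to bound the defining sum of $V^{\pi}_{t}$ term by term, using the sign convention from the definition of a dynamic MDP together with Assumption~\hyperref[asm:A3]{A3}. Fix $\pi \in \Pi_{n}$ for some $n \le T$, a time $t$, and a state $s$. By the finite-horizon definition,
\[
V^{\pi}_{t}(s) = \mathbb{E}\Bigl[\sum_{i=t}^{n-1} \gamma^{\,i-t} r_{i}(s_{i},a_{i}) \,\Big|\, s_{t}=s,\ a_{i}=\pi_{i}(s_{i})\Bigr].
\]
This is a finite sum of bounded terms, so the expectation is well defined. Linearity of expectation then lets me bound each summand separately. If $t \ge n$, the sum is empty and $V^{\pi}_{t}(s)=0$, which satisfies both inequalities, so assume $t<n$.

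\textbf{Upper bound.} By the definition of a dynamic MDP, $r_{i} \le 0$ for all $i$. Also $\gamma^{\,i-t} \ge 0$. Hence each summand $\gamma^{\,i-t} r_{i}(s_{i},a_{i})$ is pointwise nonpositive, and taking expectations gives $V^{\pi}_{t}(s) \le 0$.

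\textbf{Lower bound.} Assumption~\hyperref[asm:A3]{A3} gives $r_{i}(s_{i},a_{i}) \ge -r_{\max}$ pointwise. Therefore
\[
V^{\pi}_{t}(s) \ge -r_{\max}\sum_{i=t}^{n-1}\gamma^{\,i-t} = -r_{\max}\sum_{j=0}^{n-1-t}\gamma^{j} \ge -r_{\max}\sum_{j=0}^{\infty}\gamma^{j} = -\frac{r_{\max}}{1-\gamma}.
\]
The step that extends the partial geometric sum to the full series uses $\gamma\in[0,1)$ and $r_{\max}\ge 0$, so each added term is nonnegative and the series converges.

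The only point needing care is which reward bound applies. Assumption~\hyperref[asm:A3]{A3} is stated for a single $r$, while the rewards here are time-indexed. I will read it as a uniform bound $|r_{t}(s,a)|\le r_{\max}$ over all $t$, which is the same constant used to shift rewards to be nonpositive in the definition of a dynamic MDP. With that reading, the argument is just nonnegativity of the discount weights plus a geometric-series bound, and it is uniform over $\pi$, $t$, and $s$, as the statement requires.
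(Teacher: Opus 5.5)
Your proof is correct and follows the paper's argument essentially verbatim. Nonpositivity of $r_t$ from the DMDP definition gives $V^{\pi}_{t}(s)\le 0$, and $|r|\le r_{\max}$ with the geometric bound $\sum_{j=0}^{n-t-1}\gamma^{j}\le \frac{1}{1-\gamma}$ gives the lower bound; the paper bounds $|V^{\pi}_{t}(s)|$ rather than using $r\ge -r_{\max}$ directly, but this is the same estimate, and your handling of the empty-sum case $t\ge n$ and reading (A3) uniformly in $t$ just make explicit what the paper leaves implicit.
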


\begin{proof}
Let $\pi \in \Pi_{n}$. Then for every $t < n$ and every $s \in \mathcal{S}$,
by definition of the value function,
\[
V_{t}^{\pi}(s)
= \mathbb{E}\Bigg[
    \sum_{i=t}^{n-1} \gamma^{i-t} r(s_{i}, a_{i})
    \,\Big|\, s_{t}=s
\Bigg].
\]
Since rewards are non-positive, we have $r(s,a) \le 0$ for all
$(s,a) \in \mathcal{S} \times \mathcal{A}$, and hence
$V_{t}^{\pi}(s) \le 0$.

Moreover,
\[
\begin{aligned}
|V_{t}^{\pi}(s)|
&\le \mathbb{E}\Bigg[
    \sum_{i=t}^{n-1} \gamma^{i-t} |r(s_{i}, a_{i})|
    \,\Big|\, s_{t}=s
\Bigg] \\
&\le \sum_{j=0}^{n-t-1} \gamma^{j} r_{\max} \\
&\le \frac{r_{\max}}{1-\gamma}.
\end{aligned}
\]
Therefore,
\[
-\frac{r_{\max}}{1-\gamma} \le V_{t}^{\pi}(s) \le 0,
\]
which proves the claim.
\end{proof}

\begin{assumption}
For any $\pi \in \Pi$ and any $t \in \mathbb{N}$, the value function
$V_{t}^{\pi}(\cdot)$ is bounded and continuous on $\mathcal{S}$, i.e.
\[
V_{t}^{\pi} \in \mathcal{C}_{b}(\mathcal{S})
\]
with respect to the supremum norm $\|\cdot\|_{\infty}$.
\end{assumption}

\begin{theorem}[Tychonoff’s Theorem]
The Cartesian product of compact topological spaces is compact. In particular,
if $\{X_i\}_{i \in I}$ is a family of compact spaces, then the product
$\prod_{i \in I} X_i$ is compact in the product topology.
\end{theorem}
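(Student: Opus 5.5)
We will prove Tychonoff's Theorem through the ultrafilter characterization of compactness: \emph{a topological space $X$ is compact if and only if every ultrafilter on $X$ converges to at least one point.} The plan has three steps. First we prove this characterization. Then we push an arbitrary ultrafilter on the product forward along the coordinate projections. Finally we assemble a limit point coordinatewise, which is the step that uses the axiom of choice.

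\textbf{Step 1 (characterization).} Suppose $X$ is compact and $\mathcal{U}$ is an ultrafilter on $X$ with no limit. Then every $x\in X$ has an open neighbourhood $O_x\notin\mathcal{U}$. By maximality of $\mathcal{U}$, the complement $X\setminus O_x$ lies in $\mathcal{U}$. Finitely many $O_x$ cover $X$ by compactness, so the finitely many complements intersect in $\emptyset$. This would put $\emptyset\in\mathcal{U}$, a contradiction.

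Conversely, suppose $X$ is not compact, and take an open cover with no finite subcover. The complements of its members then have the finite intersection property, so they generate a filter. By Zorn's lemma this filter extends to an ultrafilter $\mathcal{U}$. Any limit $x$ of $\mathcal{U}$ lies in some member $O$ of the cover. Convergence forces $O\in\mathcal{U}$, while $X\setminus O\in\mathcal{U}$ by construction, which is impossible. We expect this step to be the main obstacle: it is the only place where genuine set-theoretic input (Zorn's lemma, to extend a filter to an ultrafilter) enters, and the maximality property ``$A\in\mathcal{U}$ or $X\setminus A\in\mathcal{U}$'' must be verified carefully.

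\textbf{Step 2 (projection).} Let $X=\prod_{i\in I}X_i$, let $p_i:X\to X_i$ be the projections, and let $\mathcal{U}$ be an ultrafilter on $X$. For each $i$, the pushforward $(p_i)_*\mathcal{U}:=\{A\subset X_i : p_i^{-1}(A)\in\mathcal{U}\}$ is again an ultrafilter. This follows directly from the definition, since preimages preserve intersections and complements. Because $X_i$ is compact, Step 1 shows that $(p_i)_*\mathcal{U}$ converges to some point. By the axiom of choice we select one such limit $x_i\in X_i$ for every $i$ simultaneously, and set $x:=(x_i)_{i\in I}$.

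\textbf{Step 3 (limit in the product topology).} A basic open neighbourhood of $x$ has the form $\bigcap_{k=1}^{n}p_{i_k}^{-1}(O_k)$, with each $O_k$ open in $X_{i_k}$ and $x_{i_k}\in O_k$. Convergence of $(p_{i_k})_*\mathcal{U}$ to $x_{i_k}$ gives $p_{i_k}^{-1}(O_k)\in\mathcal{U}$ for each $k$. A filter is closed under finite intersections, so the basic neighbourhood lies in $\mathcal{U}$. Every neighbourhood of $x$ contains such a basic one, so every neighbourhood of $x$ lies in $\mathcal{U}$. Hence $\mathcal{U}\to x$, and Step 1 yields compactness of $X$.

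For the application later in the paper, the product in question is a countable product of copies of the finite set $\mathcal{A}$ (Assumption~(A2)), namely $\mathcal{A}^{\mathbb{N}\times\cdot}$ with the discrete factor topology. In that setting we can instead give a direct proof for this special case via a Cantor diagonal argument. Given any sequence of points, we extract nested subsequences that become constant in coordinates $1,2,\dots$, each extraction possible because the factor is finite. The diagonal subsequence then converges in the product topology. This avoids Zorn's lemma entirely (only countable dependent choice is needed) and would serve as a fallback if the general argument's set-theoretic step is considered out of scope.
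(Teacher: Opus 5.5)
Your ultrafilter argument is a correct and complete proof of the general statement. Both directions of the characterization in Step 1 are right, with Zorn's lemma needed only for ``every ultrafilter converges $\Rightarrow$ compact''. Pushforwards of ultrafilters are ultrafilters, and the coordinatewise limit works because every neighbourhood of $x$ in the product contains a finite intersection of cylinders around $x$.

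One small correction to your commentary: the simultaneous selection of limits in Step 2 is a second, independent use of the axiom of choice, so Step 1 is not the only set-theoretic input. That use is unavoidable in general, since full Tychonoff is equivalent to AC, but it disappears for Hausdorff factors, where limits are unique.

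The paper itself gives no proof. It quotes Tychonoff as a standard fact and uses it exactly once, in the next lemma, to conclude that $\mathcal{S}\times\mathcal{T}\subset\mathbb{R}^{D+1}$ is compact for compact $\mathcal{S}\subset\mathbb{R}^{D}$ and closed bounded $\mathcal{T}\subset\mathbb{R}$. Your route buys the theorem for arbitrary index sets. The paper only needs a product of two factors, which follows without choice from the tube lemma, or directly from Heine--Borel since $\mathcal{S}\times\mathcal{T}$ is closed and bounded in $\mathbb{R}^{D+1}$.

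Your closing paragraph therefore aims at the wrong product: the paper never applies the theorem to copies of $\mathcal{A}$. If you meant the policy space $\mathcal{A}^{\mathcal{S}\times\mathbb{N}}$, that product is uncountable whenever $\mathcal{S}$ is, and Assumption~(A1) allows any compact $\mathcal{S}\subset\mathbb{R}^{D}$. The diagonal fallback then fails in principle: for instance, $\{0,1\}^{[0,1]}$ is compact but not sequentially compact. Even for a genuinely countable product of finite discrete spaces, the diagonal argument yields only sequential compactness. You would need to add that such a product is metrizable before concluding compactness. None of this affects Steps 1--3.
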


\begin{lemma}
Under Assumption~1, the product $\mathcal{S} \times \mathcal{T}$ is a compact
subspace of $\mathbb{R}^{D+1}$, where $\mathcal{T} \subset \mathbb{R}$ is
bounded and closed.
\end{lemma}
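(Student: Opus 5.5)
The statement to prove is the last lemma: under Assumption~\hyperref[asm:A1]{A1}, $\mathcal{S}\times\mathcal{T}$ is a compact subspace of $\mathbb{R}^{D+1}$ whenever $\mathcal{T}\subset\mathbb{R}$ is closed and bounded. I will argue in two steps.

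\textbf{Step 1: compactness of each factor.} Since $\mathcal{T}$ is closed and bounded in $\mathbb{R}$, it is compact by the Heine--Borel theorem. Assumption~\hyperref[asm:A1]{A1} gives compactness of $\mathcal{S}\subset\mathbb{R}^{D}$ directly.

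\textbf{Step 2: compactness of the product.} By Tychonoff's Theorem stated just above (here only the finite case with two factors is needed), $\mathcal{S}\times\mathcal{T}$ is compact in the product topology.

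\textbf{Step 3: matching the topologies.} I then identify this product with a subset of $\mathbb{R}^{D}\times\mathbb{R}=\mathbb{R}^{D+1}$. The product topology on $\mathbb{R}^{D}\times\mathbb{R}$ coincides with the Euclidean topology on $\mathbb{R}^{D+1}$, since products of open balls and open balls in $\mathbb{R}^{D+1}$ generate the same topology. Moreover, the subspace topology on $\mathcal{S}\times\mathcal{T}$ inherited from $\mathbb{R}^{D+1}$ equals the product of the subspace topologies. Hence $\mathcal{S}\times\mathcal{T}$ is a compact subspace of $\mathbb{R}^{D+1}$.

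\textbf{Alternative route.} One can bypass Tychonoff entirely. By Heine--Borel, $\mathcal{S}$ is closed and bounded. The product of two bounded sets is bounded. The product of two closed sets is closed, because its complement is $(\mathcal{S}^c\times\mathbb{R})\cup(\mathbb{R}^D\times\mathcal{T}^c)$, which is open. A second application of Heine--Borel then gives compactness.

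The only step requiring any care is Step 3, the identification of the product and Euclidean topologies. This is standard, so I expect no real obstacle.
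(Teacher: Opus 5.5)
Your proof is correct and follows the paper's argument exactly: Heine--Borel for $\mathcal{T}$, Assumption~A1 for $\mathcal{S}$, Tychonoff for the two-factor product, and then viewing $\mathcal{S}\times\mathcal{T}$ as a subspace of $\mathbb{R}^{D+1}$. Your Step~3 spells out why the product and Euclidean topologies agree, which the paper only asserts, and your alternative closed-and-bounded argument via Heine--Borel is also valid.
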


\begin{proof}
By Assumption~1, $\mathcal{S} \subset \mathbb{R}^{D}$ is compact. Since
$\mathcal{T} \subset \mathbb{R}$ is bounded and closed, it is also compact
(e.g.\ by the Heine--Borel theorem). Therefore, both $\mathcal{S}$ and
$\mathcal{T}$ are compact spaces.

By Tychonoff’s theorem (applied to the finite product of compact spaces),
the Cartesian product $\mathcal{S} \times \mathcal{T}$ is compact in the
product topology. Since we can view $\mathcal{S} \times \mathcal{T}$ as a
subset of $\mathbb{R}^{D+1}$ with the subspace topology, it is a compact
subspace of $\mathbb{R}^{D+1}$.
\end{proof}

\begin{lemma}
Under Assumption~1, all time-varying value functions lie in a complete metric
space $(\hat{\mathcal{V}}, d)$, where the metric $d$ is defined by
\[
d(V^{1}, V^{2})
:= \| V^{1} - V^{2} \|_{\infty}
= \max_{t \in \mathcal{T}} \max_{s \in \mathcal{S}}
    \bigl| V^{1}_{t}(s) - V^{2}_{t}(s) \bigr|
\quad \forall\, V^{1}, V^{2} \in \hat{\mathcal{V}}.
\]
\end{lemma}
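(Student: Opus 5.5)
Identify $\hat{\mathcal{V}}$ with $\mathcal{C}_b(\mathcal{S}\times\mathcal{T})$, the space of bounded continuous real functions on the compact set $\mathcal{S}\times\mathcal{T}$ equipped with the sup norm. Concretely, a time-varying value function $V=\{V_t\}_{t\in\mathcal{T}}$ is the map $(s,t)\mapsto V_t(s)$. Here $\mathcal{T}=\{0,\dots,T\}$ is a finite, hence closed and bounded, subset of $\mathbb{R}$. The plan then has three parts: show every value function lies in this space, show $d$ is the sup metric on it, and invoke (or reprove) completeness of $(\mathcal{C}_b(X),\|\cdot\|_\infty)$.

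\emph{Membership.} By the bounded-continuity assumption stated just before Tychonoff's theorem, each slice $V^\pi_t$ is continuous and bounded on $\mathcal{S}$. Lemma on bounded values gives the uniform bound $-r_{\max}/(1-\gamma)\le V^\pi_t\le 0$. Since $\mathcal{T}$ is discrete, continuity in $(s,t)$ follows from continuity of each slice. By the preceding lemma, $\mathcal{S}\times\mathcal{T}$ is compact, so each continuous slice attains its extremes. Hence the maxima in the definition of $d$ are genuine maxima, and $d$ is the sup norm. Symmetry, the triangle inequality and positivity are immediate from the norm properties.

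\emph{Completeness.} Take a Cauchy sequence $(V^k)$ in $d$. For each fixed $(s,t)$, the real sequence $V^k_t(s)$ is Cauchy, so it has a pointwise limit $V_t(s)$, by completeness of $\mathbb{R}$. The Cauchy estimate passes to the limit and gives uniform convergence $\|V^k-V\|_\infty\to0$. A uniform limit of continuous functions is continuous, by the standard $\varepsilon/3$ argument, and it stays bounded. If we also want the limit to respect the value bounds, note that the set $\{-r_{\max}/(1-\gamma)\le V\le 0\}$ is closed under uniform limits. So we may take $\hat{\mathcal{V}}$ to be this closed subset, which is itself complete.

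The main obstacle is deciding what exactly $\hat{\mathcal{V}}$ is. If it meant the set of value functions $\{V^\pi\}$ themselves, closedness under limits would require showing that a limit of policy values is again a policy value. That is not needed, and may fail. I would therefore define $\hat{\mathcal{V}}$ as the closed, bounded subset of $\mathcal{C}_b(\mathcal{S}\times\mathcal{T})$ described above, which contains all the $V^\pi$. Completeness then follows without any extra argument.
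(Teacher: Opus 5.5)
Your argument is correct, and it departs from the paper's at exactly the point that matters. The paper embeds value functions into the Banach space $\mathcal{B}(\mathcal{S}\times\mathcal{T})$ of all bounded functions with the sup norm. It then \emph{defines} $\hat{\mathcal{V}}$ to be the collection of value functions itself, and concludes completeness from the claim that ``any subspace of a complete metric space is complete.'' That claim is false: only closed subspaces inherit completeness, as $\mathbb{Q}\subset\mathbb{R}$ shows. The paper never shows that a uniform limit of policy values is again a policy value, which is precisely the obstacle you flag.

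Your route avoids this. You take $\hat{\mathcal{V}}$ to be the closed set $\{V\in\mathcal{C}_b(\mathcal{S}\times\mathcal{T}) : -r_{\max}/(1-\gamma)\le V\le 0\}$. You place every $V^\pi$ in it using the continuity assumption and the value bounds, and you prove completeness directly via pointwise limits and uniform convergence.

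The cost is that you lean on the bounded-continuity assumption and on the value-bound lemma (hence A3), not just Assumption~1. The paper's version implicitly uses boundedness too, but not continuity. The gain is a space that is genuinely complete and on which maxima are attained, as Lemma~\ref{lem:max-attained} later presupposes. The sign and magnitude bounds are also preserved by the Bellman value operator, since $r_t\in[-r_{\max},0]$. So your choice is the one the subsequent fixed-point argument actually needs, at least as far as the bounds go; continuity of $TV$ would need further conditions on $r_t$ and $P$.

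One caveat concerns $\mathcal{T}$. You read it as the finite set $\{0,\dots,T\}$, which matches the identification $\hat v(s,t)=V^\pi_t(s)$ in the paper's own proof and makes joint continuity automatic. Later, in the restriction operator and Lemma~\ref{lem:V-extension}, the paper treats $\mathcal{T}$ as a continuum and extends value functions by linear interpolation. Under that reading your membership step needs the interpolated extension, but your completeness argument is unchanged, since $\mathcal{C}_b$ of any compact space is complete.
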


\begin{proof}
Consider the product space $\mathcal{S} \times \mathcal{T}$, which is compact
in $\mathbb{R}^{D+1}$ by Lemma~3. Let $\mathcal{B}(\mathcal{S} \times \mathcal{T})$
denote the set of all bounded real-valued functions on $\mathcal{S} \times \mathcal{T}$,
equipped with the supremum norm
\[
\| f \|_{\infty}
:= \sup_{(s,t) \in \mathcal{S} \times \mathcal{T}} |f(s,t)|.
\]
It is well known that $\mathcal{B}(\mathcal{S} \times \mathcal{T})$ is a Banach
space under $\|\cdot\|_{\infty}$, hence a complete metric space with respect to
the induced metric.

Each time-varying value function $\hat{v}$ can be identified with a bounded
function on $\mathcal{S} \times \mathcal{T}$ via
\[
\hat{v}(s,t) := V^{\pi}_{t}(s).
\]
Let $\hat{\mathcal{V}}$ denote the collection of all such time-varying value
functions. Then
\[
\hat{\mathcal{V}} \subset \mathcal{B}(\mathcal{S} \times \mathcal{T}),
\]
and the metric $d$ defined by
\[
d(V^{1}, V^{2})
= \| V^{1} - V^{2} \|_{\infty}
\]
is exactly the restriction of the supremum norm metric on
$\mathcal{B}(\mathcal{S} \times \mathcal{T})$ to the subspace $\hat{\mathcal{V}}$.

Since any subspace of a complete metric space is complete with respect to the
induced metric, it follows that $(\hat{\mathcal{V}}, d)$ is complete. This
concludes the proof.
\end{proof}

\begin{definition}[Restriction operator]\label{def:restriction}
Let $\hat{\mathcal{V}}$ denote the space of bounded time-varying value
functions on $\mathcal{S} \times \mathcal{T}$, and let
$\mathcal{V}$ denote the space of value functions indexed on discrete times
$t \in \mathbb{N}$.

Define the map
\[
\mathcal{R} : \hat{\mathcal{V}} \to \mathcal{V}
\]
by
\[
\bigl(\mathcal{R}(\hat{v})\bigr)_{t}(s)
:= \hat{v}(s,t),
\quad \forall\, s \in \mathcal{S},\ \forall\, t \in \mathbb{N},
\]
for any $\hat{v} \in \hat{\mathcal{V}}$.
\end{definition}

\begin{lemma}\label{lem:R-continuous}
Let $\{\hat{v}^{n}\}_{n=1}^{\infty} \subset \hat{\mathcal{V}}$ be a sequence
such that $\hat{v}^{n} \to \hat{v}$ in $(\hat{\mathcal{V}}, d)$, i.e.
\[
\lim_{n \to \infty} d(\hat{v}^{n}, \hat{v}) = 0,
\]
where
\[
d(\hat{v}^{1}, \hat{v}^{2})
:= \|\hat{v}^{1} - \hat{v}^{2}\|_{\infty}
= \max_{(s,t) \in \mathcal{S} \times \mathcal{T}}
    \bigl| \hat{v}^{1}(s,t) - \hat{v}^{2}(s,t) \bigr|.
\]
Then
\[
\lim_{n \to \infty} \mathcal{R}(\hat{v}^{n})
= \mathcal{R}(\hat{v})
\]
in $\mathcal{V}$ with respect to the induced supremum metric.
\end{lemma}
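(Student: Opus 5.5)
The restriction operator $\mathcal{R}$ of Definition~\ref{def:restriction} only evaluates $\hat v$ at the integer times $t \in \mathbb{N}\cap\mathcal{T}$. So the plan is to show that $\mathcal{R}$ is $1$-Lipschitz, hence continuous, from $(\hat{\mathcal{V}},d)$ to $\mathcal{V}$ with its induced supremum metric. The convergence claim then follows immediately.

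Concretely, I would fix the sequence $\hat v^{n} \to \hat v$ and write out the induced metric on $\mathcal{V}$ explicitly:
\[
\bigl\|\mathcal{R}(\hat v^{n}) - \mathcal{R}(\hat v)\bigr\|_{\infty}
= \sup_{t \in \mathbb{N}\cap\mathcal{T}}\ \sup_{s\in\mathcal{S}} \bigl|\hat v^{n}(s,t) - \hat v(s,t)\bigr|,
\]
using linearity of $\mathcal{R}$, i.e.\ $\mathcal{R}(\hat v^{n})-\mathcal{R}(\hat v) = \mathcal{R}(\hat v^{n}-\hat v)$, which is immediate from the pointwise definition. Since $\mathbb{N}\cap\mathcal{T} \subset \mathcal{T}$, the supremum over the smaller index set is at most the supremum over all of $\mathcal{S}\times\mathcal{T}$. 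This gives
\[
\bigl\|\mathcal{R}(\hat v^{n}) - \mathcal{R}(\hat v)\bigr\|_{\infty} \le d(\hat v^{n},\hat v) \longrightarrow 0 .
\]

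The only point requiring care, and what I expect to be the main (mild) obstacle, is making the induced metric on $\mathcal{V}$ well defined. This means checking that the discrete times where $\mathcal{V}$ is indexed lie inside $\mathcal{T}$ (for instance, taking $\mathcal{T}=[0,T]$, so the times are $\{0,\dots,T\}$). It also means checking that the suprema are finite. Finiteness follows from boundedness of elements of $\hat{\mathcal{V}}$: they lie in $\mathcal{B}(\mathcal{S}\times\mathcal{T})$, as in the completeness lemma, and value functions are bounded by $r_{\max}/(1-\gamma)$ by the boundedness lemma under (A3). The $\max$ in the statement's definition of $d$ can be read as a $\sup$ if continuity in $t$ is not assumed; the inequality argument is unaffected.
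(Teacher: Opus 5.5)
Your proposal is correct and is essentially the paper's argument. The paper also bounds the supremum over the integer time slice by the supremum over all of $\mathcal{S}\times\mathcal{T}$. It phrases this with an explicit $\varepsilon$--$N_0$ step, whereas you state it once as the $1$-Lipschitz bound $d\bigl(\mathcal{R}(\hat v^{n}),\mathcal{R}(\hat v)\bigr)\le d(\hat v^{n},\hat v)$. Your restriction to $\mathbb{N}\cap\mathcal{T}$ is slightly more careful than the paper's tacit $\mathcal{S}\times\mathbb{N}\subset\mathcal{S}\times\mathcal{T}$, which fails literally when $\mathcal{T}$ is bounded.
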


\begin{proof}
By definition of convergence in $(\hat{\mathcal{V}}, d)$, for any
$\varepsilon > 0$, there exists $N_{0} \in \mathbb{N}$ such that for all
$n \ge N_{0}$,
\[
d(\hat{v}^{n}, \hat{v})
= \max_{(s,t) \in \mathcal{S} \times \mathcal{T}}
    \bigl| \hat{v}^{n}(s,t) - \hat{v}(s,t) \bigr|
\le \varepsilon.
\]
In particular, this bound holds when we restrict $(s,t)$ to the subset
$\mathcal{S} \times \mathbb{N} \subset \mathcal{S} \times \mathcal{T}$, so
\[
\max_{s \in \mathcal{S},\, t \in \mathbb{N}}
    \bigl| \hat{v}^{n}(s,t) - \hat{v}(s,t) \bigr|
\le \varepsilon.
\]
But by Definition~\ref{def:restriction}, this is exactly
\[
\max_{s \in \mathcal{S},\, t \in \mathbb{N}}
    \bigl| (\mathcal{R}(\hat{v}^{n}))_{t}(s)
          - (\mathcal{R}(\hat{v}))_{t}(s) \bigr|
= d\bigl(\mathcal{R}(\hat{v}^{n}), \mathcal{R}(\hat{v})\bigr).
\]
Hence, for all $n \ge N_{0}$,
\[
d\bigl(\mathcal{R}(\hat{v}^{n}), \mathcal{R}(\hat{v})\bigr)
\le \varepsilon,
\]
which shows $\mathcal{R}(\hat{v}^{n}) \to \mathcal{R}(\hat{v})$ in
$\mathcal{V}$ as $n \to \infty$.
\end{proof}

\begin{definition}[Bellman value operator]\label{def:bellman-operator}
The \emph{Bellman value operator} $T : \hat{\mathcal{V}} \to \hat{\mathcal{V}}$
is defined for any $V \in \hat{\mathcal{V}}$ by
\[
(TV)_{t}(s)
:= \max_{a \in \mathcal{A}}
\Bigl[
    r_{t}(s,a)
    + \gamma(t+1) \sum_{s'} p(s' \mid s,a)\, V_{t+1}(s')
\Bigr],
\quad \forall\, s \in \mathcal{S},\ t \in \mathbb{N}.
\]
\end{definition}

\begin{lemma}\label{lem:T-R-commute}
The Bellman value operator $T$ commutes with the restriction operator
$\mathcal{R}$ in the sense that for all $V \in \hat{\mathcal{V}}$,
\[
\bigl(\mathcal{R}(TV)\bigr)_{t}(s)
= \bigl(T\,\mathcal{R}(V)\bigr)_{t}(s)
\quad \forall\, s \in \mathcal{S},\ \forall\, t \in \mathbb{N}.
\]
\end{lemma}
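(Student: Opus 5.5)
The argument is a direct pointwise unfolding of both sides, using only Definition~\ref{def:restriction} and Definition~\ref{def:bellman-operator}. The key observation is that $T$ evaluates $V$ only at pairs $(s',t+1)$ with integer time, and these are exactly the values that $\mathcal{R}$ keeps. So restricting before or after applying $T$ gives the same result.

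\textbf{Left-hand side.} Fix $s\in\mathcal{S}$ and $t\in\mathbb{N}$. By Definition~\ref{def:restriction} applied to $TV\in\hat{\mathcal{V}}$,
\[
\bigl(\mathcal{R}(TV)\bigr)_{t}(s)=(TV)(s,t).
\]
By Definition~\ref{def:bellman-operator} this equals
\[
\max_{a\in\mathcal{A}}\Bigl[r_{t}(s,a)+\gamma(t+1)\sum_{s'}p(s'\mid s,a)\,V(s',t+1)\Bigr].
\]
Here I read $V_{t+1}(s')$ as $V(s',t+1)$, the same identification $\hat v(s,t)=V_t(s)$ used in the preceding lemma.

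\textbf{Right-hand side.} The operator $T$ is applied to $\mathcal{R}(V)$, which is a time-indexed value function on $\mathbb{N}$. This gives
\[
\bigl(T\,\mathcal{R}(V)\bigr)_{t}(s)=\max_{a\in\mathcal{A}}\Bigl[r_{t}(s,a)+\gamma(t+1)\sum_{s'}p(s'\mid s,a)\,\bigl(\mathcal{R}(V)\bigr)_{t+1}(s')\Bigr].
\]
Since $t+1\in\mathbb{N}$, Definition~\ref{def:restriction} gives $\bigl(\mathcal{R}(V)\bigr)_{t+1}(s')=V(s',t+1)$ for every $s'$. The two bracketed expressions therefore agree term by term for each $a$, and the maxima over the finite set $\mathcal{A}$ (Assumption~\hyperref[asm:A2]{A2}) coincide.

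\textbf{Main obstacle.} The only delicate point is well-definedness rather than the algebra.
\begin{itemize}
\item $T$ must make sense on both $\hat{\mathcal{V}}$ and $\mathcal{V}$, with the same formula at integer times.
\item The sum over $s'$ must be finite. When $\mathcal{S}$ is a compact continuum, it should be read as an integral against $P(\cdot\mid s,a)$, which exists because $V$ is bounded and measurable (Assumption~\hyperref[asm:A3]{A3} and the boundedness lemma).
\item The max over $\mathcal{A}$ is attained, by Assumption~\hyperref[asm:A2]{A2}.
\end{itemize}
I would state explicitly that on $\hat{\mathcal{V}}$ the operator $T$ is understood pointwise at times $t$ for which $t+1\in\mathcal{T}$. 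With that convention, the identity reduces to the pointwise equality above.
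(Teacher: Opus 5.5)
Your proof is correct and is essentially the paper's argument. Both unfold $(\mathcal{R}(TV))_t(s)$ and $(T\,\mathcal{R}(V))_t(s)$ via the definitions, note that $T$ only queries $V$ at the integer time $t+1$, and use $(\mathcal{R}(V))_{t+1}(s') = V_{t+1}(s')$ to match the bracketed expressions for every $a$. Your well-definedness remarks (finite $\mathcal{A}$, reading the sum as an integral on a continuum $\mathcal{S}$) go beyond what the paper checks, though measurability there comes from the continuity built into $\hat{\mathcal{V}}$ rather than from boundedness; they do not change the argument.
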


\begin{proof}
Fix any $V \in \hat{\mathcal{V}}$, $s \in \mathcal{S}$ and $t \in \mathbb{N}$.
By Definition~\ref{def:bellman-operator},
\[
(TV)_{t}(s)
= \max_{a \in \mathcal{A}}
\Bigl[
    r_{t}(s,a)
    + \gamma(t+1) \sum_{s'} p(s' \mid s,a)\, V_{t+1}(s')
\Bigr].
\]
Applying the restriction operator $\mathcal{R}$ simply reads off these values
at integer times, so
\[
\bigl(\mathcal{R}(TV)\bigr)_{t}(s)
= (TV)_{t}(s)
= \max_{a \in \mathcal{A}}
\Bigl[
    r_{t}(s,a)
    + \gamma(t+1) \sum_{s'} p(s' \mid s,a)\, V_{t+1}(s')
\Bigr].
\]

On the other hand, for the restricted function $\mathcal{R}(V)$ we have
\[
\bigl(\mathcal{R}(V)\bigr)_{t+1}(s')
= V_{t+1}(s')
\quad \forall\, s' \in \mathcal{S},
\]
since $\mathcal{R}$ preserves the values of $V$ at integer time points.
Therefore, applying $T$ to $\mathcal{R}(V)$ yields
\begin{align*}
\bigl(T\,\mathcal{R}(V)\bigr)_{t}(s)
&= \max_{a \in \mathcal{A}}
\Bigl[
    r_{t}(s,a)
    + \gamma(t+1) \sum_{s'} p(s' \mid s,a)\,
        \bigl(\mathcal{R}(V)\bigr)_{t+1}(s')
\Bigr] \\
&= \max_{a \in \mathcal{A}}
\Bigl[
    r_{t}(s,a)
    + \gamma(t+1) \sum_{s'} p(s' \mid s,a)\, V_{t+1}(s')
\Bigr].
\end{align*}
Comparing the two expressions, we see that for all $s \in \mathcal{S}$ and
$t \in \mathbb{N}$,
\[
\bigl(\mathcal{R}(TV)\bigr)_{t}(s)
= \bigl(T\,\mathcal{R}(V)\bigr)_{t}(s),
\]
which proves that $T$ and $\mathcal{R}$ commute on integer time indices.
\end{proof}

\begin{lemma}[Bellman value operator increases value]
Let $V \in \hat{\mathcal{V}}$ be the value function of some deterministic
policy $\pi$. Then the Bellman value operator $T$ satisfies
\[
(TV)_{t}(s) \;\ge\; V_{t}(s)
\quad \forall\, s \in \mathcal{S},\ \forall\, t \in \mathbb{N}.
\]
\end{lemma}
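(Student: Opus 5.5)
The plan is to compare $(TV)_t(s)$ directly with the Bellman equation satisfied by $V$ itself. Since $V$ is the value function of a deterministic policy $\pi$, the dynamic Bellman equations (the finite-horizon version for time-varying policies) give, for every $s$ and $t$,
\[
V_t(s) = Q^{\pi}_t\bigl(s,\pi_t(s)\bigr) = r_t\bigl(s,\pi_t(s)\bigr) + \gamma \sum_{s'} p\bigl(s'\mid s,\pi_t(s)\bigr)\, V_{t+1}(s').
\]
I will treat the discount written as $\gamma(t+1)$ in Definition~\ref{def:bellman-operator} as the same (possibly time-indexed) discount that appears in the Bellman equation of $\pi$. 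The argument only requires that the same factor appears on both sides, so it does not depend on which reading is used.

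The key step is a one-line bound. By Definition~\ref{def:bellman-operator}, $(TV)_t(s)$ is the maximum over $a\in\mathcal{A}$ of exactly the bracketed expression $r_t(s,a)+\gamma\sum_{s'}p(s'\mid s,a)V_{t+1}(s')$, which equals $Q^\pi_t(s,a)$. The maximum over $\mathcal{A}$ is attained because $\mathcal{A}$ is finite by Assumption~(A2). Evaluating at the particular action $a=\pi_t(s)\in\mathcal{A}$ then gives
\[
(TV)_t(s) = \max_{a} Q^\pi_t(s,a) \;\ge\; Q^\pi_t\bigl(s,\pi_t(s)\bigr) = V_t(s).
\]
This holds for every $s\in\mathcal{S}$ and every $t$ in the horizon.

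The main obstacle is not the inequality but the bookkeeping at the boundary and in the function space. At the terminal index $t=T$ (or beyond the horizon), I will use the convention $V_{T}\equiv 0$. Then $(TV)_{T-1}(s)=\max_a r_{T-1}(s,a)\ge r_{T-1}(s,\pi_{T-1}(s))=V_{T-1}(s)$, and for $t\ge T$ both sides are zero. It also has to be checked that $TV$ is well defined as an element of $\hat{\mathcal{V}}$, meaning bounded. This follows because the rewards are bounded (A3) and $V$ is bounded by the lemma bounding value functions: a maximum of finitely many bounded functions is bounded. With those conventions fixed, the pointwise inequality above holds for all $t\in\mathbb{N}$, which completes the plan.
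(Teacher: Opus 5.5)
Your proposal is correct and follows the paper's proof exactly. Like the paper, you bound the maximum defining $(TV)_t(s)$ below by its value at $a=\pi_t(s)$, then identify that value with $V_t(s)$ through the Bellman equation satisfied by $V=V^{\pi}$. The extra bookkeeping you add (attainment of the maximum via (A2), the terminal convention $V_T\equiv 0$, and boundedness of $TV$) is not part of the paper's argument and is not needed for the inequality itself, but it does no harm.
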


\begin{proof}
Fix any $s \in \mathcal{S}$ and $t \in \mathbb{N}$. By definition of the
Bellman value operator,
\begin{align*}
(TV)_{t}(s)
&= \max_{a \in \mathcal{A}}
    \Bigl[
        r_{t}(s,a)
        + \gamma(t+1) \sum_{s'} p(s' \mid s,a)\, V_{t+1}(s')
    \Bigr].
\end{align*}
In particular, this maximum is at least as large as the value obtained by
choosing the policy action $a = \pi_{t}(s)$:
\begin{align*}
(TV)_{t}(s)
&\ge
    r_{t}\bigl(s,\pi_{t}(s)\bigr)
    + \gamma(t+1) \sum_{s'} p\bigl(s' \mid s,\pi_{t}(s)\bigr)\, V_{t+1}(s').
\end{align*}
Since $V$ is the value function of $\pi$, it satisfies the (dynamic) Bellman
equation for $\pi$:
\[
V_{t}(s)
= r_{t}\bigl(s,\pi_{t}(s)\bigr)
  + \gamma(t+1) \sum_{s'} p\bigl(s' \mid s,\pi_{t}(s)\bigr)\, V_{t+1}(s').
\]
Combining the two displays gives
\[
(TV)_{t}(s) \;\ge\; V_{t}(s)
\quad \forall\, s \in \mathcal{S},\ \forall\, t \in \mathbb{N},
\]
which proves the claim.
\end{proof}

\begin{theorem}[Extreme Value Theorem]
Let $K$ be a non-empty compact subset of $\mathbb{R}^{n}$ and let
$f : K \to \mathbb{R}$ be continuous. Then $f$ is bounded on $K$ and there
exists $p \in K$ such that
\[
f(p) = \sup_{x \in K} f(x).
\]
\end{theorem}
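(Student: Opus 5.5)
We prove the Extreme Value Theorem in two stages: first show $f$ is bounded on $K$, then show the supremum is attained. The only earlier ingredient we use is compactness of $K\subset\mathbb{R}^{n}$, understood via Heine--Borel (closed and bounded) together with sequential compactness (Bolzano--Weierstrass: every sequence in $K$ has a subsequence converging to a point of $K$). These are the same facts already invoked in the lemma on compactness of $\mathcal{S}\times\mathcal{T}$.

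\textbf{Boundedness.} Suppose $f$ is unbounded above on $K$. Then for each $k\in\mathbb{N}$ we can pick $x_k\in K$ with $f(x_k)>k$. By sequential compactness, some subsequence $x_{k_j}$ converges to a point $x\in K$. Continuity of $f$ at $x$ gives $f(x_{k_j})\to f(x)<\infty$, but by construction $f(x_{k_j})>k_j\to\infty$. This contradiction shows $f$ is bounded above, and applying the same argument to $-f$ shows it is bounded below. Alternatively, one can use the open-cover route: the sets $\{x: |f(x)|<k\}$ are open by continuity and cover $K$, so finitely many of them suffice, and hence $|f|$ is bounded on $K$.

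\textbf{Attainment.} Since $K$ is nonempty and $f$ is bounded above, $M:=\sup_{x\in K}f(x)$ is a finite real number. By the definition of supremum, for each $k$ there is $y_k\in K$ with $f(y_k)>M-1/k$. Extract a subsequence $y_{k_j}\to p\in K$; here we use that $K$ is closed, so the limit stays in $K$. Continuity then gives $f(p)=\lim_j f(y_{k_j})\ge M$, and $f(p)\le M$ holds trivially, so $f(p)=M$.

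The only step requiring care is the passage to a convergent subsequence whose limit lies \emph{in} $K$. This is precisely where both boundedness of $K$ (for Bolzano--Weierstrass) and closedness of $K$ enter. I would state this explicitly by citing Heine--Borel, rather than leaving it implicit, because the result is later applied to $\mathcal{S}$ under Assumption~\hyperref[asm:A1]{A1}.
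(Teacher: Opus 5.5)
Your proof is correct and complete. It is the standard sequential-compactness argument: Heine--Borel and Bolzano--Weierstrass give boundedness by contradiction, with bounded-below handled via $-f$. For attainment, $M$ is finite because $K$ is nonempty, and a maximizing sequence has a subsequence converging in $K$. Your open-cover alternative also works, since the relatively open sets $\{x\in K: |f(x)|<k\}$ are nested, so a single one already covers $K$.

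The paper gives no proof of its own for this statement. It records the Extreme Value Theorem as a classical background fact, alongside Tychonoff's and Banach's theorems, and uses it only as a black box in Lemma~\ref{lem:max-attained}. So there is no competing route to compare against. Your version simply makes explicit where closedness and boundedness of $K$ enter.

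One small correction to your closing remark: the paper applies the theorem to the product $\mathcal{S}\times\mathcal{T}\subset\mathbb{R}^{D+1}$ (compact by Assumption~A1 and the product lemma), not to $\mathcal{S}$ alone. Your $\mathbb{R}^{n}$ statement covers that case with $n=D+1$.
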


\begin{lemma}\label{lem:max-attained}
For any $f \in \hat{\mathcal{V}}$, there exists $x^{*} \in \mathcal{S} \times \mathcal{T}$
such that
\[
f(x^{*}) = \sup_{x \in \mathcal{S} \times \mathcal{T}} f(x).
\]
\end{lemma}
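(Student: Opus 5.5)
The plan is to apply the Extreme Value Theorem stated just above, with $K = \mathcal{S}\times\mathcal{T}$. This requires two things: that $K$ is a non-empty compact subset of $\mathbb{R}^{D+1}$, and that every $f \in \hat{\mathcal{V}}$ is continuous on $K$.

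\textbf{Compactness.} The lemma above (the one built on Assumption~\hyperref[asm:A1]{A1} and Tychonoff's theorem) already gives that $\mathcal{S}\times\mathcal{T}$ is a compact subspace of $\mathbb{R}^{D+1}$. Non-emptiness follows once $\mathcal{S}$ and $\mathcal{T}$ are both non-empty, which we take as implicit in the setup: a start state exists, and $\mathcal{T}$ contains the horizon times.

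\textbf{Continuity.} By construction, each $f\in\hat{\mathcal{V}}$ has the form $\hat v(s,t)=V^{\pi}_t(s)$. The continuity assumption stated earlier, $V^{\pi}_t\in\mathcal{C}_b(\mathcal{S})$, gives continuity in $s$ for each fixed $t$. We also need joint continuity on $\mathcal{S}\times\mathcal{T}$.

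\begin{itemize}
\item If $\mathcal{T}$ is read as the finite set $\{0,\dots,T\}$, it is closed, bounded and discrete. Each slice $\mathcal{S}\times\{t\}$ is then clopen in $K$, so continuity in $s$ on each slice gives joint continuity directly.
\item If $\mathcal{T}$ is a continuum interval, $f$ is only specified at integer times through $\mathcal{R}$, so we must fix an extension. The natural choice is piecewise-linear interpolation in $t$ between consecutive integer times. A convex combination of two continuous functions of $s$ with weights continuous in $t$ is jointly continuous, so this extension works.
\end{itemize}

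This step is the main obstacle. The paper's $\hat{\mathcal{V}}$ is not explicit about how $t$ ranges over $\mathcal{T}$, so the proof should state the interpretation it uses and verify joint continuity in both readings.

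\textbf{Conclusion.} With $K$ compact and non-empty and $f$ continuous on it, the Extreme Value Theorem yields $x^*\in K$ with $f(x^*)=\sup_{x\in K}f(x)$. Boundedness of $f$, guaranteed by the bounded-value lemma under Assumption~\hyperref[asm:A3]{A3}, confirms that the supremum is finite. In the finite-$\mathcal{T}$ reading this reduces to taking a maximum over finitely many slice-wise maxima.
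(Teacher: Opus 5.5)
Your proposal is correct and follows the paper's proof. The paper gets compactness of $\mathcal{S}\times\mathcal{T}$ from Assumption~A1 and the Tychonoff-based lemma, takes continuity of $f$, and applies the Extreme Value Theorem; your case analysis on $\mathcal{T}$ is extra care the paper skips, since it treats joint continuity as part of the definition of $\hat{\mathcal{V}}$ (elements are regarded as members of $\mathcal{C}_b(\mathcal{S}\times\mathcal{T})$, which is also the right way to read your continuum case, because a given $f$ cannot be re-extended at will), and the piecewise-linear interpolation you describe is exactly the paper's separate Lemma~\ref{lem:V-extension}.
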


\begin{proof}
By Assumption~1 and Lemma~3, the set
$\mathcal{S} \times \mathcal{T} \subset \mathbb{R}^{D+1}$ is compact. By
definition of $\hat{\mathcal{V}}$, each $f \in \hat{\mathcal{V}}$ is a bounded
continuous function on $\mathcal{S} \times \mathcal{T}$. Therefore, by the
Extreme Value Theorem, $f$ attains its supremum on $\mathcal{S} \times \mathcal{T}$,
i.e.\ there exists $x^{*} \in \mathcal{S} \times \mathcal{T}$ such that
\[
f(x^{*}) = \sup_{x \in \mathcal{S} \times \mathcal{T}} f(x).
\]
This proves the claim.
\end{proof}

\begin{lemma}\label{lem:max-lipschitz}
The max operator is Lipschitz on $\hat{\mathcal{V}}$ with respect to the
supremum norm. More precisely, for any $f,g \in \hat{\mathcal{V}}$,
\[
\bigl|\max_{x} f(x) - \max_{x} g(x)\bigr|
\;\le\; \max_{x} |f(x) - g(x)|.
\]
\end{lemma}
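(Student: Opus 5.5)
The plan is to prove the inequality pointwise and then pass to the maximum. By Lemma~\ref{lem:max-attained}, both maxima are attained. Pick $x_f^{*}$ with $f(x_f^{*})=\max_x f(x)$ and $x_g^{*}$ with $g(x_g^{*})=\max_x g(x)$. The argument splits into two symmetric cases according to which maximum is larger, and each case reduces to evaluating the difference at a single maximizer.

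\textbf{Case $\max_x f \ge \max_x g$.} Here
\[
0 \le \max_x f(x) - \max_x g(x) = f(x_f^{*}) - \max_x g(x) \le f(x_f^{*}) - g(x_f^{*}),
\]
because $\max_x g(x) \ge g(x_f^{*})$. The last quantity is at most $|f(x_f^{*}) - g(x_f^{*})| \le \max_x |f(x) - g(x)|$.

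\textbf{Case $\max_x g > \max_x f$.} The roles are swapped, using $x_g^{*}$:
\[
0 < \max_x g(x) - \max_x f(x) \le g(x_g^{*}) - f(x_g^{*}) \le \max_x |f(x)-g(x)|.
\]
Combining the two cases gives $\bigl|\max_x f(x)-\max_x g(x)\bigr| \le \max_x |f(x)-g(x)|$, which is the claimed $1$-Lipschitz property in the supremum norm.

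One technical point is that $\max_x |f-g|$ must itself be attained. Since $f-g$ is continuous on the compact set $\mathcal{S}\times\mathcal{T}$, so is $|f-g|$. The Extreme Value Theorem then guarantees that this maximum exists, so it coincides with $\|f-g\|_\infty$. Even if attainment failed, the same chain of inequalities would go through with $\sup$ in place of $\max$, so this point does not affect the argument.

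There is no real obstacle here. The only care needed is in the case split, so that no sign error occurs when dropping the absolute value.
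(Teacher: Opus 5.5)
Your proof is correct and takes the same approach as the paper. Both pick maximizers via Lemma~\ref{lem:max-attained} and bound each signed difference, $\max_x f - \max_x g \le f(x_f^{*}) - g(x_f^{*})$ and symmetrically with $x_g^{*}$; you organize this as a case split, while the paper bounds both directions directly, and your remark on attainment of $\max_x |f(x)-g(x)|$ (or falling back to $\sup$) is a small point the paper leaves implicit.
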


\begin{proof}
Let $M_f := \max_{x} f(x)$ and $M_g := \max_{x} g(x)$, where the maxima are
attained by Lemma~\ref{lem:max-attained}. Let $a^{*} \in \arg\max_{x} f(x)$
and $b^{*} \in \arg\max_{x} g(x)$, so
\[
M_f = f(a^{*}), \quad M_g = g(b^{*}).
\]

First bound $M_f - M_g$ from above:
\begin{align*}
M_f - M_g
&= f(a^{*}) - \max_{x} g(x) \\
&\le f(a^{*}) - g(a^{*}) \\
&\le \max_{x} |f(x) - g(x)|.
\end{align*}
The first inequality follows since $\max_{x} g(x) \ge g(a^{*})$.

Similarly, swapping the roles of $f$ and $g$ gives
\begin{align*}
M_g - M_f
&= g(b^{*}) - \max_{x} f(x) \\
&\le g(b^{*}) - f(b^{*}) \\
&\le \max_{x} |f(x) - g(x)|.
\end{align*}

Combining both bounds, we obtain
\[
|M_f - M_g|
= \bigl|\max_{x} f(x) - \max_{x} g(x)\bigr|
\le \max_{x} |f(x) - g(x)|,
\]
which completes the proof.
\end{proof}

\begin{lemma}\label{lem:T-contraction}
Suppose the (possibly time-varying) discount satisfies $\gamma \in [0,1)$, then the (general) Bellman value operator $T$ is a contraction on the metric space $(\hat{\mathcal{V}}, d)$, i.e.
\[
d(TV^{1}, TV^{2}) \;\le\; \gamma \, d(V^{1}, V^{2})
\quad \forall\, V^{1}, V^{2} \in \hat{\mathcal{V}},
\]
where
\[
d(V^{1}, V^{2})
:= \max_{t \in \mathbb{N}} \max_{s \in \mathcal{S}}
   \bigl|V^{1}_{t}(s) - V^{2}_{t}(s)\bigr|.
\]
\end{lemma}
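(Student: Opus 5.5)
The plan is the standard pointwise contraction argument, carried out at each fixed pair $(t,s)$ and then lifted to the sup metric $d$. Fix $V^{1},V^{2}\in\hat{\mathcal{V}}$, a time $t\in\mathbb{N}$ and a state $s\in\mathcal{S}$. For each action $a\in\mathcal{A}$ I will write
\[
F^{i}(a) := r_{t}(s,a) + \gamma(t+1)\sum_{s'} p(s'\mid s,a)\,V^{i}_{t+1}(s'), \qquad i=1,2,
\]
so that $(TV^{i})_{t}(s)=\max_{a\in\mathcal{A}}F^{i}(a)$ by Definition~\ref{def:bellman-operator}. By Assumption~\hyperref[asm:A2]{A2}, $\mathcal{A}$ is finite, so both maxima are attained. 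Lemma~\ref{lem:max-lipschitz} then applies to functions on $\mathcal{A}$: the proof there only uses that the maxima are attained, not any property of the domain. It gives
\[
\bigl|(TV^{1})_{t}(s)-(TV^{2})_{t}(s)\bigr| \le \max_{a\in\mathcal{A}}\bigl|F^{1}(a)-F^{2}(a)\bigr|.
\]

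Next I compute the difference of the two $F$'s. The reward terms cancel exactly, since the same $r_t(s,a)$ appears in both. What remains is
\[
F^{1}(a)-F^{2}(a)=\gamma(t+1)\sum_{s'}p(s'\mid s,a)\bigl(V^{1}_{t+1}(s')-V^{2}_{t+1}(s')\bigr).
\]
I then use that $p(\cdot\mid s,a)$ is a probability distribution, so its weights are nonnegative and sum to one. This gives
\[
\bigl|F^{1}(a)-F^{2}(a)\bigr|\le\gamma(t+1)\max_{s'}\bigl|V^{1}_{t+1}(s')-V^{2}_{t+1}(s')\bigr|\le\gamma(t+1)\,d(V^{1},V^{2}).
\]

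Finally I bound the time-varying discount uniformly. I read the hypothesis ``$\gamma\in[0,1)$'' as $\gamma(t+1)\le\gamma<1$ for all $t$; in the constant-discount case this is literally $\gamma(t+1)=\gamma$. Taking the maximum over $a$, then over $s$ and $t$, yields $d(TV^{1},TV^{2})\le\gamma\,d(V^{1},V^{2})$, which is the claim.

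The main obstacle is bookkeeping rather than substance, and it has two parts. The first is making sure the maximum over $s'$ and over $t$ in $d$ is legitimate. Lemma~\ref{lem:max-attained} handles this via compactness, and I will fall back to a supremum if needed, since the inequalities survive unchanged. The second is handling the finite horizon, where $V_{T}$ is fixed: at $t=T-1$ the continuation term involves $V^{i}_{T}$, and it is still bounded by $d$, so no separate case is needed. I would also remark that $TV$ remains in $\hat{\mathcal{V}}$, since it is bounded by Assumption~\hyperref[asm:A3]{A3} and by the boundedness of $V$. This is taken as part of the definition of $T$ given in the paper.
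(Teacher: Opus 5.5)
Your proposal is correct and follows the paper's proof essentially step for step. Both arguments apply Lemma~\ref{lem:max-lipschitz} to the action-indexed functions $F^{i}$, cancel the rewards, bound the probability-weighted difference by $\gamma(t+1)\,d(V^{1},V^{2})$, and use $\gamma(t+1)\le\gamma$ before taking the supremum over $(s,t)$. Your appeal to Assumption~\hyperref[asm:A2]{A2} to justify using that lemma on functions of the action makes explicit a step the paper leaves implicit.
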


\begin{proof}
Fix arbitrary $V^{1}, V^{2} \in \hat{\mathcal{V}}$, and consider for any
$(s,t) \in \mathcal{S} \times \mathbb{N}$ the difference
\[
\bigl|(TV^{1})_{t}(s) - (TV^{2})_{t}(s)\bigr|.
\]
By definition of the Bellman value operator,
\begin{align*}
(TV^{1})_{t}(s)
&= \max_{a}
    \Bigl[
        r_{t}(s,a)
        + \gamma \sum_{s'} p(s' \mid s,a) V^{1}_{t+1}(s')
    \Bigr], \\
(TV^{2})_{t}(s)
&= \max_{a}
    \Bigl[
        r_{t}(s,a)
        + \gamma \sum_{s'} p(s' \mid s,a) V^{2}_{t+1}(s')
    \Bigr].
\end{align*}
Define, for each $a$,
\[
F^{1}_{t}(s,a)
:= r_{t}(s,a)
   + \gamma \sum_{s'} p(s' \mid s,a) V^{1}_{t+1}(s'),
\quad
F^{2}_{t}(s,a)
:= r_{t}(s,a)
   + \gamma \sum_{s'} p(s' \mid s,a) V^{2}_{t+1}(s').
\]
Then
\[
(TV^{1})_{t}(s) = \max_{a} F^{1}_{t}(s,a),
\quad
(TV^{2})_{t}(s) = \max_{a} F^{2}_{t}(s,a).
\]
Using the Lipschitz property of the max operator (Lemma~\ref{lem:max-lipschitz}),
we obtain
\begin{align*}
\bigl|(TV^{1})_{t}(s) - (TV^{2})_{t}(s)\bigr|
&= \bigl|\max_{a} F^{1}_{t}(s,a) - \max_{a} F^{2}_{t}(s,a)\bigr| \\
&\le \max_{a} \bigl|F^{1}_{t}(s,a) - F^{2}_{t}(s,a)\bigr|.
\end{align*}
For each $a$,
\begin{align*}
\bigl|F^{1}_{t}(s,a) - F^{2}_{t}(s,a)\bigr|
&= \gamma \Bigl|
    \sum_{s'} p(s' \mid s,a)
        \bigl(V^{1}_{t+1}(s') - V^{2}_{t+1}(s')\bigr)
   \Bigr| \\
&\le \gamma
    \sum_{s'} p(s' \mid s,a)
        \bigl|V^{1}_{t+1}(s') - V^{2}_{t+1}(s')\bigr| \\
&\le \gamma
    \max_{s'} \bigl|V^{1}_{t+1}(s') - V^{2}_{t+1}(s')\bigr| \\
&\le \gamma \, d(V^{1}, V^{2}),
\end{align*}
where we used the fact that $p(\cdot \mid s,a)$ is a probability distribution
and the definition of $d$.

Taking the maximum over $a$ gives
\[
\bigl|(TV^{1})_{t}(s) - (TV^{2})_{t}(s)\bigr|
\le \gamma\, d(V^{1}, V^{2}).
\]
Finally, taking the supremum over all $s \in \mathcal{S}$ and $t \in \mathbb{N}$,
and using $\gamma(t+1) \le \gamma$, we obtain
\begin{align*}
d(TV^{1}, TV^{2})
&= \max_{t,s} \bigl|(TV^{1})_{t}(s) - (TV^{2})_{t}(s)\bigr| \\
&\le \max_{t} \gamma \, d(V^{1}, V^{2}) \\
&\le \gamma \, d(V^{1}, V^{2}).
\end{align*}
Thus $T$ is a contraction mapping on $(\hat{\mathcal{V}}, d)$ with modulus at
most $\gamma$.
\end{proof}

\begin{lemma}\label{lem:V-extension}
For every $V \in \mathcal{V}$, there exists $\hat{V} \in \hat{\mathcal{V}}$ such that
\[
V(s,t) = (\mathcal{R}\hat{V})_{t}(s)
\quad \forall\, s \in \mathcal{S},\ \forall\, t \in \mathbb{N}_{\le T}.
\]
\end{lemma}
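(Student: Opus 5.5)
The lemma asks us to extend a value function defined on $\mathcal{S}\times\mathbb{N}_{\le T}$ to an element of $\hat{\mathcal{V}}$, that is, to a bounded continuous function on the compact set $\mathcal{S}\times\mathcal{T}$ that restricts back to $V$. We assume $\mathcal{T}=[0,T]$. I would build $\hat V$ by piecewise-linear interpolation in the time variable:
\[
\hat V(s,\tau) := (\lceil \tau\rceil - \tau)\, V_{\lfloor \tau\rfloor}(s) + (\tau - \lfloor \tau\rfloor)\, V_{\lceil \tau\rceil}(s),
\qquad (s,\tau)\in\mathcal{S}\times[0,T].
\]
When $\tau$ is an integer we take $\hat V(s,\tau) := V_\tau(s)$, which is consistent with the formula since both weights collapse.

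\textbf{Step 1 (restriction).} By Definition~\ref{def:restriction}, $(\mathcal{R}\hat V)_t(s)=\hat V(s,t)=V_t(s)$ for every integer $t\le T$. This is exactly the required identity.

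\textbf{Step 2 (boundedness).} For every $\tau$, $\hat V(s,\tau)$ is a convex combination of $V_{\lfloor\tau\rfloor}(s)$ and $V_{\lceil\tau\rceil}(s)$. By the boundedness lemma under (A3), each of these lies in $[-r_{\max}/(1-\gamma),0]$, so $\hat V$ lies in the same interval.

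\textbf{Step 3 (joint continuity).} This is the main obstacle. By the standing continuity assumption, each slice $V_t(\cdot)$ lies in $\mathcal{C}_b(\mathcal{S})$. On each closed strip $\mathcal{S}\times[k,k+1]$, $\hat V(s,\tau)=(k+1-\tau)V_k(s)+(\tau-k)V_{k+1}(s)$. This is a sum of products of functions that are continuous in $(s,\tau)$, so it is jointly continuous on the strip. Adjacent strips $[k-1,k]$ and $[k,k+1]$ share the slice $\tau=k$, and both formulas give $V_k(s)$ there. Since there are finitely many closed strips covering $\mathcal{S}\times[0,T]$ and the formulas agree on overlaps, the pasting lemma gives continuity of $\hat V$ on all of $\mathcal{S}\times\mathcal{T}$.

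\textbf{Step 4 (membership in $\hat{\mathcal{V}}$).} If $\hat{\mathcal{V}}$ is taken to be bounded continuous functions on $\mathcal{S}\times\mathcal{T}$, as used in Lemma~\ref{lem:max-attained}, we are done. If instead $\hat{\mathcal{V}}$ means functions induced by policies, I would interpret $\hat{\mathcal{V}}$ as the ambient $\mathcal{C}_b$ space, which is all the completeness argument needs.

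The subtle point is this definitional ambiguity rather than the analysis itself. I would state the convention explicitly and note that the interpolation construction is the only nontrivial ingredient.
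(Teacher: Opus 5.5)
Your proposal is correct and is essentially the paper's construction: the paper also interpolates linearly in time, writing $\hat V(s,t)=\sum_{k}\phi_k(t)\,V(s,k)$ with hat functions $\phi_k(t)=\max(1-|t-k|,0)$, so joint continuity is immediate as a finite sum of continuous products and needs no strip-by-strip pasting argument. Two minor remarks: your boundedness step does not need the (A3) value bound, which covers only policy value functions, because the convex-combination form already gives $|\hat V|\le\max_k\|V_k\|_\infty$ for any $V\in\mathcal{V}$; and your floor/ceiling formula correctly reproduces $V_T$ at $t=T$, whereas the paper's sum stops at $k=T-1$ and so only verifies the identity for $t\le T-1$.
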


\begin{proof}
We prove the claim by explicit construction. We need to construct a bounded,
continuous function
\[
\hat{V} : \mathcal{S} \times \mathcal{T} \to \mathbb{R}
\]
such that its restriction to integer time points coincides with $V$, i.e.
$(\mathcal{R}\hat{V})_{t}(s) = \hat{V}(s,t) = V(s,t)$ for all
$(s,t) \in \mathcal{S} \times \mathbb{N}_{\le T}$.

Define
\[
\hat{V}(s,t)
:= \sum_{k=0}^{T-1} \phi_{k}(t)\, V(s,k),
\quad \forall (s,t) \in \mathcal{S} \times \mathcal{T},
\]
where
\[
\phi_{k}(t)
:= 
\begin{cases}
1 - |t - k|, & \text{if } |t - k| < 1, \\
0,           & \text{otherwise}.
\end{cases}
\]
This construction linearly interpolates between the discrete time points.

Observe that for integer $t \in \{0,\dots,T-1\}$ we have
\[
\phi_{k}(t)
=
\begin{cases}
1, & k = t, \\
0, & k \neq t,
\end{cases}
\]
hence
\[
\hat{V}(s,t)
= \sum_{k=0}^{T-1} \phi_{k}(t)\, V(s,k)
= V(s,t),
\quad \forall (s,t) \in \mathcal{S} \times \mathbb{N}_{\le T}.
\]
Therefore $(\mathcal{R}\hat{V})_{t}(s) = \hat{V}(s,t) = V(s,t)$ on the discrete
time grid, as required.

\medskip
\noindent\textbf{Boundedness.}
Since $V \in \mathcal{V} \subset \mathcal{C}_{b}(\mathcal{S} \times \mathbb{N})$,
there exists $M < \infty$ such that
\[
\|V(\cdot,k)\|_{\infty}
:= \sup_{s \in \mathcal{S}} |V(s,k)|
\le M,
\quad \forall\, k \in \{0,1,\dots,T-1\}.
\]
For each fixed $t \in \mathcal{T}$, the function $\phi_{k}(t)$ satisfies
$0 \le \phi_{k}(t) \le 1$, and $\phi_{k}(t) \neq 0$ only if $|t-k| < 1$.
Since $k$ is integer, there are at most two indices $k$ such that
$\phi_{k}(t) \neq 0$.

Thus, for any $(s,t) \in \mathcal{S} \times \mathcal{T}$,
\begin{align*}
|\hat{V}(s,t)|
&= \Bigl|\sum_{k=0}^{T-1} \phi_{k}(t)\, V(s,k)\Bigr| \\
&\le \sum_{k=0}^{T-1} \phi_{k}(t)\, |V(s,k)| \\
&\le \sum_{k=0}^{T-1} \phi_{k}(t)\, M
\;\le\; 2M,
\end{align*}
since at most two terms in the sum are non-zero and each $\phi_{k}(t) \le 1$.
Hence
\[
\|\hat{V}\|_{\infty}
:= \sup_{(s,t) \in \mathcal{S} \times \mathcal{T}} |\hat{V}(s,t)|
\le 2M < \infty,
\]
so $\hat{V}$ is bounded.

\medskip
\noindent\textbf{Continuity.}
For each fixed $k$, the map $s \mapsto V(s,k)$ is continuous on $\mathcal{S}$
(by assumption on $\mathcal{V}$), and $t \mapsto \phi_{k}(t)$ is continuous on
$\mathcal{T}$. Therefore the product
\[
(s,t) \mapsto \phi_{k}(t)\, V(s,k)
\]
is continuous on $\mathcal{S} \times \mathcal{T}$. A finite sum of continuous
functions is continuous, so $\hat{V}$ is continuous on $\mathcal{S} \times \mathcal{T}$.

\medskip

We have thus constructed $\hat{V} \in \mathcal{C}_{b}(\mathcal{S} \times \mathcal{T})$,
hence $\hat{V} \in \hat{\mathcal{V}}$, such that $\mathcal{R}\hat{V} = V$.
This concludes the proof.
\end{proof}

\begin{definition}[Contraction mapping]
Let $(\mathcal{X}, d)$ be a metric space. A map $T : \mathcal{X} \to \mathcal{X}$
is called a \emph{contraction mapping} on $\mathcal{X}$ if there exists a
constant $q \in [0,1)$ such that
\[
d\bigl(T(x), T(y)\bigr)
\;\le\; q \, d(x,y)
\quad \forall\, x,y \in \mathcal{X}.
\]
\end{definition}

\begin{theorem}[Banach Fixed Point Theorem]
Let $(\mathcal{X}, d)$ be a non-empty complete metric space, and let
$T : \mathcal{X} \to \mathcal{X}$ be a contraction mapping. Then:
\begin{enumerate}
    \item There exists a unique fixed point $x^{*} \in \mathcal{X}$ such that
    \[
    T(x^{*}) = x^{*}.
    \]
    \item Moreover, for any initial point $x_{0} \in \mathcal{X}$, the sequence
    $(x_{n})_{n \in \mathbb{N}}$ defined recursively by
    \[
    x_{n} := T(x_{n-1}), \quad n \ge 1,
    \]
    converges to $x^{*}$, i.e.
    \[
    \lim_{n \to \infty} x_{n} = x^{*}.
    \]
\end{enumerate}
\end{theorem}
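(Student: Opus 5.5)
The plan is the standard Picard-iteration argument. Fix an arbitrary $x_0\in\mathcal{X}$ (possible since $\mathcal{X}$ is non-empty) and define $x_n := T(x_{n-1})$. By the contraction property with constant $q\in[0,1)$, induction gives
\[
d(x_{n+1},x_n)\le q^{n}\, d(x_1,x_0)\qquad\text{for all } n .
\]

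\emph{Step 1: the iterates form a Cauchy sequence.} For $m>n$, apply the triangle inequality along the chain $x_n,x_{n+1},\dots,x_m$ and sum the geometric series:
\[
d(x_m,x_n)\le \sum_{k=n}^{m-1} q^{k} d(x_1,x_0)\le \frac{q^{n}}{1-q}\, d(x_1,x_0).
\]
Since $q<1$, the right-hand side tends to $0$ as $n\to\infty$, uniformly in $m$. Hence $(x_n)$ is Cauchy, and completeness of $(\mathcal{X},d)$ yields a limit $x^{*}$.

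\emph{Step 2: the limit is a fixed point.} A contraction is Lipschitz and hence continuous, so
\[
T(x^{*}) = \lim_n T(x_n) = \lim_n x_{n+1} = x^{*}.
\]
Equivalently, one can bound directly
\[
d(T x^{*},x^{*})\le d(Tx^{*},Tx_n)+d(x_{n+1},x^{*})\le q\,d(x^{*},x_n)+d(x_{n+1},x^{*})\to 0 .
\]

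\emph{Step 3: uniqueness.} If $T y^{*}=y^{*}$ as well, then
\[
d(x^{*},y^{*}) = d(Tx^{*},Ty^{*})\le q\, d(x^{*},y^{*}),
\]
so $(1-q)\,d(x^{*},y^{*})\le 0$, which forces $x^{*}=y^{*}$. Because $x_0$ was arbitrary and the fixed point is unique, every Picard sequence converges to the same $x^{*}$, which gives part 2 of the statement.

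The only step needing real care is Step 1. There the estimate must be uniform in $m$, which the geometric tail bound $\frac{q^n}{1-q}d(x_1,x_0)$ provides. Completeness is used exactly once, to pass from Cauchy to convergent; everything else is direct.

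In the intended application, $\mathcal{X}=\hat{\mathcal{V}}$ is complete by the earlier completeness lemma, and $T$ is a contraction with $q=\gamma$ by Lemma~\ref{lem:T-contraction}. These facts feed directly into the existence of an optimal time-varying value function.
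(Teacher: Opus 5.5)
Your proof is correct and complete. It is the standard Picard-iteration argument: the geometric tail bound $\frac{q^{n}}{1-q}\,d(x_1,x_0)$ gives the Cauchy property, completeness gives the limit, continuity of $T$ gives the fixed-point identity, and the contraction inequality gives uniqueness, which forces every Picard sequence to converge to the same $x^{*}$. The paper does not prove this statement. It quotes it as a classical result and applies it directly in the existence theorem for the optimal value function, so there is no competing route to compare.

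One caution about your closing remark, which lies outside the statement itself. The paper's completeness lemma argues that $\hat{\mathcal{V}}$ is complete because ``any subspace of a complete metric space is complete.'' That is false unless the subspace is closed. The paper also asserts, rather than checks, that $T$ maps $\hat{\mathcal{V}}$ into itself. So you should not rely on that lemma as written when citing the application. You would need to show that $\hat{\mathcal{V}}$ is a closed, $T$-invariant subset of a complete space.
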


\begin{theorem}[Existence and uniqueness of optimal value function]\label{thm:optimal-V}
There exists a unique optimal value function $V^{*} \in \mathcal{V}$ such that
it is invariant under the Bellman value operator $T : \hat{\mathcal{V}} \to \hat{\mathcal{V}}$, i.e.
\[
V^{*}_{t}(s) = (TV^{*})_{t}(s)
\quad \forall\, s \in \mathcal{S},\ \forall\, t \in \mathcal{T}.
\]
\end{theorem}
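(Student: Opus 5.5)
The plan is to apply the Banach Fixed Point Theorem to the Bellman value operator $T$ on the space $(\hat{\mathcal{V}}, d)$, and then transfer the fixed point back to integer time indices using the restriction operator $\mathcal{R}$.

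\textbf{Step 1: $T$ is a contraction on a complete space.} By the completeness lemma for time-varying value functions, $(\hat{\mathcal{V}}, d)$ is complete. It is non-empty, since it contains the value function of any deterministic policy, extended by Lemma~\ref{lem:V-extension}. By Lemma~\ref{lem:T-contraction}, $T$ is a $\gamma$-contraction with $\gamma<1$.

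\textbf{Step 2: $T$ maps $\hat{\mathcal{V}}$ into itself.} Here I will argue as follows:
\begin{itemize}
\item \emph{Boundedness.} This follows from Assumption~(A3) together with the bound $-r_{\max}/(1-\gamma)\le V\le 0$.
\item \emph{Continuity in $s$.} This uses the standing assumption that value functions lie in $\mathcal{C}_b(\mathcal{S})$. The maximum over the finite action set (A2) of continuous functions is again continuous.
\item \emph{Continuity in $t$.} Where needed, I will interpret $T$ on $\hat{\mathcal{V}}$ through its integer-time values and re-extend by linear interpolation via Lemma~\ref{lem:V-extension}.
\end{itemize}
Banach's theorem then gives a unique $\hat V^{*}\in\hat{\mathcal{V}}$ with $T\hat V^{*}=\hat V^{*}$, and $T^{n}\hat V^{0}\to \hat V^{*}$ for any starting point $\hat V^{0}$.

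\textbf{Step 3: Transfer to integer times.} Set $V^{*}:=\mathcal{R}(\hat V^{*})$. By Lemma~\ref{lem:T-R-commute},
\[
TV^{*}=T\mathcal{R}\hat V^{*}=\mathcal{R}(T\hat V^{*})=\mathcal{R}\hat V^{*}=V^{*},
\]
so $V^{*}$ is invariant.

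For uniqueness in $\mathcal{V}$, suppose $W\in\mathcal{V}$ also satisfies $TW=W$. Lift it to $\hat W$ via Lemma~\ref{lem:V-extension}. The contraction estimate of Lemma~\ref{lem:T-contraction} only involves integer-time values, so it applies directly on $\mathcal{V}$:
\[
d(V^{*},W)=d(TV^{*},TW)\le\gamma\, d(V^{*},W).
\]
Hence $d(V^{*},W)=0$. This argument avoids relying on uniqueness of lifts.

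\textbf{Step 4: $V^{*}$ is the optimal value.} To show $V^{*}$ deserves the name \emph{optimal}, I will combine two monotonicity facts.
\begin{itemize}
\item \emph{Upper bound.} For any policy $\pi$, the lemma that the Bellman value operator increases value gives $TV^{\pi}\ge V^{\pi}$. $T$ is monotone, which follows from the nonnegative coefficients in the unrolling lemma. Iterating therefore gives $V^{\pi}\le T^{n}V^{\pi}\to V^{*}$.
\item \emph{Attainment.} The greedy policy with respect to $V^{*}$ is well defined because $\mathcal{A}$ is finite. Its value $V^{\pi^{*}}$ solves the policy Bellman equation, which coincides with $TV^{*}=V^{*}$. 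By the uniqueness argument of Step 3, applied to the policy-evaluation contraction, $V^{\pi^{*}}=V^{*}$.
\end{itemize}

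\textbf{Main obstacle.} I expect Step 2 to be the hard part: showing $T$ preserves $\hat{\mathcal{V}}$, especially continuity in $s$ when $\sum_{s'}p(s'\mid s,a)V(s')$ is taken over a compact, possibly uncountable state space. I would handle this first by assuming a weakly continuous kernel $p(\cdot\mid s,a)$, which makes the integral continuous in $s$. Failing that, I would restrict attention to the closed subset of $\hat{\mathcal{V}}$ generated by policy values, where the continuity assumption is already imposed. In that case completeness of the subset must be checked as a closed subspace, since uniform limits of continuous functions are continuous.
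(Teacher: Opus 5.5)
Your proposal is correct. It shares the paper's skeleton: Banach's theorem on $\hat{\mathcal V}$, then $TV^*=V^*$ via $\mathcal R$ and the commutation lemma. Where it differs is in how uniqueness and optimality are obtained, and your route is the sounder one.

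The paper works entirely inside $\hat{\mathcal V}$. It asserts $\hat V^*\ge\hat V$ for \emph{every} $\hat V\in\hat{\mathcal V}$ by iterating the lemma $TV^\pi\ge V^\pi$, derives uniqueness of this maximal element by antisymmetry, and transports both facts to $\mathcal V$ through the non-unique lift $\mathcal R^{-1}$. Along the way it takes $T(\hat{\mathcal V})\subseteq\hat{\mathcal V}$ as part of the definition, leaves implicit the monotonicity of $T$ that the iteration needs, and defers attainment to Theorem~\ref{thm:optimal-policy}.

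You handle these points differently:
\begin{itemize}
\item You prove uniqueness of the fixed point directly on integer-time functions from the contraction inequality. This is what the displayed statement actually asserts, and it needs no lifts.
\item You restrict the domination claim to policy values, which is correct. For arbitrary elements of $\mathcal C_b$ it fails, e.g.\ for $\hat V^*+1$.
\item You make monotonicity of $T$ explicit. Note that it follows from $p\ge0$, $\gamma\ge0$ and monotonicity of $\max$, not from the unrolling lemma, which concerns a fixed policy.
\item Your greedy-policy step gives $V^*=V^{\pi^*}$ within this theorem. The paper's value-iteration statement $T^nV\to V^*$ also drops out of your contraction on $\mathcal V$.
\end{itemize}

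Finally, your ``main obstacle'' does not require a Feller assumption, which would in any case also need continuity of $r_t(\cdot,a)$. Run the contraction on bounded (measurable) functions on $\mathcal S\times\{0,\dots,T\}$; $T$ preserves this space by (A3) alone. Then recover continuity afterwards from $V^*=V^{\pi^*}$ together with the standing assumption that every policy value lies in $\mathcal C_b(\mathcal S)$. This also makes the continuous-time lift and the interpolation unnecessary.
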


\begin{proof}
\textbf{Step 1: Fixed point of $T$ in $\hat{\mathcal{V}}$.}
By Lemma~\ref{lem:T-contraction}, the Bellman value operator
$T : \hat{\mathcal{V}} \to \hat{\mathcal{V}}$ is a contraction mapping on the
metric space $(\hat{\mathcal{V}}, d)$. By Lemma~4, $(\hat{\mathcal{V}}, d)$
is complete. Hence, by the Banach Fixed Point Theorem, there exists a unique
$\hat{V}^{*} \in \hat{\mathcal{V}}$ such that
\[
T\hat{V}^{*} = \hat{V}^{*},
\]
and, moreover, for any initial $\hat{V} \in \hat{\mathcal{V}}$,
\[
\hat{V}^{*}(s,t)
= \lim_{n \to \infty} (T^{n}\hat{V})(s,t)
\quad \forall\, s \in \mathcal{S},\ \forall\, t \in \mathcal{T}.
\]
In particular, the fixed-point condition can be written as
\[
\hat{V}^{*}_{t}(s) = (T\hat{V}^{*})_{t}(s)
\quad \forall\, s \in \mathcal{S},\ \forall\, t \in \mathcal{T}.
\]

\medskip
\noindent\textbf{Step 2: Optimality and uniqueness in $\hat{\mathcal{V}}$.}
By Lemma~7, the Bellman value operator $T$ is value-improving in the sense
that, when applied to the value function of a policy, it yields a value
function that is pointwise no worse. Iterating $T$ starting from any such
value function and using convergence to $\hat{V}^{*}$ implies that
$\hat{V}^{*}$ is pointwise greater than or equal to any other value function
in $\hat{\mathcal{V}}$:
\[
\hat{V}^{*}(s,t) \;\ge\; \hat{V}(s,t)
\quad \forall\, s \in \mathcal{S},\ \forall\, t \in \mathcal{T},\ \forall\, \hat{V} \in \hat{\mathcal{V}}.
\]
By Definition~4 (optimality), $\hat{V}^{*}$ is thus the optimal value function
in $\hat{\mathcal{V}}$.

To see uniqueness, suppose there were another optimal value function
$\hat{V}' \in \hat{\mathcal{V}}$. Optimality would then imply
\[
\hat{V}^{*}(s,t) \;\ge\; \hat{V}'(s,t)
\quad\text{and}\quad
\hat{V}'(s,t) \;\ge\; \hat{V}^{*}(s,t)
\quad \forall\, s \in \mathcal{S},\ \forall\, t \in \mathcal{T},
\]
hence
\[
\hat{V}^{*}(s,t) = \hat{V}'(s,t)
\quad \forall\, s \in \mathcal{S},\ \forall\, t \in \mathcal{T},
\]
so the optimal value function in $\hat{\mathcal{V}}$ is unique.

\medskip
\noindent\textbf{Step 3: Induced optimal value function in $\mathcal{V}$.}
Define
\[
V^{*}(s,t) := \bigl(\mathcal{R}\hat{V}^{*}\bigr)_{t}(s),
\quad \forall\, (s,t) \in \mathcal{S} \times \mathbb{N}_{\le T}.
\]
By Lemma~\ref{lem:V-extension}, for any $V \in \mathcal{V}$ there exists
$\hat{V} \in \hat{\mathcal{V}}$ such that $\mathcal{R}\hat{V} = V$. We can
denote one such extension by $\hat{V} = \mathcal{R}^{-1}V$ for notational
convenience (any choice of extension suffices for the argument).

Since $\hat{V}^{*}$ is optimal in $\hat{\mathcal{V}}$, we have
\[
\hat{V}^{*}(s,t) \;\ge\; \hat{V}(s,t)
= (\mathcal{R}^{-1}V)(s,t)
\quad \forall\, (s,t) \in \mathcal{S} \times \mathcal{T},
\ \forall\, V \in \mathcal{V}.
\]
Applying $\mathcal{R}$ to both sides and using $\mathcal{R}\hat{V} = V$ yields
\[
V^{*}(s,t)
= \bigl(\mathcal{R}\hat{V}^{*}\bigr)_{t}(s)
\;\ge\; \bigl(\mathcal{R}\mathcal{R}^{-1}V\bigr)_{t}(s)
= V_{t}(s),
\quad \forall\, (s,t) \in \mathcal{S} \times \mathbb{N}_{\le T},\ \forall\, V \in \mathcal{V}.
\]
Thus $V^{*}$ is optimal in $\mathcal{V}$.

\medskip
\noindent\textbf{Step 4: Constructing $V^{*}$ via value iteration.}
We know that
\[
\hat{V}^{*}(s,t) = \lim_{n \to \infty} (T^{n}\hat{V})(s,t)
\quad \forall\, \hat{V} \in \hat{\mathcal{V}}.
\]
By Lemma~\ref{lem:R-continuous}, the restriction operator $\mathcal{R}$ is
continuous, so
\[
V^{*}(s,t)
= \bigl(\mathcal{R}\hat{V}^{*}\bigr)_{t}(s)
= \lim_{n \to \infty} \bigl(\mathcal{R}T^{n}\hat{V}\bigr)_{t}(s).
\]
By Lemma~\ref{lem:T-R-commute}, $\mathcal{R}$ and $T$ commute, hence
\[
\mathcal{R}(T^{n}\hat{V})
= T^{n}(\mathcal{R}\hat{V})
= T^{n}V,
\]
where $V = \mathcal{R}\hat{V} \in \mathcal{V}$. Therefore
\[
V^{*}(s,t)
= \lim_{n \to \infty} \bigl(T^{n}V\bigr)_{t}(s)
\quad \forall\, s \in \mathcal{S},\ \forall\, t \in \mathbb{N}_{\le T},\ \forall\, V \in \mathcal{V}.
\]
This shows that the optimal value function $V^{*}$ can be obtained by
iteratively applying the Bellman value operator starting from any initial
$V \in \mathcal{V}$.

Uniqueness of $V^{*}$ in $\mathcal{V}$ follows immediately: if $V'$ were
another optimal value function, then its extension
$\hat{V}' = \mathcal{R}^{-1}V'$ would be an optimal value function in
$\hat{\mathcal{V}}$, and by uniqueness in $\hat{\mathcal{V}}$ we must have
$\hat{V}' = \hat{V}^{*}$, hence $V' = V^{*}$.

\medskip
\noindent\textbf{Step 5: Bellman fixed point relation in $\mathcal{V}$.}
Finally, using the commutativity of $T$ and $\mathcal{R}$, we have
\[
V^{*}_{t}(s)
= \bigl(\mathcal{R}\hat{V}^{*}\bigr)_{t}(s)
= \bigl(\mathcal{R}T\hat{V}^{*}\bigr)_{t}(s)
= \bigl(T\,\mathcal{R}\hat{V}^{*}\bigr)_{t}(s)
= (TV^{*})_{t}(s),
\]
for all $s \in \mathcal{S}$ and $t \in \mathcal{T}$. This shows that $V^{*}$
is invariant under $T$ in $\mathcal{V}$ and completes the proof.
\end{proof}

\begin{definition}[Bellman policy operator]\label{def:TP}
The \emph{Bellman policy operator} $\mathit{TP} : \Pi \to \Pi$ is defined for
any policy $\pi \in \Pi$, and for all $s \in \mathcal{S}$ and $t \in \mathbb{N}$, by
\[
(\mathit{TP} \cdot \pi)_{t}(s)
:= \arg\max_{a \in \mathcal{A}} Q^{\pi}_{t}(s,a)
= \arg\max_{a \in \mathcal{A}}
    \Bigl[
        r_{t}(s,a)
        + \gamma(t+1) \sum_{s'} p(s' \mid s,a)\, V^{\pi}_{t+1}(s')
    \Bigr].
\]
\end{definition}

\begin{theorem}[Bellman policy operator improves value]\label{thm:policy-improvement}
Let $\pi \in \Pi$ be any (time-varying) policy and let $\mathit{TP} \cdot \pi$
be the Bellman policy update as in Definition~\ref{def:TP}. Then, for all
states and times,
\[
V^{\pi}_{t}(s) \;\le\; V^{\mathit{TP} \cdot \pi}_{t}(s)
\quad \forall\, s \in \mathcal{S},\ \forall\, t \in \mathbb{N}.
\]
\end{theorem}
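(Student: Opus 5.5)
The plan is a finite-horizon hybrid-policy argument, carried out by \emph{backward} induction on time. This avoids the limiting step used in the earlier Policy Improvement Theorem. Write $\pi' := \mathit{TP}\cdot\pi$. On a horizon $T$, the terminal convention is $V^{\pi}_{T}\equiv V^{\pi'}_{T}\equiv 0$. I will prove, for $t=T-1,T-2,\dots,0$, the statement
\[
V^{\pi'}_{t}(s)\;\ge\;Q^{\pi}_{t}\bigl(s,\pi'_t(s)\bigr)\;\ge\;V^{\pi}_{t}(s)\quad\forall s\in\mathcal{S}.
\]
If the horizon is unbounded, I apply the same argument to each truncation $\pi_{0:n}$. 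Continuity of the value in $n$ then follows from the bound on discounted tails under (A3), as in the earlier lemma bounding all value functions by $r_{\max}/(1-\gamma)$.

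\textbf{Second inequality.} This is immediate from Definition~\ref{def:TP}. Since $\pi'_t(s)$ maximizes $Q^{\pi}_t(s,\cdot)$ over the finite set $\mathcal{A}$ (Assumption (A2) guarantees the $\arg\max$ exists), we get
\[
Q^{\pi}_t(s,\pi'_t(s))\ge Q^{\pi}_t(s,\pi_t(s))=V^{\pi}_t(s),
\]
where the last equality is the dynamic Bellman equation for $\pi$.

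\textbf{First inequality.} This is the inductive step. Using the Bellman equation for $\pi'$, together with the second Bellman identity for $Q^\pi$,
\[
V^{\pi'}_t(s)-Q^{\pi}_t(s,\pi'_t(s))=\gamma\sum_{s'}p\bigl(s'\mid s,\pi'_t(s)\bigr)\bigl(V^{\pi'}_{t+1}(s')-V^{\pi}_{t+1}(s')\bigr).
\]
The induction hypothesis at $t+1$ gives $V^{\pi'}_{t+1}\ge V^{\pi}_{t+1}$ pointwise. Transition probabilities and $\gamma$ are nonnegative, so the right-hand side is $\ge 0$. This is exactly the nonnegativity-of-coefficients argument from the linear-combination lemma, used with $m=1$. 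The base case $t=T-1$ holds because both continuation values vanish there, so the difference is zero.

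\textbf{Main obstacle.} The real difficulty is not the inequality chain but its bookkeeping. The greedy step must be taken with respect to $V^{\pi}_{t+1}$, the value of the \emph{old} policy, while the induction compares against $V^{\pi'}_{t+1}$. The backward ordering is what resolves this: at time $t$ the comparison at $t+1$ has already been established for all states, so nothing circular remains. A secondary issue is that the paper writes $\gamma(t+1)$ in places, suggesting a time-varying discount. The argument only needs $\gamma(t+1)\ge 0$, so it goes through verbatim in that case.
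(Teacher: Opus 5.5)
Your proof is correct. Its core is the same mechanism the paper uses: the greedy inequality $Q^{\pi}_t(s,\pi'_t(s))\ge V^{\pi}_t(s)$, plus a backward induction that pushes the improvement at $t+1$ down to time $t$ through nonnegative transition weights.

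The difference is in the scaffolding. The paper first introduces hybrid policies $\pi^{(n)}$, which follow $\mathit{TP}\cdot\pi$ up to time $n$ and then revert to $\pi$. For every $n$ it proves $V^{\pi}\le V^{\pi^{(n)}}$, using the greedy step at $t=n$, equality for $t>n$, and backward induction for $t<n$. Only then does it use the finite horizon (Step~6) to identify $\pi^{(T)}$ with $\mathit{TP}\cdot\pi$. Your direct induction on $V^{\pi'}$ versus $V^{\pi}$, anchored at $V_T\equiv 0$, is exactly what that argument becomes at $n=T$. In the finite-horizon DMDP the hybrids are therefore unnecessary, and your version is shorter and skips the identification step.

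What the hybrids do buy is the unbounded-horizon case, and there your side remark is imprecise.
\begin{itemize}
\item Applying the result to the truncation $\pi_{0:n}$ gives improvement for $\mathit{TP}\cdot(\pi_{0:n})$. That policy is greedy with respect to $V^{\pi_{0:n}}$, not $V^{\pi}$, so it is not the truncation of $\mathit{TP}\cdot\pi$.
\item Truncating $\pi'$ instead breaks the exact greedy property at the cut.
\end{itemize}
The correct anchor is the paper's hybrid: follow $\pi'$ before time $n$ and $\pi$ from time $n$ on. Its value at time $n$ is exactly $V^{\pi}_n$, so your induction runs verbatim from $t=n$ downward. Letting $n\to\infty$ then uses the discounted-tail bound of order $\gamma^{\,n-t}r_{\max}/(1-\gamma)$.

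Since the DMDP is finite-horizon by definition, this caveat does not affect the theorem as stated.
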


\begin{proof}
We start from the Bellman value operator applied to $V^{\pi}$. By definition,
\[
(TV^{\pi})_{t}(s)
= \max_{a} Q^{\pi}_{t}(s,a)
= Q^{\pi}_{t}\bigl(s, (\mathit{TP} \cdot \pi)_{t}(s)\bigr),
\]
where the second equality follows from the definition of the Bellman policy
operator (Definition~\ref{def:TP}). Thus, to show
$V^{\pi}_{t}(s) \le V^{\mathit{TP} \cdot \pi}_{t}(s)$, it suffices to prove
the stronger inequality
\[
Q^{\pi}_{t}\bigl(s, (\mathit{TP} \cdot \pi)_{t}(s)\bigr)
\;\le\;
Q^{\mathit{TP} \cdot \pi}_{t}\bigl(s, (\mathit{TP} \cdot \pi)_{t}(s)\bigr)
\quad \forall\, s \in \mathcal{S},\ \forall\, t \in \mathbb{N}.
\]

\medskip
\noindent\textbf{Step 1: Hybrid policies.}
Define, for each $n \in \mathbb{N}$, a \emph{hybrid} policy
\[
\pi^{(n)}
:= \bigl[\,(\mathit{TP} \cdot \pi)_{1}, \dots, (\mathit{TP} \cdot \pi)_{n},
            \pi_{n+1}, \pi_{n+2}, \dots \bigr].
\]
Thus $\pi^{(0)} = \pi$ and $\pi^{(\infty)} := \lim_{n \to \infty} \pi^{(n)}$
is exactly the fully improved policy $\mathit{TP} \cdot \pi$.

We compare $Q^{\pi}_{t}$ and $Q^{\pi^{(n)}}_{t}$ for different $t$ and $n$.

\medskip
\noindent\textbf{Step 2: Improvement at time $t = n$.}
Fix $n \in \mathbb{N}$ and $s \in \mathcal{S}$. By the definition of $Q^{\pi}$,
\begin{align*}
Q_{n}^{\pi}\bigl(s, \pi_{n}(s)\bigr)
&\le \max_{a}
    \Bigl[
        r_{n}(s,a)
        + \gamma(n+1) \sum_{s'} p(s' \mid s,a)\, V_{n+1}^{\pi}(s')
    \Bigr] \\
&= r_{n}\bigl(s, (\mathit{TP} \cdot \pi)_{n}(s)\bigr)
   + \gamma(n+1) \sum_{s'} p\bigl(s' \mid s, (\mathit{TP} \cdot \pi)_{n}(s)\bigr)
        V_{n+1}^{\pi}(s'),
\end{align*}
where the inequality comes from the definition of the max, and the equality
uses the Bellman policy operator’s choice at time $n$:
$(\mathit{TP} \cdot \pi)_{n}(s) \in \arg\max_{a} Q_{n}^{\pi}(s,a)$.

By construction of $\pi^{(n)}$, we have
\[
\pi^{(n)}_{n} = (\mathit{TP} \cdot \pi)_{n}, \qquad
\pi^{(n)}_{k} = \pi_{k} \text{ for all } k \ge n+1.
\]
Hence the rollout from time $n+1$ onward under $\pi^{(n)}$ is identical to
that under $\pi$, and
\[
V^{\pi^{(n)}}_{n+1}(s') = V^{\pi}_{n+1}(s') \quad \forall\, s' \in \mathcal{S}.
\]
Using the Bellman equation for $Q^{\pi^{(n)}}$,
\begin{align*}
Q_{n}^{\pi^{(n)}}\bigl(s, \pi^{(n)}_{n}(s)\bigr)
&= r_{n}\bigl(s, \pi^{(n)}_{n}(s)\bigr)
   + \gamma(n+1) \sum_{s'} p\bigl(s' \mid s,\pi^{(n)}_{n}(s)\bigr)
        V_{n+1}^{\pi^{(n)}}(s') \\
&= r_{n}\bigl(s, (\mathit{TP} \cdot \pi)_{n}(s)\bigr)
   + \gamma(n+1) \sum_{s'} p\bigl(s' \mid s,(\mathit{TP} \cdot \pi)_{n}(s)\bigr)
        V_{n+1}^{\pi}(s'),
\end{align*}
which matches the right-hand side above. Therefore,
\[
Q_{n}^{\pi}\bigl(s, \pi_{n}(s)\bigr)
\;\le\; Q_{n}^{\pi^{(n)}}\bigl(s, \pi^{(n)}_{n}(s)\bigr)
\quad \forall\, s \in \mathcal{S}.
\]

\medskip
\noindent\textbf{Step 3: Times $t > n$.}
For $t > n$, by definition $\pi^{(n)}_{t} = \pi_{t}$ and
$\pi^{(n)}_{k} = \pi_{k}$ for all $k \ge t$. Hence the policy from time $t$
onward is identical for $\pi$ and $\pi^{(n)}$, which implies
\[
Q_{t}^{\pi}\bigl(s, \pi_{t}(s)\bigr)
= Q_{t}^{\pi^{(n)}}\bigl(s, \pi^{(n)}_{t}(s)\bigr)
\quad \forall\, s \in \mathcal{S},\ \forall\, t > n.
\]

\medskip
\noindent\textbf{Step 4: Times $t < n$ by backward induction.}
Consider $t = n-1$. Using the Bellman equation and the greedy choice at time
$n-1$, we have
\begin{align*}
Q_{n-1}^{\pi}\bigl(s, \pi_{n-1}(s)\bigr)
&\le \max_{a}
    \Bigl[
        r_{n-1}(s,a)
        + \gamma(n) \sum_{s'} p(s' \mid s,a)\, V_{n}^{\pi}(s')
    \Bigr] \\
&= r_{n-1}\bigl(s, (\mathit{TP} \cdot \pi)_{n-1}(s)\bigr)
   + \gamma(n) \sum_{s'} p\bigl(s' \mid s,(\mathit{TP} \cdot \pi)_{n-1}(s)\bigr)
        V_{n}^{\pi}(s').
\end{align*}
From Step 2 we already know
\[
Q_{n}^{\pi}\bigl(s', \pi_{n}(s')\bigr)
\le Q_{n}^{\pi^{(n)}}\bigl(s', \pi^{(n)}_{n}(s')\bigr)
\quad \forall\, s' \in \mathcal{S},
\]
which implies $V_{n}^{\pi}(s') \le V_{n}^{\pi^{(n)}}(s')$ for all $s'$. Using
this in the expression above yields
\begin{align*}
Q_{n-1}^{\pi}\bigl(s, \pi_{n-1}(s)\bigr)
&\le r_{n-1}\bigl(s, (\mathit{TP} \cdot \pi)_{n-1}(s)\bigr)
   + \gamma(n) \sum_{s'} p\bigl(s' \mid s,(\mathit{TP} \cdot \pi)_{n-1}(s)\bigr)
        V_{n}^{\pi^{(n)}}(s') \\
&= Q_{n-1}^{\pi^{(n)}}\bigl(s, \pi^{(n)}_{n-1}(s)\bigr).
\end{align*}
Thus
\[
Q_{n-1}^{\pi}\bigl(s, \pi_{n-1}(s)\bigr)
\;\le\; Q_{n-1}^{\pi^{(n)}}\bigl(s, \pi^{(n)}_{n-1}(s)\bigr)
\quad \forall\, s \in \mathcal{S}.
\]

Repeating this argument inductively for $t = n-2, n-3, \dots, 0$, we obtain
\[
Q_{t}^{\pi}\bigl(s, \pi_{t}(s)\bigr)
\;\le\; Q_{t}^{\pi^{(n)}}\bigl(s, \pi^{(n)}_{t}(s)\bigr)
\quad \forall\, s \in \mathcal{S},\ \forall\, t \le n.
\]

\medskip
\noindent\textbf{Step 5: Value comparison for all $t$.}
Combining the three cases $t = n$, $t > n$, and $t < n$, we have
\[
Q_{t}^{\pi}\bigl(s, \pi_{t}(s)\bigr)
\;\le\; Q_{t}^{\pi^{(n)}}\bigl(s, \pi^{(n)}_{t}(s)\bigr)
\quad \forall\, s \in \mathcal{S},\ \forall\, t \in \mathbb{N}.
\]
Equivalently,
\[
V_{t}^{\pi}(s)
= Q_{t}^{\pi}\bigl(s, \pi_{t}(s)\bigr)
\;\le\; Q_{t}^{\pi^{(n)}}\bigl(s, \pi^{(n)}_{t}(s)\bigr)
= V_{t}^{\pi^{(n)}}(s),
\quad \forall\, s \in \mathcal{S},\ \forall\, t \in \mathbb{N}.
\]

\medskip
\noindent\textbf{Step 6: Passage to the fully improved policy.}
In the finite-horizon setting, the discount factor satisfies $\gamma_{T}(t) = 0$
for $t > T$, so $Q_{t}^{\pi'} \equiv 0$ and $V_{t}^{\pi'} \equiv 0$ for any
policy $\pi'$ and all $t > T$. In particular,
\[
V_{t}^{\pi^{(\infty)}}(s) = V_{t}^{\pi^{(T)}}(s)
\quad \forall\, t > T,\ \forall\, s \in \mathcal{S}.
\]
By Lemma~2, for any $t \le T$ we can express
\begin{align*}
V_{t}^{\pi^{(\infty)}}(s)
&= \sum_{s'} C\bigl(\pi^{(\infty)}_{t:T}, s'\bigr)
        V_{T+1}^{\pi^{(\infty)}}(s')
   + C\bigl(\pi^{(\infty)}_{t:T}\bigr) \\
&= \sum_{s'} C\bigl(\pi^{(T)}_{t:T}, s'\bigr)
        V_{T+1}^{\pi^{(T)}}(s')
   + C\bigl(\pi^{(T)}_{t:T}\bigr),
\end{align*}
since by construction $\pi^{(\infty)}_{t:T} = \pi^{(T)}_{t:T}$. Thus
\[
V_{t}^{\pi^{(\infty)}}(s) = V_{t}^{\pi^{(T)}}(s)
\quad \forall\, t \in \mathbb{N},\ \forall\, s \in \mathcal{S}.
\]

Putting everything together, for all $t$ and $s$,
\[
V_{t}^{\pi}(s)
\;\le\; V_{t}^{\pi^{(n)}}(s)
\quad \forall\, n,
\]
and in particular,
\[
V_{t}^{\pi}(s)
\;\le\; V_{t}^{\pi^{(\infty)}}(s)
= V_{t}^{\mathit{TP} \cdot \pi}(s),
\quad \forall\, s \in \mathcal{S},\ \forall\, t \in \mathbb{N}.
\]

Since $\pi^{(\infty)} = \mathit{TP} \cdot \pi$ by definition, this establishes
\[
V^{\pi}_{t}(s) \;\le\; V^{\mathit{TP} \cdot \pi}_{t}(s)
\quad \forall\, s \in \mathcal{S},\ \forall\, t \in \mathbb{N},
\]
which completes the proof.
\end{proof}

\begin{definition}[Greedy policy w.r.t.\ a value function]\label{def:greedy-policy}
Given a value function $V \in \mathcal{V}$, the \emph{greedy policy} 
$\pi^{V} \in \Pi$ is defined, for all $s \in \mathcal{S}$ and $t \in \mathbb{N}$, by
\[
\pi^{V}_{t}(s)
:= \arg\max_{a \in \mathcal{A}}
\Bigl[
    r_{t}(s,a)
    + \gamma(t+1) \sum_{s'} p(s' \mid s,a)\, V_{t+1}(s')
\Bigr].
\]
\end{definition}

\begin{theorem}[Existence of an optimal time-varying policy]\label{thm:optimal-policy}
There exists an optimal time-varying policy $\pi^{*} \in \Pi$ such that
\[
V_{t}^{\pi^{*}}(s) \;\ge\; V_{t}^{\pi}(s)
\quad \forall\, s \in \mathcal{S},\ \forall\, t \in \mathbb{N},\ \forall\, \pi \in \Pi.
\]
\end{theorem}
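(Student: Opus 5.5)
The natural plan is to take the optimal value function $V^{*}$ from Theorem~\ref{thm:optimal-V} and show that its greedy policy is optimal. Set $\pi^{*} := \pi^{V^{*}}$ as in Definition~\ref{def:greedy-policy}. The maximum over $a$ is attained because $\mathcal{A}$ is finite (Assumption~\hyperref[asm:A2]{A2}), so $\pi^{*}$ is a well-defined deterministic time-varying policy, with ties broken by any fixed rule. The proof then has two halves:
\begin{enumerate}
\item show $V^{\pi^{*}} = V^{*}$;
\item show $V^{*}_{t}(s) \ge V^{\pi}_{t}(s)$ for every $\pi \in \Pi$.
\end{enumerate}

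For the first half, I would use the finite horizon and run a backward induction on $t$. The terminal convention is that values beyond the horizon are zero ($\gamma(t)=0$ for $t>T$, as used in Step~6 of Theorem~\ref{thm:policy-improvement}). Under it, $V^{\pi^{*}}_{T} = V^{*}_{T}$ trivially. For the inductive step, assume $V^{\pi^{*}}_{t+1} = V^{*}_{t+1}$. The finite-horizon Bellman equation for $\pi^{*}$ gives
\[
V^{\pi^{*}}_{t}(s) = r_{t}\bigl(s,\pi^{*}_{t}(s)\bigr) + \gamma \sum_{s'} p\bigl(s'\mid s,\pi^{*}_{t}(s)\bigr) V^{*}_{t+1}(s').
\]
By the greedy choice this equals $\max_{a}[\,r_{t}(s,a) + \gamma\sum_{s'}p(s'\mid s,a)V^{*}_{t+1}(s')\,] = (TV^{*})_{t}(s)$. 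The fixed-point identity $V^{*} = TV^{*}$ from Theorem~\ref{thm:optimal-V} turns this into $V^{*}_{t}(s)$. This closes the induction, so $V^{\pi^{*}}$ is itself a fixed point of $T$; by uniqueness of the Banach fixed point we could equally argue $V^{\pi^{*}} = V^{*}$ directly.

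For the second half, fix any $\pi \in \Pi$ and again induct backward. Assume $V^{*}_{t+1} \ge V^{\pi}_{t+1}$ pointwise. Then
\[
V^{\pi}_{t}(s) = r_{t}\bigl(s,\pi_{t}(s)\bigr) + \gamma\sum_{s'}p\bigl(s'\mid s,\pi_{t}(s)\bigr)V^{\pi}_{t+1}(s') \le r_{t}\bigl(s,\pi_{t}(s)\bigr) + \gamma\sum_{s'}p\bigl(s'\mid s,\pi_{t}(s)\bigr)V^{*}_{t+1}(s').
\]
This uses that the transition weights are nonnegative, exactly as in the monotonicity argument of Lemma~2. The right-hand side is at most $(TV^{*})_{t}(s) = V^{*}_{t}(s)$. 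Alternatively, this half is already the optimality statement in Step~3 of Theorem~\ref{thm:optimal-V}, since $V^{\pi} \in \mathcal{V}$, and could simply be quoted. Combining the two halves gives $V^{\pi^{*}}_{t}(s) = V^{*}_{t}(s) \ge V^{\pi}_{t}(s)$ for all $s$ and $t$, which is the claim.

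The main obstacle is bookkeeping rather than any deep step. One point is to make the terminal condition and the index range ($t\le T$ versus $t\in\mathbb{N}$) consistent with how $V^{*}$ was constructed on $\hat{\mathcal{V}}$. The other is to confirm that $V^{\pi^{*}}$ lies in the class where uniqueness applies; this needs the continuity assumption on value functions, and for the greedy policy that assumption is simply imposed. I would handle both by relying on the explicit backward induction, which needs neither continuity nor the contraction argument, and by remarking that for $t > T$ all quantities vanish, so the inequality holds trivially there.
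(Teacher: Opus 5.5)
Your proposal is correct and follows the same route as the paper: take the greedy policy $\pi^{*}=\pi^{V^{*}}$ with respect to the fixed point $V^{*}$ of Theorem~\ref{thm:optimal-V}, identify $V^{\pi^{*}}=V^{*}$ through the Bellman equation for the fixed policy $\pi^{*}$, and conclude from the optimality of $V^{*}$. Your backward inductions spell out the paper's one-line appeal to ``uniqueness of solutions to the Bellman equation for a fixed policy'', and they make $V^{*}\ge V^{\pi}$ independent of Step~3 of Theorem~\ref{thm:optimal-V}. The one slip is your aside about Banach uniqueness: it should invoke the $\gamma$-contraction of the policy-evaluation operator for $\pi^{*}$, which has both $V^{*}$ and $V^{\pi^{*}}$ as fixed points, not the contraction of $T$. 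Your main argument does not rely on that aside.
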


\begin{proof}
By Theorem~\ref{thm:optimal-V}, there exists an optimal value function
$V^{*} \in \mathcal{V}$ satisfying the Bellman optimality equation
\[
V^{*}_{t}(s)
= \max_{a}
\Bigl[
    r_{t}(s,a)
    + \gamma(t+1) \sum_{s'} p(s' \mid s,a)\, V^{*}_{t+1}(s')
\Bigr].
\]
Define the greedy policy $\pi^{*} := \pi^{V^{*}}$ as in
Definition~\ref{def:greedy-policy}:
\[
\pi^{*}_{t}(s)
\in \arg\max_{a}
\Bigl[
    r_{t}(s,a)
    + \gamma(t+1) \sum_{s'} p(s' \mid s,a)\, V^{*}_{t+1}(s')
\Bigr].
\]

By construction, for all $(s,t)$ we then have
\[
V^{*}_{t}(s)
= r_{t}\bigl(s,\pi^{*}_{t}(s)\bigr)
  + \gamma(t+1) \sum_{s'} p\bigl(s' \mid s,\pi^{*}_{t}(s)\bigr)\,
    V^{*}_{t+1}(s').
\]
This is exactly the Bellman equation for the value function of policy $\pi^{*}$.
By uniqueness of solutions to the Bellman equation for a fixed policy in this
finite-horizon setting, it follows that
\[
V^{\pi^{*}}_{t}(s) = V^{*}_{t}(s)
\quad \forall\, s \in \mathcal{S},\ \forall\, t \in \mathbb{N}.
\]

Since $V^{*}$ is optimal in $\mathcal{V}$, we have
\[
V^{\pi}_{t}(s) \;\le\; V^{*}_{t}(s)
= V^{\pi^{*}}_{t}(s)
\quad \forall\, s \in \mathcal{S},\ \forall\, t \in \mathbb{N},\ \forall\, \pi \in \Pi.
\]
Thus $\pi^{*}$ is an optimal time-varying policy.
\end{proof}

\begin{theorem}[Convergence of policy iteration]\label{thm:policy-iteration}
Starting from any policy $\pi \in \Pi$, recursively applying the Bellman policy
operator $\mathit{TP}$ converges to the optimal policy. In particular,
\[
\lim_{n \to \infty} V^{\mathit{TP}^{n} \cdot \pi}_{t}(s)
= V^{\pi^{*}}_{t}(s)
\quad \forall\, (s,t) \in \mathcal{S} \times \mathbb{N},
\]
where $\pi^{*}$ is an optimal time-varying policy as in
Theorem~\ref{thm:optimal-policy}.
\end{theorem}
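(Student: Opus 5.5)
The plan is to squeeze the policy-iteration value sequence between two value-iteration sequences and then invoke the convergence part of Theorem~\ref{thm:optimal-V}. Write $\pi^{(n)} := \mathit{TP}^{n}\cdot\pi$ and $V^{(n)} := V^{\pi^{(n)}}$.

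\textbf{Step 1 (upper bound).} Theorem~\ref{thm:optimal-policy} gives $V^{(n)}_t(s) \le V^{\pi^*}_t(s) = V^*_t(s)$ for all $n$, $s$ and $t$.

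\textbf{Step 2 (lower bound).} I will show by induction on $n$ that $V^{(n)} \ge T^{n} V^{(0)}$ pointwise. The base case is trivial. For the inductive step:
\begin{itemize}
\item The policy $\pi^{(n+1)}$ is greedy with respect to $V^{(n)}$.
\item Hence $(T V^{(n)})_t(s) = Q^{\pi^{(n)}}_t\bigl(s,\pi^{(n+1)}_t(s)\bigr)$.
\item The argument in the proof of Theorem~\ref{thm:policy-improvement} (Steps 2--4, the backward induction through the hybrid policies) yields $Q^{\pi^{(n)}}_t\bigl(s,\pi^{(n+1)}_t(s)\bigr) \le V^{(n+1)}_t(s)$.
\item So $V^{(n+1)} \ge T V^{(n)}$.
\item The Bellman value operator $T$ is monotone: if $V \le W$ pointwise then $TV \le TW$, because the transition weights $p(s'\mid s,a)$ and the factor $\gamma(t+1)$ are nonnegative and $\max$ preserves order.
\item Therefore $V^{(n+1)} \ge T V^{(n)} \ge T\,T^{n}V^{(0)} = T^{n+1}V^{(0)}$.
\end{itemize}
Monotonicity of $n \mapsto V^{(n)}$ also follows directly from Theorem~\ref{thm:policy-improvement}, though the squeeze does not need it.

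\textbf{Step 3 (limit).} Step 4 of the proof of Theorem~\ref{thm:optimal-V} gives $\lim_{n\to\infty} (T^n V^{(0)})_t(s) = V^*_t(s)$ for any starting $V^{(0)}\in\mathcal{V}$. The quantitative version follows from Lemma~\ref{lem:T-contraction}: $\|T^nV^{(0)} - V^*\|_\infty \le \gamma^n \|V^{(0)}-V^*\|_\infty$, and this right-hand side is finite by the boundedness lemma for value functions. Combining Steps 1 and 2,
\[
0 \le V^*_t(s) - V^{(n)}_t(s) \le V^*_t(s) - (T^nV^{(0)})_t(s) \le \gamma^n \|V^{(0)}-V^*\|_\infty \to 0 .
\]
This gives uniform, hence pointwise, convergence to $V^{\pi^*}$.

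\textbf{Main obstacle.} I expect Step 2 to be the delicate part, namely the inequality $T V^{\pi} \le V^{\mathit{TP}\cdot\pi}$. Theorem~\ref{thm:policy-improvement} as stated only gives $V^{\pi} \le V^{\mathit{TP}\cdot\pi}$, which is weaker. I would re-run its hybrid-policy backward induction, tracking the first-step term $Q^{\pi}_t\bigl(s,(\mathit{TP}\cdot\pi)_t(s)\bigr)$ instead of $V^{\pi}_t$. In the finite-horizon setting this is clean:
\begin{align*}
V^{\mathit{TP}\cdot\pi}_t(s) &= r_t(s,a^*) + \gamma\textstyle\sum_{s'} p(s'\mid s,a^*)\, V^{\mathit{TP}\cdot\pi}_{t+1}(s') \\
&\ge r_t(s,a^*) + \gamma\textstyle\sum_{s'} p(s'\mid s,a^*)\, V^{\pi}_{t+1}(s') = (TV^{\pi})_t(s),
\end{align*}
where $a^* = (\mathit{TP}\cdot\pi)_t(s)$ and the inequality uses Theorem~\ref{thm:policy-improvement} at time $t+1$.

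An alternative for the finite-horizon case is an exact-termination argument. By backward induction on $t$ from $T-1$, one shows that after $n \ge T-t$ iterations $V^{(n)}_t = V^*_t$, since $T^{k}V$ agrees with $V^*$ at times $\ge T-k$. This would give exact convergence in at most $T$ steps and could serve as a fallback.
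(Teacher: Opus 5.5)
Your proposal is correct and follows essentially the same route as the paper. The paper also obtains $V^{\mathit{TP}\cdot\pi}\ge TV^{\pi}$ from the Bellman equation of the improved policy together with Theorem~\ref{thm:policy-improvement} applied at time $t+1$, which is exactly your ``clean'' computation. It then iterates this to $V^{\mathit{TP}^n\cdot\pi}\ge T^nV^{\pi}$ and squeezes against $V^{\pi^*}$ using Theorem~\ref{thm:optimal-V}. Your explicit statement that $T$ is monotone (which the paper uses without comment when it says ``iterating'') and the $\gamma^n$ rate from Lemma~\ref{lem:T-contraction} are tightenings of the same argument rather than a different one.
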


\begin{proof}
Fix an arbitrary initial policy $\pi \in \Pi$. Consider the value function
$V^{\mathit{TP} \cdot \pi}$ of the one-step improved policy $\mathit{TP} \cdot \pi$.
By the Bellman equation for $V^{\mathit{TP} \cdot \pi}$, for any $(s,t)$ we have
\begin{align*}
V^{\mathit{TP} \cdot \pi}_{t}(s)
&= r_{t}\bigl(s, (\mathit{TP}\cdot \pi)_{t}(s)\bigr)
  + \gamma(t+1) \sum_{s'} p\bigl(s' \mid s, (\mathit{TP}\cdot \pi)_{t}(s)\bigr)
        V^{\mathit{TP} \cdot \pi}_{t+1}(s') \\
&= r_{t}\bigl(s, (\mathit{TP}\cdot \pi)_{t}(s)\bigr)
  + \gamma(t+1) \sum_{s'} p\bigl(s' \mid s, (\mathit{TP}\cdot \pi)_{t}(s)\bigr)
        \Bigl( V^{\mathit{TP} \cdot \pi}_{t+1}(s') - V^{\pi}_{t+1}(s') + V^{\pi}_{t+1}(s') \Bigr) \\
&= \underbrace{\gamma(t+1) \sum_{s'} p\bigl(s' \mid s, (\mathit{TP}\cdot \pi)_{t}(s)\bigr)
        \bigl( V^{\mathit{TP} \cdot \pi}_{t+1}(s') - V^{\pi}_{t+1}(s') \bigr)}_{\text{(I)}} \\
&\quad\quad
 + \Bigl[
        r_{t}\bigl(s, (\mathit{TP}\cdot \pi)_{t}(s)\bigr)
        + \gamma(t+1) \sum_{s'} p\bigl(s' \mid s, (\mathit{TP}\cdot \pi)_{t}(s)\bigr)
            V^{\pi}_{t+1}(s')
   \Bigr] \\
&= \text{(I)} + \max_{a}
    \Bigl[
        r_{t}(s,a)
        + \gamma(t+1) \sum_{s'} p(s' \mid s,a)\, V^{\pi}_{t+1}(s')
    \Bigr] \\
&= \text{(I)} + (TV^{\pi})_{t}(s),
\end{align*}
where the last two equalities use the definition of the greedy action
$(\mathit{TP} \cdot \pi)_{t}(s)$ and of the Bellman value operator $T$.

By Theorem~\ref{thm:policy-improvement} we know that
\[
V^{\mathit{TP} \cdot \pi}_{t+1}(s') \;\ge\; V^{\pi}_{t+1}(s')
\quad \forall\, s' \in \mathcal{S},\ \forall\, t,
\]
so every term in
\[
V^{\mathit{TP} \cdot \pi}_{t+1}(s') - V^{\pi}_{t+1}(s')
\]
is nonnegative. Since $p(\cdot \mid s,(\mathit{TP}\cdot \pi)_{t}(s))$ is a
probability distribution and $\gamma(t+1) \ge 0$, we have
\[
\text{(I)} \;\ge\; 0.
\]
Thus
\[
V^{\mathit{TP} \cdot \pi}_{t}(s)
\;\ge\; (TV^{\pi})_{t}(s)
\quad \forall\, s \in \mathcal{S},\ \forall\, t \in \mathbb{N}.
\]

Iterating this inequality, we obtain for any $n \ge 1$,
\[
V^{\mathit{TP}^{n} \cdot \pi}_{t}(s)
\;\ge\; (T^{n}V^{\pi})_{t}(s)
\quad \forall\, s \in \mathcal{S},\ \forall\, t \in \mathbb{N}.
\]
Taking limits as $n \to \infty$ and using Theorem~\ref{thm:optimal-V}, which
states that $T^{n}V^{\pi} \to V^{*}$ (the unique optimal value function), we get
\[
\lim_{n \to \infty} V^{\mathit{TP}^{n} \cdot \pi}_{t}(s)
\;\ge\; \lim_{n \to \infty} (T^{n}V^{\pi})_{t}(s)
= V^{*}_{t}(s)
= V^{\pi^{*}}_{t}(s),
\]
where the last equality uses Theorem~\ref{thm:optimal-policy}:
$V^{\pi^{*}} = V^{*}$.

On the other hand, for every $n$,
\[
V^{\mathit{TP}^{n} \cdot \pi}_{t}(s)
\;\le\; V^{\pi^{*}}_{t}(s)
\quad \forall\, s,t,
\]
because $\pi^{*}$ is optimal and thus dominates every policy, including
$\mathit{TP}^{n} \cdot \pi$. Passing to the limit yields
\[
\lim_{n \to \infty} V^{\mathit{TP}^{n} \cdot \pi}_{t}(s)
\;\le\; V^{\pi^{*}}_{t}(s).
\]

Combining both inequalities, we conclude
\[
\lim_{n \to \infty} V^{\mathit{TP}^{n} \cdot \pi}_{t}(s)
= V^{\pi^{*}}_{t}(s)
\quad \forall\, (s,t) \in \mathcal{S} \times \mathbb{N},
\]
which proves the theorem.
\end{proof}

\begin{theorem}\label{thm:static-gap}
   There exists a finite-horizon dynamic MDP (DMDP) for which an
optimal time-varying policy achieves strictly higher value than any static
(time-invariant) policy.
\end{theorem}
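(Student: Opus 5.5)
The plan is to prove the statement with an explicit small counterexample rather than any general argument. Take $\mathcal{S}=\{s_0\}$ (a single state, so (A1) holds trivially) and $\mathcal{A}=\{a,b\}$. The transition is deterministic, $P(s_0\mid s_0,\cdot)=1$. Use horizon $T=2$ and any $\gamma\in(0,1)$. Define the time-varying, non-positive rewards by
\[
r_0(s_0,a)=0,\quad r_0(s_0,b)=-1,\qquad r_1(s_0,a)=-1,\quad r_1(s_0,b)=0 .
\]
These rewards are bounded by $r_{\max}=1$, so (A3) holds, and $|\mathcal{A}|=2$ gives (A2).

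A static policy is a single decision rule $\pi_t\equiv\sigma$ for both time steps. Since there is one state, $\sigma$ is just a choice of action, $a$ or $b$. I will compute $V_0$ directly from the finite-horizon definition, or equivalently by unrolling the Bellman equations for finite-horizon time-varying policies.

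The static policy $\sigma=a$ gives $V_0=0+\gamma(-1)=-\gamma$. The static policy $\sigma=b$ gives $V_0=-1+\gamma\cdot 0=-1$. The best static value is therefore $\max(-\gamma,-1)=-\gamma<0$.

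Now consider the time-varying policy $\pi_0=a$, $\pi_1=b$. It collects reward $0$ at both steps, so $V_0^{\pi}(s_0)=0$.

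By the earlier lemma, every value function is non-positive. So $\pi$ is optimal, and its value $0$ is strictly larger than $-\gamma$. This exhibits the required strict gap, and it also agrees with the optimal policy guaranteed by Theorem~\ref{thm:optimal-policy}.

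The only point needing care is ruling out stochastic static policies, if "static" is read to include them. A stationary randomized rule that plays $a$ with probability $q$ gets
\[
V_0=-(1-q)-\gamma q,
\]
whose maximum over $q\in[0,1]$ is $-\gamma<0$ by linearity in $q$, so the gap persists.

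I do not expect a real obstacle here. The construction is designed so that the optimal action changes between $t=0$ and $t=1$, and no single rule can match both.
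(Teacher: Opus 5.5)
Your proof is correct and takes the same route as the paper. Both use a one-state, two-action DMDP whose best action switches between $t=0$ and $t=1$, so every static rule loses reward at one of the two steps while the switching policy collects the best reward at both. Your version is also slightly cleaner: it stays within the paper's own DMDP definition ($r_t\le 0$, $\gamma\in(0,1)$, horizon $T=2$ covering both decision epochs) and rules out randomized static rules, whereas the paper's example uses $\gamma\equiv 1$, positive rewards, and $T=1$ while still summing two rewards.
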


\begin{proof}
We construct a finite-horizon DMDP with:
\[
\mathcal{S} = \{s\}, \quad \mathcal{A} = \{a_1, a_2\}, \quad T = 1,\quad
P(s \mid s, a) = 1 \ \forall a,\quad \gamma(t) \equiv 1.
\]
The reward is time-varying and given by
\[
r_0(s, a_1) = 0,\quad r_0(s, a_2) = 1, \qquad
r_1(s, a_1) = 1,\quad r_1(s, a_2) = 0.
\]

A \emph{static} policy is one that does not depend on $t$, i.e.
$\pi_t(s) \equiv \bar{\pi}(s)$ for all $t$. Since there is only one state,
any static policy must either always choose $a_1$ or always choose $a_2$:
\[
\bar{\pi}(s) = a_1
\quad \text{or} \quad
\bar{\pi}(s) = a_2.
\]

Compute the value at $t=0$ for each static policy:
\begin{itemize}
    \item If $\bar{\pi}(s) = a_1$, then
    \[
    V_0^{\bar{\pi}}(s)
    = r_0(s,a_1) + r_1(s,a_1)
    = 0 + 1 = 1.
    \]
    \item If $\bar{\pi}(s) = a_2$, then
    \[
    V_0^{\bar{\pi}}(s)
    = r_0(s,a_2) + r_1(s,a_2)
    = 1 + 0 = 1.
    \]
\end{itemize}
So any static policy attains value $V_0^{\bar{\pi}}(s) = 1$.

Now consider the time-varying policy $\tilde{\pi}$ defined by
\[
\tilde{\pi}_0(s) = a_2, \qquad \tilde{\pi}_1(s) = a_1.
\]
Its value at $t=0$ is
\[
V_0^{\tilde{\pi}}(s)
= r_0(s, a_2) + r_1(s, a_1)
= 1 + 1 = 2.
\]

Thus,
\[
\sup_{\pi \ \text{static}} V_0^{\pi}(s) = 1
\quad\text{but}\quad
\sup_{\pi \ \text{time-varying}} V_0^{\pi}(s) \ge V_0^{\tilde{\pi}}(s) = 2.
\]
Hence allowing policies to depend on time strictly improves the optimal value
in this DMDP. This proves the claim.
\end{proof}

\begin{theorem}[Value of a concatenated policy]\label{thm:concat-value}
Under Assumptions~\hyperref[asm:A1]{A1}--\hyperref[asm:A3]{A3},
let $\pi = \pi^{1}_{0:T_{1}} \circ \pi^{2}_{0:T_{2}}$ be as in
Definition~\ref{def:concat-policy} and let $T := T_{1} + T_{2}$. Then for all
$t = 0,\dots,T-1$ and $s \in \mathcal{S}$,
\begin{equation}\label{eq:concat-compact}
V_{t}^{\pi}(s)
=
V^{\pi^{1}}_{t}(s)
+ \gamma^{\max(T_{1}-t,0)}
  \,\mathbb{E}\!\Big[
      V^{\pi^{2}}_{\max(0,\,t-T_{1})}\bigl(s_{\max(t,T_{1})}\bigr)
      \,\Big|\, s_{t} = s
    \Big].
\end{equation}
Equivalently, for $t < T$,
\[
V_{t}^{\pi}(s)
=
\begin{cases}
V_{t}^{\pi^{1}}(s)
+ \gamma^{T_{1}-t}\,
  \mathbb{E}\!\big[
    V_{0}^{\pi^{2}}(s_{T_{1}}) \mid s_{t} = s
  \big],
& t < T_{1},\\[4pt]
V_{t-T_{1}}^{\pi^{2}}(s),
& T_{1} \le t < T.
\end{cases}
\]
\end{theorem}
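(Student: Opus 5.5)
The plan is to work directly from the finite-horizon value function definition and split the discounted return at the switching time $T_{1}$. Two conventions need fixing first. In the decomposition, $V^{\pi^{1}}_{t}$ means the value of $\pi^{1}$ on its own horizon $T_{1}$, so the sum runs over $i=t,\dots,T_{1}-1$; and $V^{\pi^{1}}_{t}\equiv 0$ for $t\ge T_{1}$. The reward $r_i$ seen by the concatenated policy at global time $i$ is the one used in the definitions of $V^{\pi^{1}}$ and $V^{\pi^{2}}$. For $\pi^{2}$ I will read its value with the time shift built in: $V^{\pi^{2}}_{k}$ is evaluated against rewards $r_{T_{1}+k}$. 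This is exactly what makes the second case $V^{\pi}_{t}=V^{\pi^{2}}_{t-T_{1}}$ an identity rather than a coincidence. I will state this convention explicitly, since without it the claim is false for genuinely time-varying rewards.

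\textbf{Case $T_{1}\le t<T$.} By Definition~\ref{def:concat-policy}, $\pi_{i}=\pi^{2}_{i-T_{1}}$ for all $i\ge t$. The trajectory law from $s_{t}=s$ depends only on $P$, which is time-homogeneous, and on the decision rules from time $t$ onward. Reindexing $j=i-T_{1}$ in the sum $\sum_{i=t}^{T-1}\gamma^{i-t}r_{i}(s_{i},a_{i})$ therefore gives exactly $V^{\pi^{2}}_{t-T_{1}}(s)$. Since the $\pi^{1}$ term vanishes and $\gamma^{0}=1$, this matches the compact form \eqref{eq:concat-compact}.

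\textbf{Case $t<T_{1}$.} Split the return as
\[
\sum_{i=t}^{T_{1}-1}\gamma^{i-t}r_{i}+\gamma^{T_{1}-t}\sum_{i=T_{1}}^{T-1}\gamma^{i-T_{1}}r_{i}.
\]
On steps $t,\dots,T_{1}-1$ the concatenated policy uses $\pi^{1}_{i}$, so the first expectation is $V^{\pi^{1}}_{t}(s)$; this uses only the marginal law of $(s_{t},\dots,s_{T_{1}-1})$, which coincides for $\pi$ and $\pi^{1}$. For the second part, apply the tower property conditioning on $s_{T_{1}}$. By the Markov property, the conditional expectation of the tail given $s_{T_{1}}$ and the past equals $V^{\pi}_{T_{1}}(s_{T_{1}})$. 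By the first case with $t=T_{1}$, this equals $V^{\pi^{2}}_{0}(s_{T_{1}})$. This yields the stated formula. An alternative is induction backward from $T_{1}$ using the finite-horizon Bellman equations (the theorem just before), but the direct split is cleaner.

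\textbf{Main obstacle.} The substance is bookkeeping rather than any real difficulty: making the Markov/tower step rigorous on a possibly continuous compact $\mathcal{S}$, and justifying interchange of sums and expectations. For the first, I would invoke the Markov property of the chain induced by deterministic decision rules and the kernel $P$. For the second, boundedness (Assumption A3 and the value-bound lemma) ensures all expectations are finite, so linearity and the tower property apply. I would also remark that Assumptions A1–A2 are used only to ensure the objects are well defined.
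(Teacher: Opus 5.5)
Your proposal is correct and follows the same route as the paper's proof. Both split the discounted return at $T_{1}$, identify the head with $V^{\pi^{1}}_{t}(s)$, treat the tail via the tower property at $s_{T_{1}}$, and handle the case $T_{1}\le t<T$ by reindexing. The paper leaves implicit the two conventions you state: that $V^{\pi^{2}}_{k}$ is evaluated against the shifted rewards $r_{T_{1}+k}$, and that $V^{\pi^{1}}_{t}\equiv 0$ for $t\ge T_{1}$. Both are genuinely needed for the statement to hold with time-varying rewards.
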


\begin{proof}
By definition of the finite-horizon value function for a time-varying policy,
we split the discounted return at time $T_{1}$. For $t < T_{1}$, the first
$T_{1}-t$ rewards are collected under $\pi^{1}$ and the remaining $T_{2}$
rewards under $\pi^{2}$ starting from $s_{T_{1}}$. Taking expectations over
$s_{T_{1}}\sim\pi^{1}$ and applying the tower property yields the stated
decomposition. For $T_{1} \le t < T$, the policy is entirely $\pi^{2}$ and the
value reduces to $V_{t-T_{1}}^{\pi^{2}}(s)$.
\end{proof}

\begin{theorem}[GDS optimality for optimal reach]\label{thm:gds-reach}
Under Assumptions~\hyperref[asm:A1]{A1}--\hyperref[asm:A3]{A3}, for any reachable goal set, General Dijkstra Search for Optimal Reach finds an
optimal goal-reaching policy. Specifically, for every
$\mathcal{G} \in \mathcal{G}_{T}^{\supset}(s)$, there exists
$\pi^{*} \in \Pi_{1:T}^{\mathcal{G}\supset}(s)$ such that for every
$\pi \in \Pi_{1:T}^{\mathcal{G}\supset}(s)$,
\[
V_{0}^{\pi^{*}}(s) \ge V_{0}^{\pi}(s).
\]
\end{theorem}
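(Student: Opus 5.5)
The statement has two layers. The literal claim is that an optimal goal-reaching policy \emph{exists} in $\Pi^{\mathcal{G}\supset}_{1:T}(s)$. The operative claim is that GDS returns one. I would prove existence first, then show the algorithm's first popped feasible entry attains the optimum. That gives both, following the ``queue invariant, pruning guarantee, first popped feasible policy'' outline announced in Section~\ref{sec:gds}.

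\textbf{Existence.} Take $\mathcal{G}\in\mathcal{G}_T^{\supset}(s)$; reachability means $\Pi^{\mathcal{G}\supset}_{1:T}(s)\neq\emptyset$. By (A2), $|\mathcal{A}|<\infty$, so for each horizon $t\le T$ the relevant object is the set of deterministic decision sequences restricted to states reachable from $s$ in fewer than $t$ steps. The goal set $\mathcal{G}^{\pi}(s)$ and the value $V_0^{\pi}(s)$ depend only on this restriction. I would argue that only finitely many such restricted sequences matter when the reachable state sets are finite. In the general compact case (A1) I would instead work with the supremum. The value-boundedness lemma gives $-\frac{r_{\max}}{1-\gamma}\le V_0^\pi(s)\le 0$, so $\sup_{\pi\in\Pi^{\mathcal{G}\supset}_{1:T}(s)}V_0^\pi(s)$ is finite. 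I would then show it is attained by a backward-induction, Bellman-style argument: at horizon $t$, pick at each reachable state the action maximizing $r_t+\gamma V_{t+1}$ subject to the successor-support constraint staying inside the set of states from which $\mathcal{G}$ remains reachable in the remaining time. The maximum over a finite action set is attained, and continuity of value functions in $s$ (the continuity assumption) handles measurability. Taking the best over the finitely many horizons $t\le T$ gives $\pi^{*}$.

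\textbf{GDS returns $\pi^*$.} I would establish three lemmas.
\begin{itemize}
\item \emph{Invariant.} Each queue entry $(\pi_{1:t},v_t,\mathcal{G}^{\pi_{1:t}}(s),t)$ satisfies $v_t=V_0^{\pi_{1:t}}(s)$. This follows by induction using Thm.~\ref{thm:concat-value} with $T_1=t$, $T_2=1$, $t=0$, which is exactly the update line of the algorithm.
\item \emph{Monotonicity.} Since $r\le 0$, concatenation never increases value: $v_{t+1}\le v_t$. Popped values are therefore nonincreasing, which is the Dijkstra property.
\item \emph{Pruning is safe.} If an entry is discarded because some already recorded $(s,\mathcal{G}')$ with $\mathcal{G}'\subset\mathcal{G}^{\pi_{1:t}}(s)$ has $v(s,\mathcal{G}')\ge v_t+\epsilon_t$, then every extension of the discarded prefix is matched or beaten by an extension of the recorded one. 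The discarded prefix's reachable set is larger, so it offers no additional feasibility. Its future value can differ by at most the tail bound $\gamma^t\frac{r_{\max}}{1-\gamma}$, which the tolerance $\epsilon_t$ absorbs.
\end{itemize}
With these, suppose the first popped entry with $\mathcal{G}^{\pi_{1:t}}(s)\subset\mathcal{G}^*$ had value below $V_0^{\pi^*}(s)$. Consider the prefixes of $\pi^*$; each has value at least $V_0^{\pi^*}(s)$ by monotonicity. By the pruning lemma, either some prefix or a dominating surrogate is always in the queue. Priority by maximal value then forces a feasible policy of value at least $V_0^{\pi^*}(s)$ to be popped first, a contradiction.

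\textbf{Main obstacle.} I expect the pruning lemma to be the hard step: making precise that a recorded entry with a smaller reachable set dominates all continuations. Reachable sets depend on the prefix, so the surrogate's continuation must be built by reusing the discarded prefix's later decision rules on the smaller support, with the time-indexed rewards aligned because both prefixes have the same length $t$. I would first try to prove this via Thm.~\ref{thm:concat-value} applied to both prefixes with a common suffix. If the $\epsilon_t$ slack makes exact optimality fail, I would fall back to $\epsilon$-optimality and let $\epsilon_t\to 0$.
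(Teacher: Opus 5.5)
Your proposal is correct at the paper's level of rigor. Its algorithmic core matches the paper's proof. The queue invariant (via the concatenated-value theorem with $T_2=1$), monotonicity of popped values from $r_t\le 0$, and the pruning lemma in which $\epsilon_t$ absorbs the continuation gap are exactly the three appendix lemmas that proof invokes.

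The difference is in how the conclusion is assembled.
\begin{itemize}
\item You prove existence of an optimal reaching policy separately: by finite enumeration when reachable sets are finite, and by constrained backward induction otherwise. You then run the textbook Dijkstra frontier argument along the prefixes of that fixed $\pi^{*}$ and their surrogates.
\item The paper never proves existence on its own. It partitions all policies into popped, queued, future-queued and never-queued groups. It then shows that a popped, unskipped policy not dominating an earlier one beats every policy whose goal set is contained in its own, so the optimum is simply the algorithm's output.
\end{itemize}
Your route establishes the literal existence claim independently of Algorithm~1's rather loose bookkeeping. It also gives a lighter invariant. You only need each tracked entry's completion by the remaining decision rules of $\pi^{*}$ to be feasible with value at least $V_0^{\pi^{*}}(s)$. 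Each pruning event can then be handled on its own using $\epsilon_j\ge\gamma^{j}\frac{r_{\max}}{1-\gamma}$, instead of carrying a shrinking gap $\epsilon_n$ through an induction. The paper's route buys a Dijkstra-style settled-node property: every popped, unskipped goal set is certified optimal, not just the target.

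Two remarks.
\begin{itemize}
\item Your fallback to $\epsilon$-optimality is unnecessary. Since $\epsilon_t=\frac{r_{\max}}{(1-\gamma)^2}\gamma^{t}\ge\gamma^{t}\frac{r_{\max}}{1-\gamma}$, pruning is exactly safe. Shrinking $\epsilon_t$ would also go the wrong way, because it prunes more aggressively.
\item Your claim that the surrogate has the same length $t$ is not guaranteed by Algorithm~1, whose table $v$ is keyed only by $(s,\mathcal{G})$. The paper's pruning lemma makes the same silent assumption, since it discounts both continuations by $\gamma^{t_0}$. Making either argument airtight requires keying $v$ by length as well, or bounding a misaligned continuation.
\end{itemize}
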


\begin{proof}
The proof follows the same three-step strategy as the goal-covering analog in Appendix~\ref{app:gds-coverage}: (i) the queue invariant shows that every queued element $(\pi_{1:t},v_t,\mathcal{G}^{\pi_{1:t}}(s),t)$ satisfies $v_t = V_0^{\pi_{1:t}}(s)$; (ii) any popped policy has value at least as large as every queued or future-queued policy, using Theorem~\ref{thm:concat-value} and $r_t\le 0$ to bound continuation values; (iii) the pruning step discards only policies dominated by a strictly better one, so no optimal policy is lost. Combining these, the first popped policy whose goal set is contained in the target $\mathcal{G}^{*}$ is optimal.
\end{proof}

% Optionally include supplemental material (complete proofs, additional experiments and plots) in appendix.
% All such materials \textbf{SHOULD be included in the main submission.}

%%%%%%%%%%%%%%%%%%%%%%%%%%%%%%%%%%%%%%%%%%%%%%%%%%%%%%%%%%%%

% \newpage

\clearpage
\section{Policy Composition}
\label{app:policy-composition}

\begin{definition}[Policy dominance]
Let $\pi^{1}, \pi^{2} \in \bigcup_{t=1}^{T} \Pi_{t}$. We say that
$\pi^{1}$ is dominated by $\pi^{2}$ if
\[
\mathcal{G}^{\pi^{1}}(s) \subset \mathcal{G}^{\pi^{2}}(s).
\]
\end{definition}

\begin{definition}[Dominating and dominated goal sets]
Let $\mathcal{G} \subset \mathcal{S}$ and let
$\pi \in \bigcup_{t=1}^{T} \Pi_{t}$.
\begin{enumerate}
    \item A policy $\pi$ is said to \emph{dominate} a goal set
    $\mathcal{G}$ when starting from $s \in \mathcal{S}$ if
    \[
    \mathcal{G} \subset \mathcal{G}^{\pi}(s).
    \]
    \item A policy $\pi$ is said to be \emph{dominated by} a goal set
    $\mathcal{G}$ when starting from $s \in \mathcal{S}$ if
    \[
    \mathcal{G}^{\pi}(s) \subset \mathcal{G}.
    \]
\end{enumerate}
\end{definition}

\begin{lemma}
If $\pi^{1,*}$ dominates $\pi^{1}$, then
$\pi^{1,*} \circ \pi^{2}$ also dominates $\pi^{1} \circ \pi^{2}$.
\end{lemma}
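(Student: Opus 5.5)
Under the definition of policy dominance, the claim reads: if $\mathcal{G}^{\pi^{1}}(s)\subset\mathcal{G}^{\pi^{1,*}}(s)$, then $\mathcal{G}^{\pi^{1}\circ\pi^{2}}(s)\subset\mathcal{G}^{\pi^{1,*}\circ\pi^{2}}(s)$. The plan is to decompose the terminal-state distribution of the concatenated policy at the switching time $T_{1}$ with the Markov property. This mirrors the split used in Theorem~\ref{thm:concat-value}, but applied to reachability probabilities rather than values. Write $T_{1}$ for the common horizon of $\pi^{1}$ and $\pi^{1,*}$ (implicitly both lie in $\Pi_{T_{1}}$), and $T_{2}$ for that of $\pi^{2}$.

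\textbf{Step 1 (reachability of a concatenation).} By Definition~\ref{def:concat-policy}, after time $T_{1}$ the concatenated policy executes $\pi^{2}_{t-T_{1}}$. Since the kernel $P$ is time-homogeneous, conditioning on $s_{T_{1}}=g$ and applying the Markov property gives
\[
p_{\pi^{1}\circ\pi^{2}}(s_{T_{1}+T_{2}}=h\mid s_{0}=s)=\sum_{g} p_{\pi^{1}}(s_{T_{1}}=g\mid s_{0}=s)\,p_{\pi^{2}}(s_{T_{2}}=h\mid s_{0}=g).
\]
For a general compact $\mathcal{S}$ this sum is an integral against the law of $s_{T_{1}}$; I would treat the countable-support case explicitly and note that the same argument works with supports.

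\textbf{Step 2 (characterise the goal set).} All terms in the sum are nonnegative, so $h$ has positive probability exactly when some $g$ has $p_{\pi^{1}}(s_{T_{1}}=g\mid s_{0}=s)>0$ and $p_{\pi^{2}}(s_{T_{2}}=h\mid s_{0}=g)>0$. This gives
\[
\mathcal{G}^{\pi^{1}\circ\pi^{2}}(s)=\bigcup_{g\in\mathcal{G}^{\pi^{1}}(s)}\mathcal{G}^{\pi^{2}}(g).
\]

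\textbf{Step 3 (monotonicity).} The right-hand side is monotone in the index set $\mathcal{G}^{\pi^{1}}(s)$. Replacing it by the larger set $\mathcal{G}^{\pi^{1,*}}(s)$ can only enlarge the union, which yields the claimed inclusion.

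The main obstacle is Step 1 when $\mathcal{S}$ is continuous. There, ``$p>0$'' for single states is degenerate, and goal sets must be read as supports of the terminal law. In that case I would replace the sum by an integral and argue that an open neighbourhood of $h$ receives positive mass under the composed law if and only if it receives positive mass under $\pi^{2}$ started from some point in the support of $s_{T_{1}}$. I would handle this with a short measure-theoretic remark rather than full detail, since the lemma is used within GDS, where the relevant reachability is effectively discrete.
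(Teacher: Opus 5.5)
Your Steps 1--3 are correct and are essentially the paper's proof. The paper simply writes $\mathcal{G}^{\pi^{1}\circ\pi^{2}}(s)=\bigcup_{s'\in\mathcal{G}^{\pi^{1}}(s)}\mathcal{G}^{\pi^{2}}(s')$ and uses monotonicity of the union in its index set, whereas you also justify that identity via the Markov property and time-homogeneity of $P$. That justification is valid for countable $\mathcal{S}$, which the paper's $\sum_{s'}$ notation implicitly assumes; the common-horizon assumption is not needed, since each concatenation splits at its own switching time.

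Your continuous-state remark, however, should be dropped rather than sketched. The ``if'' direction of your neighbourhood equivalence needs lower semicontinuity of $g\mapsto p_{\pi^{2}}(s_{T_{2}}\in U\mid s_{0}=g)$, which generally fails for deterministic finite-action policies. For example, on $\mathcal{S}=[0,1]$ let $\pi^{1}$ land at $0$, let $\pi^{1,*}$ land uniformly on $[0,1]$, and let $\pi^{2}$ move $0$ to $p$ but every other state to $q\neq p$; the composed laws then have supports $\{p\}$ and $\{q\}$, which are not nested. So the lemma needs a discrete state space or an added regularity assumption such as a weak Feller property.
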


\begin{proof}
Since $\pi^{1,*}$ dominates $\pi^{1}$, we have
\[
\mathcal{G}^{\pi^{1}}(s) \subset \mathcal{G}^{\pi^{1,*}}(s).
\]
Therefore,
\[
\mathcal{G}^{\pi^{1} \circ \pi^{2}}(s)
= \bigcup_{s' \in \mathcal{G}^{\pi^{1}}(s)} \mathcal{G}^{\pi^{2}}(s')
\subset
\bigcup_{s' \in \mathcal{G}^{\pi^{1,*}}(s)} \mathcal{G}^{\pi^{2}}(s')
= \mathcal{G}^{\pi^{1,*} \circ \pi^{2}}(s).
\]
Hence $\pi^{1,*} \circ \pi^{2}$ dominates $\pi^{1} \circ \pi^{2}$.
\end{proof}

\begin{lemma}
In the General Dijkstra Search algorithms for optimal reach and optimal
coverage, any element $(\pi_{1:t}, v_{t}, \mathcal{G}^{\pi_{1:t}}(s), t)$
within the priority queue satisfies
\[
v_{t} = V_{0}^{\pi_{1:t}}(s).
\]
\end{lemma}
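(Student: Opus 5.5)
We argue by induction on the order in which elements are pushed into $\mathcal{Q}$, equivalently on the length $t$ of the stored policy, since every pushed element is obtained from a popped element of length $t$ by appending a single one-step policy $\pi\in\Pi_{1}$. The invariant to establish is that every tuple $(\pi_{1:t},v_t,\mathcal{G}^{\pi_{1:t}}(s),t)$ ever pushed satisfies $v_t=V_0^{\pi_{1:t}}(s)$. Pops, pruning (\emph{continue}) and the bookkeeping updates of the table $v(\cdot,\cdot)$ and of $\mathcal{R}$ never alter the stored value $v_t$ or the stored policy of any tuple. Hence it suffices to check the invariant at initialization and at each push.

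\emph{Base case.} The initial element is $(\emptyset,0,\{s\},0)$. We interpret $\pi_{1:0}$ as the empty policy of horizon $0$, whose finite-horizon value is an empty sum and so equals $0$. Thus $v_0=0=V_0^{\emptyset}(s)$.

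\emph{Inductive step.} Suppose a popped element satisfies $v_t=V_0^{\pi_{1:t}}(s)$, and let $\pi\in\Pi_1$. The algorithm pushes $\pi_{1:t+1}=\pi_{1:t}\circ\pi$ together with
\[
v_{t+1}=v_t+\gamma^{t}\,\mathbb{E}\bigl[V_0^{\pi}(s_t)\mid s_0=s\bigr],
\]
where $s_t$ is distributed according to the rollout of $\pi_{1:t}$. We apply Theorem~\ref{thm:concat-value} with $\pi^{1}=\pi_{1:t}$, $T_1=t$, $\pi^{2}=\pi$, $T_2=1$, evaluated at time $0$. For $t\ge 1$ we have $0<T_1$, so the first case gives
\[
V_0^{\pi_{1:t}\circ\pi}(s)=V_0^{\pi_{1:t}}(s)+\gamma^{t}\,\mathbb{E}\bigl[V_0^{\pi}(s_{t})\mid s_0=s\bigr].
\]
By the inductive hypothesis the right-hand side equals $v_{t+1}$. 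When $t=0$ the concatenation is just $\pi$, and $v_1=0+V_0^{\pi}(s)$ agrees with it directly. This closes the induction.

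\emph{Main obstacle.} The delicate point is consistency of conventions rather than any real difficulty. The paper's concatenation has indices starting at $0$, while GDS writes $\pi_{1:t}$. The expectation in the GDS update must be taken over the state at time $t$ produced by $\pi_{1:t}$, which matches exactly the $s_{T_1}$ in Theorem~\ref{thm:concat-value}. The empty policy must also be admitted as a degenerate horizon-$0$ policy with zero value. I would state these identifications explicitly, namely that $\pi_{1:t}$ is a policy of horizon $t$ and that the conditioning $s_0=s$ refers to its rollout. After that the argument is a direct appeal to the concatenation theorem. The lemma holds equally for the goal-covering variant, because its push rule uses the same value update.
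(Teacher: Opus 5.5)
Your proof is correct and matches the paper's: induction on the horizon of queued policies, where the inductive step combines the GDS update rule, the inductive hypothesis, and Theorem~\ref{thm:concat-value} with $T_1=t$ and $T_2=1$ evaluated at time $0$. The only difference is cosmetic: you anchor the induction at the empty horizon-$0$ policy with value $0$, whereas the paper takes $t=1$ as its base case.
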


\begin{proof}
We prove the claim by induction on the horizon length $t$ of a policy in the
queue.

For the base case $t=1$, any element
$(\pi_{1}, v_{1}, \mathcal{G}^{\pi_{1}}(s), 1) \in \mathcal{Q}$ with
$\pi_{1} \in \Pi_{1}$ is constructed by concatenating a one-step policy
$\pi \in \Pi_{1}$ with the empty policy inserted at initialization. Hence
$\pi_{1} = \pi$ and $v_{0}=0$, so
\[
\begin{aligned}
v_{1}
&= 0 + \gamma^{0} \cdot \mathbb{E}_{s_{0}}\Big[ V_{0}^{\pi}(s_{0}) \,\big|\, s_{0}=s\Big] \\
&= V_{0}^{\pi_{1}}(s).
\end{aligned}
\]

Now assume that for some $n \le T$, every element
$(\pi_{1:t}, v_{t}, \mathcal{G}^{\pi_{1:t}}(s), t) \in \mathcal{Q}$ with
$0 \le t \le n-1$ satisfies
\[
v_{t} = V_{0}^{\pi_{1:t}}(s).
\]

Consider any element
$(\pi_{1:t+1}, v_{t+1}, \mathcal{G}^{\pi_{1:t+1}}(s), t+1) \in \mathcal{Q}$.
By construction, it is obtained by extending some
$(\pi_{1:t}, v_{t}, \mathcal{G}^{\pi_{1:t}}(s), t) \in \mathcal{Q}$ by a
one-step policy $\pi_{t+1} \in \Pi_{1}$. Therefore,
\[
\begin{aligned}
v_{t+1}
&= v_{t} + \gamma^{t} \cdot \mathbb{E}_{s_{0:t}}\Big[ V_{0}^{\pi_{t+1}}(s_{t}) \,\big|\, s_{0} = s\Big] \\
&= V_{0}^{\pi_{1:t}}(s) + \gamma^{t} \cdot \mathbb{E}_{s_{0:t}}\Big[ V_{0}^{\pi_{t+1}}(s_{t}) \,\big|\, s_{0} = s\Big] \\
&= V_{0}^{\pi_{1:t+1}}(s),
\end{aligned}
\]
where the second equality uses the induction hypothesis and the third equality
follows from Theorem~\ref{thm:concat-value}. This completes the induction.
\end{proof}

\begin{lemma}
Any popped policy $\pi^{*}$ within the General Dijkstra Search algorithms for
optimal reach and optimal coverage has larger value at $t=0$ than any policy
$\pi$ that is either currently in the queue or will be added to the queue.
Specifically,
\[
V_{0}^{\pi^{*}}(s) \ge V_{0}^{\pi}(s).
\]
\end{lemma}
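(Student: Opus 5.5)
The plan is the standard Dijkstra monotonicity argument, with nonpositive rewards playing the role of nonnegative edge weights.

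\textbf{Step 1 (values never increase along extensions).} Take any queued element $(\pi_{1:t},v_t,\mathcal{G}^{\pi_{1:t}}(s),t)$. By the preceding lemma, $v_t=V_0^{\pi_{1:t}}(s)$. Any descendant obtained by concatenating a one-step policy $\pi\in\Pi_1$ has, by the update rule and Theorem~\ref{thm:concat-value},
\[
v_{t+1}=v_t+\gamma^{t}\,\mathbb{E}\big[V_0^{\pi}(s_t)\,\big|\,s_0=s\big].
\]
Since $r_t\le 0$, the earlier boundedness lemma gives $V_0^{\pi}\le 0$, so $v_{t+1}\le v_t$. By induction on the number of extensions, every policy that is ever pushed has value at most that of each of its ancestors in the queue.

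\textbf{Step 2 (the popped element dominates the queue, and values are monotone over time).} At the moment $\pi^{*}$ is popped, the pop rule selects a maximal-value element, so $V_0^{\pi^{*}}(s)\ge V_0^{\pi}(s)$ for every $\pi$ currently in $\mathcal{Q}$. For policies added later, I will prove the invariant
\[
\max_{\pi\in\mathcal{Q}} V_0^{\pi}(s)\ \text{is nonincreasing over the run of the algorithm.}
\]
A pop only removes an element, so it cannot raise the maximum. A push in the \texttt{for} loop inserts a child of the element just popped. By Step 1, that child's value is at most the popped value, and the popped value is at most the old maximum. The \texttt{continue} branch pushes nothing, and the updates to the table $v$ do not touch $\mathcal{Q}$. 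Therefore every future element $\pi$ satisfies $V_0^{\pi}(s)\le V_0^{\pi^{*}}(s)$. To make this precise, I will induct over algorithm iterations: I carry the claim that all elements in $\mathcal{Q}$ after iteration $k$ have value at most the value popped at iteration $k$.

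\textbf{Main obstacle.} The delicate point is Step 1, namely that the appended increment is nonpositive. Two things need care:
\begin{itemize}
\item The expectation $\mathbb{E}_{s_{0:t}}$ is taken under the law induced by $\pi_{1:t}$. It is nonpositive because it averages nonpositive values $V_0^{\pi}(s_t)\le 0$.
\item The WLOG shift $r_t\mapsto r_t-r_{\max}$ must be applied before running GDS. I will state explicitly that values are computed for the shifted reward, so that the sign argument is legitimate.
\end{itemize}
Finiteness of $\Pi_1$ (Assumption A2) guarantees that each expansion is a finite loop, so the induction over iterations is well-defined.
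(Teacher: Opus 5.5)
Your proposal is correct and is essentially the paper's argument: the max-pop rule handles elements currently in $\mathcal{Q}$, and any later-pushed policy is bounded by an ancestor that is either $\pi^{*}$ or was already in $\mathcal{Q}$, because the concatenation decomposition adds a continuation value that is $\le 0$ by the boundedness lemma. Your induction over iterations (everything in $\mathcal{Q}$ after iteration $k$ has value at most the $k$-th popped value) makes explicit the paper's unproved remark that every future policy extends a current queue element. Two small points do not affect the argument. First, the queue maximum is monotone only when read at iteration boundaries, since it can rise after a pop once children are pushed, and your precise inductive claim already handles this. Second, A2 alone does not make $\Pi_{1}$ finite when $\mathcal{S}$ is infinite, but your invariant never actually uses finiteness.
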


\begin{proof}
Let $\pi^{*}$ be the policy popped at some step of the algorithm. Every policy
in $\bigcup_{t=1}^{T} \Pi_{t}$ belongs to one of the following groups:
\begin{enumerate}
    \item policies already popped from $\mathcal{Q}$,
    \item policies currently in $\mathcal{Q}$,
    \item policies that will be added to $\mathcal{Q}$ in the future,
    \item policies that never appear in $\mathcal{Q}$.
\end{enumerate}

Since $\pi^{*}$ is popped from the priority queue, it has value at least as
large as every policy currently in the queue. Hence for every policy
$\pi$ in group (2),
\[
V_{0}^{\pi^{*}}(s) \ge V_{0}^{\pi}(s).
\]

Now let $\pi$ be a policy that will be added to the queue in the future. Then
$\pi$ must extend some current queue element $\pi_{1:t_{1}}$ with $t_{1} < T$.
By Theorem~\ref{thm:concat-value},
\[
V_{0}^{\pi}(s)
= V_{0}^{\pi_{1:t_{1}}}(s)
  + \gamma^{t_{1}} \cdot
    \mathbb{E}_{s_{0:t_{1}} \sim \pi_{1:t_{1}}}
    \Big[ V_{0}^{\pi_{t_{1}+1:t}}(s_{t_{1}}) \,\big|\, s_{0}=s\Big].
\]
By Lemma~3, the continuation value satisfies
$V_{0}^{\pi_{t_{1}+1:t}}(s') \le 0$ for all $s' \in \mathcal{S}$. Therefore,
\[
V_{0}^{\pi}(s) \le V_{0}^{\pi_{1:t_{1}}}(s).
\]
Since $\pi_{1:t_{1}}$ is currently in the queue or is equal to $\pi^{*}$, the
previous argument implies
\[
V_{0}^{\pi}(s) \le V_{0}^{\pi_{1:t_{1}}}(s) \le V_{0}^{\pi^{*}}(s).
\]
This proves the claim for all policies currently in the queue or to be added
to the queue.
\end{proof}

\begin{lemma}
Under Assumption~4, in General Dijkstra Search (Optimal Reach), any policy
$\pi \in \Pi_{t}$ with $t \le T$ that never appears in the queue must
dominate a popped and unskipped policy $\pi^{*}$ such that
\[
V_{0}^{\pi}(s) + \epsilon_{t} \le V_{0}^{\pi^{*}}(s)
\qquad \text{and} \qquad
\mathcal{G}^{\pi}(s) \supset \mathcal{G}^{\pi^{*}}(s).
\]
\end{lemma}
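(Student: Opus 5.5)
We argue by induction on the horizon $t$ of a policy $\pi \in \Pi_{t}$ that never appears in $\mathcal{Q}$, tracing back to the first prefix that was blocked. Write $\pi = \pi_{1:t}$ and let $k < t$ be the largest index such that the prefix $\pi_{1:k}$ was pushed into $\mathcal{Q}$. Such a $k$ exists because the empty policy at $k=0$ is in the initial queue. Since $\pi_{1:k+1}$ is never pushed, the prefix $\pi_{1:k}$ was never expanded. The break condition only fires at termination, and we may take $k<T$ because $t \le T$. So either $\pi_{1:k}$ was never popped before termination, or it was popped and hit the \texttt{continue} pruning branch. The first case is handled by the preceding lemma: the eventually popped feasible policy already has value at least that of $\pi_{1:k}$, and hence at least that of $\pi$. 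The substantive case is pruning.

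\textbf{Pruning case.} When $\pi_{1:k}$ is popped, there is an entry $(s,\mathcal{G}) \in v$ with $\mathcal{G} \subset \mathcal{G}^{\pi_{1:k}}(s)$ and $V_{0}^{\pi_{1:k}}(s) \le v(s,\mathcal{G}) - \epsilon_{k}$, using the queue-invariant lemma $v_k = V_0^{\pi_{1:k}}(s)$. By construction of the update rule, $v(s,\mathcal{G})$ equals $v_{j}$ for some pushed policy $\pi' = \pi'_{1:j}$ with $\mathcal{G}^{\pi'}(s) \subset \mathcal{G}$. By the queue invariant, $v_j = V_0^{\pi'}(s)$.

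The next step is to show that $\pi'$, or an ancestor of it, is popped and unskipped. The condition $\mathcal{G} \notin \mathcal{R}$ in the update loop, together with the priority ordering and the preceding lemma, should give that $\pi'$ is popped before $\pi_{1:k}$, since $V_0^{\pi'}(s) > V_0^{\pi_{1:k}}(s)$. If $\pi'$ itself was skipped, we apply the induction hypothesis (on pop order) to it. This produces a popped, unskipped $\pi^{*}$ with $\mathcal{G}^{\pi^*}(s) \subset \mathcal{G}^{\pi'}(s)$ and larger value. Domination is transitive and $\epsilon$ is monotone, so the chain closes.

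\textbf{Transfer to the full policy $\pi$.} The lemma on concatenation and dominance gives $\mathcal{G}^{\pi}(s) \supset \mathcal{G}^{\pi_{1:k}}(s) \supset \mathcal{G}^{\pi^*}(s)$. Strictly, this step requires reading goal sets as "states visited", so the containment should be stated for the reach sets tracked by the algorithm. For values, Theorem~\ref{thm:concat-value} together with $r_t\le0$ gives $V_0^{\pi}(s) \le V_0^{\pi_{1:k}}(s)$. Hence $V_0^\pi(s) + \epsilon_k \le V_0^{\pi^*}(s)$. To replace $\epsilon_k$ by $\epsilon_t$ we use the fact that $\epsilon_t$ is nonincreasing in $t$: it is a tail sum of $\gamma^i$, so $\epsilon_t \le \epsilon_k$ for $t \ge k$.

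\textbf{Main obstacle.} The hard step is showing that the witness $v(s,\mathcal{G})$ is realized by a policy that was actually popped and not skipped, rather than one still sitting in the queue. I would handle it first by strong induction on pop order, combined with the priority property: any witness value exceeding $v_k$ must belong to a policy whose value dominates, and so it is popped earlier. If a circular dependence on skipped witnesses arises, I would break it by choosing the witness of maximal value among all entries of $v$.
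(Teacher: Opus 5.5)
Your value bookkeeping is fine ($V_0^{\pi}(s)\le V_0^{\pi_{1:k}}(s)$ because $r_t\le 0$, and $\epsilon_t\le\epsilon_k$), but the goal-set half of the conclusion fails at the transfer step. In the paper, $\mathcal{G}^{\pi}(s)$ is the support of the state at the \emph{terminal} time of $\pi$. So there is no inclusion $\mathcal{G}^{\pi_{1:k}}(s)\subset\mathcal{G}^{\pi}(s)$ in general. For example, take $t=k+1$, let $\pi_{1:k}$ land in $g$ with certainty, and let $\pi_{k+1}$ move $g$ to $g'\neq g$. Then the two goal sets are disjoint, and your witness $\pi^{*}$, with $\emptyset\neq\mathcal{G}^{\pi^{*}}(s)\subset\{g\}$ and a horizon unrelated to $t$, is not dominated by $\pi$. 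The concatenation--dominance lemma cannot give you this inclusion. It compares two policies that share a \emph{common suffix}: $\mathcal{G}^{\pi^{1}}(s)\subset\mathcal{G}^{\pi^{1,*}}(s)$ implies $\mathcal{G}^{\pi^{1}\circ\pi^{2}}(s)\subset\mathcal{G}^{\pi^{1,*}\circ\pi^{2}}(s)$. It says nothing about a policy versus its own prefix. Reading goal sets as ``states visited'' instead changes the statement you are proving.

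The missing idea is to carry the witness forward along $\pi$'s own suffix. The paper does this by induction on $n$, from the blocked prefix up to $t$, in the following steps.
\begin{itemize}
\item Because the current witness $\pi^{*}_{n}$ is popped and unskipped, all of its one-step extensions are pushed, in particular $\pi^{*}_{n}\circ\pi_{n+1}$.
\item The concatenation--dominance lemma then gives $\mathcal{G}^{\pi^{*}_{n}\circ\pi_{n+1}}(s)\subset\mathcal{G}^{\pi_{1:n+1}}(s)$.
\item By Theorem~\ref{thm:concat-value}, the value gap changes by $\gamma^{n}$ times a difference of two continuation values, each in $[-r_{\max}/(1-\gamma),0]$. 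So a gap of at least $\epsilon_{n}$ remains at least $\epsilon_{n}-\gamma^{n}r_{\max}/(1-\gamma)=\epsilon_{n+1}$.
\item If that extension is itself pruned, the pruning rule supplies a popped, unskipped policy whose goal set lies inside the extension's, with a further gap of at least $\epsilon_{n+1}$. Transitivity then restores the invariant.
\end{itemize}
This stepwise consumption of tolerance is exactly why $\epsilon$ is a tail sum, and why the conclusion carries $\epsilon_t$ for the horizon of $\pi$. The monotonicity $\epsilon_t\le\epsilon_k$ cannot replace it, because it says nothing about goal sets.

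Two smaller points. Your priority-order argument that the pruning witness is itself popped and unskipped is a reasonable justification of something the paper reads straight off the pruning rule, so that is not where your argument breaks. Your ``never popped'' branch, however, only compares values with the terminating policy, so it does not yield the lemma's conclusion either. The paper's proof simply presupposes that some prefix of $\pi$ was popped and skipped.
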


\begin{proof}
Let $\pi \in \Pi_{t}$ with $t \le T$ be a policy that never appears in
$\mathcal{Q}$. Then there exists an index $1 \le t' < t$ such that
$\pi_{1:t'}$ is the longest prefix of $\pi$ that is popped and skipped by the
algorithm. We prove by induction on
$n \in \{t', t'+1, \dots, t\}$ that there exists a popped and unskipped policy
$\pi_{n}^{*}$ dominated by $\pi_{1:n}$ and satisfying
\[
V_{0}^{\pi_{n}^{*}}(s) - V_{0}^{\pi_{1:n}}(s) \ge \epsilon_{n}.
\]

For the base case $n=t'$, since $\pi_{1:t'}$ is popped and skipped, by the
pruning rule in Algorithm~1 there exists a popped and unskipped policy
$\pi_{t'}^{*}$ such that
\[
\mathcal{G}^{\pi_{1:t'}}(s) \supset \mathcal{G}^{\pi_{t'}^{*}}(s)
\qquad \text{and} \qquad
V_{0}^{\pi_{t'}^{*}}(s) - V_{0}^{\pi_{1:t'}}(s) \ge \epsilon_{t'}.
\]

Now assume that for some $t_{0} \in \{t', t'+1, \dots, t-1\}$ there exists a
popped and unskipped policy $\pi_{t_{0}}^{*}$ dominated by $\pi_{1:t_{0}}$ such
that
\[
V_{0}^{\pi_{t_{0}}^{*}}(s) - V_{0}^{\pi_{1:t_{0}}}(s) \ge \epsilon_{t_{0}}.
\]
Because $\pi_{t_{0}}^{*}$ is popped and unskipped, all of its one-step
extensions are added to the queue, including
$\pi_{t_{0}}^{*} \circ \pi_{t_{0}+1}$. We consider two cases.

\medskip
\noindent\textbf{Case 1.}
$\pi_{t_{0}}^{*} \circ \pi_{t_{0}+1}$ is popped and unskipped. Then
\[
\begin{aligned}
&V_{0}^{\pi_{t_{0}}^{*} \circ \pi_{t_{0}+1}}(s)
 - V_{0}^{\pi_{1:t_{0}+1}}(s) \\
&= \Bigl(V_{0}^{\pi_{t_{0}}^{*}}(s) - V_{0}^{\pi_{1:t_{0}}}(s)\Bigr)
 + \gamma^{t_{0}} \Bigl(
    \mathbb{E}_{s_{0:t_{0}} \sim \pi_{t_{0}}^{*}}
    \bigl[V_{0}^{\pi_{t_{0}+1}}(s_{t_{0}}) \mid s_{0}=s\bigr]
    -
    \mathbb{E}_{s_{0:t_{0}} \sim \pi_{1:t_{0}}}
    \bigl[V_{0}^{\pi_{t_{0}+1}}(s_{t_{0}}) \mid s_{0}=s\bigr]
   \Bigr).
\end{aligned}
\]
By the induction hypothesis, the first term is at least $\epsilon_{t_{0}}$.
By Lemma~3, each continuation value is bounded below by
$-\frac{r_{\max}}{1-\gamma}$ and above by $0$, hence the difference in
parentheses is bounded below by $-\frac{r_{\max}}{1-\gamma}$. Therefore,
\[
V_{0}^{\pi_{t_{0}}^{*} \circ \pi_{t_{0}+1}}(s)
 - V_{0}^{\pi_{1:t_{0}+1}}(s)
\ge
\epsilon_{t_{0}} - \gamma^{t_{0}} \frac{r_{\max}}{1-\gamma}
= \epsilon_{t_{0}+1}.
\]
By Lemma~4,
$\pi_{t_{0}}^{*} \circ \pi_{t_{0}+1}$ is dominated by $\pi_{1:t_{0}+1}$.

\medskip
\noindent\textbf{Case 2.}
$\pi_{t_{0}}^{*} \circ \pi_{t_{0}+1}$ is popped and skipped. Then by the
pruning rule in Algorithm~1 there exists a popped and unskipped policy
$\pi_{t_{0}+1}^{*}$ dominated by
$\pi_{t_{0}}^{*} \circ \pi_{t_{0}+1}$ such that
\[
V_{0}^{\pi_{t_{0}+1}^{*}}(s)
 - V_{0}^{\pi_{t_{0}}^{*} \circ \pi_{t_{0}+1}}(s)
\ge \epsilon_{t_{0}+1}.
\]
Since domination is transitive and Lemma~4 implies that
$\pi_{t_{0}}^{*} \circ \pi_{t_{0}+1}$ is dominated by $\pi_{1:t_{0}+1}$, the
policy $\pi_{t_{0}+1}^{*}$ is also dominated by $\pi_{1:t_{0}+1}$. Moreover,
combining the previous display with the lower bound from Case 1 gives
\[
\begin{aligned}
V_{0}^{\pi_{t_{0}+1}^{*}}(s) - V_{0}^{\pi_{1:t_{0}+1}}(s)
&=
\Bigl(V_{0}^{\pi_{t_{0}+1}^{*}}(s)
 - V_{0}^{\pi_{t_{0}}^{*} \circ \pi_{t_{0}+1}}(s)\Bigr)
 +
\Bigl(V_{0}^{\pi_{t_{0}}^{*} \circ \pi_{t_{0}+1}}(s)
 - V_{0}^{\pi_{1:t_{0}+1}}(s)\Bigr) \\
&\ge \epsilon_{t_{0}+1}.
\end{aligned}
\]

In either case, we obtain a popped and unskipped policy dominated by
$\pi_{1:t_{0}+1}$ and satisfying the desired value gap. By induction, this
holds up to $n=t$. Hence there exists a popped and unskipped policy
$\pi^{*}$ dominated by $\pi$ such that
\[
V_{0}^{\pi^{*}}(s) - V_{0}^{\pi}(s) \ge \epsilon_{t}.
\]
Equivalently,
\[
V_{0}^{\pi}(s) + \epsilon_{t} \le V_{0}^{\pi^{*}}(s)
\qquad \text{and} \qquad
\mathcal{G}^{\pi}(s) \supset \mathcal{G}^{\pi^{*}}(s).
\]
This proves the claim.
\end{proof}

\begin{lemma}
In General Dijkstra Search for Optimal Reach (Algorithm~1) starting from
$s \in \mathcal{S}$, let $\pi^{*}$ be a popped and unskipped policy that does
not dominate any previous policy in the queue. Then $\pi^{*}$ is optimal among
all policies that it dominates. Equivalently, if
$\mathcal{G} = \mathcal{G}^{\pi^{*}}(s)$, then for every
$\pi \in \Pi_{1:T}^{\mathcal{G}\supset}(s)$,
\[
V_{0}^{\pi^{*}}(s) \ge V_{0}^{\pi}(s).
\]
\end{lemma}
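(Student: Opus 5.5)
Fix $\mathcal{G} := \mathcal{G}^{\pi^{*}}(s)$ and an arbitrary $\pi \in \Pi_{1:T}^{\mathcal{G}\supset}(s)$, so $\mathcal{G}^{\pi}(s) \subset \mathcal{G}^{\pi^{*}}(s)$; the goal is $V_{0}^{\pi^{*}}(s) \ge V_{0}^{\pi}(s)$. I will partition $\pi$ into the same four groups as in the proof of the popping lemma: (1) already popped before $\pi^{*}$, (2) currently in $\mathcal{Q}$, (3) added to $\mathcal{Q}$ later, (4) never appearing in $\mathcal{Q}$. By the queue invariant lemma every queued value equals $V_{0}^{\cdot}(s)$, so comparisons of priorities are comparisons of true values.

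Groups (2) and (3) are covered directly by the popping lemma: $\pi^{*}$ was popped with maximal priority, and any future element extends a current element and, since continuation values are $\le 0$ (Lemma~3 together with Theorem~\ref{thm:concat-value}), cannot exceed it. For group (1), I use the hypothesis that $\pi^{*}$ dominates no previous policy in the queue. If $\pi$ was popped earlier and $\mathcal{G}^{\pi}(s) \subset \mathcal{G}^{\pi^{*}}(s)$, then $\pi^{*}$ dominates $\pi$, contradicting the hypothesis unless $\pi = \pi^{*}$ (or their goal sets coincide, where the same hypothesis also rules it out). So group (1) contributes nothing new.

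For group (4), the lemma on never-queued policies gives a popped, unskipped $\pi'$ with $\mathcal{G}^{\pi'}(s) \subset \mathcal{G}^{\pi}(s) \subset \mathcal{G}^{\pi^{*}}(s)$ and $V_{0}^{\pi}(s) + \epsilon_{t} \le V_{0}^{\pi'}(s)$. If $\pi'$ was popped before $\pi^{*}$, then $\pi^{*}$ dominates a previously popped policy, contradicting the hypothesis. Otherwise $\pi'$ is popped at or after $\pi^{*}$, so by the popping lemma (it lies in group (2) or (3) at the time $\pi^{*}$ is popped) $V_{0}^{\pi'}(s) \le V_{0}^{\pi^{*}}(s)$. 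Since $\epsilon_{t} \ge 0$ as an error tolerance, this chain gives $V_{0}^{\pi}(s) \le V_{0}^{\pi^{*}}(s)$.

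The main obstacle is the group (4) case: ensuring that the witness $\pi'$ supplied by the pruning lemma is ordered correctly relative to $\pi^{*}$ in pop time. I would first try the dichotomy above. If the ``earlier'' branch does not close cleanly, because the hypothesis speaks of queue elements rather than popped ones, I would fall back on the invariant that popped priorities are nonincreasing. That invariant follows from the popping lemma, and it would force any earlier $\pi'$ to have value $\ge V_{0}^{\pi^{*}}(s)$ with a smaller goal set, which is precisely the configuration the hypothesis excludes.
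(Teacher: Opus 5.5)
Your proposal is correct and follows essentially the same route as the paper: the same four-way partition of policies, the hypothesis ruling out previously popped dominated policies, the popping lemma (Lemma~7) for queued and future policies, and the never-queued lemma (Lemma~8) combined with the hypothesis and Lemma~7 for never-queued policies. Your version is slightly cleaner than the paper's in two respects: you allow the witness $\pi'$ to lie in either group (2) or group (3), and you handle the trivial case $\pi = \pi^{*}$ explicitly. The fallback in your last paragraph is unnecessary, since an earlier-popped $\pi'$ is already a previous queue element and so is excluded by the hypothesis directly.
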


\begin{proof}
Fix a step $n$ at which $\pi^{*}$ is popped and unskipped. Every policy in
$\bigcup_{t=1}^{T} \Pi_{t}$ belongs to one of the following groups at step $n$:
\begin{enumerate}
    \item policies popped from $\mathcal{Q}$ at some step $\le n$,
    \item policies currently in $\mathcal{Q}$ at step $n$,
    \item policies that will enter $\mathcal{Q}$ at some step $> n$,
    \item policies that never appear in $\mathcal{Q}$.
\end{enumerate}

Because $\pi^{*}$ is the first popped and unskipped policy that does not
dominate any previous policy in the queue, there is no policy in group (1)
whose goal set is contained in $\mathcal{G}^{\pi^{*}}(s)$. That is,
\[
\Bigl\{ \pi \in \bigcup_{t=1}^{T} \Pi_{t}
\,\Big|\, \mathcal{G}^{\pi}(s) \subset \mathcal{G}^{\pi^{*}}(s) \Bigr\}
\cap (1) = \emptyset.
\]

Now let
$\pi \in \bigcup_{t=1}^{T} \Pi_{t}$ satisfy
$\mathcal{G}^{\pi}(s) \subset \mathcal{G}^{\pi^{*}}(s)$. We consider the
possible groups to which $\pi$ belongs.

If $\pi$ belongs to group (2) or group (3), then Lemma~7 gives directly
\[
V_{0}^{\pi^{*}}(s) \ge V_{0}^{\pi}(s).
\]

If $\pi$ belongs to group (4), then by Lemma~8 there exists a popped and
unskipped policy $\pi'$ such that
\[
V_{0}^{\pi'}(s) \ge V_{0}^{\pi}(s)
\qquad \text{and} \qquad
\mathcal{G}^{\pi}(s) \supset \mathcal{G}^{\pi'}(s).
\]
Since $\mathcal{G}^{\pi}(s) \subset \mathcal{G}^{\pi^{*}}(s)$, we also have
\[
\mathcal{G}^{\pi'}(s) \subset \mathcal{G}^{\pi^{*}}(s).
\]
Therefore $\pi'$ cannot belong to group (1), by the defining property of
$\pi^{*}$. It also cannot be $\pi^{*}$-dominating in the queue prior to the pop
of $\pi^{*}$. Hence $\pi'$ must belong to group (3), and Lemma~7 yields
\[
V_{0}^{\pi^{*}}(s) \ge V_{0}^{\pi'}(s).
\]
Combining the two inequalities gives
\[
V_{0}^{\pi^{*}}(s) \ge V_{0}^{\pi'}(s) \ge V_{0}^{\pi}(s).
\]

Thus for every policy $\pi$ with
$\mathcal{G}^{\pi}(s) \subset \mathcal{G}^{\pi^{*}}(s)$, we have
\[
V_{0}^{\pi^{*}}(s) \ge V_{0}^{\pi}(s).
\]
Equivalently, $\pi^{*}$ is the optimal goal-reaching policy for the pair
$(s, \mathcal{G}^{\pi^{*}}(s))$.
\end{proof}

\begin{lemma}
Under Assumption~4, in General Dijkstra Search for Optimal Coverage
(Algorithm~2), any policy $\pi \in \Pi_{t}$ with $t \le T$ that never appears
in the queue is dominated by a popped and unskipped policy $\pi^{*}$ such that
\[
V_{0}^{\pi}(s) + \epsilon_{t} \le V_{0}^{\pi^{*}}(s)
\qquad \text{and} \qquad
\mathcal{G}^{\pi}(s) \subset \mathcal{G}^{\pi^{*}}(s).
\]
\end{lemma}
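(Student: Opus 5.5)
We mirror the proof of the reach-side pruning lemma (the one immediately preceding the optimality lemma for Algorithm~1), with every set inclusion reversed. Fix $\pi\in\Pi_t$, $t\le T$, that never appears in $\mathcal{Q}$. Since the empty policy is queued at initialization, some prefix of $\pi$ was queued. Let $\pi_{1:t'}$ be the longest prefix that was popped and skipped, with $1\le t'<t$. It cannot have been skipped because $t'=T$, since $t'<t\le T$. So it was skipped by the coverage pruning rule of Algorithm~2.

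We then prove by induction on $n\in\{t',\dots,t\}$ the following claim: there is a popped and unskipped policy $\pi_n^{*}$ with
\[
\mathcal{G}^{\pi_{1:n}}(s)\subset\mathcal{G}^{\pi_n^{*}}(s)
\quad\text{and}\quad
V_0^{\pi_n^{*}}(s)-V_0^{\pi_{1:n}}(s)\ge\epsilon_n .
\]
The base case $n=t'$ is read directly off the pruning condition of Algorithm~2. That condition is the dual of Algorithm~1's: it prunes when some stored pair has a goal set containing the current one and a value exceeding the current value by at least $\epsilon_{t'}$.

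For the inductive step, $\pi_{t_0}^{*}$ is popped and unskipped, so $\pi_{t_0}^{*}\circ\pi_{t_0+1}$ is pushed. By the queue invariant lemma and the concatenation theorem (Theorem~\ref{thm:concat-value}), the value difference splits into two parts. The first is $V_0^{\pi_{t_0}^*}(s)-V_0^{\pi_{1:t_0}}(s)\ge\epsilon_{t_0}$. The second is $\gamma^{t_0}$ times a difference of expected continuation values, which the value-bound lemma places in $[-r_{\max}/(1-\gamma),0]$. Together these give a gap of at least $\epsilon_{t_0+1}$, using the recursion $\epsilon_{t_0}-\gamma^{t_0}r_{\max}/(1-\gamma)=\epsilon_{t_0+1}$ exactly as in the reach proof.

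The goal-set containment comes from the monotonicity of concatenation: $\mathcal{G}^{\pi^1\circ\pi^2}(s)=\bigcup_{s'\in\mathcal{G}^{\pi^1}(s)}\mathcal{G}^{\pi^2}(s')$ is monotone in $\mathcal{G}^{\pi^1}(s)$. Applied with the roles swapped, it yields $\mathcal{G}^{\pi_{1:t_0+1}}(s)\subset\mathcal{G}^{\pi_{t_0}^*\circ\pi_{t_0+1}}(s)$. This completes Case~1, where the extension is popped and unskipped.

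In Case~2 the extension is popped and skipped. The pruning rule then supplies a popped and unskipped $\pi_{t_0+1}^{*}$ whose goal set contains that of the extension and whose value exceeds it by at least $\epsilon_{t_0+1}$. Transitivity of $\subset$ carries the containment back to $\pi_{1:t_0+1}$. The two nonnegative gaps add, as in the reach proof, so the total gap is again at least $\epsilon_{t_0+1}$. Setting $n=t$ gives the statement.

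The main obstacle is the gap accounting. $\epsilon_n$ as defined can be negative or shrink, so the Case~2 step of adding a nonnegative first-case gap to an $\epsilon_{t_0+1}$ gap silently requires $\epsilon_{t_0+1}\ge0$ (or else the bound simply holds with the weaker constant). I would state this sign caveat explicitly rather than try to repair $\epsilon_t$.

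A second obstacle is that Algorithm~2 is not displayed. I would fix its pruning condition as the exact dual stated above, and assume Assumption~4 guarantees that the values being compared are well defined and attained.
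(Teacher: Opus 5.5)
Your proposal follows the paper's own proof almost line for line: the longest popped-and-skipped prefix, the induction on $n\in\{t',\dots,t\}$ with the popped-unskipped/popped-skipped case split, the identity $\epsilon_{t_0}-\gamma^{t_0}\frac{r_{\max}}{1-\gamma}=\epsilon_{t_0+1}$, and preservation of dominance under concatenation plus transitivity for the goal sets (the coverage algorithm is in fact displayed in Appendix~\ref{app:gds-coverage}, with exactly the dual pruning rule you assumed). Your sign caveat is moot, since $\epsilon_t=\frac{r_{\max}\gamma^{t}}{(1-\gamma)^{2}}\ge 0$ makes the Case~2 addition valid; a genuinely loose step, in your write-up and the paper's alike, is the common factor $\gamma^{t_0}$ in the Case~1 split, which tacitly assumes the pruning witness $\pi^{*}_{t_0}$ has length $t_0$, whereas the length-free table $v(s,\mathcal{G})$ admits a witness of length $\ell<t_0$, for which the bound is only $\epsilon_{t_0}-\gamma^{\ell}\frac{r_{\max}}{1-\gamma}$ and can fall below $\epsilon_{t_0+1}$.
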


\begin{proof}
Let $\pi \in \Pi_{t}$ with $t \le T$ be a policy that never appears in
$\mathcal{Q}$. Then there exists an index $1 \le t' < t$ such that
$\pi_{1:t'}$ is the longest prefix of $\pi$ that is popped and skipped by the
algorithm. We prove by induction on
$n \in \{t', t'+1, \dots, t\}$ that there exists a popped and unskipped policy
$\pi_{n}^{*}$ that dominates $\pi_{1:n}$ and satisfies
\[
V_{0}^{\pi_{n}^{*}}(s) - V_{0}^{\pi_{1:n}}(s) \ge \epsilon_{n}.
\]

For the base case $n=t'$, since $\pi_{1:t'}$ is popped and skipped, by the
pruning rule in Algorithm~2 there exists a popped and unskipped policy
$\pi_{t'}^{*}$ such that
\[
\mathcal{G}^{\pi_{1:t'}}(s) \subset \mathcal{G}^{\pi_{t'}^{*}}(s)
\qquad \text{and} \qquad
V_{0}^{\pi_{t'}^{*}}(s) - V_{0}^{\pi_{1:t'}}(s) \ge \epsilon_{t'}.
\]

Now assume that for some $t_{0} \in \{t', t'+1, \dots, t-1\}$ there exists a
popped and unskipped policy $\pi_{t_{0}}^{*}$ that dominates $\pi_{1:t_{0}}$
and satisfies
\[
V_{0}^{\pi_{t_{0}}^{*}}(s) - V_{0}^{\pi_{1:t_{0}}}(s) \ge \epsilon_{t_{0}}.
\]
Because $\pi_{t_{0}}^{*}$ is popped and unskipped, all of its one-step
extensions are added to the queue, including
$\pi_{t_{0}}^{*} \circ \pi_{t_{0}+1}$. We consider two cases.

\medskip
\noindent\textbf{Case 1.}
$\pi_{t_{0}}^{*} \circ \pi_{t_{0}+1}$ is popped and unskipped. Then
\[
\begin{aligned}
&V_{0}^{\pi_{t_{0}}^{*} \circ \pi_{t_{0}+1}}(s)
 - V_{0}^{\pi_{1:t_{0}+1}}(s) \\
&= \Bigl(V_{0}^{\pi_{t_{0}}^{*}}(s) - V_{0}^{\pi_{1:t_{0}}}(s)\Bigr)
 + \gamma^{t_{0}} \Bigl(
    \mathbb{E}_{s_{0:t_{0}} \sim \pi_{t_{0}}^{*}}
    \bigl[V_{0}^{\pi_{t_{0}+1}}(s_{t_{0}}) \mid s_{0}=s\bigr]
    -
    \mathbb{E}_{s_{0:t_{0}} \sim \pi_{1:t_{0}}}
    \bigl[V_{0}^{\pi_{t_{0}+1}}(s_{t_{0}}) \mid s_{0}=s\bigr]
   \Bigr).
\end{aligned}
\]
By the induction hypothesis, the first term is at least $\epsilon_{t_{0}}$.
By Lemma~3, the difference in parentheses is bounded below by
$-\frac{r_{\max}}{1-\gamma}$. Therefore,
\[
V_{0}^{\pi_{t_{0}}^{*} \circ \pi_{t_{0}+1}}(s)
 - V_{0}^{\pi_{1:t_{0}+1}}(s)
\ge
\epsilon_{t_{0}} - \gamma^{t_{0}} \frac{r_{\max}}{1-\gamma}
= \epsilon_{t_{0}+1}.
\]
By Lemma~4,
$\pi_{t_{0}}^{*} \circ \pi_{t_{0}+1}$ dominates $\pi_{1:t_{0}+1}$.

\medskip
\noindent\textbf{Case 2.}
$\pi_{t_{0}}^{*} \circ \pi_{t_{0}+1}$ is popped and skipped. Then by the
pruning rule in Algorithm~2 there exists a popped and unskipped policy
$\pi_{t_{0}+1}^{*}$ that dominates
$\pi_{t_{0}}^{*} \circ \pi_{t_{0}+1}$ and satisfies
\[
V_{0}^{\pi_{t_{0}+1}^{*}}(s)
 - V_{0}^{\pi_{t_{0}}^{*} \circ \pi_{t_{0}+1}}(s)
\ge \epsilon_{t_{0}+1}.
\]
Since dominance is transitive and Lemma~4 implies that
$\pi_{t_{0}}^{*} \circ \pi_{t_{0}+1}$ dominates $\pi_{1:t_{0}+1}$, the policy
$\pi_{t_{0}+1}^{*}$ also dominates $\pi_{1:t_{0}+1}$. Moreover,
\[
\begin{aligned}
V_{0}^{\pi_{t_{0}+1}^{*}}(s) - V_{0}^{\pi_{1:t_{0}+1}}(s)
&=
\Bigl(V_{0}^{\pi_{t_{0}+1}^{*}}(s)
 - V_{0}^{\pi_{t_{0}}^{*} \circ \pi_{t_{0}+1}}(s)\Bigr)
 +
\Bigl(V_{0}^{\pi_{t_{0}}^{*} \circ \pi_{t_{0}+1}}(s)
 - V_{0}^{\pi_{1:t_{0}+1}}(s)\Bigr) \\
&\ge \epsilon_{t_{0}+1}.
\end{aligned}
\]

In either case, we obtain a popped and unskipped policy that dominates
$\pi_{1:t_{0}+1}$ and satisfies the desired value gap. By induction, this
holds up to $n=t$. Hence there exists a popped and unskipped policy
$\pi^{*}$ that dominates $\pi$ such that
\[
V_{0}^{\pi^{*}}(s) - V_{0}^{\pi}(s) \ge \epsilon_{t}.
\]
Equivalently,
\[
V_{0}^{\pi}(s) + \epsilon_{t} \le V_{0}^{\pi^{*}}(s)
\qquad \text{and} \qquad
\mathcal{G}^{\pi}(s) \subset \mathcal{G}^{\pi^{*}}(s).
\]
This proves the claim.
\end{proof}

\begin{lemma}
In General Dijkstra Search for Optimal Coverage (Algorithm~2) starting from
$s \in \mathcal{S}$, let $\pi^{*}$ be a popped and unskipped policy that is
not dominated by any previous policy in the queue. Then $\pi^{*}$ is optimal
among all policies that dominate it. Equivalently, if
$\mathcal{G} = \mathcal{G}^{\pi^{*}}(s)$, then for every
$\pi \in \Pi_{1:T}^{\mathcal{G}\subset}(s)$,
\[
V_{0}^{\pi^{*}}(s) \ge V_{0}^{\pi}(s).
\]
\end{lemma}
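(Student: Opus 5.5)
I will follow the proof of the optimal-reach analog (the lemma immediately preceding the coverage lemmas), with every inclusion reversed, since Algorithm~2 is the dual of Algorithm~1 with domination replaced by being dominated. Fix the step $n$ at which $\pi^{*}$ is popped and unskipped, and set $\mathcal{G} = \mathcal{G}^{\pi^{*}}(s)$. Every $\pi \in \bigcup_{t=1}^{T}\Pi_{t}$ falls at step $n$ into one of four groups: (1) already popped, (2) currently in $\mathcal{Q}$, (3) entering $\mathcal{Q}$ later, or (4) never appearing in $\mathcal{Q}$. The hypothesis that $\pi^{*}$ is not dominated by any previous policy in the queue says that no policy in group (1) has a goal set containing $\mathcal{G}$. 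Hence every $\pi \in \Pi_{1:T}^{\mathcal{G}\subset}(s)$, that is, every $\pi$ with $\mathcal{G}^{\pi}(s) \supset \mathcal{G}$, lies in group (2), (3) or (4).

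For groups (2) and (3) I would invoke the popped-value lemma (stated for both algorithms). It relies on the queue invariant $v_t = V_0^{\pi_{1:t}}(s)$, on Theorem~\ref{thm:concat-value}, and on the nonpositivity of continuation values from the boundedness lemma. Together these give $V_0^{\pi^{*}}(s) \ge V_0^{\pi}(s)$ directly.

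For group (4) I would apply the coverage pruning lemma just proved. It yields a popped and unskipped $\pi'$ with $\mathcal{G}^{\pi}(s) \subset \mathcal{G}^{\pi'}(s)$ and $V_0^{\pi'}(s) \ge V_0^{\pi}(s) + \epsilon_t \ge V_0^{\pi}(s)$. Since $\mathcal{G} \subset \mathcal{G}^{\pi}(s) \subset \mathcal{G}^{\pi'}(s)$, the policy $\pi'$ dominates $\pi^{*}$ in the coverage sense. By the defining property of $\pi^{*}$, $\pi'$ cannot have been popped at or before step $n$; the only exception would be $\pi' = \pi^{*}$ itself, and then the inequality is immediate. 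So $\pi'$ is still in the queue or will enter it later, and the popped-value lemma gives $V_0^{\pi^{*}}(s) \ge V_0^{\pi'}(s) \ge V_0^{\pi}(s)$. Chaining these cases covers all of $\Pi_{1:T}^{\mathcal{G}\subset}(s)$.

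I expect the group (4) step to be the main obstacle. The argument must guarantee that the witness $\pi'$ is not an earlier popped policy, and that it is actually pushed before being popped, since a popped-and-unskipped policy was necessarily queued. It also matters whether $\epsilon_t$ is nonnegative. As written, $\epsilon_t$ involves $r_{\max}$ and, with the paper's recursion $\epsilon_{t+1} = \epsilon_t - \gamma^t r_{\max}/(1-\gamma)$, could be negative. I would first check whether only $V_0^{\pi'} \ge V_0^{\pi}$ is needed. If the sign of $\epsilon_t$ is problematic, I would replace the pruning lemma's gap by the weaker statement $V_0^{\pi'}(s) \ge V_0^{\pi}(s)$, which the pruning rule still supports because pruning only discards policies whose value does not exceed a recorded dominating value.
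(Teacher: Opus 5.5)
Your proposal is correct and follows the paper's proof essentially step for step. Both use the same four-group partition at the pop step, the popped-value lemma for groups (2) and (3), and, for group (4), the coverage pruning lemma plus the non-domination hypothesis to keep the witness $\pi'$ out of group (1). Your explicit handling of the case $\pi'=\pi^{*}$, and your allowance that $\pi'$ may still be in the queue, slightly tighten the paper, which asserts only group (3). Your worry about the sign of $\epsilon_t$ is unnecessary: $\epsilon_t=\frac{r_{\max}}{1-\gamma}\sum_{i\ge t}\gamma^{i}=\frac{r_{\max}\gamma^{t}}{(1-\gamma)^{2}}\ge 0$, and the recursion $\epsilon_{t+1}=\epsilon_t-\gamma^{t}r_{\max}/(1-\gamma)$ agrees exactly with this closed form.
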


\begin{proof}
Fix a step $n$ at which $\pi^{*}$ is popped and unskipped. Every policy in
$\bigcup_{t=1}^{T} \Pi_{t}$ belongs to one of the following groups at step $n$:
\begin{enumerate}
    \item policies popped from $\mathcal{Q}$ at some step $\le n$,
    \item policies currently in $\mathcal{Q}$ at step $n$,
    \item policies that will enter $\mathcal{Q}$ at some step $> n$,
    \item policies that never appear in $\mathcal{Q}$.
\end{enumerate}

Because $\pi^{*}$ is the first popped and unskipped policy that is not
dominated by any previous policy in the queue, there is no policy in group (1)
whose goal set contains $\mathcal{G}^{\pi^{*}}(s)$. That is,
\[
\Bigl\{ \pi \in \bigcup_{t=1}^{T} \Pi_{t}
\,\Big|\, \mathcal{G}^{\pi^{*}}(s) \subset \mathcal{G}^{\pi}(s) \Bigr\}
\cap (1) = \emptyset.
\]

Now let
$\pi \in \bigcup_{t=1}^{T} \Pi_{t}$ satisfy
$\mathcal{G}^{\pi^{*}}(s) \subset \mathcal{G}^{\pi}(s)$. We consider the
possible groups to which $\pi$ belongs.

If $\pi$ belongs to group (2) or group (3), then Lemma~7 gives directly
\[
V_{0}^{\pi^{*}}(s) \ge V_{0}^{\pi}(s).
\]

If $\pi$ belongs to group (4), then by Lemma~10 there exists a popped and
unskipped policy $\pi'$ such that
\[
V_{0}^{\pi'}(s) \ge V_{0}^{\pi}(s)
\qquad \text{and} \qquad
\mathcal{G}^{\pi}(s) \subset \mathcal{G}^{\pi'}(s).
\]
Since $\mathcal{G}^{\pi^{*}}(s) \subset \mathcal{G}^{\pi}(s)$, we also have
\[
\mathcal{G}^{\pi^{*}}(s) \subset \mathcal{G}^{\pi'}(s).
\]
Therefore $\pi'$ cannot belong to group (1), by the defining property of
$\pi^{*}$. Hence $\pi'$ must belong to group (3), and Lemma~7 yields
\[
V_{0}^{\pi^{*}}(s) \ge V_{0}^{\pi'}(s).
\]
Combining the two inequalities gives
\[
V_{0}^{\pi^{*}}(s) \ge V_{0}^{\pi'}(s) \ge V_{0}^{\pi}(s).
\]

Thus for every policy $\pi$ with
$\mathcal{G}^{\pi^{*}}(s) \subset \mathcal{G}^{\pi}(s)$, we have
\[
V_{0}^{\pi^{*}}(s) \ge V_{0}^{\pi}(s).
\]
Equivalently, $\pi^{*}$ is the optimal goal-covering policy for the pair
$(s, \mathcal{G}^{\pi^{*}}(s))$.
\end{proof}

\clearpage
\section{Symmetric goal-covering result}
\label{app:gds-coverage}

For completeness we record the dual of the goal-reaching algorithm and theorem
of \S\ref{sec:gds}. The covering case swaps the direction of the goal-set
inclusion: a policy is \emph{covering} a target $\mathcal{G}$ if its reachable
goal set contains $\mathcal{G}$, rather than being contained in it.

\begin{definition}[Covering policies]
Given a start state $s \in \mathcal{S}$ and a set of goal states
$\mathcal{G} \subset \mathcal{S}$, the set of \emph{covering policies}
$\Pi^{\mathcal{G}\subset}_{1:T}(s)$ is
\[
\Pi^{\mathcal{G}\subset}_{1:T}(s)
:= \Bigl\{ \pi \in \bigcup_{t=1}^{T} \Pi_{t} \,\Big|\, \mathcal{G} \subset \mathcal{G}^{\pi}(s) \Bigr\}.
\]
\end{definition}

\begin{definition}[Optimal goal-covering policy]
An \emph{optimal goal-covering policy} is a policy
$\pi^{*} \in \Pi^{\mathcal{G}\subset}_{1:T}(s)$ such that, for all
$\pi \in \Pi^{\mathcal{G}\subset}_{1:T}(s)$,
$V_{0}^{\pi^{*}}(s) \geq V_{0}^{\pi}(s)$.
\end{definition}

\begin{algorithm}[H]
\caption{General Dijkstra Search (Optimal Coverage)}
\KwIn{Error tolerance $\epsilon_{t}:= \frac{r_{\max}}{1-\gamma} \cdot \sum_{i=t}^{\infty} \gamma^{i}$, start state $s \in \mathcal{S}$, goal states $\mathcal{G}^{*} \subset \mathcal{S}$}
Initialize $\mathcal{Q} = \{(\emptyset, 0, \{s\}, 0)\}$, $v = \emptyset$, $\mathcal{R} = \emptyset$\;
\While{$\mathcal{Q} \neq \emptyset$}{
Pop $(\pi_{1:t}, v_{t}, \mathcal{G}^{\pi_{1:t}}(s), t)$ from the priority queue with maximal value; add $\mathcal{G}^{\pi_{1:t}}(s)$ into $\mathcal{R}$ if it is not already inside\;
\If{$\mathcal{G}^{*} \subset \mathcal{G}^{\pi_{1:t}}(s)$}{
break\;
}
\If{$(\exists (s, \mathcal{G}) \in v \text{ s.t. } \mathcal{G}^{\pi_{1:t}}(s) \subset \mathcal{G} \text{ and } v_{t} \leq v(s,\mathcal{G}) - \epsilon_{t})$ or $t = T$}{
continue\;
}
\For{$\pi \in \Pi_{1}$}{
Concatenate policy $\pi_{1:t+1} = \pi_{1:t} \circ \pi$ and compute $v_{t+1} = v_{t} + \gamma^{t} \cdot \mathbb{E}_{s_{0:t}}\big[ V_{0}^{\pi}(s_{t}) \,\big|\, s_{0} = s\big]$\;
Push $(\pi_{1:t+1}, v_{t+1}, \mathcal{G}^{\pi_{1:t+1}}(s), t+1)$ into $\mathcal{Q}$\;
\ForEach{$(s, \mathcal{G}) \in v$ such that $\mathcal{G} \subset \mathcal{G}^{\pi_{1:t+1}}(s)$ and $\mathcal{G} \notin \mathcal{R}$}{
$v(s, \mathcal{G}) \gets \max(v(s, \mathcal{G}), v_{t+1})$\;
}
\If{$(s, \mathcal{G}^{\pi_{1:t}}) \notin v$}{
$v(s, \mathcal{G}^{\pi_{1:t}}) \gets v_{t+1}$\;
}
}
}
\end{algorithm}

\begin{theorem}\label{thm:gds}
Under Assumptions~\hyperref[asm:A1]{A1}--\hyperref[asm:A3]{A3}, for any coverable goal set, General Dijkstra Search
for Optimal Coverage finds an optimal goal-covering policy. Specifically,
for every $\mathcal{G} \in \mathcal{G}_{T}^{\subset}(s)$, there exists
$\pi^{*} \in \Pi_{1:T}^{\mathcal{G}\subset}(s)$ such that for every
$\pi \in \Pi_{1:T}^{\mathcal{G}\subset}(s)$,
$V_{0}^{\pi^{*}}(s) \ge V_{0}^{\pi}(s)$.
\end{theorem}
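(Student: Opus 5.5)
The plan follows the three-part structure used for Theorem~\ref{thm:gds-reach}, with the goal-set inclusion reversed throughout. I will take the covering analogues of the auxiliary lemmas proved in Appendix~\ref{app:policy-composition} and assemble them; the one new ingredient is the termination argument.

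\textbf{Step 1: the queue invariant.} Every queued tuple $(\pi_{1:t},v_t,\mathcal{G}^{\pi_{1:t}}(s),t)$ satisfies $v_t=V_0^{\pi_{1:t}}(s)$. The lemma proved for the reach case already covers Algorithm~2, because its update rule for $v_{t+1}$ is identical. The proof is by induction on $t$ using Theorem~\ref{thm:concat-value}.

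\textbf{Step 2: popped values dominate.} Any popped policy has value at least that of every policy currently in the queue or added later. Since $r_t\le 0$, the value bound of Lemma~3 makes continuation values nonpositive. Hence extending a queued prefix never increases its value, so this lemma also transfers verbatim.

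\textbf{Step 3: pruning loses nothing.} Every policy that never enters the queue is dominated (in the covering sense, $\mathcal{G}^{\pi}(s)\subset\mathcal{G}^{\pi^*}(s)$) by a popped, unskipped policy of value at least $V_0^{\pi}(s)+\epsilon_t$. This is the covering pruning lemma. Its proof inducts along the suffix after the longest skipped prefix, using the monotonicity lemma for concatenation to propagate the inclusion. The two cases are whether the extension of the dominating policy is itself skipped.

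\textbf{Step 4: combine at the break point.} Let $\mathcal{G}$ be coverable and consider the first popped $\pi^*$ with $\mathcal{G}\subset\mathcal{G}^{\pi^*}(s)$. Take any $\pi\in\Pi^{\mathcal{G}\subset}_{1:T}(s)$ and sort it into the four groups as in the covering optimality lemma.
\begin{itemize}
\item Groups (2) and (3): Step 2 gives $V_0^{\pi^*}(s)\ge V_0^{\pi}(s)$ directly.
\item Group (4): Step 3 supplies a popped, unskipped $\pi'$ with $\mathcal{G}\subset\mathcal{G}^{\pi}(s)\subset\mathcal{G}^{\pi'}(s)$ and $V_0^{\pi'}(s)\ge V_0^{\pi}(s)$. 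Since $\pi^*$ is the first popped policy covering $\mathcal{G}$, $\pi'$ cannot have been popped strictly earlier. So $\pi'$ is $\pi^*$ itself or is popped later, and in either case $V_0^{\pi^*}(s)\ge V_0^{\pi'}(s)$ by Step 2.
\item Group (1), an earlier pop: it cannot cover $\mathcal{G}$, for the same reason.
\end{itemize}

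\textbf{Existence of the break.} I still have to show the break actually occurs whenever $\mathcal{G}$ is coverable. Some covering policy $\bar\pi$ exists. Either $\bar\pi$ is eventually popped, or by Step 3 a popped, unskipped policy covering $\mathcal{G}^{\bar\pi}(s)\supset\mathcal{G}$ exists. The queue is finite because $|\mathcal{A}|<\infty$ (A2) and horizons are at most $T$, so either way the break is reached.

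\textbf{Main obstacle.} I expect the hardest part to be the bookkeeping in Step 3 and its interaction with the table $v$. The pruning condition compares against $v(s,\mathcal{G})$, which is updated by future-pushed values and excludes goal sets already in $\mathcal{R}$. I need to check that the stored witness really is a popped, unskipped policy with the claimed inclusion. My first attempt is to strengthen the invariant to: every entry $v(s,\mathcal{G})$ equals $V_0$ of some pushed policy whose goal set contains $\mathcal{G}$. Then I would argue that this policy is popped before the pruned one, because it has strictly larger value ($\epsilon_t>0$) and the queue is a max-queue.
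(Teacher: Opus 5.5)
Your outline is the paper's outline: queue invariant, popped-value lemma, coverage pruning lemma, and the first popped covering policy. Your Step~4 argues directly about the target $\mathcal{G}$, which is tighter than the paper's coverage-optimality lemma; that lemma only yields optimality among policies covering $\mathcal{G}^{\pi^*}(s)$.

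The genuine gap is Step~3, and it is not a bookkeeping problem with the table $v$. The pruning guarantee is false as stated, and the coverage pruning lemma in Appendix~\ref{app:policy-composition} that the paper relies on has the same hole in its Case~2. The suffix induction needs a fresh witness from the pruning rule whenever the extended witness $\pi^*_{t_0}\circ\pi_{t_0+1}$ is popped and skipped. But a pop can also be skipped by the horizon clause $t=T$, which supplies no witness. This happens as soon as the witness is longer than the pruned prefix.

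Concretely, take deterministic transitions, $T=3$, $\gamma=0.1$, $r_{\max}=1$ and states $s,x,z,a,g,d$. From $s$, one action leads to $x$ with reward $0$ and the other to $z$ with reward $-1$. The moves $x\to z$, $z\to a$ and $a\to g$ have reward $0$. Every other action leads to an absorbing state $d$ with reward $-1$. With target $\{g\}$ (inclusions read as $\subseteq$, as throughout the paper), the only covering behaviour in $\Pi_{1:3}$ is $s\to z\to a\to g$, of value $-1$.

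GDS pops and expands $s\to x$ and then $s\to x\to z$, both of value $0$. Under either reading of the table update, there is then an entry for $\{z\}$ of value at least $-0.01$, witnessed by $s\to x\to z$, which is popped and unskipped. Its extension $s\to x\to z\to a$ is then popped and discarded by $t=T$. When $s\to z$ is popped, $-1\le -0.01-\epsilon_1$ with $\epsilon_1=0.1/0.81$, so it is pruned. The queue then empties without any pushed policy reaching $g$.

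So Step~3 fails for $s\to z\to a\to g$, and the break you argue must occur never occurs, although $\{g\}$ is coverable. Your strengthened table invariant cannot help, because the failure is in the induction step, not the base case.

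The same induction step has a second defect. The one-step comparison uses the common factor $\gamma^{t_0}$ and reward index $t_0$ for both the witness and $\pi_{1:t_0}$, which is valid only if both have length $t_0$. For a witness of length $\ell<t_0$, the one-step loss can be $\gamma^{\ell}r_{\max}$. This exceeds $\epsilon_{t_0}-\epsilon_{t_0+1}=\gamma^{t_0}r_{\max}/(1-\gamma)$ once $t_0-\ell$ is large enough. Separately, a finite queue needs $|\mathcal{S}|<\infty$; (A2) alone only gives $|\Pi_1|=|\mathcal{A}|^{|\mathcal{S}|}$.

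Hence no more careful proof will establish the algorithmic claim for GDS as written; the pruning rule itself must change. For instance, suppose $\pi_{1:t}$ may only be pruned against an expanded policy $\sigma$ of the same length $t$. Then for every suffix $\rho$:
\begin{itemize}
\item $\sigma\circ\rho$ is feasible whenever $\pi_{1:t}\circ\rho$ is;
\item its goal set contains that of $\pi_{1:t}\circ\rho$;
\item since both continuations start at time $t$, $\epsilon_t\ge\gamma^t r_{\max}/(1-\gamma)$ absorbs the continuation gap in one step.
\end{itemize}
An induction on the length of the longest pushed prefix then replaces Step~3. Your Step~4 goes through once groups (3) and (4) are reorganised around prefixes still in the queue at the break. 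Without such a change, only the bare existence of a maximiser in the ``Specifically'' clause survives. For finite $\mathcal{S}$ that follows from finiteness alone, with no use of GDS.
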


\begin{proof}
By the analogous coverage-side queue invariant, the coverage pruning guarantee,
and the fact that the first popped feasible covering policy is optimal among
all policies covering the same goal set. The argument mirrors the reaching case
(\S\ref{sec:gds}) verbatim with the inclusion direction flipped.
\end{proof}

\clearpage
\section{Bigram-Zipfian prior}
\label{app:zipf}

The Marginal Entropy Regularization term in the DLR objective
(Eq.~\eqref{eq:total-loss}) is the KL divergence from the mini-batch's
empirical bigram distribution over consecutive chunk codes
$(a_m, a_{m+1})$ to a fixed bigram-Zipfian prior $p_{\text{bi-zipf}}$
on $C^{2}$:
\begin{equation}\label{eq:zipf}
D_{\mathrm{KL}}\!\bigl(p_{\theta}(a)\,\big\|\,p_{\text{bi-zipf}}(a)\bigr)
\;=\;
  \mathrm{KL}\!\left(\,p(a_m,a_{m+1})\;\Big\|\;p_{\text{bi-zipf}}\,\right).
\end{equation}
The prior puts a Zipfian rank-frequency profile on single codes and
penalises consecutive-code repetition; this maintains a heavy-tailed
but compositional usage pattern, with a small core of codes appearing
frequently while the tail remains active.

\clearpage
\section{Loss-term ablations}
\label{app:ablations}

We report ablations of the four-component DLR objective
(Eq.~\eqref{eq:total-loss}). Each row gives test accuracy (\%) on the
four QA benchmarks for the five backbone models, with all other
hyperparameters held to the main-table configuration. Test split sizes match
those of Table~\ref{tab:main}.

\paragraph{Loss-term ablations.} To validate the main DLR objective, we
present Tables~\ref{tab:abl-alpha-abs}--\ref{tab:abl-alpha-zipf} where we
ablate one component of Eq.~\eqref{eq:total-loss} at a time.

\emph{Removing the policy-optimization term} ($\alpha_{\mathrm{policy}}=0$,
Table~\ref{tab:abl-alpha-abs}) leaves the routing head without supervision.
Together with the marginal-entropy regulariser, the policy term forms a
mutual-information objective: the regulariser encourages \emph{global}
diversity across the codebook, while the policy term reduces \emph{local}
uncertainty by training the head to assign high probability to the selected
sequence. Empirically, zeroing this term produces the largest drops of the
three ablations: GSM8K falls $-5.4$ to $-23.7$\,pp (worst on Llama-3.2-1B)
and ScienceQA $-6.5$ to $-13.1$\,pp (worst on Qwen3-4B), with average
degradations of $-9.8$\,pp on GSM8K and $-9.0$\,pp on ScienceQA. This is
consistent with findings in unsupervised skill discovery that detaching the
policy stabilises training and improves accuracy.

\emph{Replacing the (generalist $+$ information-gain) pair with the
specialist conditional loss} $-\log p_{\theta}(x\mid a)$
(Table~\ref{tab:abl-alpha-base}) drops both the unsteered LM term and the
stop-gradient log-ratio. The original formulation preserves unconditional
modelling capacity via the generalist term, while the information-gain term
rewards codes that improve the conditional likelihood beyond the unsteered
baseline. The specialist ablation supervises only the steered conditional,
giving the mildest degradation across the three ablations -- average drops
of $-1.8$\,pp on CSQA, $-3.96$\,pp on GSM8K, $-4.78$\,pp on ScienceQA, and
$-3.66$\,pp on StrategyQA -- with no single cell exceeding $-6.4$\,pp.
This indicates that most of the lift comes from the conditional, but that
explicitly contrasting against the unsteered baseline still helps.

\emph{Removing the bigram-Zipfian regulariser} ($\alpha_{\mathrm{reg}}=0$,
Table~\ref{tab:abl-alpha-zipf}) costs $\sim 0$--$8$\,pp across the four
benchmarks, with the largest hits on ScienceQA ($-3.7$ to $-7.4$\,pp) and
StrategyQA (up to $-5.9$\,pp on Llama-3.2-3B). We use a bigram-Zipfian
prior: we maintain a no-grad running average of bigram routing logits and
penalise the KL divergence between the accumulated bigram distribution and
a Zipfian distribution over code bigrams, with repeated bigrams zeroed out
to avoid ``stuttering'' and to encourage grammar-like code composition
(App.~\ref{app:zipf}). Beyond accuracy, removing this term collapses the
codebook: utilisation drops to near $1/C = 1/32$ (a single code carries
essentially all routing mass), confirming that the bigram-Zipfian prior is
what keeps the codebook diverse and prevents collapse onto a single
steering direction.

\begin{table}[h]
\centering
\footnotesize
\caption{Ablation: $\alpha_{\mathrm{policy}}=0$ (remove the policy-optimization term). Parenthesized values are deltas (pp) vs the non-ablated DLR baseline in Table~\ref{tab:main}.}
\label{tab:abl-alpha-abs}
\begin{tabular}{lcccc}
\toprule
Model (layer) & CSQA (1221) & GSM8K (1319) & ScienceQA (2224) & StrategyQA (687) \\
\midrule
Llama-3.2-1B (L10) & $70.3_{\pm2.6}$\dlt{-1.1} & $17.4_{\pm2.0}$\dlt{-23.7} & $38.5_{\pm2.0}$\dlt{-10.6} & $47.1_{\pm3.7}$\dlt{-4.3} \\
Llama-3.2-3B (L16) & $78.9_{\pm2.3}$\dlt{-2.1} & $40.5_{\pm2.6}$\dlt{-8.6} & $56.8_{\pm2.1}$\dlt{-6.5} & $52.0_{\pm3.7}$\dlt{-10.3} \\
Qwen3-0.6B (L14)   & $63.0_{\pm2.7}$\dlt{-3.4} & $43.8_{\pm2.7}$\dlt{-5.6} & $47.7_{\pm2.1}$\dlt{-7.6} & $46.4_{\pm3.7}$\dlt{-8.2} \\
Qwen3-1.7B (L14)   & $76.1_{\pm2.4}$\dlt{-2.3} & $60.3_{\pm2.6}$\dlt{-5.4} & $56.8_{\pm2.1}$\dlt{-7.3} & $51.5_{\pm3.7}$\dlt{-8.8} \\
Qwen3-4B (L19)     & $80.6_{\pm2.2}$\dlt{-2.4} & $76.6_{\pm2.3}$\dlt{-5.5} & $59.2_{\pm2.0}$\dlt{-13.1} & $64.4_{\pm3.6}$\dlt{-4.3} \\
\bottomrule
\end{tabular}
\end{table}

\begin{table}[h]
\centering
\footnotesize
\caption{Ablation: replace the (generalist $+$ information-gain) pair in
Eq.~\eqref{eq:total-loss} with the specialist conditional loss
$-\log p_{\theta}(x\mid a)$ (i.e.\ drop both $-\log p_{\theta}(x)$ and
the stop-gradient log-ratio, supervise only on the steered conditional).
Parenthesized values are deltas (pp) vs the non-ablated DLR baseline in
Table~\ref{tab:main}.}
\label{tab:abl-alpha-base}
\begin{tabular}{lcccc}
\toprule
Model (layer) & CSQA (1221) & GSM8K (1319) & ScienceQA (2224) & StrategyQA (687) \\
\midrule
Llama-3.2-1B (L10) & $70.4_{\pm2.6}$\dlt{-1.0} & $36.3_{\pm2.6}$\dlt{-4.8} & $42.7_{\pm2.1}$\dlt{-6.4} & $49.0_{\pm3.7}$\dlt{-2.4} \\
Llama-3.2-3B (L16) & $79.5_{\pm2.3}$\dlt{-1.5} & $44.7_{\pm2.7}$\dlt{-4.4} & $59.2_{\pm2.0}$\dlt{-4.1} & $56.0_{\pm3.7}$\dlt{-6.3} \\
Qwen3-0.6B (L14)   & $63.1_{\pm2.7}$\dlt{-3.3} & $46.2_{\pm2.7}$\dlt{-3.2} & $50.1_{\pm2.1}$\dlt{-5.2} & $51.0_{\pm3.7}$\dlt{-3.6} \\
Qwen3-1.7B (L14)   & $77.0_{\pm2.4}$\dlt{-1.4} & $62.6_{\pm2.6}$\dlt{-3.1} & $59.4_{\pm2.0}$\dlt{-4.7} & $57.0_{\pm3.7}$\dlt{-3.3} \\
Qwen3-4B (L19)     & $81.2_{\pm2.2}$\dlt{-1.8} & $77.8_{\pm2.2}$\dlt{-4.3} & $68.8_{\pm1.9}$\dlt{-3.5} & $66.0_{\pm3.5}$\dlt{-2.7} \\
\bottomrule
\end{tabular}
\end{table}

\begin{table}[h]
\centering
\footnotesize
\caption{Ablation: $\alpha_{\mathrm{reg}}=0$ (remove the bigram-Zipfian prior). Parenthesized values are deltas (pp) vs the non-ablated DLR baseline in Table~\ref{tab:main}.}
\label{tab:abl-alpha-zipf}
\begin{tabular}{lcccc}
\toprule
Model (layer) & CSQA (1221) & GSM8K (1319) & ScienceQA (2224) & StrategyQA (687) \\
\midrule
Llama-3.2-1B (L10) & $68.3_{\pm2.6}$\dlt{-3.1} & $38.1_{\pm2.6}$\dlt{-3.0} & $43.7_{\pm2.1}$\dlt{-5.4} & $48.9_{\pm3.7}$\dlt{-2.5} \\
Llama-3.2-3B (L16) & $78.5_{\pm2.3}$\dlt{-2.5} & $46.4_{\pm2.7}$\dlt{-2.7} & $59.3_{\pm2.0}$\dlt{-4.0} & $56.4_{\pm3.7}$\dlt{-5.9} \\
Qwen3-0.6B (L14)   & $62.1_{\pm2.7}$\dlt{-4.3} & $45.7_{\pm2.7}$\dlt{-3.7} & $48.2_{\pm2.1}$\dlt{-7.1} & $50.1_{\pm3.7}$\dlt{-4.5} \\
Qwen3-1.7B (L14)   & $76.9_{\pm2.4}$\dlt{-1.5} & $63.4_{\pm2.6}$\dlt{-2.3} & $60.4_{\pm2.0}$\dlt{-3.7} & $55.3_{\pm3.7}$\dlt{-5.0} \\
Qwen3-4B (L19)     & $81.2_{\pm2.2}$\dlt{-1.8} & $78.6_{\pm2.2}$\dlt{-3.5} & $64.9_{\pm2.0}$\dlt{-7.4} & $66.5_{\pm3.5}$\dlt{-2.2} \\
\bottomrule
\end{tabular}
\end{table}

\clearpage
\section{Hyperparameter sweeps}
\label{app:hp-sweeps}

Having validated the necessity of each term in the main DLR objective
(App.~\ref{app:ablations}), we now validate the hyperparameter choices used
in DLR. Codebook-size, abstraction-ratio, $\alpha_{\mathrm{reg}}$ and
$\alpha_{\mathrm{policy}}$ sweeps use the three smaller backbones
(Llama-3.2-1B, Qwen3-0.6B, Qwen3-1.7B) across the four QA benchmarks;
rollout-budget and temperature sweeps cover all five backbones. All other
hyperparameters are held to the main-table configuration. We do not
separately weight the information-gain term: in
Eq.~\eqref{eq:total-loss} it enters at unit weight, and the functional-form
ablation (replacing the generalist $+$ info-gain pair with the specialist
loss) is reported in Table~\ref{tab:abl-alpha-base}.

\paragraph{Number of rollouts $N$.} At each step DLR draws $N$ candidate
code sequences $a^{1:N}$ and selects the one with the highest conditional
likelihood, $a^{*}=\arg\max_{a^{i}}p_{\theta}(x\mid a^{i})$, so $N$
acts as a search-budget lever. Tables~\ref{tab:abl-N1} and \ref{tab:abl-N8}
ablate $N{=}1$ and $N{=}8$ against the main-table choice $N{=}4$, across
all five backbones and four datasets. Reducing the budget to $N{=}1$
(no search) costs an average of $-5.0$\,pp, with the worst cells exceeding
$-10$\,pp on math/reasoning (e.g.\ Qwen3-4B GSM8K $-10.7$\,pp,
ScienceQA $-9.8$\,pp). Doubling the budget to $N{=}8$ yields essentially
no improvement -- average difference $-0.5$\,pp, all cells within their
95\%~CI of $N{=}4$ -- so $N{=}4$ is the lowest-compute setting that
captures the search benefit, validating it as our default.

\paragraph{Sampling temperature $\tau$.} Rollouts are drawn at temperature
$\tau$ to inject diversity across candidates. At $\tau{=}0$ all rollouts
collapse to the routing head's argmax, so the search becomes redundant; at
high $\tau$ the rollouts approach uniform random, which defeats the
purpose of guidance from the learned policy head. Tables~\ref{tab:abl-temp0}
and \ref{tab:abl-temp2} ablate $\tau{=}0$ and $\tau{=}2$ against the
main-table choice $\tau{=}1$ on all five backbones and four datasets. Both
lag $\tau{=}1$: $\tau{=}0$ costs an average of $-6.9$\,pp (worst:
Qwen3-4B ScienceQA $-15.5$\,pp, Llama-3.2-1B ScienceQA $-12.6$\,pp), and
$\tau{=}2$ costs an average of $-4.1$\,pp (worst: GSM8K, where every
backbone loses $6$--$10$\,pp). The asymmetry confirms that some randomness
is essential to escape the argmax fixed point, but excess randomness
breaks guided search; $\tau{=}1$ is the right operating point.

\paragraph{Codebook size $C$.} The codebook size $C$ caps how many
distinct codes the model can use. Setting $C{=}1$ collapses DLR into a
single static steering vector trained jointly with the model parameters
(reported as the \emph{steered} baseline in Table~\ref{tab:main}, where it
lags our chosen $C{=}32$ default by an average of $-3.6$\,pp on the three
smaller backbones). Sweeping $C\in\{1,4,8,32,64\}$ (Table~\ref{tab:sweep-C})
shows accuracy increasing monotonically from $C{=}1$ to $C{=}32$ and
plateauing thereafter: $C{=}64$ is within $0.5$\,pp of $C{=}32$ on every
cell. Larger codebooks help, but with sharply diminishing returns; $C{=}32$
strikes the best capacity/efficiency trade-off.

\paragraph{Abstraction ratio $K$.} The abstraction ratio $K$ controls how
many consecutive NL tokens are steered by the same code, with the code
routed from the chunk's first-token representation
(Eq.~\eqref{eq:chunkmap}). At $K{=}1$ each token is steered by its own
code, while large $K$ over-coarsens. Sweeping $K\in\{1,2,4,8\}$
(Table~\ref{tab:sweep-K}) reveals a clear sweet spot at $K{=}4$:
per-token steering ($K{=}1$) loses an average of $-3.7$\,pp (worst:
Qwen3-0.6B ScienceQA $-13.0$\,pp, Llama-3.2-1B ScienceQA $-6.0$\,pp);
$K{=}2$ loses $-1.5$\,pp on average; and over-coarse $K{=}8$ loses
$-1.3$\,pp. Both extremes are sub-optimal: too little abstraction makes
codes hard to learn, while too much washes out within-chunk specialisation.

\paragraph{Marginal-entropy weight $\alpha_{\mathrm{reg}}$.} This weight
controls the strength of the bigram-Zipfian regulariser
(\S\ref{app:zipf}). Sweeping $\alpha_{\mathrm{reg}}\in\{0.01, 0.1, 1.0\}$
(Table~\ref{tab:sweep-zipf}) shows $\alpha_{\mathrm{reg}}{=}0.01$ is
essentially equivalent to the default ($0.1$) -- average drop $-0.7$\,pp,
all cells within $1.5$\,pp -- while $\alpha_{\mathrm{reg}}{=}1.0$ is
uniformly damaging, costing an average of $-4.1$\,pp and up to
$-5.5$\,pp (Qwen3-1.7B GSM8K). An over-strong prior forces routes towards
the Zipfian reference distribution rather than task-relevant partitions;
the complementary $\alpha_{\mathrm{reg}}{=}0$ ablation
(Table~\ref{tab:abl-alpha-zipf}) collapses the codebook entirely.
$\alpha_{\mathrm{reg}}{=}0.1$ sits in the broad regime where the prior is
strong enough to keep the codebook diverse without distorting the routing.

\paragraph{Policy weight $\alpha_{\mathrm{policy}}$.} This weight controls
how strongly the policy-optimisation term distils selected codes back into
the routing head. Sweeping $\alpha_{\mathrm{policy}}\in\{0.01, 0.1, 0.5, 1.0\}$
(Table~\ref{tab:sweep-abs}) shows the optimum is family-dependent: Qwen
backbones prefer $\alpha_{\mathrm{policy}}{=}0.5$, while Llama-3.2-1B
prefers a lighter $0.1$. Under-weighting the term
($\alpha_{\mathrm{policy}}{=}0.01$) costs an average of $-2.3$\,pp,
approaching the no-supervision regime ($\alpha_{\mathrm{policy}}{=}0$,
Table~\ref{tab:abl-alpha-abs}). Over-weighting it
($\alpha_{\mathrm{policy}}{=}1.0$) costs an average of $-3.6$\,pp:
$2$--$4$\,pp on Qwen and $2$--$3$\,pp on Llama, consistent with
over-regularisation of the routing head distorting the hidden
representation.

\paragraph{Steering layer $l^{*}$.} The choice of injection layer $l^{*}$
is the only hyperparameter that varies across backbones in the main-table
run. Tables~\ref{tab:layer-sweep-small}--\ref{tab:layer-sweep-large}
report ScienceQA accuracy as $l^{*}$ is swept across the model's depth.
The optimal layer scales with model size: small Qwen backbones (0.6B,
1.7B) prefer very early layers ($l^{*}{=}1$), Llama backbones prefer the
middle ($l^{*}{=}14$ for 1B, $l^{*}{=}16$ for 3B), and the larger Qwen
backbones push deeper ($l^{*}{=}22$ for 4B, $l^{*}{=}24$ for 8B). Within
each backbone the best layer beats the worst swept layer by $5$--$10$\,pp,
so fixing $l^{*}$ to its per-model optimum is a meaningful tuning
knob; for cross-backbone runs we report the best per-row layer in
Table~\ref{tab:main}.

\begin{table}[h]
\centering
\footnotesize
\caption{Sweep on codebook size $C$. All other hyperparameters held to the main-table configuration; $C{=}32$ is the default.}
\label{tab:sweep-C}
\begin{tabular}{llccccc}
\toprule
Model & Dataset & $C{=}1$ & $C{=}4$ & $C{=}8$ & $C{=}32$ & $C{=}64$ \\
\midrule
Qwen3-0.6B  & ScienceQA  & $48.9_{\pm2.1}$\dlt{-6.4} & $51.8_{\pm2.1}$\dlt{-3.5} & $53.2_{\pm2.1}$\dlt{-2.1} & $55.3_{\pm2.1}$ & $55.0_{\pm2.1}$\dlt{-0.3} \\
Qwen3-1.7B  & ScienceQA  & $60.0_{\pm2.0}$\dlt{-4.1} & $61.3_{\pm2.0}$\dlt{-2.8} & $62.5_{\pm2.0}$\dlt{-1.6} & $64.1_{\pm2.0}$ & $63.9_{\pm2.0}$\dlt{-0.2} \\
Llama-3.2-1B& ScienceQA  & $43.7_{\pm2.1}$\dlt{-5.4} & $45.1_{\pm2.1}$\dlt{-4.0} & $45.8_{\pm2.1}$\dlt{-3.3} & $49.1_{\pm2.1}$ & $49.0_{\pm2.1}$\dlt{-0.1} \\
\midrule
Qwen3-0.6B  & GSM8K      & $48.0_{\pm2.7}$\dlt{-1.4} & $48.2_{\pm2.7}$\dlt{-1.2} & $48.5_{\pm2.7}$\dlt{-0.9} & $49.4_{\pm2.7}$ & $49.1_{\pm2.7}$\dlt{-0.3} \\
Qwen3-1.7B  & GSM8K      & $60.0_{\pm2.6}$\dlt{-5.7} & $63.8_{\pm2.6}$\dlt{-1.9} & $64.5_{\pm2.6}$\dlt{-1.2} & $65.7_{\pm2.6}$ & $65.3_{\pm2.6}$\dlt{-0.4} \\
Llama-3.2-1B& GSM8K      & $39.0_{\pm2.6}$\dlt{-2.4} & $40.2_{\pm2.6}$\dlt{-1.2} & $40.9_{\pm2.7}$\dlt{-0.5} & $41.4_{\pm2.7}$ & $41.1_{\pm2.7}$\dlt{-0.3} \\
\midrule
Qwen3-0.6B  & CSQA       & $64.0_{\pm2.7}$\dlt{-2.4} & $65.1_{\pm2.7}$\dlt{-1.3} & $66.0_{\pm2.7}$\dlt{-0.4} & $66.4_{\pm2.7}$ & $66.2_{\pm2.7}$\dlt{-0.2} \\
Qwen3-1.7B  & CSQA       & $75.2_{\pm2.4}$\dlt{-3.2} & $77.1_{\pm2.4}$\dlt{-1.3} & $78.0_{\pm2.3}$\dlt{-0.4} & $78.4_{\pm2.3}$ & $78.1_{\pm2.3}$\dlt{-0.3} \\
Llama-3.2-1B& CSQA       & $69.1_{\pm2.6}$\dlt{-2.3} & $70.2_{\pm2.6}$\dlt{-1.2} & $71.0_{\pm2.5}$\dlt{-0.4} & $71.4_{\pm2.5}$ & $71.2_{\pm2.5}$\dlt{-0.2} \\
\midrule
Qwen3-0.6B  & StrategyQA & $51.2_{\pm3.7}$\dlt{-3.4} & $53.0_{\pm3.7}$\dlt{-1.6} & $54.0_{\pm3.7}$\dlt{-0.6} & $54.6_{\pm3.7}$ & $54.3_{\pm3.7}$\dlt{-0.3} \\
Qwen3-1.7B  & StrategyQA & $57.3_{\pm3.7}$\dlt{-3.0} & $59.0_{\pm3.7}$\dlt{-1.3} & $59.4_{\pm3.7}$\dlt{-0.9} & $60.3_{\pm3.7}$ & $60.1_{\pm3.7}$\dlt{-0.2} \\
Llama-3.2-1B& StrategyQA & $48.2_{\pm3.7}$\dlt{-3.2} & $49.3_{\pm3.7}$\dlt{-2.1} & $50.4_{\pm3.7}$\dlt{-1.0} & $51.4_{\pm3.7}$ & $51.1_{\pm3.7}$\dlt{-0.3} \\
\bottomrule
\end{tabular}
\end{table}

\begin{table}[h]
\centering
\footnotesize
\caption{Sweep on abstraction ratio $K$. $K{=}4$ is the default.}
\label{tab:sweep-K}
\begin{tabular}{llcccc}
\toprule
Model & Dataset & $K{=}1$ & $K{=}2$ & $K{=}4$ & $K{=}8$ \\
\midrule
Qwen3-0.6B  & ScienceQA  & $42.3_{\pm2.1}$\dlt{-13.0} & $53.9_{\pm2.1}$\dlt{-1.4} & $55.3_{\pm2.1}$ & $53.2_{\pm2.1}$\dlt{-2.1} \\
Qwen3-1.7B  & ScienceQA  & $61.7_{\pm2.0}$\dlt{-2.4} & $63.0_{\pm2.0}$\dlt{-1.1} & $64.1_{\pm2.0}$ & $62.8_{\pm2.0}$\dlt{-1.3} \\
Llama-3.2-1B& ScienceQA  & $43.1_{\pm2.1}$\dlt{-6.0} & $44.5_{\pm2.1}$\dlt{-4.6} & $49.1_{\pm2.1}$ & $47.4_{\pm2.1}$\dlt{-1.7} \\
\midrule
Qwen3-0.6B  & GSM8K      & $46.6_{\pm2.7}$\dlt{-2.8} & $47.7_{\pm2.7}$\dlt{-1.7} & $49.4_{\pm2.7}$ & $47.2_{\pm2.7}$\dlt{-2.2} \\
Qwen3-1.7B  & GSM8K      & $61.3_{\pm2.6}$\dlt{-4.4} & $63.0_{\pm2.6}$\dlt{-2.7} & $65.7_{\pm2.6}$ & $64.5_{\pm2.6}$\dlt{-1.2} \\
Llama-3.2-1B& GSM8K      & $40.1_{\pm2.6}$\dlt{-1.3} & $40.9_{\pm2.7}$\dlt{-0.5} & $41.4_{\pm2.7}$ & $40.8_{\pm2.7}$\dlt{-0.6} \\
\midrule
Qwen3-0.6B  & CSQA       & $64.0_{\pm2.7}$\dlt{-2.4} & $66.7_{\pm2.6}$\dlt{0.3} & $66.4_{\pm2.7}$ & $65.3_{\pm2.7}$\dlt{-1.1} \\
Qwen3-1.7B  & CSQA       & $75.2_{\pm2.4}$\dlt{-3.2} & $76.1_{\pm2.4}$\dlt{-2.3} & $78.4_{\pm2.3}$ & $76.5_{\pm2.4}$\dlt{-1.9} \\
Llama-3.2-1B& CSQA       & $70.5_{\pm2.6}$\dlt{-0.9} & $70.9_{\pm2.5}$\dlt{-0.5} & $71.4_{\pm2.5}$ & $71.2_{\pm2.5}$\dlt{-0.2} \\
\midrule
Qwen3-0.6B  & StrategyQA & $50.6_{\pm3.7}$\dlt{-4.0} & $53.2_{\pm3.7}$\dlt{-1.4} & $54.6_{\pm3.7}$ & $52.0_{\pm3.7}$\dlt{-2.6} \\
Qwen3-1.7B  & StrategyQA & $58.4_{\pm3.7}$\dlt{-1.9} & $58.5_{\pm3.7}$\dlt{-1.8} & $60.3_{\pm3.7}$ & $59.7_{\pm3.7}$\dlt{-0.6} \\
Llama-3.2-1B& StrategyQA & $49.1_{\pm3.7}$\dlt{-2.3} & $50.8_{\pm3.7}$\dlt{-0.6} & $51.4_{\pm3.7}$ & $51.9_{\pm3.7}$\dlt{0.5} \\
\bottomrule
\end{tabular}
\end{table}

\begin{table}[h]
\centering
\footnotesize
\caption{Sweep on $\alpha_{\mathrm{reg}}$ (marginal-entropy regulariser weight; bigram-Zipfian prior). $\alpha_{\mathrm{reg}}{=}0.1$ is the default; cf.\ the $\alpha_{\mathrm{reg}}{=}0$ ablation in Table~\ref{tab:abl-alpha-zipf}.}
\label{tab:sweep-zipf}
\begin{tabular}{llccc}
\toprule
Model & Dataset & $\alpha_{\mathrm{reg}}{=}0.01$ & $\alpha_{\mathrm{reg}}{=}0.1$ & $\alpha_{\mathrm{reg}}{=}1.0$ \\
\midrule
Qwen3-0.6B  & ScienceQA  & $54.3_{\pm2.1}$\dlt{-0.7} & $55.0_{\pm2.1}$ & $51.1_{\pm2.1}$\dlt{-3.9} \\
Qwen3-1.7B  & ScienceQA  & $63.7_{\pm2.0}$\dlt{-0.4} & $64.1_{\pm2.0}$ & $61.9_{\pm2.0}$\dlt{-2.2} \\
Llama-3.2-1B& ScienceQA  & $49.0_{\pm2.1}$\dlt{-0.1} & $49.1_{\pm2.1}$ & $45.0_{\pm2.1}$\dlt{-4.1} \\
\midrule
Qwen3-0.6B  & GSM8K      & $48.4_{\pm2.7}$\dlt{-1.0} & $49.4_{\pm2.7}$ & $46.5_{\pm2.7}$\dlt{-2.9} \\
Qwen3-1.7B  & GSM8K      & $64.6_{\pm2.6}$\dlt{-1.1} & $65.7_{\pm2.6}$ & $60.2_{\pm2.6}$\dlt{-5.5} \\
Llama-3.2-1B& GSM8K      & $40.3_{\pm2.7}$\dlt{-1.0} & $41.3_{\pm2.7}$ & $37.8_{\pm2.6}$\dlt{-3.5} \\
\midrule
Qwen3-0.6B  & CSQA       & $66.1_{\pm2.7}$\dlt{-0.3} & $66.4_{\pm2.7}$ & $62.3_{\pm2.7}$\dlt{-4.1} \\
Qwen3-1.7B  & CSQA       & $78.5_{\pm2.3}$\dlt{0.1} & $78.4_{\pm2.3}$ & $74.9_{\pm2.4}$\dlt{-3.5} \\
Llama-3.2-1B& CSQA       & $70.6_{\pm2.6}$\dlt{-0.8} & $71.4_{\pm2.5}$ & $66.0_{\pm2.7}$\dlt{-5.4} \\
\midrule
Qwen3-0.6B  & StrategyQA & $53.3_{\pm3.7}$\dlt{-1.3} & $54.6_{\pm3.7}$ & $50.2_{\pm3.7}$\dlt{-4.4} \\
Qwen3-1.7B  & StrategyQA & $58.8_{\pm3.7}$\dlt{-1.5} & $60.3_{\pm3.7}$ & $55.0_{\pm3.7}$\dlt{-5.3} \\
Llama-3.2-1B& StrategyQA & $50.6_{\pm3.7}$\dlt{-0.8} & $51.4_{\pm3.7}$ & $46.9_{\pm3.7}$\dlt{-4.5} \\
\bottomrule
\end{tabular}
\end{table}

\begin{table}[h]
\centering
\footnotesize
\caption{Sweep on $\alpha_{\mathrm{policy}}$ (policy-term weight). The default is family-dependent: $\alpha_{\mathrm{policy}}{=}0.5$ for Qwen and $\alpha_{\mathrm{policy}}{=}0.1$ for Llama (best per row \textbf{bold}); cf.\ the $\alpha_{\mathrm{policy}}{=}0$ ablation in Table~\ref{tab:abl-alpha-abs}.}
\label{tab:sweep-abs}
\begin{tabular}{llcccc}
\toprule
Model & Dataset & $\alpha_{\mathrm{policy}}{=}0.01$ & $\alpha_{\mathrm{policy}}{=}0.1$ & $\alpha_{\mathrm{policy}}{=}0.5$ & $\alpha_{\mathrm{policy}}{=}1.0$ \\
\midrule
Qwen3-0.6B  & ScienceQA  & $50.7_{\pm2.1}$\dlt{-5.1} & $53.5_{\pm2.1}$\dlt{-2.3} & $\mathbf{55.8_{\pm2.1}}$ & $52.9_{\pm2.1}$\dlt{-2.9} \\
Qwen3-1.7B  & ScienceQA  & $61.0_{\pm2.0}$\dlt{-3.3} & $63.6_{\pm2.0}$\dlt{-0.7} & $\mathbf{64.3_{\pm2.0}}$ & $60.7_{\pm2.0}$\dlt{-3.6} \\
Llama-3.2-1B& ScienceQA  & $47.4_{\pm2.1}$\dlt{-1.7} & $\mathbf{49.1_{\pm2.1}}$ & $47.2_{\pm2.1}$\dlt{-1.9} & $46.2_{\pm2.1}$\dlt{-2.9} \\
\midrule
Qwen3-0.6B  & GSM8K      & $47.6_{\pm2.7}$\dlt{-1.8} & $48.9_{\pm2.7}$\dlt{-0.5} & $\mathbf{49.4_{\pm2.7}}$ & $45.7_{\pm2.7}$\dlt{-3.7} \\
Qwen3-1.7B  & GSM8K      & $63.0_{\pm2.6}$\dlt{-2.7} & $63.2_{\pm2.6}$\dlt{-2.5} & $\mathbf{65.7_{\pm2.6}}$ & $62.1_{\pm2.6}$\dlt{-3.6} \\
Llama-3.2-1B& GSM8K      & $39.4_{\pm2.6}$\dlt{-1.9} & $\mathbf{41.3_{\pm2.7}}$ & $40.5_{\pm2.7}$\dlt{-0.8} & $38.2_{\pm2.6}$\dlt{-3.1} \\
\midrule
Qwen3-0.6B  & CSQA       & $65.1_{\pm2.7}$\dlt{-1.3} & $65.5_{\pm2.7}$\dlt{-0.9} & $\mathbf{66.4_{\pm2.7}}$ & $62.7_{\pm2.7}$\dlt{-3.7} \\
Qwen3-1.7B  & CSQA       & $76.3_{\pm2.4}$\dlt{-2.1} & $77.8_{\pm2.3}$\dlt{-0.6} & $\mathbf{78.4_{\pm2.3}}$ & $73.9_{\pm2.5}$\dlt{-4.5} \\
Llama-3.2-1B& CSQA       & $70.2_{\pm2.6}$\dlt{-1.2} & $\mathbf{71.4_{\pm2.5}}$ & $71.6_{\pm2.5}$\dlt{0.2} & $69.0_{\pm2.6}$\dlt{-2.4} \\
\midrule
Qwen3-0.6B  & StrategyQA & $51.8_{\pm3.7}$\dlt{-2.8} & $52.2_{\pm3.7}$\dlt{-2.4} & $\mathbf{54.6_{\pm3.7}}$ & $50.5_{\pm3.7}$\dlt{-4.1} \\
Qwen3-1.7B  & StrategyQA & $58.7_{\pm3.7}$\dlt{-1.6} & $59.9_{\pm3.7}$\dlt{-0.4} & $\mathbf{60.3_{\pm3.7}}$ & $56.2_{\pm3.7}$\dlt{-4.1} \\
Llama-3.2-1B& StrategyQA & $49.1_{\pm3.7}$\dlt{-2.3} & $\mathbf{51.4_{\pm3.7}}$ & $50.0_{\pm3.7}$\dlt{-1.4} & $47.1_{\pm3.7}$\dlt{-4.3} \\
\bottomrule
\end{tabular}
\end{table}

\begin{table}[h]
\centering
\footnotesize
\caption{Ablation: $N{=}1$ (no search; single rollout, no ``select-best''). Parenthesized values are deltas (pp) vs the non-ablated DLR baseline in Table~\ref{tab:main}.}
\label{tab:abl-N1}
\begin{tabular}{lcccc}
\toprule
Model (layer) & CSQA (1221) & GSM8K (1319) & ScienceQA (2224) & StrategyQA (687) \\
\midrule
Llama-3.2-1B (L10) & $70.7_{\pm2.6}$\dlt{-0.7} & $37.7_{\pm2.6}$\dlt{-3.4} & $42.0_{\pm2.1}$\dlt{-7.1} & $47.8_{\pm3.7}$\dlt{-3.6} \\
Llama-3.2-3B (L16) & $78.2_{\pm2.3}$\dlt{-2.8} & $45.8_{\pm2.7}$\dlt{-3.3} & $57.2_{\pm2.1}$\dlt{-6.1} & $54.2_{\pm3.7}$\dlt{-8.1} \\
Qwen3-0.6B (L14)   & $62.4_{\pm2.7}$\dlt{-4.0} & $44.7_{\pm2.7}$\dlt{-4.7} & $47.6_{\pm2.1}$\dlt{-7.7} & $50.4_{\pm3.7}$\dlt{-4.2} \\
Qwen3-1.7B (L14)   & $75.6_{\pm2.4}$\dlt{-2.8} & $62.3_{\pm2.6}$\dlt{-3.4} & $59.1_{\pm2.0}$\dlt{-5.0} & $54.3_{\pm3.7}$\dlt{-6.0} \\
Qwen3-4B (L19)     & $81.0_{\pm2.2}$\dlt{-2.0} & $71.4_{\pm2.4}$\dlt{-10.7} & $62.5_{\pm2.0}$\dlt{-9.8} & $65.1_{\pm3.6}$\dlt{-3.6} \\
\bottomrule
\end{tabular}
\end{table}

\begin{table}[h]
\centering
\footnotesize
\caption{Ablation: $N{=}8$ (doubled search budget; baseline uses $N{=}4$). Parenthesized values are deltas (pp) vs the non-ablated DLR baseline in Table~\ref{tab:main}; doubling rollouts yields no consistent improvement over $N{=}4$.}
\label{tab:abl-N8}
\begin{tabular}{lcccc}
\toprule
Model (layer) & CSQA (1221) & GSM8K (1319) & ScienceQA (2224) & StrategyQA (687) \\
\midrule
Llama-3.2-1B (L10) & $70.4_{\pm2.6}$\dlt{-1.0} & $41.0_{\pm2.7}$\dlt{-0.1} & $48.7_{\pm2.1}$\dlt{-0.4} & $50.8_{\pm3.7}$\dlt{-0.6} \\
Llama-3.2-3B (L16) & $79.7_{\pm2.3}$\dlt{-1.3} & $49.8_{\pm2.7}$\dlt{0.7} & $63.4_{\pm2.0}$\dlt{0.1} & $62.6_{\pm3.6}$\dlt{0.3} \\
Qwen3-0.6B (L14)   & $65.1_{\pm2.7}$\dlt{-1.3} & $49.5_{\pm2.7}$\dlt{0.1} & $55.9_{\pm2.1}$\dlt{0.6} & $54.2_{\pm3.7}$\dlt{-0.4} \\
Qwen3-1.7B (L14)   & $77.9_{\pm2.3}$\dlt{-0.5} & $65.2_{\pm2.6}$\dlt{-0.5} & $63.1_{\pm2.0}$\dlt{-1.0} & $59.3_{\pm3.7}$\dlt{-1.0} \\
Qwen3-4B (L19)     & $81.3_{\pm2.2}$\dlt{-1.7} & $81.6_{\pm2.1}$\dlt{-0.5} & $72.3_{\pm1.9}$\dlt{0.0} & $68.0_{\pm3.5}$\dlt{-0.7} \\
\bottomrule
\end{tabular}
\end{table}

\begin{table}[h]
\centering
\footnotesize
\caption{Ablation: sampling temperature $\tau{=}0$ (argmax rollouts collapse to a single candidate). Parenthesized values are deltas (pp) vs the non-ablated DLR baseline in Table~\ref{tab:main}.}
\label{tab:abl-temp0}
\begin{tabular}{lcccc}
\toprule
Model (layer) & CSQA (1221) & GSM8K (1319) & ScienceQA (2224) & StrategyQA (687) \\
\midrule
Llama-3.2-1B (L10) & $67.4_{\pm2.6}$\dlt{-4.0} & $37.6_{\pm2.6}$\dlt{-3.5} & $36.5_{\pm2.0}$\dlt{-12.6} & $47.1_{\pm3.7}$\dlt{-4.3} \\
Llama-3.2-3B (L16) & $78.1_{\pm2.3}$\dlt{-2.9} & $44.8_{\pm2.7}$\dlt{-4.3} & $54.7_{\pm2.1}$\dlt{-8.6} & $51.4_{\pm3.7}$\dlt{-10.9} \\
Qwen3-0.6B (L14)   & $62.7_{\pm2.7}$\dlt{-3.7} & $45.5_{\pm2.7}$\dlt{-3.9} & $46.2_{\pm2.1}$\dlt{-9.1} & $45.8_{\pm3.7}$\dlt{-8.8} \\
Qwen3-1.7B (L14)   & $75.2_{\pm2.4}$\dlt{-3.2} & $60.1_{\pm2.6}$\dlt{-5.6} & $55.3_{\pm2.1}$\dlt{-8.8} & $49.6_{\pm3.7}$\dlt{-10.7} \\
Qwen3-4B (L19)     & $79.8_{\pm2.3}$\dlt{-3.2} & $73.9_{\pm2.4}$\dlt{-8.2} & $56.8_{\pm2.1}$\dlt{-15.5} & $62.2_{\pm3.6}$\dlt{-6.5} \\
\bottomrule
\end{tabular}
\end{table}

\begin{table}[h]
\centering
\footnotesize
\caption{Ablation: sampling temperature $\tau{=}2$ (rollouts flattened towards uniform). Parenthesized values are deltas (pp) vs the non-ablated DLR baseline in Table~\ref{tab:main}.}
\label{tab:abl-temp2}
\begin{tabular}{lcccc}
\toprule
Model (layer) & CSQA (1221) & GSM8K (1319) & ScienceQA (2224) & StrategyQA (687) \\
\midrule
Llama-3.2-1B (L10) & $70.7_{\pm2.6}$\dlt{-0.7} & $31.5_{\pm2.5}$\dlt{-9.6} & $47.8_{\pm2.1}$\dlt{-1.3} & $48.9_{\pm3.7}$\dlt{-2.5} \\
Llama-3.2-3B (L16) & $77.1_{\pm2.4}$\dlt{-3.9} & $39.3_{\pm2.6}$\dlt{-9.8} & $60.7_{\pm2.0}$\dlt{-2.6} & $59.2_{\pm3.7}$\dlt{-3.1} \\
Qwen3-0.6B (L14)   & $63.8_{\pm2.7}$\dlt{-2.6} & $42.1_{\pm2.7}$\dlt{-7.3} & $53.2_{\pm2.1}$\dlt{-2.1} & $51.6_{\pm3.7}$\dlt{-3.0} \\
Qwen3-1.7B (L14)   & $74.4_{\pm2.4}$\dlt{-4.0} & $59.6_{\pm2.6}$\dlt{-6.1} & $60.3_{\pm2.0}$\dlt{-3.8} & $57.3_{\pm3.7}$\dlt{-3.0} \\
Qwen3-4B (L19)     & $80.2_{\pm2.2}$\dlt{-2.8} & $73.9_{\pm2.4}$\dlt{-8.2} & $69.0_{\pm1.9}$\dlt{-3.3} & $66.1_{\pm3.5}$\dlt{-2.6} \\
\bottomrule
\end{tabular}
\end{table}

\begin{table}[h]
  \centering
  \small
  \setlength{\tabcolsep}{5pt}
  \begin{tabular}{lccccccccc}
    \toprule
    Model       & L1 & L4 & L6 & L8 & L10 & L12 & L14 & L16 & L18--L24 \\
    \midrule
    Qwen3-0.6B  & \textbf{55.3} & 50.0 & ---  & 47.2 & 48.5 & 48.5 & 49.2 & 50.2 & 50.2 / 49.6 / 49.9 \\
    Qwen3-1.7B  & \textbf{64.1} & 58.9 & ---  & 60.9 & 61.5 & 60.7 & 62.8 & 61.9 & 59.9 / 62.6 / 62.9 \\
    Llama3.2-1B & 46.7          & 41.9 & 46.3 & 46.0 & 47.0 & 42.4 & \textbf{49.1} & ---  & --- \\
    Llama3.2-3B & ---           & ---  & ---  & ---  & 33.0 & 32.0 & 58.0 & \textbf{63.3} & 62.0 / 61.0 \\
    \bottomrule
  \end{tabular}
  \caption{SciQA accuracy (\%) across steering layers for the smaller checkpoints. The L18--L24 column lists L18\,/\,L20\,/\,L24 in order (Llama3.2-3B reports L18\,/\,L20).}
  \label{tab:layer-sweep-small}
\end{table}

\begin{table}[h]
  \centering
  \small
  \setlength{\tabcolsep}{5pt}
  \begin{tabular}{lccccccccc}
    \toprule
    Model    & L4   & L12  & L22  & L24           & L26  & L28  & L30  & L32  & L34  \\
    \midrule
    Qwen3-4B & 69.1 & 67.6 & \textbf{72.3} & 69.9          & 72.2 & 65.8 & 67.9 & 66.9 & 67.0 \\
    Qwen3-8B & 71.5 & 78.3 & 79.0          & \textbf{80.6} & 80.1 & 78.5 & 75.2 & 77.2 & 74.6 \\
    \bottomrule
  \end{tabular}
  \caption{SciQA accuracy (\%) across steering layers for the larger Qwen checkpoints (sweep covers deeper layers than Table~\ref{tab:layer-sweep-small}).}
  \label{tab:layer-sweep-large}
\end{table}

\clearpage
\section{Global ablation: full results}
\label{app:global-ablation-full}

Table~\ref{tab:global-ablation-full} extends the Qwen-only main-text
Table~\ref{tab:global-ablation} with the two Llama checkpoints. Across all six
models, zeroing the steering scale or replacing routed codes with random
alternatives reduces SciQA accuracy, confirming that the learned routing
mechanism contributes directly to downstream performance regardless of model
family.

\begin{table}[h]
  \centering
  \small
  \setlength{\tabcolsep}{8pt}
  \begin{tabular}{lrrr}
    \toprule
    Model       & steered  & scale$\to$0        & rand.\ replace     \\
    \midrule
    Qwen3-0.6B  & $55.3\%$ & \dcell{2}{$-6.2$}  & \dcell{1}{$-4.8$}  \\
    Qwen3-1.7B  & $64.1\%$ & \dcell{2}{$-6.4$}  & \dcell{2}{$-5.7$}  \\
    Qwen3-4B    & $72.3\%$ & \dcell{4}{$-12.2$} & \dcell{3}{$-8.1$}  \\
    Qwen3-8B    & $80.6\%$ & \dcell{5}{$-17.4$} & \dcell{4}{$-11.7$} \\
    Llama3.2-1B & $49.1\%$ & \dcell{4}{$-12.9$} & \dcell{4}{$-11.3$} \\
    Llama3.2-3B & $63.3\%$ & \dcell{2}{$-3.4$}  & \dcell{2}{$-2.6$}  \\
    \bottomrule
  \end{tabular}
  \caption{Global perturbation: SciQA $\Delta$acc (pp) across all six checkpoints (extended Table~\ref{tab:global-ablation}).}
  \label{tab:global-ablation-full}
\end{table}

\clearpage
\section{Single-code ablation case studies (full traces)}
\label{app:case-studies}

For each (model, topic, code) cell highlighted in
\S\ref{sec:necessity} we include three independent samples that flip from
incorrect to correct under single-code ablation. Each page shows the
(truncated) baseline and ablated responses side-by-side, with the
per-chunk routing-code strip beneath each response: prefill chunks (over
the question) and decode chunks (over the response). The ablated code is
highlighted wherever it fired in the baseline; the swap means it never
appears in the ablated strip. The leading constant-code run from batch
left-padding is dropped so prefill strips align with actual question
tokens.

% =============================================================
\subsection*{Qwen3-0.6B \textbullet\ biology \textbullet\ ablating code 0}

\paragraph{Mechanism.} The question asks for a \emph{genotype symbol} (e.g.\ \texttt{ff}, \texttt{WW}). With code~0 active the baseline writes the everyday \emph{phenotype name} (e.g.\ \emph{brown fur}). Ablating code~0 lets the symbolic answer win.

% --- sample 235 -----------------------------------------------
\medskip
\caseheader{235}{Algernon's genotype is \{X\}}{brown fur}{ff}
\casediv{65}{6}{1334}{2039}{8}{127}

\begin{caseqbox}
Based on this information, what is Algernon's genotype for the fur color gene?\\[2pt]
\texttt{A) brown fur}\\
\texttt{\textcolor{caseRightBd}{\textbf{B) ff}}}
\end{caseqbox}

\begin{caseboxwrong}
\panehead{caseWrongBd}{BASELINE (steered, code 0 ON)}{A}{\ding{55}}
\casepre{\dots\ rnon has two alleles for the fur color gene, one allele of brown and one allele of black. So, Algernon's genotype for the fur color gene is }\casemark{}\textbf{brown fur.}\\
\textbf{\#\#\#\# A}
\par\medskip\hrule\medskip
\twostripmark{0}{1,1,4,1,3,0,10,22}{2,2,10,1,5,1,2,5,0,2,10,10,0,2,2,2,0,3,4,0,1,0,3,5,0,0,2,0,10,0,5,0,2,2,4,10,2,0,2,5,0,15,2,0,5,3,2,1,5,1,2,2,3,0,0,5,5,2,2,5,1,4,4,4,15,10,10,4,4,15,15,5,15,15,15,5,2,15,5,7}
\end{caseboxwrong}

\medskip
\begin{caseboxright}
\panehead{caseRightBd}{ABLATE CODE 0 (code 0 OFF)}{B}{\ding{51}}
\casepre{\dots\ rnon has two alleles for the fur color gene, one allele of brown and one allele of black. So, Algernon's genotype for the fur color gene is }\casemark{}\textbf{ff.}\\
\textbf{\#\#\#\# B) ff}
\par\medskip\hrule\medskip
\twostripplain{1,1,4,1,3,5,10,22}{2,2,10,1,5,1,2,5,11,2,10,10,21,2,2,2,9,3,4,3,1,23,3,5,8,30,2,26,10,17,5,4,2,2,4,10,2,26,2,5,13,15,2,6,5,3,2,1,5,1,2,2,3,22,29,5,5,2,2,5,1,4,4,4,15,10,10,4,15,15,15,5,2,15,5,1,2,15,5,2}
\end{caseboxright}

% --- sample 717 -----------------------------------------------
\medskip
\caseheader{717}{Chitters's genotype is \{X\}}{white legs}{WW}
\casediv{52}{8}{1242}{2392}{10}{128}

\begin{caseqbox}
Based on this information, what is Chitters's genotype for the leg color gene?\\[2pt]
\texttt{\textcolor{caseRightBd}{\textbf{A) LL}}}\\
\texttt{B) white legs}
\end{caseqbox}

\begin{caseboxwrong}
\panehead{caseWrongBd}{BASELINE (steered, code 0 ON)}{B}{\ding{55}}
\casepre{\dots\ r trait. An organism's genotype for a gene is its combination of alleles for that gene. Chitters has two alleles for white legs, so Chitters}\casemark{}\textbf{'s genotype for the leg color gene is white legs.}\\
\textbf{\#\#\#\# B}
\par\medskip\hrule\medskip
\twostripmark{0}{0,7,0,7,1,5,15,22}{1,3,2,1,5,1,1,5,2,3,0,10,4,5,0,1,0,0,1,2,4,2,0,5,2,0,5,0,10,0,5,0,1,5,0,2,0,1,0,5,2,1,15,0,4,10,3,1,2,0,0,4,0,2,0,0,10,2,1,2,1,3,1,2,2,5,5,1,2,5}
\end{caseboxwrong}

\medskip
\begin{caseboxright}
\panehead{caseRightBd}{ABLATE CODE 0 (code 0 OFF)}{A}{\ding{51}}
\casepre{\dots\ r trait. An organism's genotype for a gene is its combination of alleles for that gene. Chitters has two alleles for white legs, so Chitters}\casemark{}\textbf{ has a genotype of WW for the leg color gene.}\\
\textbf{\#\#\#\# A}
\par\medskip\hrule\medskip
\twostripplain{29,7,28,7,1,5,15,22}{1,3,2,1,5,1,1,5,2,3,9,10,4,5,13,1,28,28,1,2,4,2,15,5,2,4,5,23,10,20,5,20,1,5,7,2,22,1,5,5,2,1,15,30,4,10,3,1,2,29,9,4,26,2,4,11,10,2,1,2,1,3,1,7,1,5,5,1,2,1,2,10,30,15,1,1,15,1,5,15}
\end{caseboxright}

% --- sample 790 -----------------------------------------------
\medskip
\caseheader{790}{Wishbone's genotype is \{X\}}{short fur}{FF}
\casediv{46}{8}{980}{2118}{10}{128}

\begin{caseqbox}
Based on this information, what is Wishbone's genotype for the fur length gene?\\[2pt]
\texttt{\textcolor{caseRightBd}{\textbf{A) FF}}}\\
\texttt{B) short fur}
\end{caseqbox}

\begin{caseboxwrong}
\panehead{caseWrongBd}{BASELINE (steered, code 0 ON)}{B}{\ding{55}}
\casepre{\dots\ FF, Ff, or ff for the flower color gene. An organism's phenotype for a trait is its observable version of that trait, which depends on the }\casemark{}\textbf{organism's combination of alleles. \dots\ Wishbone has two alleles for fur length, one allele for short fur, and one allele for long fur. So, Wishbone's genotype for the fur length gene is short fur.}\\
\textbf{\#\#\#\# B}
\par\medskip\hrule\medskip
\twostripmark{0}{0,7,0,7,0,5,15,22}{1,3,2,1,5,1,1,5,2,3,0,10,4,5,0,1,0,0,1,2,4,2,0,5,2,0,5,0,10,0,5,0,1,5,0,2,0,1,0,5,2,1,15,0,4,10,3,1,2,0,0,4,0,2,0,0,10,2,1,2,1,4,2,2,0,5,10,4,15,15,2,15,7,0,5,1,15,2,5,15}
\end{caseboxwrong}

\medskip
\begin{caseboxright}
\panehead{caseRightBd}{ABLATE CODE 0 (code 0 OFF)}{A}{\ding{51}}
\casepre{\dots\ FF, Ff, or ff for the flower color gene. An organism's phenotype for a trait is its observable version of that trait, which depends on the }\casemark{}\textbf{combination of alleles that the organism has for that gene. \dots\ a wishbone may have a genotype of FF for the fur length gene. This means that the wishbone has two alleles for short fur, or two short fur alleles.}\\
\textbf{\#\#\#\# A}
\par\medskip\hrule\medskip
\twostripplain{27,7,10,7,31,5,15,22}{1,3,2,1,5,1,1,5,2,3,26,10,4,5,13,1,21,16,1,2,4,2,26,5,2,3,5,16,10,11,5,26,1,5,6,2,22,1,30,5,2,1,15,23,4,10,3,1,2,19,1,12,5,16,19,15,5,2,1,2,5,4,7,1,1,10,7,2,15,5,15,7,2,10,15,4,15,5,3,5}
\end{caseboxright}

% =============================================================
\clearpage
\subsection*{Qwen3-1.7B \textbullet\ writing-strategies \textbullet\ ablating code 0}

\paragraph{Mechanism.} The question asks ``which sentence has the writing problem?''. With code~0 active the baseline picks the \emph{clean} sentence; ablating code~0 lets the model pick the \emph{flawed} sentence --- which is what the question actually asked for.

% --- sample 156 -----------------------------------------------
\medskip
\caseheader{156}{pick the flawed sentence}{clean (B)}{flawed (A)}
\casediv{29}{1}{632}{2143}{1}{128}

\begin{caseqbox}
Which of the following contains a vague pronoun reference?\\[2pt]
\texttt{\textcolor{caseRightBd}{\textbf{A) \dots\ She said that he is her favorite player.}}}\\
\texttt{B) \dots\ She said that Mays is her favorite player.}
\end{caseqbox}

\begin{caseboxwrong}
\panehead{caseWrongBd}{BASELINE (steered, code 0 ON)}{B}{\ding{55}}
\casepre{\dots\ ief. 2.\ Rewrite the sentence: Lisa breathed a sigh of relief when she and Kim finally landed. A vague pronoun reference also occurs when the}\casemark{}\textbf{y, them, their, theirs, it, or its is used without its antecedent. \dots\ The first answer choice contains a vague pronoun reference. The pronoun he is used without its antecedent. \dots\ He has been replaced with Mays. \dots\ She said that Mays is her favorite player.}\\
\textbf{\#\#\#\# B}
\par\medskip\hrule\medskip
\twostripmark{0}{3,4,3,1,23,3,4,23,0,1,0,1,1,0,0,1,0,0,1,0,0,0}{1,0,3,0,0,3,1,4,14,1,4,23,0,0,0,0,3,0,0,0,0,1,4,4,1,1,4,14,4,1,0,0,14,0,3,1,3,3,4,1,4,0,1,4,1,0,0,0,0,0,3,14,0,1,0,23,23,4,3,0,1,0,0,0,4,14,0,14,0,1,1,0,14,1,1,0,3,3,3,3}
\end{caseboxwrong}

\medskip
\begin{caseboxright}
\panehead{caseRightBd}{ABLATE CODE 0 (code 0 OFF)}{A}{\ding{51}}
\casepre{\dots\ ief. 2.\ Rewrite the sentence: Lisa breathed a sigh of relief when she and Kim finally landed. A vague pronoun reference also occurs when the}\casemark{}\textbf{ antecedents for a pronoun are not clearly stated. \dots\ Choice A: \dots\ She said that he is her favorite player. The pronoun he could refer to Willie Mays or Joe DiMaggio. \dots\ The correct answer is therefore A.}\\
\textbf{\#\#\#\# A}
\par\medskip\hrule\medskip
\twostripplain{3,4,3,1,23,3,4,23,25,1,4,1,1,20,26,1,9,30,1,18,23,26}{1,18,3,17,3,3,1,4,14,1,4,23,10,6,6,3,3,27,17,27,22,1,4,4,1,1,4,14,4,1,3,4,14,19,3,4,1,6,24,4,14,1,3,3,1,2,2,23,13,12,29,16,4,13,1,1,1,3,22,25,6,1,14,3,3,3,4,14,1,1,28,1,1,1,1,1,1,18,23,1}
\end{caseboxright}

% --- sample 696 -----------------------------------------------
\medskip
\caseheader{696}{pick the flawed sentence}{clean (A)}{flawed (B)}
\casediv{31}{0}{629}{2056}{0}{128}

\begin{caseqbox}
Which of the following contains a vague pronoun reference?\\[2pt]
\texttt{A) \dots\ Ariel requested a permit to have the branches removed.}\\
\texttt{\textcolor{caseRightBd}{\textbf{B) \dots\ Ariel requested a permit to have them removed.}}}
\end{caseqbox}

\begin{caseboxwrong}
\panehead{caseWrongBd}{BASELINE (steered, code 0 ON)}{A}{\ding{55}}
\casepre{\dots\ relief. 2.\ Rewrite the sentence: Lisa breathed a sigh of relief when she and Kim finally landed. A vague pronoun reference also occurs when }\casemark{}\textbf{they, them, their, theirs, it, or its is used without its antecedent. \dots\ The second answer choice contains a vague pronoun reference. \dots\ The first answer choice shows a possible correction \dots\ Ariel requested a permit to have the branches removed.}\\
\textbf{\#\#\#\# A}
\par\medskip\hrule\medskip
\twostripmark{0}{3,4,3,1,4,0,0,0,1,1,0,0,1,3}{0,1,3,0,0,0,1,4,1,1,4,3,4,0,0,0,4,3,0,14,14,13,0,0,1,0,14,0,0,4,8,1,0,0,1,3,0,1,3,1,1,1,0,0,0,0,4,14,0,4,4,0,1,0,0,0,14,4,0,0,1,1,0,23,23,0,14,0,0,0,0,1,0,1,1,1,14,1,14,3}
\end{caseboxwrong}

\medskip
\begin{caseboxright}
\panehead{caseRightBd}{ABLATE CODE 0 (code 0 OFF)}{B}{\ding{51}}
\casepre{\dots\ relief. 2.\ Rewrite the sentence: Lisa breathed a sigh of relief when she and Kim finally landed. A vague pronoun reference also occurs when }\casemark{}\textbf{a pronoun is placed between two other words, and its meaning is unclear. \dots\ The first version shows the correct meaning of the sentence. The second version contains a vague pronoun reference. The pronoun them is unclear because it could refer to the branches or to the power lines.}\\
\textbf{\#\#\#\# B}
\par\medskip\hrule\medskip
\twostripplain{3,4,3,1,4,20,26,9,1,1,30,18,1,3}{16,1,3,30,23,28,1,4,1,1,4,3,4,18,26,7,4,3,3,14,14,13,31,26,1,7,14,14,28,4,8,1,2,23,3,1,3,12,1,1,1,23,3,3,17,22,14,3,9,15,4,4,1,31,1,1,1,1,28,23,13,1,23,8,23,1,1,23,23,1,4,9,29,18,1,1,11,14,3,23}
\end{caseboxright}

% --- sample 920 -----------------------------------------------
\medskip
\caseheader{920}{pick the flawed sentence}{clean (A)}{flawed (B)}
\casediv{58}{0}{1139}{1950}{0}{128}

\begin{caseqbox}
Which of the following contains a vague pronoun reference?\\[2pt]
\texttt{A) \dots\ Lauren requested a permit to have the branches removed.}\\
\texttt{\textcolor{caseRightBd}{\textbf{B) \dots\ Lauren requested a permit to have them removed.}}}
\end{caseqbox}

\begin{caseboxwrong}
\panehead{caseWrongBd}{BASELINE (steered, code 0 ON)}{A}{\ding{55}}
\casepre{\dots\ five minutes. The second answer choice contains a vague pronoun reference. The pronoun them could refer to the branches or the power lines.}\casemark{}\textbf{Since the branches had grown over the power lines, Lauren requested a permit to have them removed. The first answer choice shows a possible correction \dots\ Lauren requested a permit to have the branches removed.}\\
\textbf{\#\#\#\# A}
\par\medskip\hrule\medskip
\twostripmark{0}{3,4,3,1,4,0,0,0,1,1,0,0,1,3}{0,0,3,0,0,0,1,4,1,1,4,3,4,0,0,0,4,3,0,14,14,13,0,0,1,0,14,0,0,4,8,1,0,0,1,3,0,1,3,1,1,1,0,0,0,0,23,14,0,4,4,0,1,0,0,0,14,4,0,0,1,1,0,23,23,0,14,0,0,0,0,1,0,1,1,1,14,1,14,3}
\end{caseboxwrong}

\medskip
\begin{caseboxright}
\panehead{caseRightBd}{ABLATE CODE 0 (code 0 OFF)}{B}{\ding{51}}
\casepre{\dots\ five minutes. The second answer choice contains a vague pronoun reference. The pronoun them could refer to the branches or the power lines.}\casemark{}\textbf{The first answer choice shows a possible correction for the vague pronoun reference. \dots\ Lauren requested a permit to have them removed.}\\
\textbf{\#\#\#\# B}
\par\medskip\hrule\medskip
\twostripplain{3,4,3,1,4,30,26,27,1,1,10,31,1,3}{25,26,3,28,11,16,1,4,1,1,4,3,4,29,23,18,4,3,7,14,14,13,17,29,1,30,14,8,26,4,8,1,3,1,1,3,22,1,3,1,1,1,9,16,19,7,23,14,6,4,4,16,1,13,9,1,14,4,25,12,1,1,20,10,4,14,3,20,14,1,23,1,1,14,3,3,3,3,3,6}
\end{caseboxright}

% =============================================================
\clearpage
\subsection*{Qwen3-4B \textbullet\ chemistry \textbullet\ ablating code 0}

\paragraph{Mechanism.} Code~0 makes the model classify changes by how they \emph{look}, not by what is happening chemically. Soda fizzing looks like a reaction but is just CO\textsubscript{2} escaping (physical); snails growing shells looks like growth but is calcium-carbonate deposition (chemical). Ablating code~0 forces the model to look at the actual chemistry.

% --- sample 1078 ----------------------------------------------
\medskip
\caseheader{1078}{classify the change}{physical (looks like growth)}{chemical (CaCO\textsubscript{3} forms)}
\casediv{1}{1}{1}{2446}{1}{127}

\begin{caseqbox}
What do these two changes have in common?\\
\texttt{snails growing shells; photosynthesis}\\[2pt]
\texttt{A) Both are caused by heating.}\\
\texttt{\textcolor{caseRightBd}{\textbf{B) Both are chemical changes.}}}\\
\texttt{C) Both are only physical changes.}\\
\texttt{D) Both are caused by cooling.}
\end{caseqbox}

\begin{caseboxwrong}
\panehead{caseWrongBd}{BASELINE (steered, code 0 ON)}{C}{\ding{55}}
\casemark{}\textbf{C}\\
\textbf{\#\#\#\# C}
\par\medskip\hrule\medskip
\twostripmark{0}{0,0,22,11,9,0,11,0,0,0,0,20,9}{9}
\end{caseboxwrong}

\medskip
\begin{caseboxright}
\panehead{caseRightBd}{ABLATE CODE 0 (code 0 OFF)}{B}{\ding{51}}
\casemark{}\textbf{B}\\
\textbf{\#\#\#\# B}
\par\medskip\hrule\medskip
\twostripplain{25,4,22,11,9,20,11,26,9,30,18,20,9}{9}
\end{caseboxright}

% --- sample 943 -----------------------------------------------
\medskip
\caseheader{943}{classify the change}{physical (looks like cleaning)}{chemical (tarnish reacts)}
\casediv{8}{1}{205}{2454}{1}{127}

\begin{caseqbox}
What do these two changes have in common?\\
\texttt{baking an apple pie; using polish to remove tarnish from a silver spoon}\\[2pt]
\texttt{A) Both are caused by cooling.}\\
\texttt{B) Both are caused by heating.}\\
\texttt{C) Both are only physical changes.}\\
\texttt{\textcolor{caseRightBd}{\textbf{D) Both are chemical changes.}}}
\end{caseqbox}

\begin{caseboxwrong}
\panehead{caseWrongBd}{BASELINE (steered, code 0 ON)}{C}{\ding{55}}
\casepre{\dots\ can change. In a chemical change, the type of matter changes. The types of matter before and after a chemical change are always different.}\casemark{}\textbf{Some chemical changes are caused by heating or cooling. \dots\ Baking an apple pie is a chemical change. \dots\ Using polish to remove tarnish from a silver spoon is a physical change. The polish reacts with the tarnish and removes it. The silver is not changed. \dots\ Both changes are physical changes.}\\
\textbf{\#\#\#\# C}
\par\medskip\hrule\medskip
\twostripmark{0}{8,14,1,11,0,23,9,0,11,0,13,0,13,1,9}{13,0,12,1,14,12,1,8,0,1,0,12,17,2,1,4,12,0,17,0,2,0,4,1,14,12,14,8,0,9,17,14,0,18,13,0,0,0,20,1,0,14,2,8,0,13,12,0,14,0,8,2,8,19,11,0,9,2,4,1,17,12,0,14,2,1,20,0,4,20,0,1,0,8,14,0,4,0,1,8}
\end{caseboxwrong}

\medskip
\begin{caseboxright}
\panehead{caseRightBd}{ABLATE CODE 0 (code 0 OFF)}{D}{\ding{51}}
\casepre{\dots\ can change. In a chemical change, the type of matter changes. The types of matter before and after a chemical change are always different.}\casemark{}\textbf{Burning a piece of paper is a chemical change. \dots\ The common thing about baking an apple pie and using polish to remove tarnish from a silver spoon is that both are chemical changes. \dots\ The tarnish is changed by the polish.}\\
\textbf{\#\#\#\# D}
\par\medskip\hrule\medskip
\twostripplain{8,14,1,11,10,23,9,31,11,16,13,12,13,1,9}{13,17,12,1,14,12,1,8,9,17,2,1,24,1,22,17,4,17,14,2,8,8,1,15,2,4,2,18,30,30,24,1,8,2,4,21,1,17,2,8,28,1,8,2,14,18,28,12,23,20,12,8,8,2,23,12,20,1,20,8,9,8,10,17,17,18,4,1,14,1,8,13,12,8,1,2,4,26,1,11}
\end{caseboxright}

% --- sample 1836 ----------------------------------------------
\medskip
\caseheader{1836}{classify the change}{chemical (looks like reaction)}{physical (CO\textsubscript{2} escapes)}
\casediv{3}{3}{54}{2491}{4}{128}

\begin{caseqbox}
Complete the sentence. \texttt{A can of soda fizzing over is a ().}\\[2pt]
\texttt{A) chemical change}\\
\texttt{\textcolor{caseRightBd}{\textbf{B) physical change}}}
\end{caseqbox}

\begin{caseboxwrong}
\panehead{caseWrongBd}{BASELINE (steered, code 0 ON)}{A}{\ding{55}}
\casepre{A chemical change is when a substance changes into a }\casemark{}\textbf{different substance. A physical change is when a substance changes but does not change into a different substance. A can of soda fizzing over is a chemical change. The soda inside the can is a different substance than the soda that is outside the can.}\\
\textbf{\#\#\#\# A}
\par\medskip\hrule\medskip
\twostripmark{0}{14,0,1,0,1,0,9}{1,2,14,1,0,0,23,9,1,0,1,2,1,2,12,9,13}
\end{caseboxwrong}

\medskip
\begin{caseboxright}
\panehead{caseRightBd}{ABLATE CODE 0 (code 0 OFF)}{B}{\ding{51}}
\casepre{A chemical change is when a substance changes into a }\casemark{}\textbf{new substance. A physical change is when a substance changes into a different form, but it still has the same make-up as the original substance. \dots\ The soda is fizzy because it contains carbon dioxide. When the soda is opened, the carbon dioxide escapes from the soda. \dots\ A can of soda fizzing over is a physical change.}\\
\textbf{\#\#\#\# B}
\par\medskip\hrule\medskip
\twostripplain{14,19,1,29,1,30,9}{1,2,14,1,24,23,24,11,20,12,21,7,20,1,9,26,24,1,1,4,16,1,23,12,8,1,4,2,1,1,30,1,10,17,2,1,20,1,20,20,20,2,10,4,2,1,20,1,20,1,4,2,1,13,9,4}
\end{caseboxright}

\clearpage
\section{Arithmetic case study: interpretability analysis}
\label{app:arithmetic}

See Table \ref{tab:quirke-subtasks} and Figure \ref{fig:arithmetic-example} for a basic taxonomy of the problem.

\begin{table}[h]
  \centering\small
  \setlength{\tabcolsep}{6pt}
  \begin{tabular}{llp{7.8cm}}
    \toprule
    & Label & Condition at digit position $n$ \\
    \midrule
    \multirow{5}{*}{\rotatebox[origin=c]{90}{Addition\;}}
      & \textbf{SA} & $d_1{+}d_2 \leq 8$;\; no carry in or out \\
      & \textbf{SC} & $d_1{+}d_2 \geq 10$;\; generates a carry \\
      & \textbf{SS} & $d_1{+}d_2 = 9$;\; carry state \emph{uncertain} (cascade boundary) \\
      & \textbf{UC} & carry arrives from position $n{-}1$;\; answer digit depends on it \\
      & \textbf{US} & carry propagates through a run of SS positions (sum-of-9 cascade) \\
    \midrule
    \multirow{5}{*}{\rotatebox[origin=c]{90}{Subtraction\;}}
      & \textbf{MD} & $d_1 \geq d_2$;\; no borrow \\
      & \textbf{MB} & $d_1 < d_2$;\; generates a borrow \\
      & \textbf{ME} & $d_1 = d_2$;\; borrow state \emph{uncertain} \\
      & \textbf{UB} & borrow arrives from position $n{-}1$ \\
      & \textbf{UD} & borrow propagates through a run of ME positions \\
    \bottomrule
  \end{tabular}
  \caption{Per-digit subtask labels for six-digit addition and
    subtraction adapted from ~\citep{quirke_2024_addsub_preprint}.
    Cascades (US, UD) require tracking carry/borrow state across
    multiple positions and are the hardest splits.}
  \label{tab:quirke-subtasks}
\end{table}

% The figure for arithmetic example.
% fig_arithmetic_example.tex
% Usage in paper: \input{figures/fig_arithmetic_example/fig_arithmetic_example.tex}
% Required packages: tikz, xcolor
%
% Shows 959271 + 040756 = 1000027 (4-deep carry cascade).
% Token assignments from model add_sub_sorl_v1_abs30_K1_100K (K=1, abs30).

\begin{figure}[H]
\centering
\begin{tikzpicture}[
  % ── node styles ─────────────────────────────────────────────────────
  digit/.style={
    draw=#1!60!gray, fill=#1, rounded corners=2pt,
    minimum width=1.05cm, minimum height=0.62cm,
    font=\small\bfseries, inner sep=2pt, text=#1!20!black,
  },
  subtask/.style={
    draw=#1!60!gray, fill=#1, rounded corners=2pt,
    minimum width=1.05cm, minimum height=0.55cm,
    font=\footnotesize, inner sep=2pt, text=#1!20!black,
  },
  token/.style={
    draw=violet!55, fill=violet!8, rounded corners=2pt, dashed,
    minimum width=1.05cm, minimum height=0.55cm,
    font=\footnotesize\bfseries, inner sep=2pt, text=violet!70!black,
  },
  rowlabel/.style={font=\footnotesize\itshape, text=gray!70!black, anchor=east},
  poslabel/.style={font=\scriptsize, text=gray!60!black},
  carry/.style={->, >=stealth, thick, color=orange!70!red,
                shorten <=3pt, shorten >=3pt},
]

% ── colours (Quirke subtask families) ───────────────────────────────────
\colorlet{cSA}{green!22!white}
\colorlet{cSC}{yellow!48!white}
\colorlet{cUC}{blue!22!white}
\colorlet{cUS}{blue!36!white}

% ── column spacing ───────────────────────────────────────────────────────
\def\cs{1.45}   % inter-column distance (cm)

% ── data (d0 = MSB/overflow on left, d6 = LSB on right) ─────────────────
%   d0    d1    d2    d3    d4    d5    d6
%   ---   9     5     9     2     7     1     (Addend A)
%   ---   0     4     0     7     5     6     (Addend B)
%   1     0     0     0     0     2     7     (Answer)
%   UC    US    US    US    US    SC    SA    (Subtask)
%   t2    t2    t6    t2    t1    t16   t3    (SoRL token)

% ── position labels ──────────────────────────────────────────────────────
\foreach \i/\lbl in {0/$d_0$,1/$d_1$,2/$d_2$,3/$d_3$,4/$d_4$,5/$d_5$,6/$d_6$}{
  \node[poslabel] at (\i*\cs, 3.15) {\lbl};
}

% ── row labels ───────────────────────────────────────────────────────────
\node[rowlabel] at (-0.65, 2.5)  {Addend $A$};
\node[rowlabel] at (-0.65, 1.8)  {Addend $B$};
\node[rowlabel] at (-0.65, 0.85) {Answer};
\node[rowlabel] at (-0.65, 0.1)  {Subtask};
\node[rowlabel] at (-0.65,-0.65) {Code};

% ── operand digits (d1..d6; d0 is the overflow, has no operand digits) ───
\foreach \i/\d in {1/9,2/5,3/9,4/2,5/7,6/1}{
  \node[font=\small, text=gray!30!black] at (\i*\cs, 2.5) {\d};
}
\node[font=\small\bfseries, text=gray!50!black] at (-0.15*\cs, 1.8) {$+$};
\foreach \i/\d in {1/0,2/4,3/0,4/7,5/5,6/6}{
  \node[font=\small, text=gray!30!black] at (\i*\cs, 1.8) {\d};
}

% ── horizontal rule ──────────────────────────────────────────────────────
\draw[gray!50, thin] (-0.55*\cs, 1.38) -- (6.55*\cs, 1.38);

% ── answer digit boxes ───────────────────────────────────────────────────
\node[digit=cUC] (a0) at (0*\cs, 0.85) {1};
\node[digit=cUS] (a1) at (1*\cs, 0.85) {0};
\node[digit=cUS] (a2) at (2*\cs, 0.85) {0};
\node[digit=cUS] (a3) at (3*\cs, 0.85) {0};
\node[digit=cUS] (a4) at (4*\cs, 0.85) {0};
\node[digit=cSC] (a5) at (5*\cs, 0.85) {2};
\node[digit=cSA] (a6) at (6*\cs, 0.85) {7};

% ── subtask boxes ────────────────────────────────────────────────────────
\node[subtask=cUC] at (0*\cs, 0.1) {UC};
\node[subtask=cUS] at (1*\cs, 0.1) {US};
\node[subtask=cUS] at (2*\cs, 0.1) {US};
\node[subtask=cUS] at (3*\cs, 0.1) {US};
\node[subtask=cUS] at (4*\cs, 0.1) {US};
\node[subtask=cSC] at (5*\cs, 0.1) {SC};
\node[subtask=cSA] at (6*\cs, 0.1) {SA};

% ── SoRL token boxes ─────────────────────────────────────────────────────
\foreach \i/\t in {0/t2,1/t2,2/t6,3/t2,4/t1,5/t16,6/t3}{
  \node[token] at (\i*\cs, -0.65) {\texttt{\t}};
}

% ── carry arrows (cascade flows right→left: d5→d4→d3→d2→d1→d0) ──────────
\foreach \fr/\to in {5/4, 4/3, 3/2, 2/1, 1/0}{
  \draw[carry] (a\fr.west) -- (a\to.east);
}

% ── cascade bracket + label ──────────────────────────────────────────────
\draw[orange!60!red, thin]
  (a5.north west) -- ++(0, 0.22)
  -- (a0.north east) -- ++(0,-0.22);
\node[font=\scriptsize\itshape, text=orange!60!red]
  at (2.5*\cs, 1.35) {carry cascade};

% ── legend ───────────────────────────────────────────────────────────────
\matrix[
  matrix of nodes,
  nodes={font=\scriptsize, inner sep=2pt, anchor=west},
  row sep=2pt, column sep=6pt,
  anchor=north west,
] at (0, -1.05) {
  \node[digit=cSA, minimum width=0.45cm, minimum height=0.3cm,
        font=\scriptsize] {}; &
  \node {SA — simple add}; &
  \node[digit=cSC, minimum width=0.45cm, minimum height=0.3cm,
        font=\scriptsize] {}; &
  \node {SC — generates carry}; &
  \node[digit=cUC, minimum width=0.45cm, minimum height=0.3cm,
        font=\scriptsize] {}; &
  \node {UC — uses carry}; &
  \node[digit=cUS, minimum width=0.45cm, minimum height=0.3cm,
        font=\scriptsize] {}; &
  \node {US — cascade}; &
  \node[token, minimum width=0.45cm, minimum height=0.3cm,
        font=\scriptsize] {}; &
  \node[text=violet!70!black] {DLR code}; \\
};

\end{tikzpicture}
\caption{%
  Six-digit addition $959{,}271 + 040{,}756 = 1{,}000{,}027$, a four-deep
  carry cascade. At each answer-digit position DLR assigns one
  abstraction code (bottom row). Codes \texttt{t2} and \texttt{t6}
  cluster on cascade positions (UC/US); \texttt{t16} marks the carry
  source (SC); \texttt{t3} marks the trivial position (SA).
  Code assignments from model \texttt{add\_sub\_sorl\_v1\_abs30\_K1\_100K}
  (K=1, 30-code codebook).%
}
\label{fig:arithmetic-example}
\end{figure}

\paragraph{Setup.}
The main interpretability analyses use a 2-layer, 1-head, 128-dimensional transformer trained on 100K examples, evaluated on 2{,}600 held-out problems across 26 splits. This model achieves 95.5\% accuracy with DLR abstraction codes and 0.1\% without; making it the clearest test-bed for causal analysis. All results are reproducible from the released code. We report top level performance on other variants to show the DLR performance gains here are general.

\subsection{Performance: DLR vs.\ \sft{} on undersized architectures}
\label{app:performance}

Full performance results across all architectures and data sizes are in Table~\ref{tab:undersized-wins}. In general, DLR wins consistently, with margins increasing on harder splits of the data.

\begin{table}[ht]
  \centering\small
  \begin{tabular}{llrrrr}
    \toprule
    Architecture & Data & Baseline & DLR & Gap & C6 gap \\
    \midrule
    \texttt{1L/2H/256d} & 10K  & 10\% & \textbf{19\%} & \textcolor{green!50!black}{\textbf{+9\%}}  & \textcolor{green!50!black}{\textbf{+18\%}} \\
                        & 25K  & 32\% & 26\%          & $-7\%$                                    & \textcolor{green!50!black}{\textbf{+10\%}} \\
                        & 50K  & 44\% & \textbf{65\%} & \textcolor{green!50!black}{\textbf{+21\%}} & \textcolor{green!50!black}{\textbf{+34\%}} \\
                        & 100K & 49\% & \textbf{65\%} & \textcolor{green!50!black}{\textbf{+16\%}} & \textcolor{green!50!black}{\textbf{+31\%}} \\
    \midrule
    \texttt{1L/3H/510d} & 10K  & 36\% & \textbf{52\%} & \textcolor{green!50!black}{\textbf{+16\%}} & \textcolor{green!50!black}{\textbf{+30\%}} \\
                        & 25K  & 46\% & \textbf{60\%} & \textcolor{green!50!black}{\textbf{+14\%}} & \textcolor{green!50!black}{\textbf{+22\%}} \\
                        & 50K  & 53\% & \textbf{72\%} & \textcolor{green!50!black}{\textbf{+19\%}} & \textcolor{green!50!black}{\textbf{+38\%}} \\
                        & 100K & 67\% & \textbf{83\%} & \textcolor{green!50!black}{\textbf{+16\%}} & \textcolor{green!50!black}{\textbf{+26\%}} \\
    \midrule
    \texttt{2L/1H/128d} & 10K  & 16\% & \textbf{36\%} & \textcolor{green!50!black}{\textbf{+21\%}} & \textcolor{green!50!black}{\textbf{+39\%}} \\
                        & 25K  & 40\% & \textbf{55\%} & \textcolor{green!50!black}{\textbf{+15\%}} & \textcolor{green!50!black}{\textbf{+23\%}} \\
                        & 50K  & 59\% & \textbf{87\%} & \textcolor{green!50!black}{\textbf{+28\%}} & \textcolor{green!50!black}{\textbf{+50\%}} \\
                        & 75K  & 75\% & \textbf{87\%} & \textcolor{green!50!black}{\textbf{+12\%}} & \textcolor{green!50!black}{\textbf{+5\%}}  \\
                        & 100K & 73\% & \textbf{95\%} & \textcolor{green!50!black}{\textbf{+22\%}} & \textcolor{green!50!black}{\textbf{+33\%}} \\
    \bottomrule
  \end{tabular}
  \caption{DLR ($K{=}1$, $|\mathcal{A}|{=}30$) vs.\ \sft{} baseline on
    undersized architectures across data sizes.
    \textbf{Gap} = overall accuracy gain; \textbf{C6 gap} = gain on
    6-deep carry cascades (the hardest split).
    DLR wins in \textbf{12 of 13} (architecture, data-size) pairs;
    the single exception is \texttt{1L/2H/256d} at 25K, where the model
    is undertrained (accuracy still rising at epoch 20).
    DLR wins on C6 in \textbf{all 13} configurations.}
  \label{tab:undersized-wins}
\end{table}

\subsection{Causal ablations}
\label{app:causal}

To confirm that DLR codes are causally necessary (not merely correlated
with correct outputs), we run three intervention conditions on model
\texttt{2L/1H/128d} (100K), evaluated across 2{,}600 held-out problems:

\begin{itemize}[nosep]
  \item \textbf{Shuffle}: randomly permute all abstraction codes within
    each sequence (code identities preserved; positional assignment destroyed).
  \item \textbf{Random}: replace each code with a draw uniform over the
    30-code codebook (identity and position both destroyed).
  \item \textbf{Knockout}: replace every code with a fixed \texttt{[UNK]}
    embedding (strongest intervention; removes all information).
\end{itemize}

Table~\ref{tab:ablation-splits} shows per-split accuracy under each condition.

\begin{table}[h]
  \centering\small
  \begin{tabular}{llrrrr}
    \toprule
    Family & Split & Baseline & Shuffle & Random & Knockout \\
    \midrule
    \multirow{4}{*}{\textit{Addition (easy)}} & S0 (no carry) & 100\% & 24\% & 28\% & 0\% \\
     & S1 & 100\% & 17\% &  9\% & 0\% \\
     & S2 & 100\% & 22\% & 10\% & 0\% \\
     & random & 100\% & 26\% &  8\% & 0\% \\
    \midrule
    \multirow{4}{*}{\textit{Addition cascade (hard)}} & C3 (3-deep) & 96\% & 28\% & 14\% & 0\% \\
     & C4 (4-deep) & 99\% & 25\% & 13\% & 0\% \\
     & C5 (5-deep) & 99\% & 23\% & 19\% & 0\% \\
     & C6 (6-deep) & 97\% & 27\% & 15\% & 0\% \\
    \midrule
    \multirow{1}{*}{\textit{Subtraction (easy)}} & random & 100\% & 46\% & 12\% & 0\% \\
    \midrule
    \multirow{3}{*}{\textit{Subtraction cascade (hard)}} & M3 (3-deep borrow) & 100\% & 22\% &  1\% & 0\% \\
     & M4 (4-deep borrow) &  85\% &  6\% &  0\% & 0\% \\
     & M5 (5-deep borrow) &  57\% &  3\% &  0\% & 2\% \\
    \midrule
    \multicolumn{2}{l}{\textbf{Overall}} & \textbf{95.5\%} & 26.6\% & 12.3\% & 0.1\% \\
    \bottomrule
  \end{tabular}
  \caption{Per-split causal ablation (\texttt{2L/1H/128d}, 100K training examples).
    \textbf{Shuffle} preserves code identity but destroys positional assignment;
    \textbf{Random} destroys both; \textbf{Knockout} removes all code information.}
  \label{tab:ablation-splits}
\end{table}

\paragraph{Commentary.}
Knockout reduces accuracy to $\leq$2\% on every split, confirming that the model has offloaded computation into the routing codes.
Three patterns are notable:

\begin{itemize}[nosep]
  \item \textbf{Shuffle $>$ Random on easy splits.}
    On addition S0 (no carry), shuffle yields 24\% vs.\ random's 28\%; the gap is small and reflects that no single position is critical - any code in any position is roughly as bad as another.
  \item \textbf{Shuffle $>$ Random on cascade splits (C3-C6).}
    On 4--6-deep carry cascades, shuffle (23--28\%) consistently
    outperforms random (13--19\%).
    When codes are shuffled, a cascade position receives a code
    from another cascade position - a wrong code but from the
    ``right family'', producing a systematic one-off carry error.
    Random codes provide no structural signal at all, making
    cascade resolution impossible.
  \item \textbf{Borrow cascades are uniquely sensitive.}
    Sub-M4 (4-deep borrow) drops from 85\% baseline to 6\% under
    shuffle and 0\% under random - a 79\,pp collapse from shuffle
    alone. Sub-M5 (5-deep borrow) is hardest even at baseline (57\%),
    and all ablations reduce it to $\leq$3\%, showing that deep
    borrow cascades are the single hardest regime and that DLR
    codes are essential for solving them.
\end{itemize}

\begin{tcolorbox}[colback=gray!6, colframe=gray!40,
  fonttitle=\bfseries\small, title={Finding \#2},
  left=5pt, right=5pt, top=4pt, bottom=4pt]
\small
DLR abstraction codes are causally necessary: knockout collapses accuracy from 95.5\% to 0.1\% overall, and to $\leq$3\% on the hardest borrow-cascade splits (M4-M5).
Shuffle (identity-preserving, position-destroying) is more harmful than random on cascade splits - wrong-position codes from the same structural family cause systematic carry errors, while random codes cause broader incoherence.
\end{tcolorbox}

% ─────────────────────────────────────────────────────────────────────────────
\subsection{Code-subtask heatmap}
\label{app:heatmap}

Of the 30 codes in the codebook, 23 appear in the held-out evaluation set.
Each active code concentrates on a narrow slice of the subtask space: the dominant subtask accounts for ${\geq}70\%$ of that code's occurrences in the majority of cases.
Codes are also \emph{position-locked}: each code appears predominantly at one or two answer positions ($d_0$-$d_6$), rarely crossing position boundaries.
Representative examples are shown in Table~\ref{tab:code-profiles}:
code \texttt{t21} fires 93\% of the time on US (sum-9 cascade, addition)
with digit sum $\equiv 9 \pmod{10}$ in 95\% of cases;
code \texttt{t23} is the subtraction mirror (UD, 88\%, position $d_3$). See Figure \ref{fig:code-subtask}.

\begin{figure}[H]
  \centering
  \includegraphics[width=0.95\linewidth]{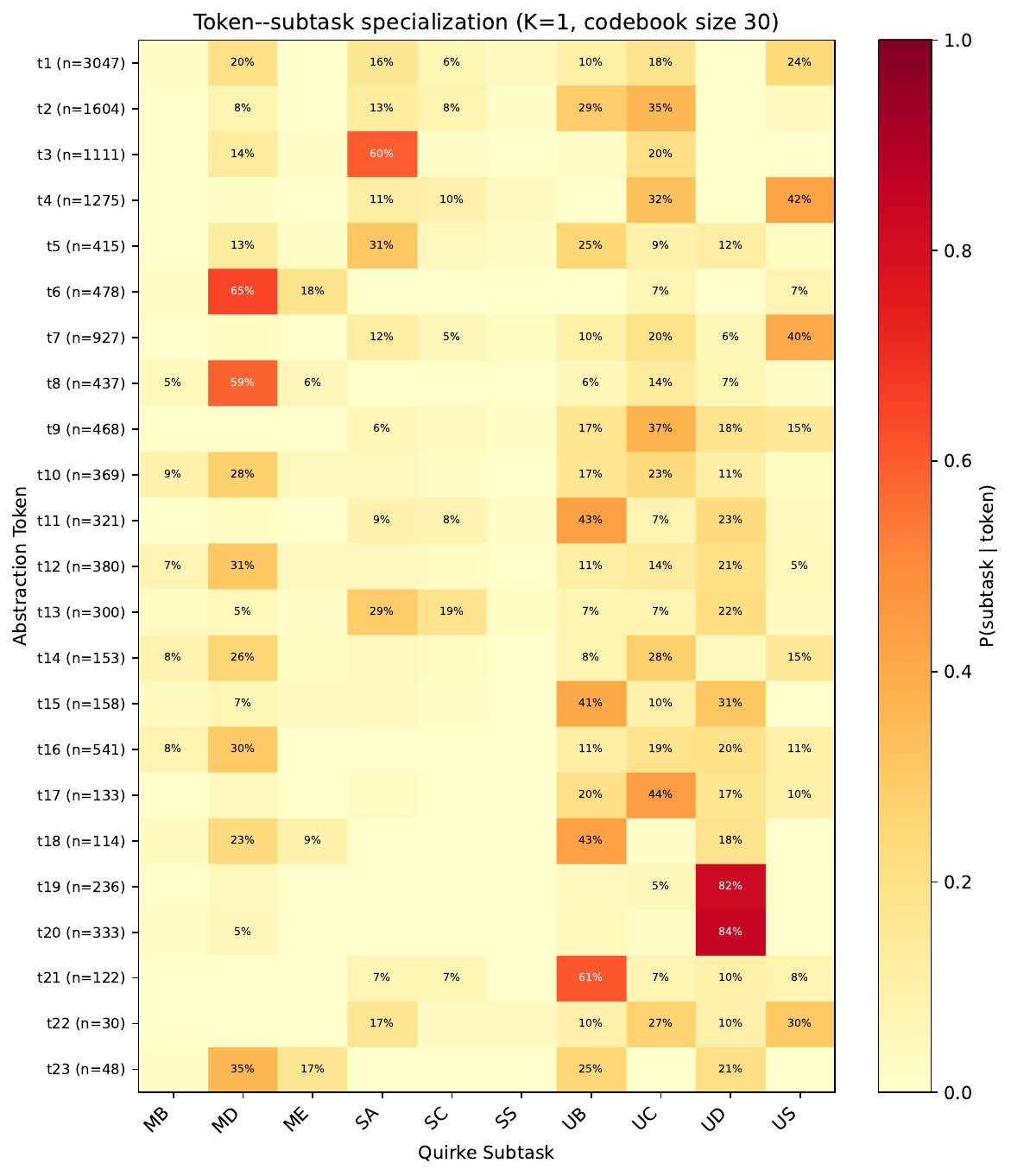}
  \caption{Code-subtask heatmap for \texttt{2L/1H/128d} (100K).
    Each cell shows $P(\text{subtask} \mid \text{code})$ - the fraction of that code's
    occurrences that fall into each Quirke subtask - over 2{,}600 held-out examples across all addition and subtraction splits.
    Rows are the 23 active codes (of 30 in the codebook), sorted by dominant subtask;
    columns are the 10 subtask labels (SA, SC, SS, UC, US for addition; MD, MB, ME, UB, UD for subtraction).
    High values appear in narrow bands: most codes fire predominantly on 1--2 subtasks,
    with the highest-purity codes (e.g.\ \texttt{t19}: UD 82\%, \texttt{t20}: UD 84\%)
    approaching single-subtask specialization.
    Codes at the bottom of the heatmap (e.g.\ \texttt{t1}, \texttt{t2}) are more
    polysemantic, spreading probability mass across multiple subtasks.}
  \label{fig:code-subtask}
\end{figure}

\begin{table}[ht]
  \centering\small
  \begin{tabular}{llrrll}
    \toprule
    Code & Pos & $n$ & Top subtask & Purity & Op \\
    \midrule
    \texttt{t21} & $d_3$ & 719 & US & 93\% & add (94\%) \\
    \texttt{t23} & $d_3$ & 687 & UD & 88\% & sub (93\%) \\
    \texttt{t6}  & $d_2$ & 1438 & US & 52\% & add (79\%), sum${\equiv}9$: 78\% \\
    \texttt{t7}  & $d_2$ & 1026 & UB/UD & 90\% & sub (100\%) \\
    \texttt{t14} & $d_4$ & 1242 & UD & 73\% & sub (81\%) \\
    \bottomrule
  \end{tabular}
  \caption{Selected code profiles. Each code specialises in a specific
    subtask at a specific answer position. ``Purity'' = fraction of
    occurrences where the top subtask applies.}
  \label{tab:code-profiles}
\end{table}

\begin{tcolorbox}[colback=gray!6, colframe=gray!40,
  fonttitle=\bfseries\small, title={Finding \#3},
  left=5pt, right=5pt, top=4pt, bottom=4pt]
\small
DLR spontaneously learns position-locked, subtask-specialised routing:
23 of 30 codebook codes are active; each concentrates on 1-2 of the 10 Quirke subtasks (purity ${\geq}70\%$ for most), and each is tied to one or two answer positions. The codebook partitions the arithmetic computation into an interpretable registry of specialist codes.
\end{tcolorbox}

% ─────────────────────────────────────────────────────────────────────────────
\subsection{Guided computation via code intervention}
\label{app:guided}

If DLR codes encode the \emph{computational route} rather than just the answer, swapping a code at a mispredicted position should \emph{fix} the prediction - without retraining, without accessing internal activations, and without modifying the weights.

We test this with \emph{surgical swap}: for each wrong prediction, we try replacing the abstraction code at each answer position with every other code in the codebook (29 candidates $\times$ 5 positions = 145 interventions per example) and measure how many wrong predictions become correct - and how many previously-correct predictions break.

\paragraph{Results.}
At positions $d_0$-$d_2$ (the carry-heavy positions), a fixing swap exists for 27-31\% of mispredicted examples.
The best single swap is replacing \texttt{t16} with \texttt{t25} at $d_1$: this fixes 10 wrong predictions while breaking only 5 correct ones (a 2:1 fix-to-break ratio), all on carry cascade splits (C4-C6).
Position $d_3$ and $d_4$ are harder to fix surgically (fix rates 8\% and 2\%), consistent with those positions encoding longer-range carry state that a single-position swap cannot resolve.

\begin{tcolorbox}[colback=gray!6, colframe=gray!40,
  fonttitle=\bfseries\small, title={Finding \#4},
  left=5pt, right=5pt, top=4pt, bottom=4pt]
\small
Code interventions enable \emph{guided computation}: replacing a single abstraction code in the sequence fixes a wrong prediction in 27-31\% of mispredicted examples at carry-heavy positions, with no weight updates and no access to internal activations.
This is only possible because the codes are a human-readable interface to the model's routing decisions.
\end{tcolorbox}

% ─────────────────────────────────────────────────────────────────────────────
\subsection{DLR codes recover circuits from Quirke et al.}
\label{app:quirke-analogy}

\citet{quirke_2024_addsub_preprint} identify, via PCA, ablation and activation patching of internal residual-stream activations, a \emph{tri-state carry classifier} at each digit position $n$: the hidden state encodes which of three carry regimes applies -
$\text{ST}_n = 0$ (digit sum $< 9$; carry cannot propagate),
$\text{ST}_n = 1$ (digit sum $> 9$; carry will propagate), or
$\text{ST}_n = U$ (digit sum $= 9$; carry state \emph{uncertain}, depends
on lower positions).
This trichotomy is the core circuit for addition; borrow cascades have an analogous structure for subtraction.

DLR recovers the same trichotomy \emph{without access to activation
data or ground-truth circuit labels}, purely from the info-gain training
signal.
The correspondence is visible directly in the codebook:

\begin{itemize}[nosep]
  \item $\text{ST}_n = U$ (sum-9 uncertain, addition) $\longleftrightarrow$
    \texttt{t21} at $d_3$ (US=93\%, sum${\equiv}9$: 95\%);
    \texttt{t6} at $d_2$ (US=52\%, sum${\equiv}9$: 78\%).
  \item $\text{ST}_n = U$ (borrow-uncertain, subtraction) $\longleftrightarrow$
    \texttt{t23} at $d_3$ (UD=88\%, sub=93\%);
    \texttt{t7} at $d_2$ (UB/UD=90\%, sub=100\%).
  \item $\text{ST}_n = 1$ (carry generated) $\longleftrightarrow$
    codes concentrated on SC/MB subtasks at each position.
  \item $\text{ST}_n = 0$ (simple digit, no carry) $\longleftrightarrow$
    codes concentrated on SA/MD subtasks.
\end{itemize}

Crucially, Quirke et al.\ required full activation-level mechanistic interpretability to discover this classifier; DLR surfaces it as a readable code in the output sequence, accessible without any post-hoc analysis.

\begin{tcolorbox}[colback=gray!6, colframe=gray!40,
  fonttitle=\bfseries\small, title={Finding \#5},
  left=5pt, right=5pt, top=4pt, bottom=4pt]
\small
DLR independently rediscovers the carry-state tri-classifier
($\text{ST}_n \in \{0, U, 1\}$) identified by \citet{quirke_2024_addsub_preprint}
via internal circuit analysis — with no access to ground-truth circuit labels.
The three carry regimes map onto disjoint code clusters (e.g.\
\texttt{t21}/\texttt{t6} for sum-9 uncertain in addition;
\texttt{t23}/\texttt{t7} for borrow-uncertain in subtraction),
and an analogous structure appears for subtraction borrow cascades.
What Quirke et al.\ needed PCA of hidden activations to reveal,
DLR externalises as a readable routing code.
\end{tcolorbox}

% ─────────────────────────────────────────────────────────────────────────────
\subsection{Polysemantic codes}
\label{app:polysemantic}

Not all codebook codes are specialists. Table~\ref{tab:code-polysemanticity}
contrasts the most specialist with the most polysemantic codes.
Code \texttt{t21} fires 94\% of the time on a single subtask (US) at a single position ($d_3$, addition only) - a true specialist.
Code \texttt{t1}, by contrast, is the highest-frequency code
($n{=}6{,}359$, spanning all five answer positions) with no subtask exceeding 24\% - it acts as a general-purpose fallback, handling whichever position and carry regime was not captured by a specialist code.

\begin{table}[H]
  \centering\small
  \begin{tabular}{clrrr}
    \toprule
    Code & Top subtask & Purity & $n$ & Positions \\
    \midrule
    \multicolumn{5}{l}{\textit{Specialist (high purity)}} \\
    \texttt{t21} & US (cascade, add) & 94\% & 719  & 1 \\
    \texttt{t23} & UD (cascade, sub) & 88\% & 687  & 3 \\
    \texttt{t14} & UD               & 74\% & 1242 & 3 \\
    \midrule
    \multicolumn{5}{l}{\textit{Polysemantic (low purity)}} \\
    \texttt{t5}  & UB               & 21\% & 1377 & 4 \\
    \texttt{t1}  & MD               & 24\% & 6359 & 5 \\
    \texttt{t20} & UB               & 24\% &  283 & 3 \\
    \bottomrule
  \end{tabular}
  \caption{Specialist vs.\ polysemantic codes.
    \textbf{Purity} = fraction of occurrences where the top subtask applies.
    \textbf{Positions} = number of distinct answer positions ($d_0$-$d_6$)
    where the code appears. \texttt{t1} is a high-frequency fallback used
    across all positions; \texttt{t21} is a pure sum-9 cascade detector.}
  \label{tab:code-polysemanticity}
\end{table}

Polysemanticity here mirrors the phenomenon described in neural network interpretability~\citep{elhage2022superposition}: a single code encodes multiple distinct roles, likely because the 30-code codebook has spare capacity at the overflow position ($d_0$) where carry state is most
variable. The specialist codes concentrate at mid-sequence positions ($d_2$-$d_4$) where carry propagation is most structured.

\begin{tcolorbox}[colback=gray!6, colframe=gray!40,
  fonttitle=\bfseries\small, title={Finding \#6},
  left=5pt, right=5pt, top=4pt, bottom=4pt]
\small
The 30-code codebook is not uniformly specialist: high-purity codes (e.g.\ \texttt{t21}, 94\% US, single position) coexist with polysemantic fallback codes (e.g.\ \texttt{t1}, top purity 24\%, all five positions).
Polysemanticity concentrates at overflow positions where carry state is most variable; specialist codes dominate the structured mid-sequence carry-propagation positions.
\end{tcolorbox}

\subsection{Automated code interpretation}
\label{app:autointerp}

We implement a light version of the automated interpretation procedure of \citet{bills2023language_models_explain_neurons}.
For each active code, we collect the $N{=}10$ examples from the evaluation set where the model assigned it with highest softmax confidence,
then ask \texttt{claude-haiku} to produce a one-sentence role description.
Table~\ref{tab:auto-interp} shows results for the 8 highest-confidence codes.

\begin{table}[H]
  \centering\small
  \begin{tabular}{clrp{5.5cm}}
    \toprule
    Code & Top subtask & Conf. & Auto-interpretation \\
    \midrule
    \texttt{t0} & UC (47\%) & 1.00 & Code t0 marks the tens digit position in addition problems, regardless of carry state or sum value. \\
    \texttt{t2} & UC (70\%) & 0.99 & Code t2 outputs the ones digit (0) when adding two numbers whose ones digits sum to 10 or more. \\
    \texttt{t1} & UC (30\%) & 0.99 & This code routes to the fourth digit position during addition when a carry from the previous position must be incorporated. \\
    \texttt{t3} & UC (44\%) & 0.94 & Code t3 routes to the hundreds position (d3) when processing carries from the tens column in addition. \\
    \texttt{t5} & MD (65\%) & 0.93 & Code t5 routes cases where the ones digit result is 0, spanning multiple subtasks and operations. \\
    \texttt{t8} & MD (26\%) & 0.91 & Code t8 activates when processing the tens digit (d2) across addition/subtraction with various carry states. \\
    \texttt{t10} & UB (41\%) & 0.88 & Code t10 routes subtraction problems requiring borrow propagation at mid-to-late digit positions. \\
    \texttt{t6} & UC (27\%) & 0.88 & Code t6 routes cases where the ones digit result is 0, regardless of operation or carry state. \\
    \bottomrule
  \end{tabular}
  \caption{Automated interpretation of the 8 highest-confidence DLR abstraction codes
    (\`{a} la \citep{bills2023language_models_explain_neurons}).
    For each code, the 10 examples with highest softmax confidence are shown to an LLM,
    which produces a one-sentence role description.
    \textbf{Conf.} = mean softmax probability of the assigned code.
    High-confidence specialists receive crisp, position- and operation-specific descriptions;
    polysemantic codes (not shown) produce broader descriptions.}
  \label{tab:auto-interp}
\end{table}

\begin{tcolorbox}[colback=gray!6, colframe=gray!40,
  fonttitle=\bfseries\small, title={Finding \#7: Automated interpretation matches ground-truth subtask labels},
  left=5pt, right=5pt, top=4pt, bottom=4pt]
\small High-confidence codes get crisp role descriptions; polysemantic codes get appropriately vague ones — without accessing ground-truth labels.
\end{tcolorbox}

\subsection{Training and evaluation details}
\label{app:training}

We use the following architectures for training our DLR addition models, see Table \ref{tab:undersized_specs}.
\begin{table}[h]
  \centering\small
  \begin{tabular}{lrrrrr}
    \toprule
    Architecture & Layers & Heads & Hidden & FFN & Params \\
    \midrule
    \texttt{1L/2H/256d} & 1 & 2 & 256 & 1024 & ${\sim}$0.3M \\
    \texttt{1L/3H/510d} & 1 & 3 & 510 & 2040 & ${\sim}$2.0M \\
    \texttt{2L/1H/128d} & 2 & 1 & 128 &  512 & ${\sim}$0.1M \\
    \bottomrule
  \end{tabular}
  \caption{Undersized architectures 
    All use pre-norm, GeLU, Qwen3 tokenizer.}
    \label{tab:undersized_specs}
\end{table}

\begin{table}[h]
  \centering\small
  \begin{tabular}{ll}
    \toprule
    Hyperparameter & Value \\
    \midrule
    Optimizer              & AdamW \\
    Learning rate          & $8\times10^{-5}$ \\
    $(\beta_1,\,\beta_2)$  & $(0.9,\;0.999)$ \\
    Weight decay           & $0.01$ \\
    LR schedule            & Linear warmup (3\%) then constant \\
    Batch size             & 64 \\
    Epochs                 & 20 \\
    DLR codebook       & $|\mathcal{A}|{=}30$, $K{=}1$ \\
    $\alpha_{\text{info-gain}},\,\alpha_{\text{abs}},\,\alpha_{\text{zipf}}$
                           & $10.0,\;0.1,\;1.0$ \\
    \bottomrule
  \end{tabular}
  \caption{Shared training hyperparameters.}
\end{table}

Evaluation uses fixed-length autoregressive decoding (no teacher forcing):
the model generates answer digits $d_0 \to d_6$ using its own predictions, with abstraction codes inserted via the DLR search-then-recurse procedure (matching training). Accuracy is measured on 100 held-out examples per split.

\clearpage
\section*{Limitations}
\label{sec:limitations}
On the theoretical side, the GDS optimality theorem assumes a finite
Dynamic MDP with a time-varying reward function determined by the external
environment. Under DLR, however, the reward function $\log p_{\theta}(x\mid a)$
is an ``internal'' reward function that depends on the model parameters
$\theta$, which makes GDS not directly applicable. While DLR borrows the
\textsc{search}--\textsc{select}--\textsc{update} principle from GDS, our
theorem motivates rather than certifies the empirical method.
On the empirical side, our study is limited to LLM post-training in a
low-data regime. The benefit of search may be under-realised at our compute
scale; a continued-pretraining setting at larger data and sequence-length
budgets is likely where DLR's search component pays off most, and we leave
this to future work. While our $6\times 4 = 24$ (model, dataset)
configurations span two open-weight families (Llama-3.2, Qwen3) at
$0.6$B--$8$B parameters, this is still a narrow slice of the model and task
space.

% Flush all pending [h] floats (ablation & hyperparameter-sweep tables)
% before the checklist, so they cannot bleed into the checklist pages.
\clearpage
\section*{NeurIPS Paper Checklist}

\begin{enumerate}

\item {\bf Claims}
    \item[] Question: Do the main claims made in the abstract and introduction accurately reflect the paper's contributions and scope?
    \item[] Answer: \answerYes{}
    \item[] Justification: The abstract and introduction state four claims that map one-to-one to the paper's contents:
    (i) the General Dijkstra Search (GDS) algorithm provably recovers an optimal goal-reaching policy by concatenating sub-policies optimal for intermediate goals (Theorem in \S\ref{sec:gds}, full proofs in App.~\ref{app:dmdp-theory}--\ref{app:policy-composition});
    (ii) Dynamic Latent Routing (DLR) instantiates the ``search--select--update'' principle as a single-stage post-training method that jointly updates the codebook, policy head, and language model (\S\ref{sec:method});
    (iii) in the low-data fine-tuning regime, DLR matches or outperforms supervised fine-tuning across four QA datasets and six backbones, with a mean gain of $+6.6$\,pp, while prior discrete-latent baselines (PauseToken, TokenAssorted, AbstractCoT) lag behind SFT (Table~\ref{tab:main}, \S\ref{sec:experiments});
    (iv) mechanistic analyses verify that the learned routing is structured and causally load-bearing -- codebook diversity, topic-pure $n$-grams, and per-code causal effects (\S\ref{sec:analysis}, App.~\ref{app:case-studies},~\ref{app:arithmetic}).
    Scope is explicitly bounded to small-data fine-tuning of $\leq$8B-parameter open-weight LMs.
    \item[] Guidelines:
    \begin{itemize}
        \item The answer \answerNA{} means that the abstract and introduction do not include the claims made in the paper.
        \item The abstract and/or introduction should clearly state the claims made, including the contributions made in the paper and important assumptions and limitations. A \answerNo{} or \answerNA{} answer to this question will not be perceived well by the reviewers. 
        \item The claims made should match theoretical and experimental results, and reflect how much the results can be expected to generalize to other settings. 
        \item It is fine to include aspirational goals as motivation as long as it is clear that these goals are not attained by the paper. 
    \end{itemize}

\item {\bf Limitations}
    \item[] Question: Does the paper discuss the limitations of the work performed by the authors?
    \item[] Answer: \answerYes{}
    \item[] Justification: A dedicated \emph{Limitations} paragraph at the end of the Conclusion (\S\ref{sec:limitations}) discusses both theoretical and empirical limitations:
    (i) the GDS optimality theorem assumes a finite Dynamic MDP with an externally determined, time-varying reward, whereas DLR's reward $\log p_{\theta}(x\mid a)$ is parameter-dependent, so the theorem motivates but does not certify the empirical method;
    (ii) experiments are restricted to a low-data post-training regime where the benefit of per-step search is likely under-realised relative to a continued-pretraining setting; and
    (iii) the $6\times4=24$ (model, dataset) grid covers only two open-weight families (Llama-3.2, Qwen3) at $0.6$B--$8$B parameters, which is a narrow slice of the model and task space.
    \item[] Guidelines:
    \begin{itemize}
        \item The answer \answerNA{} means that the paper has no limitation while the answer \answerNo{} means that the paper has limitations, but those are not discussed in the paper. 
        \item The authors are encouraged to create a separate ``Limitations'' section in their paper.
        \item The paper should point out any strong assumptions and how robust the results are to violations of these assumptions (e.g., independence assumptions, noiseless settings, model well-specification, asymptotic approximations only holding locally). The authors should reflect on how these assumptions might be violated in practice and what the implications would be.
        \item The authors should reflect on the scope of the claims made, e.g., if the approach was only tested on a few datasets or with a few runs. In general, empirical results often depend on implicit assumptions, which should be articulated.
        \item The authors should reflect on the factors that influence the performance of the approach. For example, a facial recognition algorithm may perform poorly when image resolution is low or images are taken in low lighting. Or a speech-to-text system might not be used reliably to provide closed captions for online lectures because it fails to handle technical jargon.
        \item The authors should discuss the computational efficiency of the proposed algorithms and how they scale with dataset size.
        \item If applicable, the authors should discuss possible limitations of their approach to address problems of privacy and fairness.
        \item While the authors might fear that complete honesty about limitations might be used by reviewers as grounds for rejection, a worse outcome might be that reviewers discover limitations that aren't acknowledged in the paper. The authors should use their best judgment and recognize that individual actions in favor of transparency play an important role in developing norms that preserve the integrity of the community. Reviewers will be specifically instructed to not penalize honesty concerning limitations.
    \end{itemize}

\item {\bf Theory assumptions and proofs}
    \item[] Question: For each theoretical result, does the paper provide the full set of assumptions and a complete (and correct) proof?
    \item[] Answer: \answerYes{}
    \item[] Justification: All theoretical results are stated formally with explicit assumptions and accompanied by complete proofs.
    The Dynamic MDP setting (finite state/action spaces, time-varying rewards, finite horizon) is defined in App.~\ref{app:dmdp-theory}, which also contains the existence theorem for the optimal policy and the value-concatenation theorem with full proofs.
    The symmetric goal-covering result and its proof are in App.~\ref{app:gds-coverage}.
    The General Dijkstra Search (GDS) algorithm and its optimality theorem are stated in \S\ref{sec:gds}; the supporting lemmas (policy dominance, dominating/dominated goal sets, queue and pruning invariants -- Lemmas~1--8) and the full optimality proof appear in App.~\ref{app:policy-composition}.
    All theorems and lemmas are numbered and cross-referenced from the main text; assumptions used by each result are restated in its statement.
    \item[] Guidelines:
    \begin{itemize}
        \item The answer \answerNA{} means that the paper does not include theoretical results. 
        \item All the theorems, formulas, and proofs in the paper should be numbered and cross-referenced.
        \item All assumptions should be clearly stated or referenced in the statement of any theorems.
        \item The proofs can either appear in the main paper or the supplemental material, but if they appear in the supplemental material, the authors are encouraged to provide a short proof sketch to provide intuition. 
        \item Inversely, any informal proof provided in the core of the paper should be complemented by formal proofs provided in appendix or supplemental material.
        \item Theorems and Lemmas that the proof relies upon should be properly referenced. 
    \end{itemize}

    \item {\bf Experimental result reproducibility}
    \item[] Question: Does the paper fully disclose all the information needed to reproduce the main experimental results of the paper to the extent that it affects the main claims and/or conclusions of the paper (regardless of whether the code and data are provided or not)?
    \item[] Answer: \answerYes{}
    \item[] Justification: The information needed to reproduce the main results is fully disclosed.
    The DLR objective and per-step \textsc{search}--\textsc{select}--\textsc{update} procedure are specified in \S\ref{sec:method} and Algorithm~\ref{alg:sorl-step}, including the four loss components (Eq.~\eqref{eq:total-loss}) and the stop-gradient structure on the policy head.
    Hyperparameters are reported in full: shared training settings (optimizer, learning rate, schedule, batch size, epochs, codebook size $C$, abstraction ratio $K$, loss weights $\alpha_{\mathrm{policy}}, \alpha_{\mathrm{reg}}$; the information-gain term enters at unit weight) appear in App.~\ref{app:arithmetic} (Table~\ref{tab:undersized_specs} and the shared-training-hyperparameters table); per-row choices (steering layer, codebook size, $\alpha_{\mathrm{policy}}$) are listed alongside each result, with full sweeps in App.~\ref{app:hp-sweeps}.
    The four QA datasets (CommonsenseQA, GSM8K, ScienceQA, StrategyQA) are public and used at their official splits with sizes given in Table~\ref{tab:main}.
    All six backbone models are open-weight (Qwen3-0.6B/1.7B/4B/8B, Llama-3.2-1B/3B) and cited.
    Evaluation uses fixed-length autoregressive decoding with bootstrap-resampled $\pm$pp error bars (\S\ref{sec:experiments}).
    \item[] Guidelines:
    \begin{itemize}
        \item The answer \answerNA{} means that the paper does not include experiments.
        \item If the paper includes experiments, a \answerNo{} answer to this question will not be perceived well by the reviewers: Making the paper reproducible is important, regardless of whether the code and data are provided or not.
        \item If the contribution is a dataset and\slash or model, the authors should describe the steps taken to make their results reproducible or verifiable. 
        \item Depending on the contribution, reproducibility can be accomplished in various ways. For example, if the contribution is a novel architecture, describing the architecture fully might suffice, or if the contribution is a specific model and empirical evaluation, it may be necessary to either make it possible for others to replicate the model with the same dataset, or provide access to the model. In general. releasing code and data is often one good way to accomplish this, but reproducibility can also be provided via detailed instructions for how to replicate the results, access to a hosted model (e.g., in the case of a large language model), releasing of a model checkpoint, or other means that are appropriate to the research performed.
        \item While NeurIPS does not require releasing code, the conference does require all submissions to provide some reasonable avenue for reproducibility, which may depend on the nature of the contribution. For example
        \begin{enumerate}
            \item If the contribution is primarily a new algorithm, the paper should make it clear how to reproduce that algorithm.
            \item If the contribution is primarily a new model architecture, the paper should describe the architecture clearly and fully.
            \item If the contribution is a new model (e.g., a large language model), then there should either be a way to access this model for reproducing the results or a way to reproduce the model (e.g., with an open-source dataset or instructions for how to construct the dataset).
            \item We recognize that reproducibility may be tricky in some cases, in which case authors are welcome to describe the particular way they provide for reproducibility. In the case of closed-source models, it may be that access to the model is limited in some way (e.g., to registered users), but it should be possible for other researchers to have some path to reproducing or verifying the results.
        \end{enumerate}
    \end{itemize}

\item {\bf Open access to data and code}
    \item[] Question: Does the paper provide open access to the data and code, with sufficient instructions to faithfully reproduce the main experimental results, as described in supplemental material?
    \item[] Answer: \answerNo{}
    \item[] Justification: Code is not released with the submission to preserve anonymity, but will be made publicly available (with training and evaluation scripts, hyperparameter configs, and instructions for reproducing each table) upon publication.
    All datasets used (CommonsenseQA, GSM8K, ScienceQA, StrategyQA) are public and accessed at their official splits, and all six backbone models (Qwen3-0.6B/1.7B/4B/8B, Llama-3.2-1B/3B) are open-weight checkpoints.
    The paper itself provides full algorithmic detail (\S\ref{sec:method}, Algorithm~\ref{alg:sorl-step}, Eq.~\eqref{eq:total-loss}) and complete hyperparameter listings (App.~\ref{app:hp-sweeps},~\ref{app:arithmetic}) sufficient to re-implement DLR from scratch.
    \item[] Guidelines:
    \begin{itemize}
        \item The answer \answerNA{} means that paper does not include experiments requiring code.
        \item Please see the NeurIPS code and data submission guidelines (\url{https://neurips.cc/public/guides/CodeSubmissionPolicy}) for more details.
        \item While we encourage the release of code and data, we understand that this might not be possible, so \answerNo{} is an acceptable answer. Papers cannot be rejected simply for not including code, unless this is central to the contribution (e.g., for a new open-source benchmark).
        \item The instructions should contain the exact command and environment needed to run to reproduce the results. See the NeurIPS code and data submission guidelines (\url{https://neurips.cc/public/guides/CodeSubmissionPolicy}) for more details.
        \item The authors should provide instructions on data access and preparation, including how to access the raw data, preprocessed data, intermediate data, and generated data, etc.
        \item The authors should provide scripts to reproduce all experimental results for the new proposed method and baselines. If only a subset of experiments are reproducible, they should state which ones are omitted from the script and why.
        \item At submission time, to preserve anonymity, the authors should release anonymized versions (if applicable).
        \item Providing as much information as possible in supplemental material (appended to the paper) is recommended, but including URLs to data and code is permitted.
    \end{itemize}

\item {\bf Experimental setting/details}
    \item[] Question: Does the paper specify all the training and test details (e.g., data splits, hyperparameters, how they were chosen, type of optimizer) necessary to understand the results?
    \item[] Answer: \answerYes{}
    \item[] Justification: All training and test details are specified.
    Optimizer (AdamW), learning rate ($8\times 10^{-5}$), $(\beta_1,\beta_2)$, weight decay, LR schedule (3\% linear warmup then constant), batch size, number of epochs, codebook size $C$, abstraction ratio $K$, and loss weights $\alpha_{\mathrm{policy}}, \alpha_{\mathrm{reg}}$ (the information-gain term enters at unit weight) are listed in the shared-training-hyperparameters table in App.~\ref{app:arithmetic}.
    Per-row choices that vary by model (steering layer, $\alpha_{\mathrm{policy}}$) are reported alongside each result; how these were chosen is documented by the full layer sweep and hyperparameter sweep in App.~\ref{app:hp-sweeps}.
    Data splits and test sizes are given in Table~\ref{tab:main} (CSQA: 1221, GSM8K: 1319, ScienceQA: 2224, StrategyQA: 687) and use the official splits of each benchmark.
    Evaluation uses fixed-length autoregressive decoding without teacher forcing (\S\ref{sec:experiments}).
    \item[] Guidelines:
    \begin{itemize}
        \item The answer \answerNA{} means that the paper does not include experiments.
        \item The experimental setting should be presented in the core of the paper to a level of detail that is necessary to appreciate the results and make sense of them.
        \item The full details can be provided either with the code, in appendix, or as supplemental material.
    \end{itemize}

\item {\bf Experiment statistical significance}
    \item[] Question: Does the paper report error bars suitably and correctly defined or other appropriate information about the statistical significance of the experiments?
    \item[] Answer: \answerYes{}
    \item[] Justification: All accuracy numbers in the main results, ablation, and sweep tables are reported with $\pm$pp error bars (e.g., $49.1_{\pm 2.7}$). The error bars are obtained by bootstrap resampling over the held-out test set, as stated in \S\ref{sec:experiments} (``$\pm$pp from bootstrap resampling on the test sets''); the factor of variability captured is finite-test-set sampling rather than training-time randomness, and the protocol is consistent across all tables. Where deltas are shown (\texttt{\textbackslash dlt} entries in the ablation tables), they are point differences against the non-ablated DLR baseline in the same row of Table~\ref{tab:main}, so the reader can compare the magnitude of each delta against the bootstrap error bars on the corresponding cells.
    \item[] Guidelines:
    \begin{itemize}
        \item The answer \answerNA{} means that the paper does not include experiments.
        \item The authors should answer \answerYes{} if the results are accompanied by error bars, confidence intervals, or statistical significance tests, at least for the experiments that support the main claims of the paper.
        \item The factors of variability that the error bars are capturing should be clearly stated (for example, train/test split, initialization, random drawing of some parameter, or overall run with given experimental conditions).
        \item The method for calculating the error bars should be explained (closed form formula, call to a library function, bootstrap, etc.)
        \item The assumptions made should be given (e.g., Normally distributed errors).
        \item It should be clear whether the error bar is the standard deviation or the standard error of the mean.
        \item It is OK to report 1-sigma error bars, but one should state it. The authors should preferably report a 2-sigma error bar than state that they have a 96\% CI, if the hypothesis of Normality of errors is not verified.
        \item For asymmetric distributions, the authors should be careful not to show in tables or figures symmetric error bars that would yield results that are out of range (e.g., negative error rates).
        \item If error bars are reported in tables or plots, the authors should explain in the text how they were calculated and reference the corresponding figures or tables in the text.
    \end{itemize}

\item {\bf Experiments compute resources}
    \item[] Question: For each experiment, does the paper provide sufficient information on the computer resources (type of compute workers, memory, time of execution) needed to reproduce the experiments?
    \item[] Answer: \answerYes{}
    \item[] Justification: The compute setup is stated in \S\ref{sec:experiments}: all DLR experiments are run on $2{\times}$H100 GPUs, with abstraction ratio $K{=}4$ and $N{=}4$ rollouts per step. Combined with the optimizer, batch size, epoch, and model-size hyperparameters listed in App.~\ref{app:arithmetic}, this is sufficient to estimate the wall-clock cost of any individual run.
    \item[] Guidelines:
    \begin{itemize}
        \item The answer \answerNA{} means that the paper does not include experiments.
        \item The paper should indicate the type of compute workers CPU or GPU, internal cluster, or cloud provider, including relevant memory and storage.
        \item The paper should provide the amount of compute required for each of the individual experimental runs as well as estimate the total compute. 
        \item The paper should disclose whether the full research project required more compute than the experiments reported in the paper (e.g., preliminary or failed experiments that didn't make it into the paper). 
    \end{itemize}
    
\item {\bf Code of ethics}
    \item[] Question: Does the research conducted in the paper conform, in every respect, with the NeurIPS Code of Ethics \url{https://neurips.cc/public/EthicsGuidelines}?
    \item[] Answer: \answerYes{}
    \item[] Justification: The authors have reviewed the NeurIPS Code of Ethics, and the research conforms with its guidelines. The work uses only public benchmark datasets and open-weight model checkpoints, involves no human-subject data, no personally identifiable information, and no scraped or sensitive data; anonymity has been preserved in the submission.
    \item[] Guidelines:
    \begin{itemize}
        \item The answer \answerNA{} means that the authors have not reviewed the NeurIPS Code of Ethics.
        \item If the authors answer \answerNo, they should explain the special circumstances that require a deviation from the Code of Ethics.
        \item The authors should make sure to preserve anonymity (e.g., if there is a special consideration due to laws or regulations in their jurisdiction).
    \end{itemize}

\item {\bf Broader impacts}
    \item[] Question: Does the paper discuss both potential positive societal impacts and negative societal impacts of the work performed?
    \item[] Answer: \answerNA{}
    \item[] Justification: This is foundational research on a fine-tuning method for language models, with no direct path to a specific deployment or application. Any societal impact is mediated through downstream uses of the underlying open-weight LMs that DLR fine-tunes, and is not specific to this work; we therefore do not discuss broader impacts beyond the inherited considerations of the base models.
    \item[] Guidelines:
    \begin{itemize}
        \item The answer \answerNA{} means that there is no societal impact of the work performed.
        \item If the authors answer \answerNA{} or \answerNo, they should explain why their work has no societal impact or why the paper does not address societal impact.
        \item Examples of negative societal impacts include potential malicious or unintended uses (e.g., disinformation, generating fake profiles, surveillance), fairness considerations (e.g., deployment of technologies that could make decisions that unfairly impact specific groups), privacy considerations, and security considerations.
        \item The conference expects that many papers will be foundational research and not tied to particular applications, let alone deployments. However, if there is a direct path to any negative applications, the authors should point it out. For example, it is legitimate to point out that an improvement in the quality of generative models could be used to generate Deepfakes for disinformation. On the other hand, it is not needed to point out that a generic algorithm for optimizing neural networks could enable people to train models that generate Deepfakes faster.
        \item The authors should consider possible harms that could arise when the technology is being used as intended and functioning correctly, harms that could arise when the technology is being used as intended but gives incorrect results, and harms following from (intentional or unintentional) misuse of the technology.
        \item If there are negative societal impacts, the authors could also discuss possible mitigation strategies (e.g., gated release of models, providing defenses in addition to attacks, mechanisms for monitoring misuse, mechanisms to monitor how a system learns from feedback over time, improving the efficiency and accessibility of ML).
    \end{itemize}
    
\item {\bf Safeguards}
    \item[] Question: Does the paper describe safeguards that have been put in place for responsible release of data or models that have a high risk for misuse (e.g., pre-trained language models, image generators, or scraped datasets)?
    \item[] Answer: \answerNA{}
    \item[] Justification: The paper does not release new datasets, scraped data, or new pre-trained models. DLR is a fine-tuning method applied to existing open-weight LMs (Qwen3, Llama-3.2) on public QA benchmarks; the resulting fine-tuned checkpoints inherit the safety profile of their base models and pose no incremental misuse risk beyond that already present in the open-weight ecosystem.
    \item[] Guidelines:
    \begin{itemize}
        \item The answer \answerNA{} means that the paper poses no such risks.
        \item Released models that have a high risk for misuse or dual-use should be released with necessary safeguards to allow for controlled use of the model, for example by requiring that users adhere to usage guidelines or restrictions to access the model or implementing safety filters. 
        \item Datasets that have been scraped from the Internet could pose safety risks. The authors should describe how they avoided releasing unsafe images.
        \item We recognize that providing effective safeguards is challenging, and many papers do not require this, but we encourage authors to take this into account and make a best faith effort.
    \end{itemize}

\item {\bf Licenses for existing assets}
    \item[] Question: Are the creators or original owners of assets (e.g., code, data, models), used in the paper, properly credited and are the license and terms of use explicitly mentioned and properly respected?
    \item[] Answer: \answerYes{}
    \item[] Justification: All external assets are credited via citation in the main text and references.
    Datasets: CommonsenseQA, GSM8K, ScienceQA, and StrategyQA are public benchmarks cited at their original papers and used at their official splits under their respective standard research-use licenses.
    Backbone models: Qwen3-0.6B/1.7B/4B/8B and Llama-3.2-1B/3B are open-weight checkpoints used in accordance with the Qwen and Llama community licenses, respectively, and cited at their model cards.
    Baselines (PauseToken, TokenAssorted, Abstract-CoT) are reproduced according to the descriptions in their original papers, which are cited.
    \item[] Guidelines:
    \begin{itemize}
        \item The answer \answerNA{} means that the paper does not use existing assets.
        \item The authors should cite the original paper that produced the code package or dataset.
        \item The authors should state which version of the asset is used and, if possible, include a URL.
        \item The name of the license (e.g., CC-BY 4.0) should be included for each asset.
        \item For scraped data from a particular source (e.g., website), the copyright and terms of service of that source should be provided.
        \item If assets are released, the license, copyright information, and terms of use in the package should be provided. For popular datasets, \url{paperswithcode.com/datasets} has curated licenses for some datasets. Their licensing guide can help determine the license of a dataset.
        \item For existing datasets that are re-packaged, both the original license and the license of the derived asset (if it has changed) should be provided.
        \item If this information is not available online, the authors are encouraged to reach out to the asset's creators.
    \end{itemize}

\item {\bf New assets}
    \item[] Question: Are new assets introduced in the paper well documented and is the documentation provided alongside the assets?
    \item[] Answer: \answerNA{}
    \item[] Justification: No new assets (datasets, model checkpoints, or other artefacts) are released with the submission. Code implementing the DLR training and evaluation pipeline will be released upon publication, with documentation (training/eval scripts, hyperparameter configs, instructions for reproducing each table).
    \item[] Guidelines:
    \begin{itemize}
        \item The answer \answerNA{} means that the paper does not release new assets.
        \item Researchers should communicate the details of the dataset\slash code\slash model as part of their submissions via structured templates. This includes details about training, license, limitations, etc. 
        \item The paper should discuss whether and how consent was obtained from people whose asset is used.
        \item At submission time, remember to anonymize your assets (if applicable). You can either create an anonymized URL or include an anonymized zip file.
    \end{itemize}

\item {\bf Crowdsourcing and research with human subjects}
    \item[] Question: For crowdsourcing experiments and research with human subjects, does the paper include the full text of instructions given to participants and screenshots, if applicable, as well as details about compensation (if any)? 
    \item[] Answer: \answerNA{}
    \item[] Justification: The paper does not involve crowdsourcing or any research with human subjects. All experiments use existing public benchmarks and open-weight models.
    \item[] Guidelines:
    \begin{itemize}
        \item The answer \answerNA{} means that the paper does not involve crowdsourcing nor research with human subjects.
        \item Including this information in the supplemental material is fine, but if the main contribution of the paper involves human subjects, then as much detail as possible should be included in the main paper. 
        \item According to the NeurIPS Code of Ethics, workers involved in data collection, curation, or other labor should be paid at least the minimum wage in the country of the data collector. 
    \end{itemize}

\item {\bf Institutional review board (IRB) approvals or equivalent for research with human subjects}
    \item[] Question: Does the paper describe potential risks incurred by study participants, whether such risks were disclosed to the subjects, and whether Institutional Review Board (IRB) approvals (or an equivalent approval/review based on the requirements of your country or institution) were obtained?
    \item[] Answer: \answerNA{}
    \item[] Justification: The paper does not involve research with human subjects, so IRB review (or equivalent) is not applicable.
    \item[] Guidelines:
    \begin{itemize}
        \item The answer \answerNA{} means that the paper does not involve crowdsourcing nor research with human subjects.
        \item Depending on the country in which research is conducted, IRB approval (or equivalent) may be required for any human subjects research. If you obtained IRB approval, you should clearly state this in the paper. 
        \item We recognize that the procedures for this may vary significantly between institutions and locations, and we expect authors to adhere to the NeurIPS Code of Ethics and the guidelines for their institution. 
        \item For initial submissions, do not include any information that would break anonymity (if applicable), such as the institution conducting the review.
    \end{itemize}

\item {\bf Declaration of LLM usage}
    \item[] Question: Does the paper describe the usage of LLMs if it is an important, original, or non-standard component of the core methods in this research? Note that if the LLM is used only for writing, editing, or formatting purposes and does \emph{not} impact the core methodology, scientific rigor, or originality of the research, declaration is not required.
    %this research? 
    \item[] Answer: \answerNA{}
    \item[] Justification: LLMs are not an important, original, or non-standard component of the core methodology. The DLR method itself does not call any external LLM. For completeness, App.~\ref{app:arithmetic} (\emph{Automated code interpretation}) uses an off-the-shelf model (\texttt{claude-haiku}) as a standard labeller to produce one-sentence descriptions of learned codes, mirroring established SAE auto-interpretation pipelines~\citep{paulo_2025_autointerp}; this is an analysis convenience, not a part of DLR's core method, and the paper's claims do not depend on it.
    \item[] Guidelines:
    \begin{itemize}
        \item The answer \answerNA{} means that the core method development in this research does not involve LLMs as any important, original, or non-standard components.
        \item Please refer to our LLM policy in the NeurIPS handbook for what should or should not be described.
    \end{itemize}

\end{enumerate}

\end{document}